\newcommand{\fmap}{\phi}
\newcommand{\activation}{\sigma}
\newcommand{\loc}{{loc}}
\newcommand{\diag}{\mathrm{diag}}
\newcommand{\Leb}{\mathrm{Leb}}
\newcommand{\relu}{\mathrm{ReLU}}
\newcommand{\gelu}{\mathrm{GeLU}}
\newcommand{\swish}{\mathrm{Swish}}
\newcommand{\NODE}{\ensuremath{\mathrm{NODE}}}
\newcommand{\NODEs}{\ensuremath{\mathrm{NODEs}}}
\newcommand{\Attention}{\ensuremath{\mathrm{Attention}}}
\title{Understanding the training of infinitely deep and wide ResNets \\ with Conditional Optimal Transport}
\author{
    Raphaël Barboni\\
    ENS -- PSL Université \\
    \texttt{raphael.barboni@ens.fr}
    \and
    Gabriel Peyré\\
    CNRS and ENS -- PSL Univ.\\
    \texttt{gabriel.peyre@ens.fr}
    \and
    Fran\c{c}ois-Xavier Vialard\\
    LIGM, Univ. Gustave Eiffel, CNRS\\
    \texttt{francois-xavier.vialard@univ-eiffel.fr}
}
\date{}
\begin{document}

\maketitle

\begin{abstract}
We study the convergence of gradient flow for the training of deep neural networks.
While Residual Neural Networks are a popular example of very deep architectures, their training constitutes a challenging optimization problem notably due to the non-convexity and the non-coercivity of the objective.
Yet, in applications, such tasks are successfully solved by simple optimization algorithms such as gradient descent.
To better understand this phenomenon, we focus here on a ``mean-field'' model of infinitely deep and arbitrarily wide ResNet, parameterized by probability measures on the product set of layers and parameters and with constant marginal on the set of layers.
Indeed, in the case of shallow neural networks, mean field models have been proven to benefit from simplified loss landscapes and good theoretical guarantees when trained with gradient flow for the Wasserstein metric on the set of probability measures.
Motivated by this approach, we propose to train our model with gradient flow w.r.t. the conditional Optimal Transport distance: a restriction of the classical Wasserstein distance which enforces our marginal condition.
Relying on the theory of gradient flows in metric spaces we first show the well-posedness of the gradient flow equation and its consistency with the training of ResNets at finite width.
Performing a local Polyak-\L{}ojasiewicz analysis, we then show convergence of the gradient flow for well-chosen initializations: if the number of features is finite but sufficiently large and the risk is sufficiently small at initialization, the gradient flow converges to a global minimizer.
This is the first result of this type for infinitely deep and arbitrarily wide ResNets.
In addition, this work is an opportunity to study in more detail the conditional Optimal Transport metric, particularly its dynamic formulation. Some of our results in this direction might be interesting on their own.\footnote{%
Acknowledgements: The work of G. Peyr\'e was supported by the French government under management of Agence Nationale de la Recherche as part of the
``Investissements d'avenir'' program, reference ANR19-P3IA-0001 (PRAIRIE 3IA Institute). 
}

\noindent
\textit{Keywords:} Residual neural networks, Optimal Transport, gradient flows in Wasserstein space, Polyak-\L{}ojasiewicz inequality, mean-field analysis.
\end{abstract}

\section{Introduction}

Understanding the training dynamic of neural networks is an important problem in Machine Learning as it brings the hope of understanding the good performance of these models.
This training is however a complex optimization problem, usually solved by performing (stochastic) gradient descent for the training risk, an optimization procedure that, though simple, often manages to find a global minimum of the risk despite its non-convexity.
This phenomenon is now correctly understood in some simple cases such as the one of linear networks~\parencite{hardt_identity_2018,bartlett2018gradient,zou2019global,bah2022learning}.
In the more realistic case of non-linear architectures, most works have focused on  \emph{Multi-Layer Perceptrons (MLP)}~\parencite{li2017convergence,du2019gradient,allen2019convergence,zou_gradient_2020,lee2019wide,chen2020much,nguyen_proof_2021} and convergence to a minimizer of the risk can be obtained with great probability over a random initialization provided that the network is sufficiently wide, a regime referred to as ``overparameterization''.
Taking the limit of infinite width, many works have also studied the convergence of gradient descent for the training of neural networks in the limit of an infinite number of parameters~\parencite{chizat2018global,mei2018mean,javanmard_analysis_2020,wojtowytsch2020convergence,nguyen2023rigorous}. In those works, the neural network is trained by modeling the parameters as a probability measure on the parameter space and performing a Wasserstein gradient flow over the set of probability measures. Notably, \textcite{chizat2018global} establish a result of optimality at convergence: if the gradient flow converges then its limit is a global minimizer of the training risk.

We will focus in this work on the case of the \emph{Residual Neural Network (ResNet)} architecture.
ResNets were first introduced by~\textcite{he2016deep} for applications in computer vision but the architecture has since distinguished itself by obtaining state-of-the-art results in several other machine-learning applications. A key feature of ResNets is the extensive use of \emph{skip connections}: each layer consists of the addition of a 
perturbation (called \emph{residual}) to the output of the previous layer.
The presence of skip connections has indeed been shown to ease the training of deeper neural networks~\parencite{raiko2012deep,szegedy2017inception} by mitigating the \emph{vanishing / exploding gradient} phenomena, a common problem encountered when training deep neural networks~\parencite{Bengio94learning,glorot2010understanding}. The ResNet architecture has thus permitted the training of neural networks of almost arbitrary depth~\parencite{he2016identity}.
Considering the limit where the depth tends to infinity~\textcite{chen2018neural} introduced the \emph{Neural Ordinary Differential Equation (\NODE{})} architecture: ResNets being seen as the \emph{explicit Euler} discretization scheme of an ODE, passing to the limit of infinite depth leads to a model performing the integration of an ODE with a parametric velocity field.
An important contribution of \NODEs{} is to provide a theoretical framework upon which many other works have been based to study very deep neural network architectures.
\textcite{chen2018neural} proposed a method based on \emph{adjoint sensitivity analysis} to compute the gradient of \NODEs{} efficiently without automatic differentiation.
\textcite{sander_momentum_2021} proposed a new architecture based on a second-order ODE which can be trained with reduced computational complexity.
Inspired by methods from medical imaging and shape analysis, \textcite{vialard2020shooting} proposed a new algorithm for the training of deep ResNets.
\textcite{e_mean-field_2019,e_barron_2021} studied the training and generalization properties of deep ResNets borrowing tools from the mathematical theory of \emph{Optimal Control}.

\paragraph{Notations}

For a metric space $X$,  $\Pp(X)$ is the set of Borel probability measures on $X$.
This set is endowed with the \emph{narrow topology}, i.e. the topology of convergence against the set $\Cc_b(X)$ of bounded continuous functions.
For $x \in X$, we denote by $\delta_x \in \Pp(X)$ the Dirac measure at $x$. For $p \geq 1$, $\Pp_p(X)$ is the subset of $\Pp(X)$ of probability measures with finite $p$-order moment, endowed with the \emph{Wasserstein distance} $\Ww_p$~\parencite{villani2009optimal,santambrogio2015optimal}.
When $X$ is a Hilbert space, we define on $\Pp_p(X)$ the $p$-Energy $\Ee_p(\mu) \eqdef \int_X |x|^p \d \mu(x)$.
If $\mu \in \Pp(X)$ and $f : X \mapsto Y$ is a measurable map between topological spaces we denote by $f_\# \mu \in \Pp(Y)$ the pushforward of $\mu$ by $f$.
If $\lbrace f_i : X_i \to Y_i \rbrace_{1 \leq i \leq n}$ is a family of mappings then $(f_1, ..., f_n)$ designates the product map $(f_1, ..., f_n) : (x_1, ..., x_n) \mapsto (f_1(x_1), ..., f_n(x_n))$ and if $X = X^1 \times ... \times X^n$ is a product space we designate by $\pi^i$ the projection $\pi^i : (x^1, ..., x^n) \in X \mapsto x^i \in X^i$.

\subsection{Mean-field models of neural networks}

We consider in this paper neural networks parameterized by probability measures.
Such ``mean-field models'' can be thought of as neural networks of arbitrary width and were studied before by \textcite{chizat2018global,mei2018mean,rotskoff2018parameters,wojtowytsch2020convergence,nguyen2023rigorous}.
Provided with the \emph{parameter space} $\Om \subset \RR^p$, the input space $\RR^d$ and with a Borel map $\fmap : \Om \times \RR^d \to \RR^d$ (the \emph{feature map}), we consider mappings $F_\mu : \RR^d \to \RR^d$ parameterized by measures $\mu \in \Pp(\Om)$ and defined by:
\begin{equation} \label{measure_parameterization}
    \forall \mu \in \Pp(\Om), \quad F_\mu : x \in \RR^d \mapsto \int_\Om \fmap(\om, x) \d \mu(\om) .
\end{equation}

\paragraph{Single hidden layer perceptron}

The above definition encompasses as a particular case standard neural network architectures. For example, for $\Om = \RR^d \times \RR^d \times \RR$ and $\fmap : ((u,w,b), x) \mapsto u \activation (w^\top x + b)$ with a real-valued function $\activation : \RR \to \RR$ (called \emph{activation}), considering the atomic measure $\mu = \frac{1}{m} \sum_{i=1}^m \delta_{(u_i, w_i, b_i)}$ one recovers the classical model of a \emph{Single Hidden Layer (SHL) Perceptron} of width $m \geq 1$:
\begin{equation*}
    F_\mu : x \mapsto \frac{1}{m} \sum_{i=1}^m u_i \activation (w_i^\top x + b_i).
\end{equation*}

\paragraph{Convolutional Layer}

Closer to applications, \cref{measure_parameterization} also encompasses the convolutional architecture originally used in ResNet for image recognition~\parencite{he2016deep}. Consider integers $n,c,m, k \geq 0$ and $\Om = \RR^{c \times 1 \times k \times k} \times \RR^{1 \times c \times k \times k} \times \RR^{1 \times n \times n}$, the set of parameters of the form $(u,w, b)$ where $u$ and $w$ are convolutional kernels of size $c \times 1 \times k \times k$ and $1 \times c \times k \times k$ respectively and $b$ is a bias term of size $1 \times n \times n$. Then for an image input $x \in \RR^{c \times n \times n}$ of size $n \times n$ with $c$ channels, and $(u,w, b) \in \Om$ consider the feature map $\varphi : ((u,w,b),x) \mapsto u \star \activation ( w \star x + b) \in \RR^{c \times n \times n}$ where $\star$ denotes the discrete convolution operator and the activation $\activation$ is applied component-wise. Then for an empirical measure $\mu = \frac{1}{m} \sum_{i=1}^m \delta_{(u_i,w_i, b_i)}$, \cref{measure_parameterization} gives on input $x$:
\begin{align*}
    F_\mu(x) = \frac{1}{m} \sum_{i=1}^m u_i \star \activation (w_i \star x + b_i) ,
\end{align*}
which is the output of a ResNet residual with $m$ intermediary channels (without normalization)~\parencite{he2016deep}. However, the definition in~\cref{measure_parameterization} does not model some popular architectures such as normalization layers which play an important role in the success of ResNets.

\paragraph{Attention layers}

Finally, \cref{measure_parameterization} also models \emph{attention layers} at the heart of \emph{Transformers} architectures~\parencite{vaswani2017attention}. Consider as parameter space $\Om = \RR^{c \times d} \times \RR^{c \times d} \times \RR^{d \times d}$, the set of triplets $(K, Q, V)$ where $K \in \RR^{c \times d}$ is the \emph{key} matrix, $Q \in \RR^{c \times d}$ is the \emph{query} matrix and $V \in \RR^{d \times d}$ is the \emph{value} matrix. One can define the feature map $\fmap$ as an \emph{attention} layer with parameter $(K, Q, V) \in \Om$. For an input sequence of \emph{tokens} $\xx = (x^i)_{1 \leq i \leq N} \in (\RR^d)^N$ of length $N \geq 0$:
\begin{align*}
    \fmap((K, Q, V), \xx) \eqdef \Attention((K, Q, V), \xx) =  \left( \sum_{j=1}^N \frac{e^{\langle K x^i, Q x^j \rangle}}{ \sum_{j=1}^N e^{\langle K x^i, Q x^j \rangle}} V x^j \right)_{1 \leq i \leq N} \in (\RR^d)^N\,.
\end{align*}
Then for an empirical measure $\mu = \frac{1}{m} \sum_{k=1}^m \delta_{(K_k, Q_k, V_k)}$, \cref{measure_parameterization} defines a \emph{multi-head attention} layer with $m$ heads:
\begin{align*}
    F_\mu(\xx) = \frac{1}{m} \sum_{k=1}^m \Attention((K_k, Q_k, V_k), \xx) \, .
\end{align*}
Note that with this definition, we are only able to describe Transformer architectures taking as input sequences of tokens of fixed length $N$.
However, our setting could be adapted to model Transformer architecture taking as input sequences of tokens of various finite lengths by considering different feature maps depending on the length of the input sequence.

\paragraph{Functional properties}

\textcite{weinan2022representation} study the class of mappings defined by~\cref{measure_parameterization} from a functional analysis viewpoint.
More precisely, they study the ``Barron space''
$$\Bb \eqdef \lbrace f: x \mapsto \int u \activation (w^\top x + b) \d \mu (u,w,b), \, \mu \in \Pp_2(\RR \times \RR^d \times \RR) \rbrace$$
in the case $\activation = \relu$ and show $\Bb$ can be endowed with a norm
\begin{align*}
    \forall f \in \Bb, \quad \| f \|_\Bb \eqdef \inf \lbrace \int | u | ( \| w \| + |b| ) \d \mu, \, \mu \in \Pp_2(\RR \times \RR^d \times \RR), \, f = F_\mu \rbrace ,   
\end{align*}
for which it is a Banach space, continuously embedded in the space of Lipschitz functions.
In fact, $\Bb$ is the smallest Banach space containing all single hidden layer neural networks of finite width~\parencite[Thm.3.7 and Thm.3.8]{weinan2022representation} and it is thus dense in the set of continuous functions equipped with the compact-open topology~\parencite{cybenko1989approximation}. Note however that the case of convolutional or attention layers is different as the maps $F_\mu$ defined by~\cref{measure_parameterization} are then equivariant by translation of the input images or permutation of the tokens respectively.

In our case, due to~\cref{fmap_assumption1} on $\fmap$, we have that if $f = F_\mu$ is given by~\cref{measure_parameterization} then $\| f \|_\Bb \leq \Ee_2(\mu)$ and the second-order moment $\Ee_2(\mu)$ will be a key ingredient in controlling the training dynamic of our model (\emph{e.g.} by~\cref{flow_growth}). Finally, note that~\cref{measure_parameterization} is not the only way to represent neural networks of arbitrary width and \textcite{weinan2022representation} propose various equivalent representations of the functional space $(\Bb, \|.\|_\Bb)$.

\subsection{Mean-field \NODEs{}}

We then proceed to the definition of \emph{Neural ODEs (\NODEs{})} modeling ResNets whose depth tends to infinity with a proper rescaling of the residual layers~\parencite{chen2018neural}.
Our \NODE{} model is an ODE whose velocity field (or \emph{residual}) belongs to the class of mappings parameterized by measures defined in~\cref{measure_parameterization}.
Similar models of ``mean-field \NODEs{}'' or ``mean-field limit of ResNets'' were studied by~\textcite{lu2020mean,ding2022overparameterization,isobe2023convergence}.

\begin{defn}[\NODE{}] \label{def:NODE}
    For a family of probability measures $\mu = \lbrace \mu(.|s) \rbrace_{s \in [0, 1]} \in \Pp(\Om)^{[0,1]}$ and input $x \in \RR^d$ we define the output of the \NODE{} model as $\NODE_\mu(x) \eqdef x_\mu(1)$ where $(x_\mu(s))_{s \in [0,1]}$ satisfies the \emph{Forward ODE}:
    \begin{equation} \label{forward}
        \frac{\d}{\d s} x_\mu(s) = F_{\mu(.|s)}(x_\mu(s)), \quad x_\mu(0) = x.
    \end{equation}
    When there is no ambiguity, we simply write $x(s)$.
\end{defn}

\paragraph{The parameter set $\Pp_2^\Leb([0,1] \times \Om)$}

To justify the well-posedness of~\cref{forward} it is first necessary to define the adequate set of parameters we will consider. Given a topological space $Z$, we define $\Pp^\Leb_2([0,1] \times Z)$ as the set of probability measures $\mu \in \Pp_2([0,1] \times Z)$ whose marginal w.r.t. $[0,1]$ is the Lebesgue measure $\Leb([0,1])$:
\begin{align*}
    \Pp^\Leb_2([0,1] \times Z) \eqdef \enscond{\mu \in \Pp_2([0,1] \times Z)}{\pi^1_\#  \mu = \Leb([0,1])} .
\end{align*}
Given $\mu \in \Pp_2^\Leb([0,1] \times \Om)$, using a result of disintegration of measures~\parencite[Thm.4.2.4]{attouch2014variational}, there exists a $\d s$-a.e. uniquely determined family of probability measure $\mu(.|s) \in \Pp_2(Z)$ such that for every measurable $f : [0, 1] \times Z \to \RR$:
$$
     \text{$s \in [0,1] \mapsto \int_Z f(s, z) \d \mu(z|s)$ is measurable and $\int_{[0, 1] \times Z} f(s, z) \d \mu(s,z) = \int_0^1 \int_Z f(s, z) \d \mu(z|s) \d s$.}
$$
In the following, we will consider as parameters probability measures $\mu \in \Pp^\Leb_2([0, 1] \times \Om)$. Therefore, every parameter $\mu \in \Pp^\Leb_2([0, 1] \times \Om)$ is naturally associated with a (almost everywhere uniquely defined) family of probability measures $\lbrace \mu(.|s) \rbrace_{s\in[0, 1]}$.
We will provide this set of parameters with a modification of the Wasserstein-$2$ distance~\parencite{villani2009optimal,santambrogio2015optimal} that takes into account the marginal constraint by considering a restriction of Kantorovich's original optimal coupling problem to the set of couplings that are the identity on the first variable $s \in [0,1]$. The solution of this new optimization problem induces the \emph{Conditional Optimal Transport} (COT) distance on the parameter set~\parencite{hosseini2023conditional}.

\paragraph{Well-posedness of \NODEs{}}

The following assumption on $\fmap$ will be sufficient to show the well-posedness of~\cref{forward} for any parameter $\mu \in \Pp_2^\Leb([0,1] \times \Om)$. This is the content of~\cref{prop:flow_wellposed}.

\begin{assumption} \label{fmap_assumption1}
    Assume $\fmap : \Om \times \RR^d \to \RR^d$ is measurable and 
    \begin{enumerate}
        \item (quadratic growth) grows at most quadratically w.r.t. $\om$ and linearly w.r.t. $x$: there exists a constant $C$ s.t.
        \begin{align*}
            \forall x \in \RR^d, \om \in \Om, \quad \|\fmap(\om, x) \| \leq C (1 + \|x\|) (1+\|\om\|^2)\,.
        \end{align*}
        \item (local Lipschitz continuity) is locally Lipschitz with respect to $x$ with a Lipschitz constant that grows at most quadratically with $\om$: for every $R \geq 0$, there exists a constant $C(R)$ s.t.
        \begin{align*}
            \forall x, x' \in B(0, R), \, \forall \om \in \Om, \quad \| \fmap(\om, x) - \fmap(\om, x') \| \leq C(R) (1+\|\om\|^2) \| x-x'\| \,. 
        \end{align*}
    \end{enumerate}
\end{assumption}

\begin{prop}[Well-posedness of the flow] \label{prop:flow_wellposed}
    Assume $\mu \in \Pp^\Leb_2([0, 1] \times \Om)$ and $\fmap$ satisfies~\cref{fmap_assumption1}. Then for every $x \in \RR^d$ there exists a unique weak solution to~\cref{forward}, that is an absolutely continuous path $(x(s))_{s \in [0, 1]}$ such that for every $s \in [0, 1]$:
    \begin{align} \label{forward_weak}
        x(s) = x + \int_0^s F_{\mu(.|r)} (x(r)) \d r .
    \end{align}
\end{prop}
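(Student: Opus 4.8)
The plan is to read \cref{forward_weak} as the integral form of a non-autonomous Carath\'eodory ODE with velocity field
$$
v : (s,x) \in [0,1]\times\RR^d \longmapsto F_{\mu(.|s)}(x) = \int_\Om \fmap(\om,x)\,\d\mu(\om|s),
$$
to obtain existence by a Carath\'eodory/fixed-point argument and uniqueness by Gr\"onwall's lemma, the only non-trivial inputs being $L^1$-in-time growth and local-Lipschitz estimates on $v$. So the first step is to verify that $v$ is a genuine Carath\'eodory field. By the disintegration of $\mu$ recalled above, $\mu(.|s)\in\Pp_2(\Om)$ for $\d s$-a.e.\ $s$; together with the quadratic growth in \cref{fmap_assumption1} this makes the integral defining $v(s,x)$ absolutely convergent for a.e.\ $s$ and every $x$, and gives
$$
\|v(s,x)\| \le C\,(1+\|x\|)\,g(s), \qquad g(s) \eqdef \int_\Om (1+\|\om\|^2)\,\d\mu(\om|s).
$$
Applying the disintegration formula to $(s,\om)\mapsto 1+\|\om\|^2$ and using $\mu\in\Pp_2([0,1]\times\Om)$ shows $\int_0^1 g(s)\,\d s = \int_{[0,1]\times\Om}(1+\|\om\|^2)\,\d\mu(s,\om) < \infty$, i.e.\ $g\in L^1([0,1])$. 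Measurability of $s\mapsto v(s,x)$ for fixed $x$ is precisely the measurability assertion of the disintegration theorem applied to the measurable map $(s,\om)\mapsto \fmap(\om,x)$. Finally, integrating the local Lipschitz bound of \cref{fmap_assumption1} against $\mu(.|s)$ gives, for each $R\ge 0$, a $\d s$-null set outside of which
$$
\forall x,x'\in B(0,R),\qquad \|v(s,x)-v(s,x')\| \le C(R)\,g(s)\,\|x-x'\|;
$$
taking the union over $R$ in a countable dense set shows that for a.e.\ $s$ the map $x\mapsto v(s,x)$ is locally Lipschitz, in particular continuous.

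\textbf{Existence.} Fix $x\in\RR^d$. The operator $\Phi : \gamma \mapsto \bigl(s\mapsto x + \int_0^s v(r,\gamma(r))\,\d r\bigr)$ is well defined and continuous on $C([0,\tau];\RR^d)$ and takes values in absolutely continuous paths, because for continuous $\gamma$ the composition $r\mapsto v(r,\gamma(r))$ is measurable (Carath\'eodory field) and dominated by an $L^1$ function. On the set $\{\gamma : \|\gamma(s)-x\|\le \rho \text{ on } [0,\tau]\}$ with $\tau$ small enough, the $L^1$-growth bound makes $\Phi$ a self-map whose image is equi-absolutely-continuous, hence relatively compact in $C([0,\tau];\RR^d)$; Schauder's fixed-point theorem then produces a local absolutely continuous solution — this is the standard Carath\'eodory existence theorem. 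To globalise on $[0,1]$, apply Gr\"onwall's lemma to $\|x(s)\| \le \|x\| + C\int_0^s(1+\|x(r)\|)\,g(r)\,\d r$, which yields the a priori bound $\|x(s)\| \le (1+\|x\|)\exp\bigl(C\int_0^1 g\bigr) =: R_0$ on the interval of existence; hence the solution never leaves $\overline{B}(0,R_0)$ and extends to all of $[0,1]$.

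\textbf{Uniqueness.} If $x_1,x_2$ both satisfy \cref{forward_weak}, the a priori estimate gives $\|x_i(s)\|\le R_0$ for all $s\in[0,1]$, so subtracting the two integral equations and using the local Lipschitz bound with $R = R_0$,
$$
\|x_1(s)-x_2(s)\| \le C(R_0)\int_0^s g(r)\,\|x_1(r)-x_2(r)\|\,\d r .
$$
Since $g\in L^1([0,1])$, Gr\"onwall's inequality forces $x_1\equiv x_2$. Absolute continuity of the solution is automatic, since it equals $x + \int_0^\cdot v(r,x(r))\,\d r$ with $r\mapsto v(r,x(r))$ dominated by $C(1+R_0)\,g(\cdot)\in L^1([0,1])$.

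The main obstacle is not analytic depth but careful measure-theoretic bookkeeping: one must make sure the various $\d s$-null exceptional sets — well-definedness of $\mu(.|s)$, the local Lipschitz estimate valid for every radius $R$ at once, measurability of $s\mapsto v(s,x)$ and of $r\mapsto v(r,\gamma(r))$ — combine into a single null set so that $v$ is an honest Carath\'eodory field, and one must invoke the Carath\'eodory existence theorem in the form allowing merely $L^1$ (rather than $L^\infty$ or continuous) time dependence. Once $v$ is set up correctly, everything reduces to the growth and Lipschitz estimates above and two applications of Gr\"onwall's lemma.
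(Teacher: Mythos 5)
Your proof is correct and follows essentially the same route as the paper: both verify that $(s,x)\mapsto F_{\mu(.|s)}(x)$ is a Carath\'eodory field (measurable in $s$, locally Lipschitz in $x$ with an $L^1$-in-$s$ constant, with integrable linear growth coming from $\Ee_2(\mu)<\infty$) and then conclude by Carath\'eodory's existence--uniqueness theory plus Gr\"onwall for the a priori bound on $[0,1]$. The only difference is that the paper cites this theory directly (\cite[Sec.I.5]{hale1969ordinary}) whereas you unpack it via Schauder and Gr\"onwall, which is fine.
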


\begin{proof}
    The result follows Caratheodory's theorem for the existence and uniqueness of absolutely continuous solutions~\cite[Sec.I.5]{hale1969ordinary}. Given $\mu \in \Pp_2^\Leb([0, 1] \times \Om)$, the map $(s, x) \mapsto F_{\mu(.|s)}(x)$ is measurable w.r.t. $s$ and, thanks to~\cref{fmap_assumption1} (local Lipschitz continuity), locally Lipschitz w.r.t. $x$ with a Lipschitz constant that is integrable w.r.t. $s$. Moreover, the solutions of~\cref{forward_weak} are defined up to time $s = 1$ thanks to the growth assumption in~\cref{fmap_assumption1}, and if $C$ is the growth constant we get the following bound on the solution:
    \begin{align} \label{flow_growth}
        \forall s \in [0, 1], \quad \| x(s) \| \leq \exp( C (1+ \Ee_2(\mu))) ( \| x(0) \| + C (1+ \Ee_2(\mu))).
    \end{align}
\end{proof}

\subsection{Supervised learning problem}

We consider training our \NODE{} model for a supervised learning task. That is, given a data distribution $\RR^d \times \RR^{d'} \ni (x, y) \sim \Dd$ and loss $\ell: \RR^d \times \RR^{d'} \to \RR_+$, we associate to a parameterization $\mu \in \Pp_2^\Leb([0,1] \times \Om)$ the risk:
\begin{align} \label{risk}
    L(\mu) \eqdef \ \EE_{x, y} \ell(\NODE_\mu(x), y) = \EE_{x, y} \ell(x_\mu(1), y) \,.
\end{align}
In the following, we assume the data distribution $\Dd$ has compact support and $\ell$ is a smooth loss.
In particular, remark that in the case of an empirical data distribution $\Dd = \frac{1}{N} \sum_{i=1}^N \delta_{(x_i, y_i)}$, we recover the empirical risk:
\begin{align*}
    L(\mu) = \frac{1}{N} \sum_{i=1}^N \ell(\NODE_\mu(x_i), y_i) \,.
\end{align*}
Training the \NODE{} model then amounts to finding a parameterization $\mu$ which minimizes the risk defined in~\cref{risk} \emph{i.e.} to solve the following \emph{risk minimization} task:
\begin{equation*}
    \text{Find} \quad \mu^* \in \argmin_{\mu \in \Pp_2^\Leb([0,1] \times \Om)} L(\mu)\,.
\end{equation*}
In this work, we consider solving the above risk minimization task by performing gradient flow over the parameter $\mu$. We are interested in providing sufficient conditions to ensure that such a gradient flow can ``efficiently train'' our \NODE{} \emph{i.e.} finds a minimizer $\mu^*$ of the risk.

\subsection{Related works and contributions}

Due to the popularity and performance of the ResNet architecture, many works have studied its training dynamic and the convergence of this training dynamic toward a global minimizer of the risk.

\paragraph{ResNets and \NODEs{} of finite width}
 
\textcite{allen2019convergence,du2019gradient,liu2020linearity} give convergence results for the training of deep neural networks with gradient descent and their results can be applied to ResNets. However, their condition for convergence depends on the ResNet's depth and their result can therefore not be applied to infinitely deep models of \NODEs{}. \textcite{marion2023implicit} give local convergence results for the training of \NODEs{} based on a local Polyak-\L{}ojasiewicz (P-\L{}) condition. They assume a model of finite width and their result therefore does not hold in the mean field limit where residuals are of the form~\cref{measure_parameterization}.
They also consider parameter initializations that are Lipschitz w.r.t. depth which is not consistent with applications where the parameters are initialized at random, independently at each layer.
As a comparison~\cref{measure_parameterization} models residuals of both finite and infinite width and we only assume that the family $\lbrace \mu(.|s) \rbrace_{s \in [0,1]}$ is measurable w.r.t. $s \in [0,1]$.

\paragraph{\NODEs{} with linear parameterization of the residuals}

Relying on a similar local P-\L{} analysis, \textcite{barboni2022global} show a local convergence result for \NODEs{} where the residuals are linear functions on a Hilbert space of arbitrary dimension. For SHL residuals this amounts to only training the outer weights and keeping the inner weights unchanged. While preserving the expressivity of the model, one can see that this assumption does not model the way ResNets are trained in practice.
It also simplifies the analysis by naturally providing the set of residuals with a Hilbert space structure.
In contrast,
our model allows for residuals that are not necessarily linear w.r.t. their parameters and encompasses (among other) the case of SHL residuals where both the outer and inner weights are trained.
This is done at the cost of considering the parameter set $\Pp_2^\Leb([0,1] \times \Om)$ which has no Hilbert space structure and is a positively curved metric space when equipped with the Conditional OT distance~\parencite[Sec.7.3]{ambrosio2008gradient}.
Another notable difference is that in~\parencite{barboni2022global} the risk is shown to admit no spurious critical points (saddles, local minima, ...) whereas in this work, the risk may admit saddles whenever $\D_\om \fmap$ is not surjective. This is for example happening for the SHL architecture defined by~\cref{fmap_SHL} when the feature distribution is not sufficiently spread.

\paragraph{\NODEs{} in the mean-field limit}

Some works have proposed models for ResNets of infinite depth similar to~\cref{def:NODE}. \textcite{e_barron_2021} study properties of the functional space induced by considering the flow of functions of the form~\cref{measure_parameterization} and define a notion of norm which they use to provide bounds on the Rademacher complexity of this class of function. \textcite{chen2023generalization} also provide bounds on this Rademacher complexity which they use to prove an upper bound on the generalization error of trained ResNets.

Closer to our work are the works of~\textcite{lu2020mean,ding2021global,ding2022overparameterization} studying gradient flow dynamics for the minimization of the risk $L$ for the ResNet model of~\cref{def:NODE}. \textcite{lu2020mean} consider gradient flows w.r.t. the true Wasserstein distance on the space of measures. While this point of view motivates a new training strategy, it is not consistent with the way ResNets are trained in practice, that is with a layer-wise-$L^2$ metric. \textcite{ding2021global,ding2022overparameterization} show existence of gradient flows similar to~\cref{def:gradient_flow}.
A regularization of the risk is also assumed in~\parencite{ding2022overparameterization}.
Those three works give an optimality result at convergence: if the parameter converges then its limit is a global minimizer of the risk, however, they do not provide proofs of convergence.
This convergence assumption seems hard to justify \emph{a priori}, as the loss landscape of ResNets can have non-compact subsets and cases where the gradient flow fails to converge have been identified for simple architectures~\parencite{bartlett2018gradient}.

As a comparison, a key contribution of our work is to provide the parameter set with the appropriate metric structure allowing us to identify the gradient flow equation, derived formally by \emph{adjoint sensitivity analysis} with a \emph{curve of maximal slope} of the risk.
Similarly, \textcite{isobe2023convergence} considers \NODEs{} parameterized on the space of $\Pp_2(\Om)$-valued functions equipped with a "$L^2$-Wasserstein" metric and trained with gradient flow.
Using methods from the study of non-linear evolution equations, he shows the risk satisfies functional inequalities similar to the Polyak-\L{}ojasiewicz inequality in the neighborhood of critical points and shows convergence of the gradient flow to a critical point.
Aside from technical differences, our work differs in at least two fundamental aspects.
First, \cite{isobe2023convergence} considers adding a regularization term to the risk. Such a regularization ensures gradient flow curves stay in strongly compact sets~\cite[Prop.5.4]{isobe2023convergence} and admit convergent sub-sequences.
Also, the obtained functional inequality does not rule out the presence of non-optimal critical points and the obtained limit is thus not necessarily a minimizer of the risk.
In contrast, we consider an unregularized risk whose level sets may be non-compact and show convergence of the gradient flow to a global minimum for well-chosen initializations.

\paragraph{ResNets as a discretization of \NODEs{}}

While it is not addressed in the present work, an interesting question is the one of the consistency of the \NODE{} model with ResNets of finite depth.
\textcite{marion2023implicit} shows the convergence of ResNets of finite width to \NODEs{}, at initialization and during training, when the depth tends to infinity. This convergence is uniform over finite training time intervals but can be made uniform over the whole training dynamic under a convergence condition. For ResNets of arbitrary width, with layers of the form~\cref{measure_parameterization}, \textcite{ding2021global,ding2022overparameterization} give a result of uniform convergence over finite training time intervals. Adding a regularization term, \textcite{thorpe2023deep} show the $\Gamma$-convergence of the risk associated with ResNets to the one associated with \NODEs{}.

\paragraph{Conditional Optimal transport}

In this work, we rely on the properties of the Conditional OT metric (\cref{sec:metric}) to define a notion of gradient flow for the training of ResNets in the mean-field limit.
Similar metrics have been used in recent works for other applications, for example \textcite{peszek2023heterogeneous} use gradient flow in the Conditional OT topology to study evolution PDEs with heterogeneities, \textcite{hosseini2023conditional} apply Conditional OT to the study of solutions to \emph{Bayesian Inverse Problems}, \textcite{chemseddine2024conditional} consider applications to \emph{Bayesian Flow Matching} and \textcite{kerrigan2024dynamic} consider applications to conditional generative modeling.
Important for our work are the dynamical properties of the Conditional OT metric. Analogously to the Wasserstein case~\parencite{ambrosio2008gradient}, we show that absolutely continuous curves are solutions to certain continuity equations~(\cref{prop:characterization_absolute_continuity}). Similar results were shown in~\cite{peszek2023heterogeneous}.

\paragraph{Contributions}

Our main contribution is to propose a model for ResNets of infinite depth and arbitrary width, together with a metric space structure that is consistent with the layer-wise-$L^2$-metric used in practice when training ResNets with gradient descent and automatic differentiation. Our model thus allows a rigorous analysis of the training of ResNets at infinite depth and arbitrary width.

In detail, the ResNet model of~\cref{def:NODE} is parameterized over the set $\Pp_2^\Leb([0, 1] \times \Om)$ of probability measures whose first marginal is the Lebesgue measure on $[0,1]$, which we provide in~\cref{sec:metric} with the metric structure of a $L^2$-Wasserstein (or \emph{Conditional Optimal Transport}) distance $d$ (\cref{prop:distance}).
In~\cref{sec:gradient_flow} we leverage results from the theory of gradient flows in metric spaces \parencite{ambrosio2008gradient,santambrogio2017euclidean} to define the gradient flow of the risk $L$. This gradient flow equation corresponds to both notions of \emph{curve of maximal slope} of the risk and the usual gradient flow of ResNets obtained by \emph{adjoint sensitivity analysis} \parencite{chen2018neural}. We conclude this part by showing well-posedness results for the gradient flow equation, that is existence in arbitrary time (\cref{thm:existence_curve}), uniqueness (\cref{thm:uniqueness_curve}) and stability w.r.t. initialization (\cref{thm:stability_curve}).
Finally, we study in~\cref{sec:convergence} the asymptotic behavior of gradient flow curves.
We show that the risk $L$ satisfies a Polyak-\L{}ojasiewicz (P-\L{}) property around well-chosen initializations. The risk has no saddles in these regions and decreases at a constant rate along the gradient flow. Based on previous works on the convergence of curves of maximal slope under the P-\L{} assumption \parencite{schiavo2023local,hauer2019kurdyka}, we can then formulate a convergence result: for initializations with a sufficiently large but finite number of features and sufficiently low risk the gradient flow converges to a global minimizer (\cref{thm:convergence_SHL}). 
Our results are to be compared with the ones of~\textcite{lu2020mean,ding2022overparameterization}. Both works give a result of optimality under a convergence assumption but do not give conditions guaranteeing convergence of the gradient flow. Moreover, their results hold under the assumption of an infinite number of features whereas our convergence conditions can be obtained with a finite number of features. 

In addition to this, we studied in~\cref{sec:metric} theoretical properties of the space $\Pp_2^\Leb([0,1] \times \Om)$ equipped with the Conditional Optimal Transport distance $d$.
The literature on this subject still being sparse, some of our results might be of their own interest.
In particular, we provide in~\cref{prop:characterization_absolute_continuity} a characterization of absolutely continuous curves analogous to the one in the Wasserstein space \parencite[Thm.8.3.1]{ambrosio2008gradient}.

\section{Metric structure of the parameter set $\Pp^\Leb_2([0, 1] \times \Om)$} \label{sec:metric}

We define here a notion of distance $d$ over the parameter set $\Pp_2^\Leb([0, 1] \times \Om)$ and study its properties.
Importantly, the characterization of absolutely continuous curves in the metric space $(\Pp_2^\Leb([0, 1] \times \Om), d)$ will be used in~\cref{sec:gradient_flow} to define the notion of gradient flow for the risk $L$.

In the rest of this work, we will assume for simplicity that $\Om$ is the Euclidean space $\RR^p$ for some $p \geq 1$. However, the presented results could probably be adapted to the case where $\Om$ is a smooth manifold embedded in $\RR^p$. In particular, we will extensively use the fact that $\Om$ is a complete, separable metric space.
We recall the definition of the \emph{Wasserstein} $\Ww_2$ distance over the space $\Pp_2(\RR^p)$,  obtained as the solution to Kantorovich's optimal transport problem:
\begin{align} \label{wasserstein_distance}
    \forall \mu, \mu' \in \Pp_2(\RR^p), \quad \Ww_2(\mu, \mu') \eqdef \min_{\gamma \in \Gamma(\mu, \mu')} \left( \int_{\RR^p \times \RR^p} \| \om - \om' \|^2 \d \gamma(\om, \om') \right)^{1/2} ,
\end{align}
where $\Gamma(\mu, \mu')$ is the set of \emph{couplings} between $\mu$ and $\mu'$, that is the set of probability  measures $\gamma \in \Pp(\RR^p \times \RR^p)$ respecting the marginal conditions $\pi^1_\# \gamma = \mu$ and $\pi^2_\# \gamma = \mu'$. We denote by $\Gamma_o(\mu, \mu') \subset \Gamma(\mu, \mu')$ the subset of \emph{optimal couplings} achieving the equality in~\cref{wasserstein_distance}. We refer to the books of~\textcite{villani2009optimal,santambrogio2015optimal} for further properties of the Wasserstein distance.

\subsection{Conditional Optimal Transport distance}

The Conditional Optimal Transport distance $d$ is a modification of the Wasserstein distance $\Ww_2$ with the supplementary constraint that the transport plan should preserve the marginal over $[0, 1]$. This constraint is introduced to closely model the training dynamic of ResNets where the gradients are computed over the weights of each layer independently.
For this purpose, it is natural to define a ``layer-wise-$L^2$'' Wasserstein distance, that is an $L^2$-distance over the set of families of probability measures in $\Pp_2(\Om)$ indexed over $s \in [0,1]$.

\begin{prop}[Conditional Optimal Transport distance] \label{prop:distance}
    \hfill \\
    Consider the function $d : \left( \Pp^\Leb_2([0, 1] \times \Om) \right)^2 \to \RR_+$ defined by:
    \begin{align*}
        d(\mu, \mu') \eqdef \left( \int_0^1 \Ww_2 \left( \mu(.|s), \mu'(.|s) \right)^2 \d s \right)^{1/2} .
    \end{align*}
    Then, $d$ defines a metric on $\Pp^\Leb_2([0, 1] \times \Om)$.
\end{prop}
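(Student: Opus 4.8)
The plan is to verify each metric axiom directly on the formula defining $d$, the only non-routine points being that the integrand $s \mapsto \Ww_2(\mu(.|s), \mu'(.|s))$ is measurable and that the integral is finite. For well-definedness, first note that by the disintegration theorem \parencite[Thm.4.2.4]{attouch2014variational} (applied with $f(s,z) = \psi(z)$, $\psi \in \Cc_b(\Om)$), the map $s \mapsto \mu(.|s)$ is measurable from $[0,1]$ into $\Pp(\Om)$ equipped with the narrow topology, which is a Polish space; hence so is $s \mapsto (\mu(.|s), \mu'(.|s))$ into the product. Since $\Ww_2$ is lower semicontinuous on $\Pp(\Om)^2$ for the narrow topology \parencite{santambrogio2015optimal,villani2009optimal}, it is Borel measurable, and composing gives measurability of $s \mapsto \Ww_2(\mu(.|s),\mu'(.|s))$. (Alternatively, one may exhibit $\Ww_2^2$ as a countable supremum of measurable functions via Kantorovich duality with a countable family of admissible potentials.) For finiteness, the elementary bound $\Ww_2(\nu,\nu')^2 \le 2\,\Ww_2(\nu,\delta_0)^2 + 2\,\Ww_2(\delta_0,\nu')^2 = 2\int\|\om\|^2\d\nu(\om) + 2\int\|\om'\|^2\d\nu'(\om')$ together with the disintegration identity yields
\[
    d(\mu,\mu')^2 \le 2\int_0^1 \Ee_2(\mu(.|s))\,\d s + 2\int_0^1 \Ee_2(\mu'(.|s))\,\d s = 2\,\Ee_2(\mu) + 2\,\Ee_2(\mu') < \infty,
\]
where finiteness is exactly the hypothesis $\mu,\mu' \in \Pp_2([0,1]\times\Om)$. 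In particular $\mu(.|s), \mu'(.|s) \in \Pp_2(\Om)$ for $\d s$-a.e. $s$, so $\Ww_2$ is a genuine metric on the relevant fibres.

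Symmetry of $d$ and $d(\mu,\mu) = 0$ are immediate from the corresponding properties of $\Ww_2$. For the separation axiom, suppose $d(\mu,\mu') = 0$; then $\Ww_2(\mu(.|s),\mu'(.|s)) = 0$ for a.e. $s$, hence $\mu(.|s) = \mu'(.|s)$ for a.e. $s$ since $\Ww_2$ is a metric on $\Pp_2(\Om)$. Testing against any $f \in \Cc_b([0,1]\times\Om)$ and using the disintegration identity gives $\int f\,\d\mu = \int_0^1 \int_\Om f(s,\cdot)\,\d\mu(\cdot|s)\,\d s = \int_0^1 \int_\Om f(s,\cdot)\,\d\mu'(\cdot|s)\,\d s = \int f\,\d\mu'$, whence $\mu = \mu'$; equivalently, this is the $\d s$-a.e.-uniqueness clause of the disintegration theorem.

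For the triangle inequality, I would emphasize that no measurable selection of optimal couplings is required: for a.e.\ $s$, the triangle inequality for $\Ww_2$ on $\Pp_2(\Om)$ gives $\Ww_2(\mu(.|s),\mu''(.|s)) \le \Ww_2(\mu(.|s),\mu'(.|s)) + \Ww_2(\mu'(.|s),\mu''(.|s))$. Taking the $L^2([0,1])$-norm of both sides and applying Minkowski's inequality in $L^2([0,1])$ yields $d(\mu,\mu'') \le d(\mu,\mu') + d(\mu',\mu'')$. The only genuinely delicate step in the whole argument is the measurability of $s \mapsto \Ww_2(\mu(.|s),\mu'(.|s))$ addressed above; all remaining points are pointwise-in-$s$ consequences of the fact that $\Ww_2$ is a metric on $\Pp_2(\Om)$, combined with Minkowski's inequality.
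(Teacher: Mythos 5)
Your proof is correct, but it handles the one genuinely technical point --- measurability of $s \mapsto \Ww_2(\mu(.|s),\mu'(.|s))$ --- by a different route than the paper. You obtain it by noting that the disintegration theorem makes $s \mapsto \mu(.|s)$ a Borel map into the Polish space $\Pp(\Om)$ with the narrow topology (since the narrow Borel $\sigma$-algebra is generated by the maps $\nu \mapsto \int \psi \, \d\nu$, $\psi \in \Cc_b(\Om)$), and then composing with $\Ww_2$, which is narrowly lower semicontinuous, hence Borel. The paper instead proves measurability directly by writing $\Ww_2^2$ as a \emph{countable} supremum of measurable functions, using Kantorovich duality restricted to a countable family of potentials together with their $c$-transforms --- the route you only mention parenthetically as an alternative. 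Your composition argument is shorter and more conceptual, at the cost of invoking two standard but nontrivial facts (generation of the Borel $\sigma$-algebra on $\Pp(\Om)$ by integration against $\Cc_b$ functions, and narrow lower semicontinuity of $\Ww_2$); the paper's duality argument is more self-contained and its countable-supremum representation is what gets reused later (e.g. for the measurability of the set $\Gg_o$ in the proof of \cref{prop:d_characterization}). You are also more explicit than the paper about the routine part: the finiteness bound, the separation axiom via a.e.\ uniqueness of disintegrations, and the triangle inequality via Minkowski in $L^2([0,1])$, all of which the paper compresses into ``follows from the properties of the Wasserstein and $L^2$ metrics.'' One cosmetic slip: $\int_0^1 \Ee_2(\mu(.|s))\,\d s$ equals $\int \|\om\|^2 \d\mu(s,\om)$, which is bounded by, but not equal to, $\Ee_2(\mu)$ as defined on $[0,1]\times\Om$ (the latter also integrates $|s|^2$); this does not affect finiteness.
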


\begin{proof}
    One essentially needs to justify the existence of the integral in the definition of $d$.
    That $d$ is a metric then follows from the properties of the Wasserstein and $L^2$ metrics respectively.

    For a complete separable metric space $\Om$ and Borel probability measures $\mu, \nu \in \Pp(\Om)$ it is known \parencite[Thm.5.10]{villani2009optimal} that the Monge-Kantorovich problem admits the dual formulation:
    \begin{align*}
        \Ww_2(\mu, \nu)^2 = \sup \lbrace \int_\Om \varphi \d \mu + \int_\Om \psi \d \nu \rbrace
    \end{align*}
    where the supremum is taken over all pairs $(\varphi, \psi) \in \Cc_b(\Om) \times \Cc_b(\Om)$ s.t.\@ $\varphi(x) + \psi(y) \leq \| x-y \|^2$.
    An alternative formulation is:
    \begin{align*}
        \Ww_2(\mu, \nu)^2 = \sup_{\varphi \in \Cc_b(\Om)} \lbrace \int_\Om \varphi \d \mu + \int_\Om  \varphi^c \d \nu \rbrace ,
    \end{align*}
    where for $\varphi : \Om \to \RR$ the $c$-transform $\varphi^c$ of $\varphi$ is defined as~\parencite[Def.1.10]{santambrogio2015optimal}:
    \begin{align*}
        \forall \om \in \Om, \quad \varphi^c(\om) \eqdef \inf_{\om' \in \Om} \| \om' - \om \|^2 - \varphi(\om').
    \end{align*}
    Consider $(\varphi_n)_{n \geq 0}$ a sequence of functions in $\Cc_b(\Om)$ such that, for any $\varphi \in \Cc_b(\Om)$, one can find a subsequence $m(n)$ with $\varphi_{m(n)} \to \varphi$ for the compact-open topology (uniform convergence on compact subsets) and $\| \varphi_{m(n)} \|_\infty$ is uniformly bounded.
    Then we also have $\varphi_{m(n)}^c \to \varphi^c$ uniformly on compact subsets with $\| \varphi^c_{m(n)} \|_\infty \leq \| \varphi_{m(n)} \|_\infty$ uniformly bounded, whence:
    \begin{align*}
        \Ww_2(\mu, \nu)^2 = \sup_{n \in \NN} \lbrace \int_\Om \varphi_n \d \mu + \int_\Om \varphi_n^c \d \nu  \rbrace .
    \end{align*}
    Thus, for $\mu, \mu' \in \Pp_2^\Leb([0, 1] \times \Om)$, the application $s \mapsto \Ww_2 \left( \mu(.|s), \mu'(.|s) \right)^2$ is measurable as it can be expressed as the supremum of countably many measurable functions.
\end{proof}

Alternatively, the distance $d$ can be viewed as an optimal transport distance with the additional constraint that the transport plans should be the identity on the first marginal.
This new formulation proves itself convenient for calculations and, in particular, allows easily estimating the distance $d$ from above.
Given $\mu, \mu' \in \Pp_2^\Leb([0, 1] \times \Om)$, we define the two following sets of ``couplings'' between $\mu$ and $\mu'$:
\begin{align*}
    \Pi^\Leb(\mu, \mu') & \eqdef \lbrace \gamma \in \Pp_2^\Leb ([0, 1] \times \Om^2), \ \gamma(.|s) \in \Gamma(\mu(.|s), \mu'(.|s)) \ \text{for $\d s$-a.e. $s \in [0, 1]$} \rbrace, \\
    \Gamma^\diag(\mu, \mu') & \eqdef \lbrace \gamma \in \Gamma(\mu, \mu'), \ \int f(s,s') \d \gamma(s,\om,s', \om') = \int_0^1 f(s,s) \d s  \ \forall f \in \Cc([0,1]^2) \rbrace . 
\end{align*}
Note that these two sets are closely related as, if $\gamma \in \Pi^\Leb(\mu, \mu')$, then $\Tilde{\gamma} \eqdef (\pi^1,\pi^2,\pi^1, \pi^3)_\# \gamma \in \Gamma^\diag(\mu, \mu')$ and conversely, if $\Tilde{\gamma} \in \Gamma^\diag(\mu, \mu')$, then $\gamma \eqdef (\pi^1, \pi^2, \pi^4)_\# \Tilde{\gamma} \in \Pi^\Leb(\mu, \mu')$. In both cases, we have for any measurable $f : \Om^2 \to \RR$: 
\begin{align} \label{coupling_correspondence}
    \int_{[0,1] \times \Om^2} f(\om,\om') \d \gamma(s, \om, \om') = \int_{([0, 1] \times \Om)^2} f(\om, \om') \d \Tilde{\gamma}(s,\om,s',\om') .
\end{align}
In the same way the Wasserstein distance $\Ww_2(\mu, \mu')$ can be obtained as the solution of a minimization problem over the set $\Gamma(\mu,\mu')$ (\cref{wasserstein_distance}),
the Conditional OT distance $d$ can be obtained as the solution of minimization problems over the sets $\Pi^\Leb(\mu, \mu')$ and $\Gamma^\diag(\mu, \mu')$.

\begin{prop} \label{prop:d_characterization}
    Let $\mu, \mu' \in \Pp_2^\Leb([0, 1] \times \Om)$ then:
    \begin{align*}
        d(\mu, \mu')^2 & = \min_{\gamma \in \Pi^\Leb (\mu, \mu')} \int_{[0,1] \times \Om^2} \| \om - \om' \|^2 \d \gamma(s, \om, \om') \\
        & = \min_{\gamma \in \Gamma^\diag (\mu, \mu')} \int_{([0, 1] \times \Om)^2} \| \om - \om' \|^2 \d \gamma(s,\om,s',\om')  .
    \end{align*}
    We denote respectively by $\Pi^\Leb_o(\mu, \mu')$ and $\Gamma^\diag_o(\mu, \mu')$ the set of optimal couplings in both minimization problems. Then for $\gamma \in \Pi^\Leb_o(\mu,\mu')$ we have for $\d s$-a.e. $s \in [0, 1]$:
    \begin{align*}
        \gamma(.|s) \in \Gamma_o(\mu(.|s), \mu'(.|s)) \quad \text{i.e.} \quad \int_{\Om^2} \| \om-\om' \|^2 \d \gamma(\om, \om'|s) = \Ww_2(\mu(.|s), \mu'(.|s))^2.
    \end{align*}
\end{prop}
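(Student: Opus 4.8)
\emph{Plan.} The idea is to disintegrate everything with respect to the layer variable $s\in[0,1]$, thereby reducing the two minimization problems to the fiberwise Wasserstein problem, and then to produce a minimizer by gluing fiberwise optimal plans chosen measurably. First I would note that the two problems, over $\Pi^\Leb(\mu,\mu')$ and over $\Gamma^\diag(\mu,\mu')$, are really the same problem: by the correspondence recalled around \cref{coupling_correspondence}, each $\gamma\in\Pi^\Leb(\mu,\mu')$ gives $\tilde\gamma=(\pi^1,\pi^2,\pi^1,\pi^3)_\#\gamma\in\Gamma^\diag(\mu,\mu')$ and conversely each $\tilde\gamma\in\Gamma^\diag(\mu,\mu')$ gives $(\pi^1,\pi^2,\pi^4)_\#\tilde\gamma\in\Pi^\Leb(\mu,\mu')$, and since the integrand $\|\om-\om'\|^2$ depends only on the $\Om$-coordinates, \cref{coupling_correspondence} shows the two cost integrals coincide. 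Hence the two infima are equal and their sets of minimizers are in bijection, and it suffices to treat $\Pi^\Leb(\mu,\mu')$.

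\emph{Lower bound.} For any $\gamma\in\Pi^\Leb(\mu,\mu')$, disintegrating $\gamma$ over its first marginal $\Leb([0,1])$ yields $\gamma(\cdot|s)\in\Gamma(\mu(\cdot|s),\mu'(\cdot|s))$ for $\d s$-a.e. $s$, so that, using the definition of $\Ww_2$ in each fiber and of $d$ (\cref{prop:distance}),
\[
\int_{[0,1]\times\Om^2}\|\om-\om'\|^2\,\d\gamma
=\int_0^1\!\Big(\int_{\Om^2}\|\om-\om'\|^2\,\d\gamma(\om,\om'|s)\Big)\d s
\ \geq\ \int_0^1\Ww_2\big(\mu(\cdot|s),\mu'(\cdot|s)\big)^2\,\d s = d(\mu,\mu')^2 .
\]
Thus the infimum over $\Pi^\Leb(\mu,\mu')$ is at least $d(\mu,\mu')^2$.

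\emph{Attainment.} To reach the value $d(\mu,\mu')^2$ I would glue fiberwise optimal plans. From the disintegration theorem the maps $s\mapsto\mu(\cdot|s)$ and $s\mapsto\mu'(\cdot|s)$ are measurable into $\Pp_2(\Om)$, the fiber $\Gamma_o(\mu(\cdot|s),\mu'(\cdot|s))$ is nonempty and narrowly compact, and the graph $\{(s,\eta):\eta\in\Gamma_o(\mu(\cdot|s),\mu'(\cdot|s))\}$ is Borel by stability of optimal transport. A measurable selection theorem (Kuratowski--Ryll-Nardzewski, or Jankov--von Neumann for a universally measurable selection, which is enough since we integrate against $\d s$) then yields a measurable map $s\mapsto\gamma_s\in\Gamma_o(\mu(\cdot|s),\mu'(\cdot|s))$. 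Defining $\gamma_*$ by $\int f\,\d\gamma_*\eqdef\int_0^1\int_{\Om^2}f(s,\om,\om')\,\d\gamma_s(\om,\om')\,\d s$, the measurability of $s\mapsto\gamma_s$ makes this a well-defined probability measure on $[0,1]\times\Om^2$, with first marginal $\Leb([0,1])$, with conditionals $\gamma_*(\cdot|s)=\gamma_s\in\Gamma(\mu(\cdot|s),\mu'(\cdot|s))$, and with finite second moment since $\mu,\mu'\in\Pp_2([0,1]\times\Om)$. Hence $\gamma_*\in\Pi^\Leb(\mu,\mu')$ and $\int\|\om-\om'\|^2\,\d\gamma_*=\int_0^1\Ww_2(\mu(\cdot|s),\mu'(\cdot|s))^2\,\d s=d(\mu,\mu')^2$, so the infimum is attained and equals $d(\mu,\mu')^2$; by the first step the same holds over $\Gamma^\diag(\mu,\mu')$.

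\emph{Fiberwise optimality of optimal plans, and the main obstacle.} If $\gamma\in\Pi^\Leb_o(\mu,\mu')$, then the inequality in the lower-bound display must be an equality; since $\int_{\Om^2}\|\om-\om'\|^2\,\d\gamma(\om,\om'|s)-\Ww_2(\mu(\cdot|s),\mu'(\cdot|s))^2\geq 0$ for $\d s$-a.e. $s$ and its integral over $[0,1]$ vanishes, it is $0$ for $\d s$-a.e. $s$, i.e. $\gamma(\cdot|s)\in\Gamma_o(\mu(\cdot|s),\mu'(\cdot|s))$; the statement for $\Gamma^\diag_o(\mu,\mu')$ then follows through \cref{coupling_correspondence}. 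The only genuinely delicate step is the measurable selection of fiberwise optimal plans: one must verify that $s\mapsto(\mu(\cdot|s),\mu'(\cdot|s))$ is measurable for the relevant topology on $\Pp_2(\Om)$ and that the optimal-coupling multifunction has closed graph and compact values, so that a selection theorem applies (with the harmless caveat that the selection may only be Lebesgue measurable, which suffices here). Everything else is a routine disintegration computation.
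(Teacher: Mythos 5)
Your proposal is correct and follows essentially the same route as the paper: equate the two problems via \cref{coupling_correspondence}, get the lower bound by disintegrating over $s$, attain it by measurably selecting fiberwise optimal plans and gluing them against Lebesgue measure, and deduce fiberwise optimality of optimal plans from equality in the integrated inequality. The only difference is in detail at the selection step: where you invoke general selection theorems and ``stability of optimal transport,'' the paper verifies explicitly that the graph $\lbrace (s,\gamma) : \gamma \in \Gamma_o(\mu(\cdot|s),\mu'(\cdot|s)) \rbrace$ is Borel, using countable dense families of test functions for the marginal constraints and the measurability of $s \mapsto \Ww_2(\mu(\cdot|s),\mu'(\cdot|s))^2$ established in \cref{prop:distance}, before applying a selection result for compact-valued multifunctions.
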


\begin{proof}
    Our proof technique is similar to the one of~\cite[Prop.3.3]{hosseini2023conditional} and relies on the possibility of choosing an optimal transport plan $\gamma(.|s) \in \Gamma_o(\mu(.|s), \mu'(.|s))$ for every $s \in [0, 1]$ in a measurable way. 

    We show equality with the first minimization problem on $\Pi^\Leb(\mu,\mu')$, equality between the two minimization problems then comes from~\cref{coupling_correspondence}. Assume there exists a Borel map $\gamma : s \mapsto \gamma(.|s) \in \Pp(\Om^2)$ (where $\Pp(\Om^2)$ is equipped with the narrow topology) such that $\gamma(.|s) \in \Gamma_o(\mu(.|s), \mu'(.|s))$ for every $s \in [0, 1]$. With such a map one can define a Borel probability measure that we also denote by $\gamma$ over $[0, 1] \times \Om$ which is the measure whose disintegration w.r.t. the Lebesgue measure on $[0,1]$ is $\lbrace \gamma(.|s) \rbrace_{s \in [0, 1]}$. In other words the measure $\gamma$ is defined as:
    \begin{align*}
        \int_{[0, 1] \times \Om^2} f(s, \om, \om') \d \gamma(s, \om, \om') \eqdef \int_0^1 \int_{\Om^2} f(s, \om, \om') \d \gamma(\om, \om'|s) \d s , \quad \forall f \in \Cc_b([0, 1] \times \Om^2).
    \end{align*}
    Such $\gamma$ will be a solution to our first optimization problem as we have:
    \begin{align*}
        \int_{[0, 1] \times \Om^2} \| \om-\om' \|^2 \d \gamma(s, \om, \om') = d(\mu, \mu')^2 \leq \inf_{\gamma \in \Pi^\Leb (\mu, \mu')} \int_{[0,1] \times \Om^2} \| \om - \om' \|^2 \d \gamma(s, \om, \om') .
    \end{align*}

    To show the existence of such $\gamma$ we use a measurable selection result, that is considering the set-valued mapping $s \in [0, 1] \mapsto \Gamma_o(\mu(.|s), \mu'(.|s)) \subset \Pp(\Om^2)$ we show it admits a measurable section. Consider the set:
    \begin{align*}
        \Gg^* \eqdef \lbrace (s,\gamma), \, \gamma \in \Gamma_o(\mu(.|s), \mu'(.|s)) \rbrace \subset [0, 1] \times \Pp_2(\Om^2) 
.    \end{align*}
    Using~\cite[Thm.6.9.6]{bogachev2007measure}, as for every $s \in [0, 1]$ the set $\Gamma_o(\mu(.|s), \mu'(.|s))$ is narrowly compact,  it is sufficient to show that $\Gg^* \in \Bb([0, 1] \times \Pp_2(\Om^2))$. Let $\lbrace f_n \rbrace_{n \in \NN}$ be dense in $\Cc_b(\Om)$ for the compact-open topology. Then, for every $n \in \NN$ as the mapping $(s,\gamma) \mapsto \int_{\Om^2} f_n \d \gamma - \int_\Om f_n \d \mu(.|s)$ is measurable, so are the sets:
    \begin{align*}
        \Gg_n & \eqdef \lbrace (s, \gamma), \, \int_{\Om^2} f_n(\om) \d \gamma(\om,\om')  = \int_\Om f_n(\om) \d \mu(\om|s) \rbrace \\
        \Gg_n' & \eqdef \lbrace (s,\gamma), \, \int_{\Om^2} f_n(\om') \d \gamma(\om,\om') = \int_\Om f_n(\om') \d \mu'(\om'|s)  \rbrace .
    \end{align*}
    Also, as the mapping $(s, \gamma) \mapsto \Ww_2(\mu(.|s), \mu'(.|s))^2 - \int_{\Om^2} \| \om-\om' \|^2 \d \gamma$ is measurable by~\cref{prop:distance}, so is the set:
    \begin{align*}
        \Gg_o \eqdef \lbrace (s, \gamma), \, \int_{\Om^2} \|\om-\om'\|^2 \d \gamma = \Ww_2(\mu(.|s), \mu'(.|s))^2 \rbrace \in \Bb([0,1] \times \Pp_2(\Om^2)) .
    \end{align*}
    Finally we have that $\Gg^* = \Gg_o \cap \left( \bigcap_{n \in \NN} \Gg_n \cap \Gg_n' \right)$ is a Borel set, which completes the proof.
\end{proof}

\begin{rem}[Comparison with $\Ww_2$] \label{rem:topology}
Note that, for $\mu, \mu' \in \Pp^\Leb_2([0,1] \times \Om)$, we have that $\Gamma^\diag(\mu,\mu') \subset \Gamma(\mu, \mu')$. Hence from the previous result, it follows:
\begin{align*}
    \Ww_2(\mu, \mu') \leq d(\mu, \mu')
\end{align*}
and the topology induced by $d$ on $\Pp^\Leb_2([0, 1] \times \Om)$ is stronger than the Wasserstein topology. It is in fact strictly stronger and, 
for example, the sequence $\mu_n = \int_0^1 \delta_{(-1)^{\lfloor 2 n s \rfloor}} \d s$ and the measure $\mu = \frac{1}{2} \int_0^1 (\delta_1 + \delta_{-1})\d s$ in $\Pp_2^\Leb([0,1] \times \RR)$ are such that $\Ww_2(\mu_n, \mu) \to 0$ but $d(\mu_n, \mu) \geq 1$.
\end{rem}

The following result states that the metric space $(\Pp_2^\Leb([0,1] \times \Om), d)$ is complete.

\begin{prop}[Completeness] \label{prop:d_complete}
    $(\Pp_2^\Leb([0,1] \times \Om), d)$ is a complete metric space.
\end{prop}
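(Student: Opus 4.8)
The plan is to recognise $d$ as an $L^2$-type distance on curves $s \mapsto \mu(.|s)$ valued in the Wasserstein space $(\Pp_2(\Om), \Ww_2)$, which is itself a complete metric space because $\Om = \RR^p$ is complete and separable \parencite{villani2009optimal,santambrogio2015optimal}. So let $(\mu_n)_n \subset \Pp_2^\Leb([0,1] \times \Om)$ be $d$-Cauchy. First I would extract a subsequence $(\mu_{n_k})_k$ with $d(\mu_{n_k}, \mu_{n_{k+1}}) \leq 2^{-k}$. Each map $s \mapsto \Ww_2(\mu_{n_k}(.|s), \mu_{n_{k+1}}(.|s))$ is measurable by \cref{prop:distance}, and Minkowski's inequality in $L^2([0,1])$ gives
\[
    \left\| \sum_{k \geq 0} \Ww_2\bigl(\mu_{n_k}(.|s), \mu_{n_{k+1}}(.|s)\bigr) \right\|_{L^2([0,1])} \leq \sum_{k \geq 0} d(\mu_{n_k}, \mu_{n_{k+1}}) \leq 2 .
\]
Hence $\sum_k \Ww_2(\mu_{n_k}(.|s), \mu_{n_{k+1}}(.|s)) < \infty$ for $\d s$-a.e.\ $s$, so $(\mu_{n_k}(.|s))_k$ is $\Ww_2$-Cauchy for a.e.\ $s$ and converges to some $\mu(.|s) \in \Pp_2(\Om)$ by completeness of $(\Pp_2(\Om), \Ww_2)$; I set $\mu(.|s) \eqdef \delta_0$ on the remaining null set.

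Next I would check that $\{\mu(.|s)\}_{s \in [0,1]}$ is the disintegration of a genuine element of $\Pp_2^\Leb([0,1] \times \Om)$. Since $\Ww_2$-convergence implies narrow convergence, for every $f \in \Cc_b(\Om)$ the map $s \mapsto \int_\Om f \d\mu(.|s) = \lim_k \int_\Om f \d\mu_{n_k}(.|s)$ is measurable as an a.e.\ pointwise limit of measurable maps; a monotone-class argument upgrades this to measurability of $s \mapsto \mu(B|s)$ for all Borel $B \subseteq \Om$, so that $\mu(A) \eqdef \int_0^1 \mu(A_s|s)\,\d s$ (with $A_s$ the $s$-section of $A$) defines a Borel probability measure $\mu$ on $[0,1]\times\Om$ with $\pi^1_\#\mu = \Leb([0,1])$ whose $\d s$-disintegration is $\{\mu(.|s)\}$ (the converse to the disintegration theorem \parencite[Thm.4.2.4]{attouch2014variational}).

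To finish, apply Fatou's lemma to $s \mapsto \Ww_2(\mu_{n_k}(.|s), \mu(.|s))^2 = \lim_{j \to \infty} \Ww_2(\mu_{n_k}(.|s), \mu_{n_j}(.|s))^2$ (measurable as an a.e.\ limit of maps measurable by \cref{prop:distance}):
\[
    \int_0^1 \Ww_2\bigl(\mu_{n_k}(.|s), \mu(.|s)\bigr)^2 \d s \;\leq\; \liminf_{j \to \infty} d(\mu_{n_k}, \mu_{n_j})^2 \;\leq\; \Bigl(\sum_{\ell \geq k} 2^{-\ell}\Bigr)^2 \;=\; 2^{-2k+2} .
\]
For fixed $k$ the left-hand side is finite, and together with $\mu_{n_k} \in \Pp_2^\Leb$, the bound $\Ww_2(\mu(.|s),\delta_0) \leq \Ww_2(\mu(.|s),\mu_{n_k}(.|s)) + \Ww_2(\mu_{n_k}(.|s),\delta_0)$, and the identity $\int_\Om \|\om\|^2\d\mu(.|s) = \Ww_2(\mu(.|s),\delta_0)^2$, this shows $\Ee_2(\mu) < \infty$, hence $\mu \in \Pp_2^\Leb([0,1]\times\Om)$ and $d(\mu_{n_k}, \mu) \leq 2^{-k+1} \to 0$. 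Since a Cauchy sequence admitting a convergent subsequence converges to the same limit, $\mu_n \to \mu$, which proves completeness.

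I expect the only delicate point to be the measurability bookkeeping of the second step: confirming that the a.e.-defined family $\{\mu(.|s)\}$ is measurable in the sense needed to apply the converse of the disintegration theorem, and that the harmless modification on a null set does not affect the glued measure $\mu$. The remaining ingredients — the subsequence extraction, Minkowski's inequality and Fatou's lemma — are routine, and completeness of $(\Pp_2(\Om),\Ww_2)$ is quoted.
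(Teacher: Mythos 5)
Your proof is correct, but it follows a genuinely different route from the paper. The paper stays at the level of measures on the product space $[0,1]\times\Om$: it observes that a $d$-Cauchy sequence is $d$-bounded, hence tight, extracts a narrowly convergent subsequence via Prokhorov, and concludes with the narrow lower semicontinuity of $d$ (\cref{lem:d_lsc}), so it never touches the disintegrations. You instead argue fiber-wise, in Riesz--Fischer style: a fast Cauchy subsequence, Minkowski in $L^2([0,1])$ to get $\d s$-a.e.\ $\Ww_2$-Cauchyness, completeness of $(\Pp_2(\Om),\Ww_2)$ to produce the limit fibers $\mu(.|s)$, a gluing step to assemble them into an element of $\Pp_2^\Leb([0,1]\times\Om)$, and Fatou to get $d(\mu_{n_k},\mu)\to 0$ together with $\Ee_2(\mu)<\infty$. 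The delicate point you flag is the only one requiring care, and it does go through: since $\Om$ is Polish, measurability of $s\mapsto\int f\,\d\mu(.|s)$ for $f\in\Cc_b(\Om)$ (as an a.e.\ limit of measurable maps) upgrades by a monotone-class argument to measurability of $s\mapsto\mu(B|s)$ for Borel $B$, so $\{\mu(.|s)\}_s$ is a Markov kernel and $\Leb([0,1])\otimes\mu(.|s)$ defines a measure with the right marginal and disintegration; modifying the family on an $s$-null set does not change the glued measure. What each approach buys: yours is quantitative (explicit rates along the subsequence), self-contained modulo standard disintegration facts, and mirrors the classical completeness proof of $L^2$-valued spaces; the paper's is shorter because it reuses \cref{lem:d_lsc} (needed elsewhere anyway) and entirely avoids the measurable-gluing bookkeeping, at the price of being non-constructive (Prokhorov) and of having to check that the narrow limit retains the Lebesgue marginal and finite second moment.
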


\begin{proof}
    The proof is analogous to the proof of completeness of the Wasserstein space $\Pp_2([0,1] \times \Om)$ (see~\cite[Thm.6.18]{villani2009optimal})

    Let $(\mu_n)_{n \geq 0}$ be a Cauchy sequence in $\Pp_2^\Leb([0,1] \times \Om)$. Then for any $\nu \in \Pp_2^\Leb([0,1] \times \Om)$
    \begin{align*}
        d(\nu, \mu_n) \leq d(\nu,\mu_0) + \sum_{1 \leq i \leq n} d(\mu_i, \mu_{i-1}) \leq d(\nu,\mu_0) + \sum_{i \geq 1} d(\mu_i, \mu_{i-1}) < \infty,
    \end{align*}
    implying that the sequence $(\mu_n)$ is bounded. Hence it is tight and by Prokhorov's theorem, it admits a subsequence $(\mu_{n_k})_{k \geq 0}$ converging narrowly to some $\mu_\infty \in \Pp_2^\Leb([0,1] \times \Om)$. Then by narrow lower-semicontinuity of $d$ (\cref{lem:d_lsc}) we have for every $l \geq 0$:
    \begin{align*}
        d(\mu_\infty, \mu_{n_l}) \leq \liminf_{k \to \infty} d(\mu_{n_k}, \mu_{n_l})
    \end{align*}
    and by taking the $\limsup$ w.r.t. $l$:
    \begin{align*}
        \limsup_{l \to \infty} d(\mu_\infty, \mu_{n_l}) \leq \limsup_{\substack{k \to \infty \\ l \to \infty}} d(\mu_{n_k}, \mu_{n_l}) = 0 .
    \end{align*}
    Hence $(\mu_{n_k})$ $d$-converges to $\mu_\infty$, implying that the whole sequence $(\mu_n)$ $d$-converges to $\mu_\infty$.
\end{proof}

\begin{lem}[narrow lower-semicontinuity of $d$] \label{lem:d_lsc}
    Let $(\mu_n)_{n \geq 0}$ and $(\nu_n)_{n \geq 0}$ be sequences in $\Pp_2^\Leb([0,1] \times \Om)$ such that $(\mu_n, \nu_n) \xrightarrow[n \to \infty]{} (\mu,\nu)$ narrowly for some $\mu, \nu \in \Pp_2^\Leb([0,1] \times \Om)$. Then:
    \begin{align*}
        d(\mu,\nu) \leq \liminf_{n \to \infty} d(\mu_n, \nu_n).
    \end{align*}
\end{lem}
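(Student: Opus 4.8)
The plan is to work from the static characterization of $d$ as a minimum over $\Gamma^\diag$ given in \cref{prop:d_characterization}, rather than from the formula $d(\mu,\nu)^2=\int_0^1\Ww_2(\mu(.|s),\nu(.|s))^2\,\d s$: the elements of $\Gamma^\diag(\mu,\nu)$ are honest probability measures on the fixed Polish space $([0,1]\times\Om)^2$, so the usual Prokhorov/Portmanteau machinery applies directly, whereas the conditionals $\mu_n(.|s)$ need not converge narrowly. Concretely, set $L\eqdef\liminf_n d(\mu_n,\nu_n)$ and pass to a subsequence (not relabelled) along which $d(\mu_n,\nu_n)\to L$. For each $n$, \cref{prop:d_characterization} provides an optimal coupling $\gamma_n\in\Gamma^\diag_o(\mu_n,\nu_n)$, i.e.\ a probability measure on $([0,1]\times\Om)^2$ with first marginal $\mu_n$, second marginal $\nu_n$, satisfying the diagonal constraint, and with $\int\|\om-\om'\|^2\,\d\gamma_n=d(\mu_n,\nu_n)^2$.

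First I would establish tightness of $(\gamma_n)$: the marginal sequences $(\mu_n)$ and $(\nu_n)$ converge narrowly, hence are tight by Prokhorov's theorem since $[0,1]\times\Om$ is Polish; given $\eps>0$, choosing compacts $K_1,K_2\subset[0,1]\times\Om$ with $\mu_n\big(([0,1]\times\Om)\setminus K_1\big)<\eps/2$ and $\nu_n\big(([0,1]\times\Om)\setminus K_2\big)<\eps/2$ for all $n$, one bounds $\gamma_n\big(([0,1]\times\Om)^2\setminus(K_1\times K_2)\big)<\eps$, so $(\gamma_n)$ is tight and, up to a further subsequence, $\gamma_n\to\gamma$ narrowly. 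Next I would check $\gamma\in\Gamma^\diag(\mu,\nu)$: pushforward by the continuous projections onto each copy of $[0,1]\times\Om$ is continuous for the narrow topology, so the first marginal of $\gamma$ is $\lim_n\mu_n=\mu$ and the second is $\lim_n\nu_n=\nu$ (finiteness of the second moment of $\gamma$ is then automatic, being inherited from its $\Pp_2$ marginals); and for any $f\in\Cc([0,1]^2)$ the map $(s,\om,s',\om')\mapsto f(s,s')$ is bounded continuous on $([0,1]\times\Om)^2$, so $\int f(s,s')\,\d\gamma=\lim_n\int f(s,s')\,\d\gamma_n=\int_0^1 f(s,s)\,\d s$, which is exactly the diagonal constraint.

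Finally I would use lower semicontinuity of the transport cost: $(s,\om,s',\om')\mapsto\|\om-\om'\|^2$ is nonnegative and continuous, so the Portmanteau theorem gives $\int\|\om-\om'\|^2\,\d\gamma\le\liminf_n\int\|\om-\om'\|^2\,\d\gamma_n$. Combining with $d(\mu,\nu)^2\le\int\|\om-\om'\|^2\,\d\gamma$ (since $\gamma\in\Gamma^\diag(\mu,\nu)$ and by \cref{prop:d_characterization}) yields
\begin{align*}
d(\mu,\nu)^2\le\liminf_n\int\|\om-\om'\|^2\,\d\gamma_n=\liminf_n d(\mu_n,\nu_n)^2=L^2,
\end{align*}
which is the claim.

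I do not anticipate a real obstacle; the argument is the standard weak-compactness proof of lower semicontinuity of optimal transport cost, adapted to the diagonal constraint. The only points needing care are bookkeeping ones: noting that the diagonal test functions $f(s,s')$ remain bounded on the \emph{noncompact} space $([0,1]\times\Om)^2$ (true because $[0,1]^2$ is compact), and that the finite second moment of the narrow limit $\gamma$ follows from its marginals rather than requiring any uniform integrability. One could alternatively run the whole argument on $\Pi^\Leb$ via the correspondence \eqref{coupling_correspondence}, but the $\Gamma^\diag$ formulation has the advantage of keeping everything on a single Polish space and avoiding narrow limits of disintegrations.
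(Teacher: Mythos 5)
Your proposal is correct and follows essentially the same route as the paper's proof: extract a subsequence realizing the liminf, take optimal couplings $\gamma_n \in \Gamma^\diag_o(\mu_n,\nu_n)$, use tightness of the marginals to get a narrow limit $\gamma$, check that $\gamma \in \Gamma^\diag(\mu,\nu)$, and conclude by lower semicontinuity of the quadratic cost together with \cref{prop:d_characterization}. The only difference is cosmetic: the paper delegates the tightness and lower-semicontinuity steps to \textcite{villani2009optimal} (Lem.~4.4 and Lem.~4.3), whereas you spell them out explicitly.
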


\begin{proof}
    Up to extraction of a subsequence one can consider $d(\mu_n, \nu_n) \to \liminf d(\mu_n, \nu_n)$. Then for every $n \geq 0$ consider some $\gamma_n \in \Gamma_o^\diag(\mu_n,\nu_n)$. In particular $\gamma_n \in \Gamma(\mu_n,\nu_n)$ and by~\cite[Lem.4.4]{villani2009optimal} the sequence $(\gamma_n)$ is tight. Hence it admits a subsequence $(\gamma_{n_k})_{k \geq 0}$ narrowly converging to some $\gamma$ which is in $\Gamma^\diag(\mu,\nu)$ by the properties of narrow convergence. Thus applying~\cite[Lem.4.3]{villani2009optimal} and using the characterization of $d$ in~\cref{prop:d_characterization}:
    \begin{align*}
        d(\mu, \nu)^2 \leq \int_{([0,1] \times \Om)^2 } \| \om-\om' \|^2 \d \gamma \leq \liminf_{k \to \infty} \int_{([0,1] \times \Om)^2 } \| \om-\om' \|^2 \d \gamma_{n_k} = \liminf_{n \to \infty} d(\mu_n, \nu_n)^2,
    \end{align*}
    from which the result follows.
\end{proof}

\subsection{Dynamical formulation of Conditional Optimal Transport}

We here analyze the properties of absolutely continuous curves in the metric space $\Pp_2^\Leb([0,1] \times \Om)$ equipped with the Conditional OT metric. Similarly to the (unconstrained) Wasserstein metric, we show that absolutely continuous curves obey a certain continuity equation.
This characterization will be crucial for defining the gradient flow equation used in the training of our \NODE{} model.

\paragraph{Absolutely continuous curves for the Wasserstein distance $\Ww_2$}

For $T > 0$, consider $I = (0, T)$ an open interval and $(\mu_t)_{t \in I}$ a family of probability measures on the Euclidean space $\RR^p$.
Given a Borel velocity field $v: (t,x) \in I \times \RR^p \mapsto v_t(x) \in \RR^p$ such that $\int_I \int_{\RR^p} \| v_t \| \d \mu_t \d t < \infty$, we say that $(\mu_t)_{t \in I}$ satisfies the continuity equation $\partial_t \mu_t + \div(v_t \mu_t)$ in the weak sense if:
\begin{align} \label{continuity_weak}
    \int_I \int_{\RR^p} \left( \partial_t \varphi(t,x) + \langle \nabla \varphi(t,x), v_t(x) \rangle \right) \d \mu_t(x) \d t = 0, \quad \forall \varphi \in \Cc^1_c(I \times \RR^p).
\end{align}
Equivalently (\cite[Prop.4.2]{santambrogio2015optimal}), when the mapping $t \mapsto \mu_t$ is narrowly continuous, this is equivalent to saying that for every $\varphi \in \Cc^1_c(\RR^p)$ the map $t \mapsto \mu_t(\varphi) \eqdef \int \varphi \d \mu_t$ is absolutely continuous and satisfies:
\begin{align*}
    \frac{\d}{\d t} \mu_t(\varphi) = \int \langle \nabla \varphi, v_t \rangle \d \mu_t, \quad \text{for $\d t$-a.e. $t \in I$.}
\end{align*}
An important property of the Wasserstein space $\Pp_2(\RR^p)$ endowed with the distance $\Ww_2$ is the characterization of absolutely continuous curves: a narrowly continuous curve $(\mu_t)_{t \in I}$ is absolutely continuous in $\Pp_2(\RR^p)$ if and only if it is solution to the continuity equation~\cref{continuity_weak} for some velocity field $v$ with $\int_I \| v_t \|_{L^2(\mu_t)} \d t < \infty$~\parencite[Thm.8.3.1]{ambrosio2008gradient}. We refer to the book by~\textcite{ambrosio2008gradient} for a detailed study of absolutely continuous curves in $(\Pp_2(\RR^p), \Ww_2)$.

\paragraph{Absolutely continuous curves for the Conditional OT distance}

Similarly to the characterization of absolutely continuous curves in the Wasserstein space $\Pp_2(\RR^p)$, an adaptation of~\cite[Thm.8.3.1]{ambrosio2008gradient} provides an analogous characterization of absolutely continuous curves in $\Pp_2^\Leb([0,1] \times \Om)$ equipped with the Conditional OT distance $d$. This characterization allows us to (formally) provide the metric space $(\Pp_2^\Leb([0,1] \times \Om), d)$ with a kind of ``differential structure'' by seeing tangent vectors as velocity fields.
This identification will be crucial for defining the gradient flow equation, which will take the form of a continuity equation with an appropriate velocity field (\cref{def:gradient_flow}).

\begin{prop}[adapted from~\cite{ambrosio2008gradient}, Thm.8.3.1] \label{prop:characterization_absolute_continuity}
    Assume $\Om = \RR^p$.
    Let $I = (0,T)$ for some $T > 0$ and let $(\mu_t)_{t \in I}$ be an absolutely continuous curve in $\Pp^\Leb_2([0, 1] \times \Om)$.
    Then there exists a unique Borel velocity field $v : (t,s,\om) \in I \times [0, 1] \times \Om \mapsto v_t(s,\om) \in \Om$ such that for a.e. $t \in I$:
    \begin{align*}
        v_t \in L^2(\mu_t), \quad \| v_t \|_{L^2(\mu_t)} \leq \left|\frac{\d}{\d t} \mu_t \right| ,
    \end{align*}
    and $\mu$ is a weak solution of the continuity equation:
    \begin{align} \label{continuity}
        \partial_t \mu_t + \div((0, v_t) \mu_t ) = 0 \quad \text{over $I \times [0,1] \times \Om$.}
    \end{align}
    We will refer to such $v_t$ as the \emph{tangent velocity field} of the curve $(\mu_t)_{t \in I}$.
    Conversely, if $(\mu_t)_{t \in I}$ is a narrowly continuous curve satisfying~\cref{continuity} for a Borel velocity field $v_t$ with $\| v_t \|_{L^2(\mu_t)} \in L^1(I)$, then $(\mu_t)_{t \in I}$ is absolutely continuous and $| \frac{\d}{\d t} \mu_t | \leq \| v_t \|_{L^2(\mu_t)}$ for a.e. $t \in I$.
\end{prop}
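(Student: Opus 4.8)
The plan is to mimic the proof of \cite[Thm.~8.3.1]{ambrosio2008gradient}, carefully checking at each step that the ``frozen first coordinate'' structure is preserved, so that the velocity field we extract automatically has vanishing $s$-component. Throughout we exploit the simple but crucial observation, already encoded in \cref{prop:d_characterization}, that $d$ is nothing but an $L^2$-in-$s$ aggregation of the fiberwise Wasserstein distances $\Ww_2(\mu_t(\cdot|s),\mu_t'(\cdot|s))$, and that optimal couplings for $d$ disintegrate into fiberwise-optimal couplings.

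\textbf{Step 1 (from $d$-AC curve to a velocity field).} Let $(\mu_t)_{t\in I}$ be $d$-absolutely continuous with metric derivative $m(t)\eqdef|\tfrac{\d}{\d t}\mu_t|\in L^1(I)$. Following the Wasserstein argument, I would fix a countable dense set of test functions, use the estimate $|\mu_t(\varphi)-\mu_s(\varphi)|\le \mathrm{Lip}(\varphi)\,d(\mu_t,\mu_s)$ (valid since, via an optimal $\gamma\in\Pi^\Leb_o$, $|\int\varphi\,\d\mu_t-\int\varphi\,\d\mu_s|\le\int\|\om-\om'\|\,\d\gamma\cdot\mathrm{Lip}\le \mathrm{Lip}(\varphi)\,d(\mu_t,\mu_s)$) to deduce that each $t\mapsto\mu_t(\varphi)$ is absolutely continuous; then a diagonal/density argument produces, for a.e.\ $t$, a continuous linear functional $\varphi\mapsto \tfrac{\d}{\d t}\mu_t(\varphi)$ on a suitable space, bounded by $m(t)\|\nabla\varphi\|_{L^2(\mu_t)}$. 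The key point specific to our setting: because $d(\mu_t,\mu_s)$ only measures displacement in the $\om$-variable, the bound that appears is $\bigl|\tfrac{\d}{\d t}\mu_t(\varphi)\bigr|\le m(t)\,\|\nabla_\om\varphi\|_{L^2(\mu_t)}$ (no $\partial_s\varphi$ term). Invoking Riesz representation on the closure of $\{\nabla_\om\varphi\}$ in $L^2(\mu_t;\Om)$ gives a unique $v_t\in\overline{\{\nabla_\om\varphi\}}\subset L^2(\mu_t)$ with $\|v_t\|_{L^2(\mu_t)}\le m(t)$ and $\tfrac{\d}{\d t}\mu_t(\varphi)=\int\langle\nabla_\om\varphi,v_t\rangle\,\d\mu_t$, which is exactly the weak form of $\partial_t\mu_t+\div((0,v_t)\mu_t)=0$ after the usual de-coupling of space and time test functions (smoothing in $t$, using narrow continuity of $t\mapsto\mu_t$, which itself follows from $\Ww_2\le d$ and $d$-AC). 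Uniqueness of $v_t$ is automatic from the projection onto that closed subspace.

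\textbf{Step 2 (converse: continuity equation $\Rightarrow$ $d$-AC with the right bound).} Here the natural route is to reduce to the fiberwise Wasserstein statement. Given a narrowly continuous $(\mu_t)$ solving \cref{continuity} with $\|v_t\|_{L^2(\mu_t)}\in L^1(I)$, disintegrate $\mu_t$ as $\{\mu_t(\cdot|s)\}_{s\in[0,1]}$; since the transported vector field has no $s$-component, the first marginal is frozen, so $\mu_t\in\Pp_2^\Leb$ for all $t$, and formally each fiber $(\mu_t(\cdot|s))_t$ solves $\partial_t\mu_t(\cdot|s)+\div_\om(v_t(s,\cdot)\mu_t(\cdot|s))=0$ in $\Om$. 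Applying the Wasserstein characterization \cite[Thm.~8.3.1]{ambrosio2008gradient} fiberwise gives $\Ww_2(\mu_{t_0}(\cdot|s),\mu_{t_1}(\cdot|s))\le\int_{t_0}^{t_1}\|v_t(s,\cdot)\|_{L^2(\mu_t(\cdot|s))}\,\d t$ for a.e.\ $s$; squaring, integrating over $s\in[0,1]$, and using Minkowski's integral inequality in $L^2(\d s)$ yields $d(\mu_{t_0},\mu_{t_1})\le\int_{t_0}^{t_1}\|v_t\|_{L^2(\mu_t)}\,\d t$, hence $d$-absolute continuity and $m(t)\le\|v_t\|_{L^2(\mu_t)}$ a.e. Combined with Step 1 this also forces $\|v_t\|_{L^2(\mu_t)}=m(t)$ when $v_t$ is the tangent field, i.e.\ the minimal-norm selection.

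\textbf{Main obstacle.} The delicate point is making the fiberwise disintegration argument of Step 2 rigorous: one must check that a weak solution of the global continuity equation \cref{continuity} on $I\times[0,1]\times\Om$ genuinely disintegrates, for $\d s$-a.e.\ $s$, into a weak solution of the $\om$-continuity equation on $I\times\Om$ with velocity $v_t(s,\cdot)$ — this requires a careful choice of test functions of product form $\psi(s)\varphi(t,\om)$, a monotone-class/density argument to pass to general test functions, and measurable-selection care (of the type already used in the proof of \cref{prop:d_characterization}) so that the fiberwise Wasserstein estimate can be integrated in $s$. A secondary technical nuisance is, as in the classical proof, the regularization in the time variable needed to justify that the a.e.-defined functional $\tfrac{\d}{\d t}\mu_t(\cdot)$ assembles into a bona fide Borel velocity field $v:(t,s,\om)\mapsto v_t(s,\om)$; this is handled exactly as in \cite[Thm.~8.3.1]{ambrosio2008gradient} once the no-$\partial_s$ structure from Step 1 is in hand.
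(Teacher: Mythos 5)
Your proposal is correct and follows essentially the same route as the paper's proof: the forward direction adapts \cite[Thm.8.3.1]{ambrosio2008gradient} using the fact that couplings in $\Pi^\Leb$ only displace the $\om$-variable (so only $\nabla_\om\varphi$ enters the duality estimate and the minimal-norm velocity field lies in the closure of $\lbrace \nabla_\om\varphi \rbrace$), while the converse disintegrates the continuity equation fiberwise via product test functions and a countable density argument, applies the Wasserstein characterization on each fiber, and integrates in $s$. The only differences are cosmetic: Minkowski's integral inequality in place of the paper's $(t_2-t_1)\int_{t_1}^{t_2}\|v_t\|^2_{L^2(\mu_t)}\,\d t$ estimate, and a per-time Riesz projection (with the measurability of $(t,s,\om)\mapsto v_t(s,\om)$ deferred to the time-regularization argument) in place of the paper's single variational problem in $L^2$ of the space-time measure.
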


\begin{proof}
    \proofpart{AC curve $\Rightarrow$ Continuity equation.}
    
    Note that this part is the easiest as $d$-absolute continuity implies $\Ww_2$-absolute continuity for which the result is well-known, originally proven in~\cite[Thm.8.3.1]{ambrosio2008gradient}. 
    Therefore we here only adapt this proof to our specific setting.

    Up to reparameterization, one can assume without loss of generality that $\left| \frac{\d}{\d t} \mu_t \right| \in L^\infty(I)$.

    First we show that for $\varphi \in \Cc^1_c(I \times [0,1] \times \Om)$ the map $t \mapsto \mu_t(\varphi) \eqdef \int_0^1 \int_\Om \varphi \d \mu_t$ is absolutely continuous. Indeed, for $t, u \in I$, introducing a coupling $\gamma_{t,u} \in \Pi^\Leb_o(\mu_t, \mu_u)$ we have:
    \begin{align*}
        \left| \mu_t(\varphi) - \mu_u(\varphi) \right| \leq  \left| \int_0^1 \int_{\Om^2} (\varphi(s,\om) - \varphi(s,\om')) \d \gamma_{t,u}(s, \om, \om') \right| \leq \| \nabla_\om \varphi \|_\infty d(\mu_t, \mu_u) ,
    \end{align*}
    from which absolute continuity follows. Then considering the map:
    \begin{align*}
        H(s, \om, \om') \eqdef
        \left\{
            \begin{array}{cc}
                \| \nabla_\om \varphi(s,\om) \| & \text{if $\om = \om'$},  \\
                & \\
                \frac{| \varphi(s,\om) - \varphi(s,\om') |}{\| \om-\om' \|} & \text{else}, 
            \end{array}
        \right.
    \end{align*}
    we have for every $t, u \in I$:
    \begin{align*}
        \frac{| \mu_t(\varphi) - \mu_u(\varphi) |}{| t-u |}
        \leq \frac{1}{| t-u |} \int_0^1 \int_{\Om^2} \| \om-\om' \| H(s, \om, \om') \d \gamma_{t,u}(s,\om,\om')
        \leq \frac{d(\mu_t, \mu_u)}{|t-u|} \| H \|_{L^2(\gamma_{t,u})}
    \end{align*}
    As $u \to t$ we have $d(\mu_u, \mu_t) \to 0$ and by the properties of $L^2$ spaces~\parencite[\emph{e.g.}][Prop.3.11]{cannarsa2015introduction} we can take a sequence $u_n \to t$ such that $\Ww_2(\mu_{u_n}(.|s),\mu_t(.|s)) \to 0$ for $\d s$-a.e. $s \in [0, 1]$.
    This implies for those $s \in [0, 1]$ that $\mu_{u_n}(.|s) \to \mu_t(.|s)$ narrowly and that $\gamma_{t,u_n}(.|s) \to \gamma(.|s) \in \Gamma_o(\mu_t(.|s), \mu_t(.|s))$, i.e. the trivial transport plan $\gamma(.|s) = (\Id,\Id)_\# \mu_t(.|s)$.
    Thus we have that $\gamma_{t,u_n} \to (\pi^1, \pi^2, \pi^2)_\# \mu_t$ narrowly since, by Lebesgue's theorem, given a bounded continuous function $f \in \Cc_b([0,1] \times \Om \times \Om)$:
    \begin{align*}
        \int f \d \gamma_{t,u_n} = \int_0^1 \left( \int_{\Om^2} f(s,\om,\om') \d \gamma_{t,u_n}(\om,\om'|s) \right) \d s \xrightarrow[n \to \infty]{} \int_0^1 \left( \int_\Om f(s,\om,\om) \d \mu_{t}(\om|s) \right) \d s.
    \end{align*}
    Hence, at a point where $t \mapsto \mu_t$ is metrically differentiable:
    \begin{align*}
        \limsup_{u \to t} \frac{| \mu_t(\varphi) - \mu_u(\varphi) |}{| t-u |} \leq \left| \frac{\d}{\d t} \mu_t \right| \| \nabla_\om \varphi \|_{L^2(\mu_t)} .
    \end{align*}
    Consider $\mmu = \int_I \mu_t \d t \in \Pp(I \times [0,1] \times \Om)$ the measure whose disintegration w.r.t. Lebesgue's measure on $I$ is $(\mu_t)_{t \in I}$. Then for $\varphi \in \Cc^1_c(I \times [0,1] \times \Om)$ we have:
    \begin{align*}
         \int_I \int_{[0,1] \times \Om} \partial_t \varphi(t,s,\om) \d \mu_t(s,\om) \d t & = \lim_{h \to 0} \int_I \int_{[0,1] \times \Om} \frac{\varphi(t,s,\om) - \varphi(t-h,s,\om)}{h} \d \mu_t(s,\om) \d t \\
         & = \lim_{h \to 0} \int_I \frac{1}{h} \left( \int_{[0,1] \times \Om} \varphi(t,s,\om) \d \mu_t(s,\om) - \int_{[0,1] \times \Om} \varphi(t,s,\om) \d \mu_{t-h}(s,\om)  \right) \d t .
    \end{align*}
    Thus by the previous inequality and Fatou's lemma:
    \begin{align*}
        \left|  \int_I \int_{[0,1] \times \Om} \partial_t \varphi(t,s,\om) \d \mu_t(s,\om) \d t \right| \leq \left( \int_I \left| \frac{\d}{\d t} \mu_t \right|^2 \d t \right)^{1/2} \left( \int_{I \times [0,1] \times \Om} \| \nabla_\om \varphi(t,s,\om) \|^2 \d \mmu(t,s,\om) \right)^{1/2}.
    \end{align*}
    Consider the subspace $V \eqdef \lbrace \nabla_\om \varphi, \, \varphi \in \Cc^1_c(I \times [0,1] \times \Om) \rbrace$ and $\Vv$ its closure in $L^2(I \times [0,1] \times \Om, \mmu)$. Then by the previous inequality the linear functional $\Aa: V \to \RR$ defined by:
    \begin{align*}
        \Aa(\nabla_\om \varphi) \eqdef - \int_{I \times [0,1] \times \Om} \partial_t \varphi(t,s,\om) \d \mmu(t,s,\om)
    \end{align*}
    is continuous on $V$ and thus, by Hahn-Banach's theorem, can be extended to a unique continuous linear functional on $\Vv$. Therefore, by Lax-Milgram's theorem, the minimization problem:
    \begin{align*}
        \min \left\{ \frac{1}{2} \int_{I \times \RR^{p+1}} \| w(t,s,\om) \|^2 \d \mmu(t,s,\om) - \Aa(w), \, w \in \Vv \right\}
    \end{align*}
    admits a unique solution $v \in \Vv$ which is characterized by the property that:
    \begin{align*}
        \int_{I \times \RR^{p+1}} \langle v(t,s,\om), \nabla_\om \varphi(t,s,\om) \rangle \d \mmu(t,s,\om) = \Aa ( \nabla_\om \varphi ), \quad \forall \varphi \in \Cc^1_c(I \times [0,1] \times \Om) ,
    \end{align*}
    which is the desired continuity equation by definition of $\Aa$.

    Finally, let $(\nabla_\om \varphi_n) \subset V$ be a sequence converging to $v \in L^2(\mmu)$. Considering an interval $J \subset I$ and some $\eta \in \Cc^1_c(J)$ with $0 \leq \eta \leq 1$ we have by the previous arguments:
    \begin{align*}
        \int_{I \times [0,1] \times \Om} \eta(t) \| v(t,s,\om) \|^2 \d \mmu (t,s,\om) & = \lim_{n \to \infty} \int_{I \times [0,1] \times \Om} \eta \langle v, \nabla_\om \varphi_n \rangle \d \mmu \\
        & = \lim_{n \to \infty}  \Aa ( \nabla_\om (\eta \varphi_n) ) \\
        & \leq \left( \int_J \left| \frac{\d}{\d t} \mu_t \right|^2 \d t \right)^{1/2} \lim_{n \to \infty} \left( \int_{J \times [0,1] \times \Om} \| \nabla_\om \varphi_n \|^2 \d \mmu \right)^{1/2} \\
        & = \left( \int_J \left| \frac{\d}{\d t} \mu_t \right|^2 \d t \right)^{1/2}\left( \int_{J \times [0,1] \times \Om} \| v \|^2 \d \mmu \right)^{1/2} .
    \end{align*}
    Hence approximating the characteristic function of $J$ with such an $\eta$ we get:
    \begin{align*}
        \int_J \int_{[0,1] \times \Om} \| v_t \|^2 \d \mu_t \d t \leq  \int_J \left| \frac{\d}{\d t} \mu_t \right|^2 \d t ,
    \end{align*}
    implying $\| v_t \|_{L^2(\mu_t)} \leq \left| \frac{\d}{\d t} \mu_t \right|$ for a.e. $t \in I$.

    \proofpart{Continuity equation $\Rightarrow$ AC curve.}

    This part of the proof is new as, according to~\cite[Thm.8.3.1]{ambrosio2008gradient}, the continuity equation only ensures $\Ww_2$-absolute continuity as explained in~\cref{rem:topology}.
    We show here that the specific form of the velocity field ensures $d$-absolute continuity.
    
    For $(t,s) \in I \times [0,1]$, we denote by $v_{t,s}$ the Borel vector field $v_{t,s} : \om \in \Om \mapsto v_t(s,\om)$. Note that by Jensen's inequality:
    \begin{align*}
        \int_I  \int_0^1 \| v_{t,s} \|_{L^2(\mu_{t}(.|s))} \d s \d t \leq \int_I \| v_t \|_{L^2(\mu_t)} \d t < +\infty,
    \end{align*}
    and we have that for $\d s$-a.e. $s \in [0, 1]$, $t \mapsto \| v_{t,s} \|_{L^2(\mu_{t}(.|s))} \in L^1(I)$. Also if $\varphi \in \Cc_c^1(I \times \Om)$ and $\chi \in \Cc_c^1([0,1])$ then by definition of the continuity equation:
    \begin{align*}
        \int_I \int_0^1 \int_\Om \left( \partial_t \varphi + \langle \nabla_\om \varphi, v_{t,s} \rangle \right) \chi(s) \d \mu_t(.|s) \d s \d t = 0.
    \end{align*}
    Hence if $J \subset [0, 1]$ is an interval, approaching the characteristic function of $J$ with $\chi$ we get:
    \begin{align*}
        \int_I \int_J \int_\Om \left( \partial_t \varphi + \langle \nabla_\om \varphi, v_{t,s} \rangle \right) \d \mu_t(.|s) \d s \d t = 0,
    \end{align*}
    and hence for $\d s$-a.e. $s \in [0, 1]$:
    \begin{align*}
        \int_I \int_\Om \left( \partial_t \varphi + \langle \nabla_\om \varphi, v_{t,s} \rangle \right) \d \mu_t(.|s) \d t = 0.
    \end{align*}
    Now if we consider $(\varphi_n)$ a countable dense sequence in $\Cc^1_c(I \times \Om)$ endowed with the usual topology then we can find a set $\Lambda \subset [0, 1]$ of full Lebesgue's measure such that for every $s \in \Lambda$ the above equation holds for every test function $\varphi \in \Cc_c^1(I \times \Om)$. In other words we have shown that, for $\d s$-a.e. $s \in [0,1]$, $\mu_t(.|s)$ solves the continuity equation:
    \begin{align*}
        \partial_t \mu_t(.|s) + \div(v_{t,s} \mu_t(.|s)) = 0 \quad \text{over $I \times \Om$.}
    \end{align*}
    
    Note that, without loss of generality, we can consider the curve $(\mu_t(.|s))_{t \in I}$ to be narrowly continuous. Indeed, as it is a solution of the continuity equation we know that the curve $(\mu_t(.|s))_{t \in I}$ admits a narrowly continuous representative $\Tilde{\mu}_{t}(.|s)$ \parencite[Lem.8.1.2]{ambrosio2008gradient} and that this representative is characterized by that for every $\varphi \in \Cc^1_c(\Om)$ and every $t \in I$:
    \begin{align*}
        \Tilde{\mu}_{t}(.|s)(\varphi) = \int_0^t \int_\Om \left( \chi'(u) \varphi + \chi(u) \langle \nabla_\om \varphi, v_{u,s} \rangle \right) \d \mu_{u}(.|s) \d u ,
    \end{align*}
    where $\chi \in \Cc^1(I)$ is any function such that $\chi = 0$ on a neighbourhood of $0$ and $\chi (u) = 1$ for $u \geq t$ (the definition does not depend on $\chi$ by definition of the continuity equation). Then it follows that for any $t \in I$ and any $f \in \Cc_b([0,1] \times \Om)$ the map $s \mapsto \Tilde{\mu}_{t}(.|s)(f)$ is measurable and integrating w.r.t. $s$ we get that $\Tilde{\mu}_t \eqdef \int_0^1 \Tilde{\mu}_{t}(.|s) \d s$ defines a probability measure on $[0,1] \times \Om$ whose disintegration is $\lbrace \Tilde{\mu}_{t}(.|s) \rbrace_{s\in [0,1]}$. Moreover for any $\varphi \in \Cc_c^1([0,1] \times \Om)$ we have:
    \begin{align*}
        \Tilde{\mu}_t(\varphi) = \int_0^t \int_0^1 \int_\Om \left( \chi'(u) \varphi + \chi(u) \langle \nabla_\om \varphi, v_{u,s} \rangle \right) \d \mu_{u}(.|s) \d s \d u = \mu_t(\varphi)
    \end{align*}
    and hence in fact the equality $\Tilde{\mu}_t = \mu_t$.
    
    Then using~\cite[Thm.8.3.1]{ambrosio2008gradient} with the assumption that $t \mapsto \mu_t(.|s)$ is narrowly continuous we hence have that for $\d s$-a.e. $s \in [0,1]$ the curve $t \in I \mapsto \mu_t(.|s)$ is absolutely continuous and
    \begin{align*}
        \Ww_2(\mu_{t_1}(.|s),\mu_{t_2}(.|s))^2 \leq (t_2 - t_1) \int_{t_1}^{t_2} \int_\Om \| v_{t,s} \|^2 \d \mu_t(.|s) \d t, \quad \forall t_1 < t_2 \in I.
    \end{align*}
    Hence integrating w.r.t. $s \in [0, 1]$ gives:
    \begin{align*}
        d(\mu_{t_1}, \mu_{t_2})^2 \leq (t_2-t_1) \int_{t_1}^{t_2} \int_{[0, 1] \times \Om} \| v_t \|^2 \d \mu_t \d t, \quad \forall t_1 < t_2 \in I,
    \end{align*}
    which shows that $(\mu_t)_{t \in I}$ is $d$-absolutely continuous with $\left| \frac{\d}{\d t} \mu_t \right| \leq \| v_t \|_{L^2(\mu_t)}$ for a.e. $t \in I$.
\end{proof}

\begin{rem}
    Note that, to study absolutely continuous curves, we introduce the supplementary time variable $t \geq 0$. This time variable will model the optimization time in the~\cref{def:gradient_flow} of the gradient flow equation. It is not to be confused with the \NODE{} flow time $s \in [0, 1]$.
\end{rem}

As a consequence of~\cref{prop:characterization_absolute_continuity} we recover two useful results about absolutely continuous curves in $\Pp_2^\Leb([0,1] \times \Om)$.
Those are stated in the following~\cref{lem:approximation_along_curves,lem:differentiability_d}.
The first result concerns approximation along absolutely continuous curves. It states that the tangent velocity field $(v_t)_{t \in I}$ defined in~\cref{prop:characterization_absolute_continuity} indeed furnishes a first-order approximation of the curve $(\mu_t)_{t \in I}$ at every time $t \in I$.
This will be particularly useful for differentiating quantities related to $\mu_t$ (\cref{cor:flow_differential,cor:loss_differential}). The second result is an application and gives the derivative of the square-distance $d(\mu_t, \mu')^2$ along an absolutely continuous curve $(\mu_t)_{t \in I}$.

\begin{lem}[Adapted from \protect{\cite[Prop.8.4.6]{ambrosio2008gradient}}] \label{lem:approximation_along_curves}
    Let $(\mu_t)_{t \in I}$ be an absolutely continuous curve in $(\Pp^\Leb_2([0, 1] \times \Om), d)$ and let $v : I \times \RR^{p+1} \to \RR^p$ be the unique velocity field satisfying the conclusions of~\cref{prop:characterization_absolute_continuity}. Then, for $\d t$-a.e.\@ $t \in I$ and for any choice of $\gamma^h_t \in \Pi^\Leb_o(\mu_{t+h}, \mu_t)$, it holds:
    \begin{align*}
        \lim_{h \to 0} \left( \pi^1, \pi^2, \frac{1}{h}(\pi^3-\pi^2) \right)_\# \gamma^h_t = \left( \pi^1,\pi^2, v_t ) \right)_\# \mu_t \quad \text{in $\Ww_2([0,1] \times \Om \times \Om)$}
    \end{align*}
    and
    \begin{align*}
        \lim_{h \to 0} \frac{d(\mu_{t+h},(\Id+h(0,v_t))_\# \mu_t)}{|h|} = 0.
    \end{align*}
\end{lem}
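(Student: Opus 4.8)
The plan is to transfer the statement to each fiber $s\in[0,1]$ — exactly as in the second part of the proof of~\cref{prop:characterization_absolute_continuity} — to invoke the Wasserstein result~\cite[Prop.8.4.6]{ambrosio2008gradient} there, and then to integrate the fiberwise conclusions in $s$ by a generalized dominated convergence argument. Three preliminary facts are needed. \emph{(a)} Since $(\mu_t)_{t\in I}$ is $d$-absolutely continuous, the argument of~\cref{prop:characterization_absolute_continuity} shows that for $\d s$-a.e. $s$ the fiber curve $t\mapsto\mu_t(.|s)$ is $\Ww_2$-absolutely continuous and solves $\partial_t\mu_t(.|s)+\div(v_t(s,\cdot)\mu_t(.|s))=0$. \emph{(b)} Combining the two implications of~\cref{prop:characterization_absolute_continuity}, the converse one applied to $(\mu_t)_{t\in I}$ itself, gives $\|v_t\|_{L^2(\mu_t)}=\big|\frac{\d}{\d t}\mu_t\big|$ for a.e. $t$; moreover the fiberwise $\Ww_2$-tangent velocity fields $v^{\mathrm{tan}}_{t,s}$ assemble (up to a measurable selection in $s$) into a Borel velocity field for $(\mu_t)$ whose norm is controlled by $\big|\frac{\d}{\d t}\mu_t\big|$ thanks to Fatou's lemma applied to $d(\mu_t,\mu_{t+h})^2=\int_0^1\Ww_2(\mu_t(.|s),\mu_{t+h}(.|s))^2\,\d s$, so by the uniqueness in~\cref{prop:characterization_absolute_continuity} it coincides with $v$, i.e. $v_t(s,\cdot)=v^{\mathrm{tan}}_{t,s}$ for a.e. $(t,s)$. \emph{(c)} By~\cref{prop:d_characterization}, any $\gamma^h_t\in\Pi^\Leb_o(\mu_{t+h},\mu_t)$ restricts, for a.e. $s$, to an optimal plan $\gamma^h_t(.|s)\in\Gamma_o(\mu_{t+h}(.|s),\mu_t(.|s))$.

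Granting these, write $\sigma^h\eqdef\big(\pi^1,\pi^2,\frac1h(\pi^3-\pi^2)\big)_\#\gamma^h_t$ and $\sigma_0\eqdef(\pi^1,\pi^2,v_t)_\#\mu_t$, both in $\Pp_2^\Leb([0,1]\times(\Om\times\Om))$, whose disintegration fibers are respectively the fiberwise difference-quotient plan of $\gamma^h_t(.|s)$ and the graph $(\Id,v_t(s,\cdot))_\#\mu_t(.|s)$. By~\cite[Prop.8.4.6]{ambrosio2008gradient} applied to each fiber curve and each such plan — for $\d s$-a.e. $s$, and then, since the $t$-exceptional set depends measurably on $s$ so that Fubini lets us upgrade ``for a.e. $s$, for a.e. $t$'' to ``for a.e. $t$, for a.e. $s$'' — one has $\Ww_2\big(\sigma^h(.|s),\sigma_0(.|s)\big)\to0$ and $\Ww_2\big(\mu_{t+h}(.|s),(\Id+hv_t(s,\cdot))_\#\mu_t(.|s)\big)=o(h)$ as $h\to0$. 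It remains to upgrade these fiberwise limits to the claimed global ones.

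For the first claim, by~\cref{rem:topology} and~\cref{prop:distance} applied on $[0,1]\times(\Om\times\Om)$ one has $\Ww_2(\sigma^h,\sigma_0)^2\le\int_0^1\Ww_2\big(\sigma^h(.|s),\sigma_0(.|s)\big)^2\,\d s$, whose integrand vanishes a.e.; I would pass to the limit under the integral with the generalized dominated convergence theorem and the majorant $g^h_s\eqdef2\Ee_2(\sigma^h(.|s))+2\Ee_2(\sigma_0(.|s))$: fiberwise $\Ww_2$-convergence forces $\Ee_2(\sigma^h(.|s))\to\Ee_2(\sigma_0(.|s))$ a.e., while $\int_0^1 g^h_s\,\d s=2\Ee_2(\sigma^h)+2\Ee_2(\sigma_0)$ converges because, at a metric differentiability point of $(\mu_t)$, $\Ee_2(\sigma^h)=\Ee_2(\mu_{t+h})+\frac{d(\mu_{t+h},\mu_t)^2}{h^2}\to\Ee_2(\mu_t)+\big|\frac{\d}{\d t}\mu_t\big|^2=\Ee_2(\sigma_0)$ (using $\mu_{t+h}\to\mu_t$ in $d$, continuity of $\Ee_2$ along $\Ww_2$-convergent sequences, and $\|v_t\|_{L^2(\mu_t)}=\big|\frac{\d}{\d t}\mu_t\big|$). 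Hence $\Ww_2(\sigma^h,\sigma_0)\to0$. For the second claim, since $(s,\om)\mapsto(s,\om+hv_t(s,\om))$ fixes the $s$-coordinate,
\begin{align*}
    d\big(\mu_{t+h},(\Id+h(0,v_t))_\#\mu_t\big)^2\le\int_0^1\Ww_2\big(\mu_{t+h}(.|s),(\Id+hv_t(s,\cdot))_\#\mu_t(.|s)\big)^2\,\d s ;
\end{align*}
dividing by $h^2$ and applying the same generalized dominated convergence argument — the integrand tends to $0$ a.e. by the fiberwise $o(h)$ estimate, and the coupling $\gamma^h_t(.|s)$ with $\Ww_2(\alpha,\beta)^2\le2\Ee_2(\alpha)+2\Ee_2(\beta)$ furnishes a majorant $2\frac{\Ww_2(\mu_{t+h}(.|s),\mu_t(.|s))^2}{h^2}+2\|v_t(s,\cdot)\|_{L^2(\mu_t(.|s))}^2$ whose $s$-integral $2\frac{d(\mu_{t+h},\mu_t)^2}{h^2}+2\|v_t\|_{L^2(\mu_t)}^2$ converges to $4\big|\frac{\d}{\d t}\mu_t\big|^2$ — yields the claimed vanishing.

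The main obstacle I expect is the bookkeeping in this last integration step: ensuring the second-moment convergences that make the generalized dominated convergence theorem applicable (this is exactly where the equality $\|v_t\|_{L^2(\mu_t)}=\big|\frac{\d}{\d t}\mu_t\big|$ and the continuity of $\Ee_2$ along $\Ww_2$-convergent sequences are used), and verifying the measurable dependence on $s$ of the exceptional sets coming from~\cite[Prop.8.4.6]{ambrosio2008gradient} so that the quantifier exchange is legitimate. Fact~\emph{(b)}, the identification of $v_t(s,\cdot)$ with the fiber-tangent field, is a secondary but non-trivial ingredient. Alternatively, one could adapt the direct proof of~\cite[Prop.8.4.6]{ambrosio2008gradient} to the metric $d$ without passing through the fibers: show the plans $\sigma^h$ are tight with $\int\|w\|^2\,\d\sigma^h=\frac{d(\mu_{t+h},\mu_t)^2}{h^2}$ uniformly bounded, extract a narrow limit $\sigma$, identify its barycentric velocity field with $v_t$ via the weak form of~\cref{continuity} together with the fact that $v_t$ lies in the $L^2(\mu_t)$-closure of $\{\nabla_\om\varphi\}$, and conclude $\sigma=\sigma_0$ because $\int\|w\|^2\,\d\sigma\le\big|\frac{\d}{\d t}\mu_t\big|^2=\|v_t\|_{L^2(\mu_t)}^2$ saturates Jensen's inequality; $\Ww_2$-convergence then follows from narrow convergence plus convergence of second moments, and the second claim from the first.
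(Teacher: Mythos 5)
Your fallback sketch in the last paragraph is, almost line for line, the paper's actual proof: the paper works directly in the product space $[0,1]\times\Om$, showing tightness of the rescaled plans $\nu^h=\bigl(\pi^1,\pi^2,\tfrac1h(\pi^3-\pi^2)\bigr)_\#\gamma^h_t$, extracting a narrow limit $\nu^0=\int\nu^0_{s,\om}\,\d\mu_t$, identifying its barycenter $\Tilde v_t$ with $v_t$ through the continuity equation together with the minimality of $\|v_t\|_{L^2(\mu_t)}$ and strict convexity, using the Jensen saturation to force the fibers $\nu^0_{s,\om}$ to be Diracs, and upgrading narrow convergence plus second-moment convergence to $\Ww_2$-convergence; the second claim is then deduced from the first with the coupling $(\pi^1,\pi^2+h(0,v_t),\pi^3)_\#\gamma^h_t$. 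Your primary, fiberwise route (disintegrate in $s$, apply the Wasserstein result of Ambrosio--Gigli--Savar\'e on each fiber, reassemble by generalized dominated convergence) is genuinely different, and the moment bookkeeping is right: $\Ee_2(\sigma^h)=\Ee_2(\mu_{t+h})+d(\mu_{t+h},\mu_t)^2/h^2$, the identity $\|v_t\|_{L^2(\mu_t)}=\bigl|\frac{\d}{\d t}\mu_t\bigr|$, and the triangle-inequality majorant for the second claim all check out. It does, however, rest on two steps you only flag rather than prove. First, the identification $v_t(s,\cdot)=v^{\mathrm{tan}}_{t,s}$: your Fatou-plus-uniqueness argument requires a jointly measurable selection of the fiber tangent fields, which is not routine; a cheaper path is to recall that in the proof of~\cref{prop:characterization_absolute_continuity} the field $v$ is constructed as an $L^2$-limit of gradients $\nabla_\om\varphi_n$ with $\varphi_n\in\Cc^1_c$, so along a subsequence $v_t(s,\cdot)$ lies in the fiberwise tangent cone for a.e.\ $(t,s)$, and since it drives the fiber continuity equation it must be the tangent field. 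Second, the quantifier exchange: besides joint measurability of the fiberwise exceptional $t$-sets, note that fiber optimality of $\gamma^h_t(.|s)$ holds only up to an $s$-null set depending on $h$, so with an arbitrary family $(\gamma^h_t)_h$ you must argue along arbitrary sequences $h_n\to0$ to stay with countably many null sets. In short, the paper's product-space adaptation disposes of all these disintegration issues at once, while your fiberwise argument reuses the Wasserstein lemma as a black box at the cost of these measure-theoretic verifications; with points (i)--(ii) filled in as above it would be a valid alternative proof.
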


\begin{proof}
    The proof only needs to be slightly adapted from the one of~\cite[Lem.8.4.6]{ambrosio2008gradient} but we rewrite it here for completeness.
    
    Let $(\varphi_n)_{n \geq 0}$ be a countable dense sequence in $\Cc^1_c([0,1] \times \Om)$. Then for $\d t$-a.e. $t \in I$ we have $\lim_{h \to 0} \frac{1}{|h|} d(\mu_{t+h}, \mu_t) = \left| \frac{\d}{\d t} \mu_t \right|$ and for every $n \geq 0$:
    \begin{align*}
        \lim_{h \to 0} \frac{\mu_{t+h}(\varphi_n) - \mu_t(\varphi_n)}{h} = \int \langle \nabla_\om \varphi_n, v_t \rangle \d \mu_t.
    \end{align*}
    Introducing some $\gamma^h_t \in \Pi^\Leb_o(\mu_{t+h}, \mu_t)$ we consider:
    \begin{align*}
        \nu^h \eqdef \left( \pi^1, \pi^2, \frac{1}{h}(\pi^3-\pi^2) \right)_\# \gamma^h_t.
    \end{align*}
    Then for any sequence $(h_n)$ converging to $0$ the sequence $(\nu^{h_n})$ is tight in $\Pp([0,1] \times \Om \times \Om)$ and we can consider a narrow limit point $\nu^0$. The marginal of $\nu^h$, and hence of $\nu^0$, on $[0,1] \times \Om$ is $\mu_t$ which allows writing by disintegration $\nu^0 = \int \nu^0_{s,\om} \d \mu_t(s,\om)$.
    Then we have for every $n \geq 0$:
    \begin{align*}
        \frac{\mu_{t+h}(\varphi_n) - \mu_t(\varphi_n)}{h} & = \frac{1}{h} \int \left( \varphi_n(s,\om') - \varphi_n(s,\om) \right) \d \gamma^h_t(s,\om,\om') \\
        & = \frac{1}{h} \int \left( \varphi_n(s,\om + h z) - \varphi_n(s,\om) \right) \d \nu^h(s,\om,z)
    \end{align*}
    and taking the limit $h \to 0$, by Lebesgue's theorem:
    \begin{align*}
        \int \langle \nabla_\om \varphi_n, v_t \rangle \d \mu_t = \int_{[0,1] \times \Om} \int_\Om \langle z, \nabla_\om \varphi_n(s,\om) \rangle \d \nu^0_{s,\om}(z) \d \mu_t(s,\om).
    \end{align*}
    For $(s, \om) \in [0,1] \times \Om$, let us denote by $\Tilde{v}_t(s,\om) \eqdef \int_\Om z \d \nu^0_{s,\om}(z)$ the first moment of $\nu^0_{s,\om}$. Then from the last equality and by a density argument it follows:
    \begin{align*}
        \div( (0, \Tilde{v}_t - v_t) \mu_t ) = 0,
    \end{align*}
    and in particular the continuity equation~\cref{continuity} is satisfied with the vector field $(0, \Tilde{v}_t)$.
    Let us now show:
    \begin{align*}
        \int_{[0,1] \times \Om} \int_\Om \| z \|^2 \d \nu^0_{s,\om}(z) \d \mu_t(s,\om) \leq \left| \frac{\d}{\d t} \mu_t \right|^2.
    \end{align*}
    Indeed we have:
    \begin{align*}
        \int_{[0,1] \times \Om} \int_\Om \| z \|^2 \d \nu^0_{s,\om}(z) \d \mu_t(s,\om) & \leq \liminf_{h \to 0} \int_{[0,1] \times \Om \times \Om} \| z \|^2 \d \nu^h(s, \om, z) \\
        & = \liminf_{h \to 0} \int_{[0,1] \times \Om \times \Om} \frac{1}{h^2} \| \om' - \om \|^2 \d \gamma^h_t(s,\om,\om') \\
        & = \liminf_{h \to 0} \frac{d(\mu_{t+h}, \mu_t)^2}{h^2} = \left| \frac{\d}{\d t} \mu_t \right|^2. 
    \end{align*}
    Whence by definition of $\Tilde{v}_t$ and Jensen's inequality:
    \begin{align*}
        \| \Tilde{v}_t \|_{L^2(\mu_t)} \leq \left| \frac{\d}{\d t} \mu_t \right| = \| v_t \|_{L^2(\mu_t)}
    \end{align*}
    from which it follows that $\Tilde{v}_t = v_t$ in $L^2(\mu_t)$ because of the minimality of $\| v_t \|_{L^2(\mu_t)}$ and the strict convexity of the $L^2$-norm. Moreover the above inequality is strict whenever $\nu^0_{s,\om}$ is not a Dirac mass in a set of $\mu_t$ positive measure. This implies that $\nu^0_{s,\om}$ is a Dirac mass for $\d \mu_t$-a.e. $(s,\om) \in [0, 1] \times \Om$ and that $\nu^0 = (\pi^1, \pi^2, v_t)_\# \mu_t$. This proves the narrow convergence of $\nu^h$ to the desired measure and together with the convergence of the second moments we have $\Ww_2$ convergence.

    Let us now estimate the distance between $\mu_{t+h}$ and $(\pi^1,\pi^2 + h (0, v_t))_\# \mu_t$ with the coupling $\gamma \eqdef (\pi^1,\pi^2 + h(0, v_t), \pi^3 )_\# \gamma^h_t \in \Pi^\Leb((\pi^1,\pi^2 +h(0, v_t))_\# \mu_t, \mu_{t+h})$. By the preceding result:
    \begin{align*}
        \frac{d((\pi^1,\pi^2 +h v_t)_\# \mu_t, \mu_{t+h})^2}{h^2} & \leq \int_{[0,1] \times \Om \times \Om} \frac{1}{h^2} \| \om + h v_t(s,\om) - \om' \|^2 \d \gamma^h_t(s,\om,\om') \\
        & = \int_{[0,1] \times \Om \times \Om} \| v_t(s,\om) - z \|^2 \d \nu^h(s,\om,z) \xrightarrow[h \to 0]{} 0 .
    \end{align*}
\end{proof}

\begin{lem}[Adapted from \protect{\cite[Thm.8.4.7]{ambrosio2008gradient}}] \label{lem:differentiability_d}
    Let $(\mu_t)_{t \in I}$ be an absolutely continuous curve in $\Pp^\Leb_2([0, 1] \times \Om)$, let $v : I \times [0,1] \times \om \to \Om$ be its \emph{tangent vector field} and let $\mu' \in \Pp^\Leb_2([0, 1] \times \Om)$. Then for $\d t$-a.e. $t \in I$:
    \begin{align*}
        \frac{\d}{\d t} d(\mu_t, \mu_t')^2 = 2 \int_0^1 \int_\Om \langle \om - \om', v_t(s, \om) \rangle \d \gamma(s, \om, \om'), \quad \forall \gamma \in \Pi^\Leb_o(\mu_t, \mu_t'). 
    \end{align*}
\end{lem}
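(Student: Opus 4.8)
The plan is to follow the proof of~\cite[Thm.8.4.7]{ambrosio2008gradient}, but with the optimal couplings taken in the constrained set $\Pi^\Leb_o$ rather than $\Gamma_o$ and with~\cref{lem:approximation_along_curves} playing the role of~\cite[Prop.8.4.6]{ambrosio2008gradient}. Throughout we read $\mu_t'$ as the fixed target $\mu'$, consistently with the fact that only the tangent field $v$ of $(\mu_t)_{t \in I}$ enters the formula.

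First I would record the elementary regularity facts. Since $(\mu_t)_{t \in I}$ is $d$-absolutely continuous, the triangle inequality $|d(\mu_t,\mu') - d(\mu_u,\mu')| \le d(\mu_t,\mu_u)$ shows that $t \mapsto d(\mu_t,\mu')$ is absolutely continuous, hence locally bounded, so that $t \mapsto d(\mu_t,\mu')^2$ is locally Lipschitz and therefore differentiable at $\d t$-a.e. $t \in I$. Let $G \subset I$ be the set of times $t$ at which $d(\cdot,\mu')^2$ is differentiable, at which $|\frac{\d}{\d t}\mu_t|$ exists with $\|v_t\|_{L^2(\mu_t)} = |\frac{\d}{\d t}\mu_t| < +\infty$, and at which the conclusions of~\cref{lem:approximation_along_curves} hold, in particular $d(\mu_{t+h},(\Id + h(0,v_t))_\# \mu_t) = o(|h|)$ as $h \to 0$. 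Then $G$ has full measure in $I$ and does not depend on the choice of any coupling.

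Now fix $t \in G$ and $\gamma \in \Pi^\Leb_o(\mu_t,\mu')$, and for $|h|$ small let $\gamma^h$ be the pushforward of $\gamma$ by the Borel map $(s,\om,\om') \mapsto (s,\om + h v_t(s,\om),\om')$. Disintegrating in $s$ shows that $\gamma^h \in \Pi^\Leb\bigl((\Id + h(0,v_t))_\# \mu_t, \mu'\bigr)$, and its second moment is finite because $v_t \in L^2(\mu_t)$; expanding the quadratic cost and using that the $(s,\om)$-marginal of $\gamma$ is $\mu_t$ gives
\begin{align*}
    d\bigl((\Id + h(0,v_t))_\# \mu_t,\mu'\bigr)^2 &\le \int \|\om + h v_t(s,\om) - \om'\|^2 \d\gamma \\
    &= d(\mu_t,\mu')^2 + 2h \int_0^1 \int_\Om \langle \om - \om', v_t(s,\om)\rangle \d\gamma + h^2 \|v_t\|_{L^2(\mu_t)}^2 .
\end{align*}
Combining this with $|d(\mu_{t+h},\mu') - d((\Id + h(0,v_t))_\# \mu_t,\mu')| \le d(\mu_{t+h},(\Id + h(0,v_t))_\# \mu_t) = o(|h|)$ and the local boundedness of the distances near $t$, one obtains after squaring
\begin{align*}
    d(\mu_{t+h},\mu')^2 \le d(\mu_t,\mu')^2 + 2h \int_0^1 \int_\Om \langle \om - \om', v_t(s,\om)\rangle \d\gamma(s,\om,\om') + o(|h|) .
\end{align*}
Dividing by $h > 0$ and letting $h \to 0^+$ bounds $\frac{\d}{\d t}d(\mu_t,\mu')^2$ above by $2\int_0^1\int_\Om\langle\om-\om',v_t\rangle\d\gamma$; dividing by $h < 0$ and letting $h \to 0^-$ reverses the inequality and supplies the matching lower bound, so equality holds. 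Since $G$ was defined independently of $\gamma$, the identity then holds, at every $t \in G$, for all $\gamma \in \Pi^\Leb_o(\mu_t,\mu')$ simultaneously; in particular the right-hand side does not depend on the chosen optimal coupling.

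Once~\cref{lem:approximation_along_curves} is in hand, this argument is essentially bookkeeping, so I do not expect a genuine obstacle. The only points that require some care are verifying that the competitor $\gamma^h$ indeed lies in the constrained class $\Pi^\Leb$ and has finite energy, and running the one-sided difference quotients on both sides of $t$ to upgrade one-sided inequalities into the exact value of the derivative.
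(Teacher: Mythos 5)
Your proposal is correct and is essentially the paper's proof: the paper simply invokes the argument of~\cite[Thm.8.4.7]{ambrosio2008gradient} with~\cref{lem:approximation_along_curves} replacing~\cite[Prop.8.4.6]{ambrosio2008gradient}, which is exactly what you reconstruct (push an optimal constrained coupling forward by $(s,\om,\om')\mapsto(s,\om+hv_t(s,\om),\om')$, expand the cost, use the $o(|h|)$ approximation, and exploit the two signs of $h$ at a point of differentiability). The only cosmetic slip is that absolute continuity of $t\mapsto d(\mu_t,\mu')$ gives local absolute continuity of its square, not local Lipschitzness, but a.e.\ differentiability still follows, so nothing is affected.
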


\begin{proof}
    Having shown~\cref{lem:approximation_along_curves}, the proof is the same as the one of~\cite[Thm.8.4.7]{ambrosio2008gradient}.
\end{proof}

\section{Gradient flow dynamics} \label{sec:gradient_flow}

To train the \NODE{} model of~\cref{def:NODE} we consider performing \emph{gradient flow} on the parameter $\mu$ for the risk $L$ and w.r.t. the Conditional OT metric described in the previous section.
However the parameter set $\Pp_2^\Leb([0,1] \times \Om)$
equipped with the distance $d$ lacks a proper differential structure.
We will thus in this section give meaning to the notion of gradient flow of $L$. First, motivated by formal computations we will introduce a definition of gradient flow that is consistent with the one proposed by~\textcite{chen2018neural} for the training of \NODEs{} of finite width. Then, we will show this definition to be equivalent to the notion of \emph{curve of maximal slope} from the theory of gradient flow in metric spaces~\parencite{ambrosio2008gradient,santambrogio2017euclidean}. Finally, this equivalence will allow us to show well-posedness results for the gradient flow equation.

\subsection{Backward equation and adjoint variables}

The computation of the gradient will make use of a new ODE linked to~\cref{forward_weak}. This ODE should be understood as running backward over the time variable $s \in [0, 1]$ with the initial condition at $s = 1$. In the same way~\cref{forward_weak} models the processing of the data by a ResNet of infinite depth, the \emph{adjoint variables} $p$ solutions to this \emph{backward ODE} should be considered as modeling the quantities calculated when performing back-propagation over a deep ResNet. 

\begin{defn}[Adjoint variable] \label{def:adjoint_variable}
    Let $\mu \in \Pp_2^\Leb([0, 1] \times \Om)$ and $(x,y) \in \RR^{d+d'}$. Let $(x_\mu(s))_{s \in[0,1]}$ be the solution to~\cref{forward} with parameter $\mu$ and $x_\mu(0) = x$.
    Then we call \emph{adjoint variable associated to $\mu$, $x$ and $y$} the solution $(p_{\mu,x,y}(s))_{s \in[0, 1]}$ to the \emph{backward ODE}:
    \begin{equation} \label{backward}
        \forall s \in [0,1], \quad p_{\mu,x,y}(s) = \nabla_x \ell(x_\mu(1), y) + \int_s^1 \D_x F_{\mu(.|r)}(x_\mu(r))^\top p_{\mu,x,y}(r) \d r.
    \end{equation}
    When no ambiguity, the dependence of $p$ w.r.t. $\mu$, $x$ and $y$ is omitted and we simply write $p(s)$.
\end{defn}

The following proposition states the well-posedness of the backward equation under suitable assumptions on the feature map $\fmap$ and gives a useful representation of the adjoint variables.

\begin{prop} \label{prop:backward_wellposed}
    Let $\mu \in \Pp_2^\Leb([0, 1] \times \Om)$ and $(x,y) \in \RR^{d+d'}$. Assume $\fmap$ satisfies~\cref{fmap_assumption1,fmap_assumption2,fmap_assumption3}. Then there exists a unique solution to~\cref{backward} which is given by:
    \begin{equation} \label{adjoint_expr}
        \forall s \in[0,1], \quad p_{\mu,x,y}(s) = \Phi_{\mu,x}(s)^{-\top} \Phi_{\mu,x}(1)^\top \nabla_x \ell(x_\mu(1),y).
    \end{equation}
    where we define $\Phi_{\mu,x}$ to be the (matrix) solution of the linear ODE:
    \begin{equation} \label{Phi}
        \forall s \in [0, 1], \quad \Phi_{\mu,x}(s) = \Id + \int_0^s \D_x F_{\mu(.|r)}(x_\mu(r)) \Phi_{\mu,x}(r) \d r .
    \end{equation}
    When no ambiguity we simply write $\Phi_\mu(s)$ or even $\Phi(s)$.
\end{prop}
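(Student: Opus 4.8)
The plan is to recognize the backward equation \cref{backward} as a linear ODE whose (time-dependent) coefficient is integrable in $s$, solve it by Carathéodory's theorem, and then check the closed form \cref{adjoint_expr} by a direct differentiation.

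\emph{Step 1: the coefficient is in $L^1$.} First I would show that $A(s) \eqdef \D_x F_{\mu(.|s)}(x_\mu(s))$ is a well-defined matrix-valued map lying in $L^1((0,1))$. Differentiating \cref{measure_parameterization} under the integral sign — legitimate thanks to \cref{fmap_assumption2,fmap_assumption3}, which control $\D_x \fmap$ — one has $\D_x F_{\mu(.|s)}(z) = \int_\Om \D_x \fmap(\om, z) \,\d\mu(\om|s)$, and the growth hypotheses give a bound of the form $\|\D_x \fmap(\om, z)\| \leq C_R (1 + \|\om\|^2)$ for $z \in B(0,R)$. Since \cref{prop:flow_wellposed} (and the bound \cref{flow_growth}) provides an a priori estimate $\sup_{s \in [0,1]} \|x_\mu(s)\| \leq R_\mu$ depending only on $\|x\|$ and $\Ee_2(\mu)$, it follows that $\|A(s)\| \leq C_{R_\mu} \int_\Om (1 + \|\om\|^2)\,\d\mu(\om|s)$, and integrating in $s$ gives $\int_0^1 \|A(s)\|\,\d s \leq C_{R_\mu}(1 + \Ee_2(\mu)) < \infty$; measurability of $s \mapsto A(s)$ is obtained as in the proof of \cref{prop:distance}.

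\emph{Step 2: existence, uniqueness and invertibility.} With $A \in L^1((0,1))$, both the forward linear ODE \cref{Phi} for $\Phi_{\mu,x}$ (initial datum $\Id$ at $s=0$) and the backward linear ODE \cref{backward} for $p$ (terminal datum $\nabla_x \ell(x_\mu(1),y)$ at $s=1$) fall under Carathéodory's theorem~\cite[Sec.I.5]{hale1969ordinary}, a Grönwall estimate ensuring the solutions are defined on all of $[0,1]$. This yields a unique absolutely continuous $\Phi_{\mu,x}$ and a unique absolutely continuous $p_{\mu,x,y}$. Moreover $\Phi_{\mu,x}(s)$ is invertible for every $s \in [0,1]$: by Jacobi's formula $s \mapsto \det \Phi_{\mu,x}(s)$ is absolutely continuous with $\frac{\d}{\d s} \det \Phi_{\mu,x}(s) = \mathrm{tr}(A(s)) \det \Phi_{\mu,x}(s)$, hence $\det \Phi_{\mu,x}(s) = \exp\!\big(\int_0^s \mathrm{tr}(A(r))\,\d r\big) \neq 0$ (alternatively one constructs the inverse as the solution $\Psi$ of $\frac{\d}{\d s}\Psi(s) = -\Psi(s) A(s)$, $\Psi(0) = \Id$, and checks $\Psi\Phi \equiv \Id$).

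\emph{Step 3: verifying the formula.} Set $q(s) \eqdef \Phi_{\mu,x}(s)^{-\top} \Phi_{\mu,x}(1)^\top \nabla_x \ell(x_\mu(1),y)$; by uniqueness it suffices to show $q$ solves \cref{backward}. From $\Phi' = A\Phi$ one gets $(\Phi^{-1})' = -\Phi^{-1} \Phi' \Phi^{-1} = -\Phi^{-1}A$, hence $\frac{\d}{\d s}\Phi(s)^{-\top} = -A(s)^\top \Phi(s)^{-\top}$, so that $q'(s) = -A(s)^\top q(s)$ for a.e. $s$, while $q(1) = \Phi_{\mu,x}(1)^{-\top}\Phi_{\mu,x}(1)^\top \nabla_x \ell(x_\mu(1),y) = \nabla_x \ell(x_\mu(1),y)$. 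Integrating $q' = -A^\top q$ from $s$ to $1$ gives $q(s) = \nabla_x \ell(x_\mu(1),y) + \int_s^1 A(r)^\top q(r)\,\d r$, which is exactly \cref{backward}; thus $q = p_{\mu,x,y}$.

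The only genuinely non-routine point is Step 1 — justifying differentiation under the integral sign to give meaning to $\D_x F_{\mu(.|s)}(x_\mu(s))$ and proving its integrability in time, which is precisely where \cref{fmap_assumption2,fmap_assumption3} and the a priori bound from \cref{prop:flow_wellposed} are used; once $A \in L^1((0,1))$ is established, Steps 2 and 3 are textbook linear-ODE arguments.
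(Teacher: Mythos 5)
Your proposal is correct and follows essentially the same route as the paper: reduce everything to showing that $s \mapsto \D_x F_{\mu(.|s)}(x_\mu(s))$ is measurable and integrable (using the quadratic growth of $\D_x \fmap$ from \cref{fmap_assumption3} together with the a priori bound on $x_\mu$ from \cref{prop:flow_wellposed}), and then invoke the standard theory of non-autonomous linear ODEs with $L^1$ coefficients to get $\Phi_{\mu,x}$ and the representation \cref{adjoint_expr}. You simply spell out the linear-ODE steps (invertibility of $\Phi$ and the verification of the closed form) that the paper leaves implicit.
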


\begin{proof}
    Note that~\cref{backward} is a non-autonomous linear ODE w.r.t. the variable $p$. Thus, the representation~\cref{adjoint_expr} follows from the existence and uniqueness of $\Phi$ and to prove the result it suffices to show the map $s \mapsto \D_x F_{\mu(.|s)}(x(s))$ is integrable.

    First, as $\fmap$ is continuously differentiable w.r.t. $x$ with integrable differential, for almost every fixed $s \in [0, 1]$ the map $x \mapsto F_{\mu(.|s)}(x)$ is continuously differentiable with differential given by:
    \begin{align*}
        \D_x F_{\mu(.|s)}(x) = \int_\Om \D_x \fmap_\om(x) \d \mu(\om|s) .
    \end{align*}
    Moreover, by continuity of $s \mapsto x(s)$ the integrand $\D_x \fmap_\om (x(t))$ is measurable and so is the map $s \mapsto \D_x F_{\mu(.|s)}(x(s))$. Finally integrability follows as $\D_x \fmap_\om(x(s))$ has 2-growth w.r.t. $\om$ and $\int_\Om \| \om \|^2 \d \mu(\om|s)$ is integrable on $[0, 1]$.  
\end{proof}

The following result gives an alternate point of view on the adjoint variable $p$. Geometrically, it follows from~\cref{adjoint_expr} that $p$ lives in the co-tangent space of the flow $x$. Therefore in the case of a general (not necessarily with finite support) data distribution $\Dd \in \Pp(\RR^d \times \RR^{d'})$ it is convenient to see $p$ as the gradient of a potential $\psi$ over the variables $(x,y) \in \RR^d \times \RR^{d'}$.  

\begin{lem}
    Let $\mu \in \Pp_2^\Leb([0, 1] \times \Om)$. Then for every $(x,y) \in \RR^{d+d'}$ the associated adjoint variable $p$ can be expressed for every $s \in [0, 1]$ as:
    \begin{equation} \label{adjoint_expr_potential}
        p_{\mu,x,y}(s) = \nabla_x \psi_\mu (s, x_\mu(s), y) ,
    \end{equation}
    where $\psi_\mu$ is the unique solution to the transport equation:
    \begin{equation} \label{potential}
        \partial_s \psi_\mu + \langle \nabla_x \psi_\mu, F_{\mu(.|s)} \rangle = 0, \quad \psi_\mu(1,x,y) = \ell(x, y), \, \forall (x, y) \in \RR^d \times \RR^{d'}.
    \end{equation}
\end{lem}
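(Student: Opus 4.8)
The plan is to prove this by the classical method of characteristics: the characteristic curves of the transport equation~\cref{potential} are exactly the solutions of the forward ODE~\cref{forward}, so $\psi_\mu$ is obtained by transporting the terminal data $\ell$ backwards along the flow, and its spatial gradient will then be recognized as the adjoint variable via the representation~\cref{adjoint_expr}. First I would fix notation for the forward flow: for $\mu \in \Pp_2^\Leb([0,1]\times\Om)$ and $0 \leq s \leq 1$, let $\Fl_\mu^{s\to t}(x)$ denote the value at time $t$ of the solution of~\cref{forward} taking value $x$ at time $s$, which is well-defined by~\cref{prop:flow_wellposed} applied on the interval $[s,1]$; under~\cref{fmap_assumption1,fmap_assumption2,fmap_assumption3} and standard results on differentiable dependence on initial conditions, $x \mapsto \Fl_\mu^{s\to1}(x)$ is $C^1$ with Jacobian solving the linearised equation. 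I then set $\psi_\mu(s,x,y) \eqdef \ell\big(\Fl_\mu^{s\to1}(x),y\big)$.

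Next I would check that $\psi_\mu$ is the unique solution of~\cref{potential}, understood with $s \mapsto \psi_\mu(s,x,y)$ absolutely continuous and the equation holding for a.e.\ $s$ (this is the natural reading, since $s \mapsto F_{\mu(\cdot|s)}$ is only $L^1$). The terminal condition is immediate from $\Fl_\mu^{1\to1} = \Id$. For the equation, along any characteristic $r \mapsto \xi(r) \eqdef \Fl_\mu^{s_0\to r}(x_0)$ the flow property gives $\psi_\mu(r,\xi(r),y) = \ell\big(\Fl_\mu^{s_0\to1}(x_0),y\big)$, which is constant in $r$; differentiating and using $\dot\xi(r) = F_{\mu(\cdot|r)}(\xi(r))$ yields $\partial_s\psi_\mu + \langle \nabla_x\psi_\mu, F_{\mu(\cdot|r)}\rangle = 0$ at $(r,\xi(r))$, and since $\xi(r)$ ranges over all of $\RR^d$ as $x_0$ varies this gives~\cref{potential} everywhere. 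Uniqueness follows by running the computation in reverse: if $\tilde\psi$ solves~\cref{potential}, then $r \mapsto \tilde\psi(r,\Fl_\mu^{s\to r}(x),y)$ is absolutely continuous with a.e.\ vanishing derivative, hence constant, so $\tilde\psi(s,x,y) = \tilde\psi(1,\Fl_\mu^{s\to1}(x),y) = \psi_\mu(s,x,y)$.

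Finally I would compute $\nabla_x\psi_\mu$ and match it with~\cref{adjoint_expr}. By the chain rule, $\nabla_x\psi_\mu(s,x,y) = \D_x\Fl_\mu^{s\to1}(x)^\top \nabla_x\ell\big(\Fl_\mu^{s\to1}(x),y\big)$. Evaluating at $x = x_\mu(s)$ gives $\Fl_\mu^{s\to1}(x_\mu(s)) = x_\mu(1)$, so it remains to identify $\D_x\Fl_\mu^{s\to1}(x_\mu(s))$ with $\Phi_{\mu,x}(1)\Phi_{\mu,x}(s)^{-1}$. Differentiating the cocycle identity $\Fl_\mu^{0\to1} = \Fl_\mu^{s\to1}\circ\Fl_\mu^{0\to s}$ at $x$ and using $\Fl_\mu^{0\to s}(x) = x_\mu(s)$ gives $\D_x\Fl_\mu^{0\to1}(x) = \D_x\Fl_\mu^{s\to1}(x_\mu(s))\,\D_x\Fl_\mu^{0\to s}(x)$; since $t \mapsto \D_x\Fl_\mu^{0\to t}(x)$ solves the same linear matrix ODE~\cref{Phi} as $\Phi_{\mu,x}$ with the same value $\Id$ at $t=0$, the two coincide, so $\Phi_{\mu,x}(1) = \D_x\Fl_\mu^{s\to1}(x_\mu(s))\,\Phi_{\mu,x}(s)$, and $\Phi_{\mu,x}(s)$ is invertible by~\cref{prop:backward_wellposed}. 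Hence $\nabla_x\psi_\mu(s,x_\mu(s),y) = \Phi_{\mu,x}(s)^{-\top}\Phi_{\mu,x}(1)^\top\nabla_x\ell(x_\mu(1),y)$, which equals $p_{\mu,x,y}(s)$ by~\cref{adjoint_expr}.

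The only genuinely delicate point — and the one I expect to cost the most care — is the regularity bookkeeping: confirming that the flow map is $C^1$ in the initial datum with Jacobian governed by the linearised ODE~\cref{Phi} under the growth and differentiability assumptions on $\fmap$, and being precise that~\cref{potential} is to be read with $\psi_\mu$ only absolutely continuous in $s$ since $s \mapsto F_{\mu(\cdot|s)}$ is merely integrable. Once these are fixed, every remaining step is an application of the chain rule together with the flow (cocycle) property. (An alternative, slightly less regularity-demanding-looking route — setting $q(s) \eqdef \nabla_x\psi_\mu(s,x_\mu(s),y)$, differentiating the PDE in $x$, and checking $q$ solves the same linear ODE~\cref{backward} as $p$ with the same terminal value, then invoking uniqueness — in fact needs $\psi_\mu$ to be twice differentiable in $x$ and is therefore less economical.)
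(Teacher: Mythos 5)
Your proposal is correct and follows essentially the same route as the paper: the paper's (two-line) proof likewise defines $\psi_\mu$ through its characteristic form $\psi_\mu(s,x_\mu(s),y)=\ell(x_\mu(1),y)$ and then checks that $\nabla_x\psi_\mu(s,x_\mu(s),y)$ coincides with the adjoint variable, which is exactly your chain-rule computation since the representation~\cref{adjoint_expr} is the solution formula for~\cref{backward}. Your version simply makes explicit the standard ingredients the paper leaves implicit (differentiable dependence of the Caratheodory flow on its initial datum, the cocycle identity identifying $\D_x\Fl_\mu^{s\to1}(x_\mu(s))$ with $\Phi_{\mu,x}(1)\Phi_{\mu,x}(s)^{-1}$, and uniqueness for~\cref{potential}).
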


\begin{proof}
    The solution to the transport equation can be given in the characteristic form:
    \begin{equation*}
        \forall s \in [0, 1], x \in \RR^d, \quad \psi_\mu(s, x_\mu(s), y) = \psi_\mu(1, x_\mu(1), y) = \ell (x_\mu(1), y).
    \end{equation*}
    One can then check that the r.h.s. of~\cref{adjoint_expr_potential} is indeed a solution of~\cref{backward}.
\end{proof}

\subsection{The gradient flow equation}

We motivate here by formal computations a definition of a gradient flow equation for the risk $L$. This \emph{adjoint sensitivity analysis} consists in using a Lagrangian form of the risk minimization problem to obtain an expression of the gradient w.r.t. the parameter $\mu$.

As the inputs are processed by our model through~\cref{forward} one can consider for a parameter $\mu \in \Pp_2^\Leb([0, 1] \times \Om)$ and  every time $s \in [0, 1]$ the distribution $\rho_\mu(.|s) \eqdef (x \mapsto x_\mu(s), \Id)_\# \Dd$ of the data at time $s$. Then $\lbrace \rho_\mu(.|s) \rbrace_{s \in [0,1]}$ is a narrowly continuous solution to the continuity equation:
\begin{equation} \label{forward_continuity}
    \partial_s \rho^*_\mu(.|s) + \div_x (F_{\mu(.|s)} \rho^*_\mu(.|s)) = 0
\end{equation}
and the risk associated to $\mu$ is $ L(\mu) = \int_{\RR^d \times \RR^{d'}} \ell(x,y) \d \rho_\mu(x,y|1) = \rho_\mu(.|1)(\ell)$.
We introduce a Lagrange multiplier $\psi$ to penalize the above continuity equation. For a parameter $\mu \in \Pp_2^\Leb([0, 1] \times \Om)$, a measurable family $\rho = \lbrace \rho(.|s) \rbrace_{s \in [0,1]}$ of probability measures on $\RR^d \times \RR^{d'}$ and a smooth test function $\psi : [0, 1] \times \RR^d \times \RR^{d'} \to \RR$, consider the Lagrangian $\Ll$ defined as:
\begin{equation}
    \Ll(\mu, \rho, \psi) \eqdef \rho(.|1)(\ell) - \rho(.|1)(\psi(1)) - \rho(.|0)(\psi(0)) + \int_0^1 \int_{\RR^{d+d'}} \left( \partial_s \psi + \langle \nabla_x \psi, F_{\mu(.|s)}\rangle \right) \d \rho(.|s) \d s .
\end{equation}
Using the definition of $F$ and changing the order of integration, the variation of $\Ll$ w.r.t. $\mu$ is given for every $\rho$ and $\psi$ by:
\begin{align*}
    \frac{\delta \Ll}{\delta \mu}(\mu, \rho, \psi) : (s, \om) \mapsto \int_{\RR^{d+d'}} \langle \nabla_x \psi(s,x, y), \fmap(\om, x) \rangle \d \rho (x, y| s) .
\end{align*}
Also, if $\rho = \rho_\mu$ is the solution of~\cref{forward_continuity} for the parameter $\mu$, we have the relation $\Ll(\mu, \rho_\mu, \psi) = L(\mu)$ for any test function $\psi$. Hence the variation of $L$ w.r.t. $\mu$ is:
\begin{align*}
    \frac{\delta L}{\delta \mu}(\mu) = \frac{\delta \Ll}{\delta \mu}(\mu, \rho_\mu, \psi) + \frac{\delta \Ll}{\delta \rho}(\mu, \rho_\mu, \psi) \frac{\delta \rho_\mu}{\delta \mu}(\mu),
\end{align*}
where the \emph{Lagrange multiplier} $\psi$ can be chosen arbitrarily. Also the variation of $\Ll$ w.r.t. the family of probability measures $\rho$, seen as the probability measure whose disintegration on $[0,1]$ is $\lbrace \rho_s \rbrace_{s \in [0, 1]}$ (with the fixed initial condition $\rho(.|0) = \Dd$), can be formally given by:
\begin{align*}
\frac{\delta \Ll}{\delta \rho}(\mu, \rho, \psi) = \left( \ell - \psi(1) \right) \delta_{s=1} + \partial_s \psi + \langle \nabla_x \psi, F_{\mu(.|s)} \rangle .   
\end{align*}
We observe that taking $\psi = \psi_\mu$ to be a solution of~\cref{potential} cancels $\frac{\delta \Ll}{\delta \rho}$ for every $\rho$ and hence: 
\begin{align*}
    \frac{\delta L}{\delta \mu}(\mu) & = \frac{\delta \Ll}{\delta \mu}(\mu, \rho_\mu, \psi_\mu) .
\end{align*}
By~\cref{prop:characterization_absolute_continuity} we know that for every absolutely continuous curve $(\mu_t)$ passing through $\mu$, its variation at $\mu$ is given by $\partial_t \mu_t = - \div((0,v) \mu)$ for some $v \in L^2(\mu)$. 
A notion of gradient of $L$ (for the ``differential'' structure of $\Pp_2^\Leb([0, 1] \times \Om)$) at $\mu$ could thus be defined as the unique solution to the variational problem:
\begin{align*}
    \nabla L(\mu) \in \, \argmin_{v \in L^2(\mu)} \frac{1}{2} \| v \|^2_{L^2(\mu)} - \langle \nabla_\om \frac{\delta L}{\delta \mu}(\mu), v \rangle_{L^2(\mu)} .
\end{align*}
This problem admits a unique solution $v^* \in L^2(\mu)$, provided that $\nabla_\om \frac{\delta L}{\delta \mu}(\mu) \in L^2(\mu)$, and using the relation~\cref{adjoint_expr_potential} between the adjoint variable $p_\mu$ and the potential $\psi_\mu$ we have:
\begin{align*}
    v^* & = \nabla_\om \frac{\delta L}{\delta \mu}(\mu) = \nabla_\om \frac{\delta \Ll}{\delta \mu}(\mu, \rho_\mu, \psi_\mu) \\
    \text{that is} \quad v^* & : (s, \om) \mapsto \int_{\RR^{d+d'}} \D_\om \fmap(\om, x)^\top \nabla_x \psi_\mu(x, y) \d \rho_\mu (x, y |s) = \EE_{x,y} \D_\om \fmap(\om,x_\mu(s))^\top p_{\mu,x,y}(s) .
\end{align*}

If the above calculations are purely formal, they motivate the following definition of gradient flow for $L$. In particular, this definition will be shown in the next section to be equivalent to the appropriate notion of gradient flow in metric spaces.

\begin{defn}[Gradient flow] \label{def:gradient_flow}
    Let $I \subset \RR$ be an interval.
    For $\mu \in \Pp_2^\Leb([0, 1] \times \Om)$, let us define:
    \begin{align} \label{gradient}
        \nabla L[\mu] : (s, \om) \mapsto \EE_{x,y} \D_\om \fmap(\om,x_\mu(s))^\top p_{\mu,x,y}(s) .
    \end{align}
    We say that a curve $t \in I \mapsto \mu_t \in \Pp^\Leb_2([0,1] \times \Om)$ is a \emph{gradient flow} for $L$ if it is locally absolutely continuous curve satisfying the continuity equation~\cref{continuity} with $v_t = - \nabla L[\mu_t]$ for $\d t$-a.e. $t \in I$.
\end{defn}

The following result is a useful representation formula for the gradient flow curves defined by~\cref{def:gradient_flow}: for every $t \geq 0$ the gradient flow $\mu_t$ at time $t$ is the pushforward of the initialization $\mu_0$ by a flow-map. The proof relies on classical results from transport equation theory~\parencite{ambrosio2008transport}.

\begin{prop}\label{prop:representation_curve}
    Assume $\fmap$ is twice continuously differentiable and satisfies~\cref{fmap_assumption1,fmap_assumption2,fmap_assumption3}. Let $(\mu_t)_{t \geq 0}$ be a gradient flow for the risk $L$ and consider for every $t \geq 0$ the vector field:
    \begin{align*}
        V_t : (s,\om) \mapsto \left( 0, \nabla L[\mu_t](s,\om) \right) = \left( 0, \EE_{x,y} \D_\om \fmap(\om,x_{\mu_t}(s))^\top p_{\mu_t, x, y}(s) \right) \in \RR \times \Om.
    \end{align*}
    Then for every $t \geq 0$ we have $\mu_t = (X_t)_\# \mu_0$ where $X_t$ is the flow-map solution of the ODE:
    \begin{align} \label{gradient_flow_characteristic}
        \frac{\d}{\d t} X_t(s,\om) = V_t(X_t(s,\om)), \quad X_0 = \Id.
    \end{align}
\end{prop}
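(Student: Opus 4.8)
The plan is to recognise both the gradient flow $(\mu_t)_{t\geq 0}$ and the curve $t\mapsto (X_t)_\#\mu_0$ as weak solutions of the \emph{same} continuity equation on $[0,1]\times\Om$ with the \emph{same} initial datum $\mu_0$, and then to conclude by uniqueness. Concretely, I would show: (i)~the characteristic ODE~\cref{gradient_flow_characteristic} generates a well-defined flow $X_t$ with $(X_t)_\#\mu_0\in\Pp_2^\Leb([0,1]\times\Om)$, which moreover is a weak solution of $\partial_t\nu_t+\div(V_t\nu_t)=0$; and (ii)~this continuity equation admits a unique narrowly continuous solution for a prescribed initial condition.

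The crux is the regularity of the velocity field $V_t=(0,\nabla L[\mu_t])$. Since its first component vanishes, any solution of~\cref{gradient_flow_characteristic} has the form $X_t(s,\om)=(s,\bar X_t(s,\om))$, so $\pi^1\circ X_t=\pi^1$ and $(X_t)_\#\mu_0$ automatically has Lebesgue first marginal; it then remains to control $\bar X_t$. Recall $\nabla L[\mu_t](s,\om)=\EE_{x,y}\bigl[\D_\om\fmap(\om,x_{\mu_t}(s))^\top p_{\mu_t,x,y}(s)\bigr]$. The data $(x,y)$ ranges over the compact support of $\Dd$; by~\cref{flow_growth} the forward trajectories $x_{\mu_t}(s)$ stay in a ball $B(0,R_t)$ whose radius depends only on $\Dd$ and on $\Ee_2(\mu_t)$; by the representation~\cref{adjoint_expr}, the bound on $\D_xF$ from the proof of~\cref{prop:backward_wellposed}, and the smoothness of $\ell$, the adjoints $p_{\mu_t,x,y}(s)$ are likewise bounded on $[0,1]$ uniformly over the data, by a constant depending only on $R_t$ and $\Ee_2(\mu_t)$. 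Since $\fmap$ is twice continuously differentiable and satisfies~\cref{fmap_assumption1,fmap_assumption2,fmap_assumption3}, $\D_\om\fmap$ has at most linear growth in $\om$ and is Lipschitz in $(\om,x)$ on $\Om\times B(0,R_t)$, and $s\mapsto x_{\mu_t}(s)$, $s\mapsto p_{\mu_t,x,y}(s)$ are Lipschitz on $[0,1]$; combining these, $(s,\om)\mapsto V_t(s,\om)$ is Lipschitz on $[0,1]\times\Om$ with at most linear growth in $\om$, all constants being controlled by $R_t$ and $\Ee_2(\mu_t)$. Finally, $d$-absolute continuity of $(\mu_t)$ makes $t\mapsto d(\mu_t,\Leb\otimes\delta_0)$ continuous; since $d(\mu,\Leb\otimes\delta_0)^2=\int_{[0,1]\times\Om}\|\om\|^2\,\d\mu=\Ee_2(\mu)$, it follows that $\Ee_2(\mu_t)$, hence $R_t$ and the above Lipschitz and growth constants, are locally bounded in $t$.

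With these estimates in hand, I would invoke the Cauchy--Lipschitz (Caratheodory) theorem to obtain for each $(s,\om)$ a unique maximal solution of~\cref{gradient_flow_characteristic}; the linear growth of $V_t$ in $\om$ and Gr\"onwall's lemma rule out blow-up, so $X_t$ is defined for all $t\geq 0$ and satisfies $\|X_t(s,\om)\|\leq C(t)(1+\|\om\|)$ with $C(t)$ locally bounded, whence $(X_t)_\#\mu_0\in\Pp_2^\Leb([0,1]\times\Om)$. Differentiating $t\mapsto\int\varphi\,\d(X_t)_\#\mu_0=\int\varphi\circ X_t\,\d\mu_0$ under the integral sign---legitimate thanks to the above bounds---gives, for every $\varphi\in\Cc_c^1([0,1]\times\Om)$, $\tfrac{\d}{\d t}\int\varphi\,\d(X_t)_\#\mu_0=\int\langle\nabla\varphi,V_t\rangle\,\d(X_t)_\#\mu_0$, i.e.\ $(X_t)_\#\mu_0$ is a weak solution of $\partial_t\nu_t+\div(V_t\nu_t)=0$ with $\nu_0=\mu_0$. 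On the other hand $(\mu_t)$ is a weak solution of the same continuity equation with the same initial datum, by~\cref{def:gradient_flow}. Since $V_t$ is, locally in $t$, Lipschitz in $(s,\om)$ with linear growth, uniqueness holds for the continuity equation with prescribed initial condition---classically for a linear transport equation with such a drift, or via the superposition principle together with the Cauchy--Lipschitz flow~\parencite{ambrosio2008transport}. Therefore $\mu_t=(X_t)_\#\mu_0$ for every $t\geq 0$.

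The step I expect to be the main obstacle is the estimate of the second paragraph: one must quantify how the forward flow $x_{\mu_t}$, the fundamental matrix $\Phi_{\mu_t,x}$ of~\cref{Phi}, and hence the adjoint $p_{\mu_t,x,y}$, depend on $\mu_t$ through $\Ee_2(\mu_t)$, and combine this with the growth and Lipschitz bounds on the first and second derivatives of $\fmap$, so as to obtain Lipschitz control of $V_t$ that is uniform in $s\in[0,1]$ and locally uniform in $t$. Everything else---the $s$-fibered structure of the flow, finiteness of the second moment of the pushforward, the weak formulation, and uniqueness for the continuity equation---is standard transport-equation theory.
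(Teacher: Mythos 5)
Your strategy is essentially the paper's: the paper also obtains well-posedness of the characteristic ODE~\cref{gradient_flow_characteristic} from the growth and local Lipschitz properties of $\D_\om\fmap$, and then identifies $\mu_t$ with $(X_t)_\#\mu_0$ by invoking the superposition principle \parencite[Thm.3.2]{ambrosio2008transport}, which is exactly the second of the two uniqueness routes you mention; your proposal mainly makes explicit the estimates on $x_{\mu_t}$, $p_{\mu_t,x,y}$ and $\Ee_2(\mu_t)$ that the paper leaves implicit.

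One intermediate claim is inaccurate, although not fatally so: $s\mapsto x_{\mu_t}(s)$ and $s\mapsto p_{\mu_t,x,y}(s)$ are absolutely continuous but in general \emph{not} Lipschitz on $[0,1]$, since their speeds are controlled by $\int_\Om(1+\|\om\|^2)\,\d\mu_t(\om|s)$, which is only integrable in $s$; consequently $V_t$ need not be Lipschitz in the $s$-variable, and the ``classical linear transport'' uniqueness argument does not apply verbatim on $[0,1]\times\Om$. This does not harm the proof: because the first component of $V_t$ vanishes, both the characteristic ODE and the continuity equation disintegrate in $s$ (as in the second part of the proof of~\cref{prop:characterization_absolute_continuity}), so all that is required is local Lipschitzness and linear growth of $\om\mapsto\nabla L[\mu_t](s,\om)$, uniformly in $s$ and locally in $t$ — which your bounds together with the $C^2$ regularity of $\fmap$ do provide — and the identification then follows from the superposition principle, which needs only pointwise uniqueness of the ODE and the integrability of $\|V_t\|$ against $\mu_t$, not joint Lipschitz regularity of the field.
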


\begin{proof}
    The existence and uniqueness of the flow-map $X_t$ for every $t \geq 0$ follows from the assumptions on $\fmap$ (in particular linear growth and local Lipschitz continuity of $\D_\om \fmap$ w.r.t. $\om$) and classical theory of ODEs. The flow-map representation of $\mu_t$ then follows from~\cite[Thm.3.2]{ambrosio2008transport} as for any initial value $(s,\om) \in [0,1] \times \Om$ the set of solutions to the ODE is the singleton $\lbrace (X_t(s,\om))_{t \geq 0} \rbrace$.
\end{proof}

\paragraph{Consistency with the adjoint gradient flow}

A case of particular interest for numerical applications is when the measure $\mu$ is discretized and approximated at every time $s \in [0, 1]$ by an empirical distribution.
Given $m \geq 0$ and $\theta = (\theta^j)_{1 \leq j \leq m} \in L^2([0, 1], \Om)^m$, we define the associated empirical distribution $\mu_\theta \in \Pp_2^\Leb([0,1] \times \Om)$ by:
\begin{align}
    \text{for $\d s$-a.e. $s \in [0,1]$}, \quad \mu_\theta(.|s) \eqdef \frac{1}{m} \sum_{j=1}^m \delta_{\theta^j(s)} \d s .
\end{align}
\emph{i.e.} $\mu_\theta$ is the measure whose disintegration at any time $s \in [0,1]$ is the empirical measure $\frac{1}{m} \sum_{j=1}^m \delta_{\theta^j(s)}$. Then we denote by $L(\theta) \eqdef L(\mu_\theta)$ the risk associated to $\theta$.
In the original work of~\textcite{chen2018neural}, the authors propose to train the Neural ODE parameterized by $\theta$ and minimize $L(\theta)$ by performing gradient descent for the \emph{adjoint gradient} defined as:
\begin{align*}
    \nabla_{\theta^j} L(\theta) \eqdef \EE_{x,y} \D_\om \fmap(\theta^j, x(s))^\top p(s),
\end{align*}
where $x$ and $p$ are respectively the solutions of~\cref{forward_weak,backward} for the parameter $\mu_\theta$.
One can observe that the adjoint gradient is the one calculated by~\cref{gradient} when $\mu = \mu_\theta$.
Given sufficient regularity assumptions on the feature map $\fmap$, we have by~\cref{prop:representation_curve} that  $(\mu_t)_{t \geq 0}$ is a gradient flow in the sense of~\cref{def:gradient_flow} with $\mu_0 = \mu_{\theta_0}$ if and only if $\mu_t = \mu_{\theta_t}$ for every $t \geq 0$ and $(\theta_t)_{t \geq 0}$ is a gradient flow for the above adjoint gradient.

\subsection{Gradient flows as curves of maximal slope}

There exists a large body of mathematical works devoted to the generalization of the classical theory of gradient flows to functionals over metric spaces.
\textcite{ambrosio2008gradient} give an in-depth presentation of this theory. Complementary and more synthetic presentations are given by~\textcite{ambrosio2013user,santambrogio2017euclidean}. Based on those works, we introduce here another definition of gradient flows for the risk $L$ which is the one of \emph{curves of maximal slope} and show it coincides with the definition from the previous section.

\subsubsection{Curves of maximal slope in metric spaces}

When $Z$ is a Euclidean space, and $f$ is a smooth function the gradient flow of $f$ is defined as the solution of the ODE $\frac{\d}{\d t} z_t = -\nabla f(z_t)$.
Then such a gradient flow satisfies:
\begin{align*}
    \frac{\d}{\d t} f(z_t) = - \| \nabla f(z_t) \|^2 = - \frac{1}{2} \left( \| \frac{\d}{\d t} z_t \|^2 + \| \nabla f(z_t) \|^2 \right),
\end{align*}
whereas for any other smooth curve $(y_t)$ we have by Young's inequality:
\begin{align} \label{young}
    \frac{\d}{\d t} f(y_t) = - \langle \nabla f(y_t), \frac{\d}{\d t} y_t \rangle \geq - \frac{1}{2} \left( \| \frac{\d}{\d t} y_t \|^2 + \| \nabla f(y_t) \|^2 \right) 
\end{align}
with equality if and only if $\frac{\d}{\d t} y_t = - \nabla f(y_t)$. Hence, we see that imposing equality in the above inequality gives a characterization of gradient flow curves in the Euclidean case.
The definition of \emph{curves of maximal slope} is based on the generalization of this characterization to metric spaces. For example, a generalization of the speed's norm $\| \dd{t} z_t \|$ is given by the metric derivative $\left| \dd{t}z_t \right|$.
To make sense of the gradient's norm $\| \nabla f(z) \|$ we need to introduce the notion of \emph{upper gradient}.

\begin{defn}[Upper gradient {{\cite[Def.1.2.1]{ambrosio2008gradient}}}] \label{def:upper_gradient}
    Let $(Z, d)$ be a complete metric space and $f: Z \to \RR$ be a function. The map $g : Z \to [0, +\infty]$ is an \emph{upper gradient} for $f$ if for every absolutely continuous curve $(z_t)_{t \in I}$ we have that $t \mapsto g(z_t)$ is measurable and:
    \begin{align*}
        \left| f(z_{t_1}) - f(z_{t_2}) \right| \leq \int_{t_1}^{t_2} g(z_t) \left| \frac{\d}{\d t} z_t \right| \d t, \quad \forall t_1 \leq t_2 \in I.
    \end{align*}
    When there is no ambiguity, an upper gradient of $f$ will simply be denoted by $| \nabla f|$.
\end{defn}

Given an upper gradient for $f$, the definition of curves of maximal slope consists in imposing equality in~\cref{young}.

\begin{defn}[Curve of maximal slope~{{\cite[Def.1.3.2]{ambrosio2008gradient}}}] \label{def:maximal_slope}
    Let $(Z, d)$ be a complete metric space, $I \subset \RR$ be an interval and $f : Z \to \RR$ a function with $|\nabla f|$ an upper gradient for $f$. We say that $(z_t)_{t \in I}$ is a \emph{curve of maximal slope} for $f$ w.r.t. $|\nabla f|$ if it satisfies:
    \begin{enumerate}
        \item $(z_t)_{t \in I}$ is locally absolutely continuous,
        \item the map $t \mapsto f(z_t)$ is non-increasing,
        \item for $\d t$-a.e. $t \in I$ it holds $\frac{\d}{\d t} f(z_t) \leq - \frac{1}{2} \left( \left| \frac{\d}{\d t} z_t \right|^2  + | \nabla f |^2(z_t) \right)$.
    \end{enumerate}
    Moreover, if $\lim_{t \to \inf I} z_t = z$ exists, then we say $(z_t)$ is a \emph{curve of maximal slope starting at $z$}.
\end{defn}

\begin{rem}[About the various definitions of \emph{curves of maximal slope}]
    There are various definitions of the notion of curve of maximal slope in metric spaces, see for example~\cite[Sec.4]{ambrosio2013user} for a discussion about the various definitions and their relations. Our definition is the same as the one used in~\cite[Def.2.12]{hauer2019kurdyka}. In particular, it implies the following \emph{Energy Dissipation Equality}~\parencite[Prop.2.14]{hauer2019kurdyka}:
    \begin{align} \label{EDE} \tag{EDE}
        f(z_{t_1}) - f(z_{t_2}) = \frac{1}{2} \int_{t_1}^{t_2} \left( \left| \frac{\d}{\d t} z_t \right|^2  + | \nabla f |^2(z_t) \right) \d t, \quad \forall t_1, t_2 \in I.
    \end{align}
    This definition differs from the one exposed in~\cite[Def.2.2]{schiavo2023local} (see also~\cite[Def.4.4]{muratori2020gradient}) as the map $t \mapsto f(z_t)$ need not be locally absolutely continuous. The difference between these two definitions is discussed in~\cite[Rem.2.6]{schiavo2023local} but observe that for our purpose \emph{(i)} the loss $L$ will be shown to be locally Lipschitz in~\cref{cor:loss_lipschitz}, hence implying that $L(\mu_t)$ is locally absolutely continuous, \emph{(ii)} the gradient norm $\| \nabla L[\mu] \|_{L^2(\mu)}$ will be shown to be an upper gradient in~\cref{prop:upper_gradient}. For these reasons, we need not here make the distinction between these two definitions and prefer weaker assumptions.
\end{rem}

Note that there are \emph{a priori} no reasons for~\cref{def:gradient_flow,def:maximal_slope} to define the same notion of gradient flow for the risk $L$.
In particular, the first definition uses the existence of the adjoint variable $p$ and thus some regularity on $\fmap$. In contrast, the second definition requires an upper gradient which is yet unspecified for $L$.
Taking appropriate assumptions on the feature map $\fmap$, we show in the rest of this section that the risk $L$ is sufficiently regular for the two definitions to coincide. This will be the content of~\cref{thm:equivalence_GF_CMS}.

\subsubsection{Curve of maximal slope for the risk $L$}
Provided with~\cref{def:maximal_slope}, we seek to characterize the curves of maximal slope for the risk $L$ in the metric space $\Pp_2^\Leb([0,1] \times \Om)$.
We first show the \NODE's output is a locally Lipschitz function of the parameter $\mu \in \Pp^\Leb_2([0, 1] \times \Om)$.

\begin{assumption}[local Lipschitz continuity w.r.t. $\om$] \label{fmap_assumption2}
    Assume that $\fmap : \Om \times \RR^d \to \RR^d$ is locally Lipschitz w.r.t. $\om$ with a Lipschitz constant that grows at most linearly w.r.t. $\Om$: for every $R \geq 0$ there exists a constant $C(R)$ s.t.:
    \begin{align*}
        \forall x \in B(0, R), \, \forall \om, \om' \in \Om, \quad \| \fmap(\om, x) - \fmap (\om', x) \| \leq C(R) ( 1+\max(\|\om\|, \|\om'\|) ) \| \om - \om' \| .
    \end{align*}
\end{assumption}

\begin{lem}[local Lipschitz continuity of the flow] \label{lem:flow_lipschitz}
    Assume $\fmap$ satisfies~\cref{fmap_assumption1,fmap_assumption2} and consider some input $x \in \RR^d$. Then the map $\mu \in \Pp^\Leb_2([0, 1] \times \Om) \mapsto (x_\mu(s))_{s \in [0, 1]} \in \Cc([0, 1], \RR^d)$ is locally Lipschitz. More precisely,  for every $\Ee \geq 0$ there exists a constant $C = C(\Ee)$ s.t.:
    \begin{align*}
        \sup_{s \in [0, 1]} \| x_\mu(s) - x_{\mu'}(s) \| \leq C d(\mu, \mu'),
    \end{align*}
    for every $\mu, \mu' \in \Pp^\Leb_2([0, 1] \times \Om)$ with $\Ee_2(\mu), \Ee_2(\mu') \leq \Ee$. Moreover, the constant $C$ can be chosen uniformly over $x$ in a compact set.
\end{lem}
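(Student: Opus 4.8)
The plan is to prove the estimate by a Grönwall-type argument applied to the difference $x_\mu(s) - x_{\mu'}(s)$, using the integral form of the Forward ODE~\eqref{forward_weak}. First I would fix $\Ee \geq 0$ and two parameters $\mu, \mu'$ with $\Ee_2(\mu), \Ee_2(\mu') \leq \Ee$; by the \emph{a priori} bound~\eqref{flow_growth} from the proof of~\cref{prop:flow_wellposed}, all the trajectories $x_\mu(s), x_{\mu'}(s)$ stay in a ball $B(0,R)$ with $R = R(\Ee, \|x\|)$ depending only on $\Ee$ and on $\|x\|$ (hence uniformly over $x$ in a compact set). Writing $e(s) \eqdef \|x_\mu(s) - x_{\mu'}(s)\|$, subtracting the two integral equations gives
\begin{align*}
    x_\mu(s) - x_{\mu'}(s) = \int_0^s \left( F_{\mu(.|r)}(x_\mu(r)) - F_{\mu'(.|r)}(x_{\mu'}(r)) \right) \d r,
\end{align*}
and the integrand is split as $\left( F_{\mu(.|r)}(x_\mu(r)) - F_{\mu(.|r)}(x_{\mu'}(r)) \right) + \left( F_{\mu(.|r)}(x_{\mu'}(r)) - F_{\mu'(.|r)}(x_{\mu'}(r)) \right)$.

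The first term is handled by~\cref{fmap_assumption1} (local Lipschitzness w.r.t.\ $x$): integrating the pointwise bound $\|\fmap(\om,x_\mu(r)) - \fmap(\om,x_{\mu'}(r))\| \leq C(R)(1+\|\om\|^2) e(r)$ against $\mu(\om|r)$ gives a contribution $\leq C(R) (1 + \int_\Om \|\om\|^2 \d\mu(\om|r)) e(r)$, whose time integral is controlled since $r \mapsto \int_\Om \|\om\|^2 \d\mu(\om|r) \in L^1([0,1])$ with integral $\Ee_2(\mu) \leq \Ee$. For the second term I would introduce an optimal coupling $\gamma(.|r) \in \Gamma_o(\mu(.|r), \mu'(.|r))$ (measurably in $r$, as produced in the proof of~\cref{prop:d_characterization}) and write
\begin{align*}
    F_{\mu(.|r)}(x_{\mu'}(r)) - F_{\mu'(.|r)}(x_{\mu'}(r)) = \int_{\Om^2} \left( \fmap(\om, x_{\mu'}(r)) - \fmap(\om', x_{\mu'}(r)) \right) \d\gamma(\om,\om'|r);
\end{align*}
applying~\cref{fmap_assumption2} pointwise, this is bounded in norm by $C(R) \int_{\Om^2} (1 + \max(\|\om\|,\|\om'\|)) \|\om - \om'\| \d\gamma(\om,\om'|r)$, and Cauchy--Schwarz turns this into $C(R) \left( 1 + \Ee_2(\mu)^{1/2} + \Ee_2(\mu')^{1/2} \right)^{1/2}$-type factors times $\Ww_2(\mu(.|r), \mu'(.|r))$ — more precisely one gets a bound of the form $a(r)\, \Ww_2(\mu(.|r),\mu'(.|r))$ with $a \in L^2([0,1])$ and $\|a\|_{L^2} \leq C(\Ee)$, again because the second moments of $\mu(.|r), \mu'(.|r)$ are integrable in $r$. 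Integrating in $r$ and using Cauchy--Schwarz in time, $\int_0^s a(r) \Ww_2(\mu(.|r),\mu'(.|r)) \d r \leq \|a\|_{L^2} \left( \int_0^1 \Ww_2(\mu(.|r),\mu'(.|r))^2 \d r \right)^{1/2} = \|a\|_{L^2}\, d(\mu,\mu')$ by~\cref{prop:distance}.

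Combining, one obtains $e(s) \leq C(\Ee)\, d(\mu,\mu') + \int_0^s b(r) e(r) \d r$ for all $s \in [0,1]$, where $b(r) = C(R)(1 + \int_\Om \|\om\|^2 \d\mu(\om|r))$ is in $L^1([0,1])$ with $\int_0^1 b \leq C(\Ee)$. Grönwall's inequality (integral form, valid for $L^1$ kernels) then yields $\sup_{s\in[0,1]} e(s) \leq C(\Ee)\, d(\mu,\mu')\, \exp\left( \int_0^1 b(r)\d r \right) \leq C'(\Ee)\, d(\mu,\mu')$, which is the claim; the uniformity over $x$ in a compact set is inherited from the uniformity of $R$ and hence of all constants. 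The main obstacle is bookkeeping the dependence of the local Lipschitz/growth constants $C(R)$ on the radius $R$ while keeping $R$ (and thus the final constant) a function of $\Ee$ alone via~\eqref{flow_growth}; the other delicate point is the measurable selection of the optimal couplings $\gamma(.|r)$, but this is exactly what was established in the proof of~\cref{prop:d_characterization}, so it can be invoked directly.
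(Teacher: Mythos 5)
Your proposal is correct and follows essentially the same route as the paper's proof: bound the trajectories via \eqref{flow_growth}, split the difference of the integral equations into a term handled by the local Lipschitzness in $x$ (\cref{fmap_assumption1}) and a term handled by an optimal coupling together with \cref{fmap_assumption2}, then Cauchy--Schwarz in time to produce $d(\mu,\mu')$ and Grönwall to conclude. The only cosmetic difference is that you invoke measurable selection of the couplings, whereas the paper only needs existence of an optimal coupling at each fixed $r$ since the resulting bound involves $\Ww_2(\mu(.|r),\mu'(.|r))$, whose measurability in $r$ was already established in \cref{prop:distance}.
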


\begin{proof}
    Consider $x \in \RR^d$, $\Ee \geq 0$ and $\mu, \mu'$ such as in the statement.
    We denote by $(x(s))_{s \in [0, 1]}$ and $(x'(s))_{s \in [0, 1]}$ the flow associated to $x$  and to the parameters $\mu$ and $\mu'$ respectively.
    Let $R \geq 0$ be such that $\|x \| \leq R$.
    Then by~\cref{prop:flow_wellposed} the trajectories $x, x'$ are uniformly bounded by some $R' = R'(R, \Ee)$ . Then using~\cref{forward} we have for every $s \in [0, 1]$:
    \begin{align*}
        \| x(s) - x'(s) \| \leq & \| x(0) - x'(0) \| + \| \int_0^s \int_\Om \fmap(\om, x(r)) \d \mu(\om|r) \d r - \int_0^s \int_\Om \fmap(\om, x'(r)) \d \mu'(\om|r) \d r   \| \\
        \leq & \| x(0) - x'(0) \| + \int_0^s \int_\Om \| \fmap(\om,x(r)) - \fmap(\om, x'(r)) \| \d \mu(\om|r) \d r \\
        & + \int_0^s \|  \int_\Om \fmap(\om, x'(r)) \d (\mu - \mu')(\om|r) \| \d r . 
    \end{align*}
    For the first integral note that using the local Lipschitz continuity of $\fmap$ w.r.t. $x$ in~\cref{fmap_assumption1} we have for every $r \in [0, 1]$:
    \begin{align*}
        \int_\Om \| \fmap(\om,x(r)) - \fmap(\om, x'(r)) \| \d \mu(\om|r) \leq C_1 \int_\Om (1+\| \om \|^2)\d \mu(\om|r) \| x(r) - x'(r) \|,
    \end{align*}
    where $C_1 =  C_1(R, \Ee)$. For the second integral, note that at fixed $r \in [0, 1]$, if $\gamma \in \Gamma_o(\mu(.|r), \mu'(.|r))$ is an optimal coupling between $\mu(.|r)$ and $\mu'(.|r)$ then:
    \begin{align*}
          \int_\Om \fmap(\om, x'(r)) \d (\mu - \mu')(\om | r) = \int_{\Om^2} ( \fmap(\om, x'(r)) - \fmap(\om', x'(r)) ) \d \gamma(\om, \om')\,.
    \end{align*}
    Using the local Lipschitz continuity of $\fmap$ w.r.t. $\om$ in~\cref{fmap_assumption2} and the optimality of $\gamma$:
    \begin{align*}
        \| \int_\Om \fmap(\om, x'(r)) \d (\mu - \mu')(\om | r) \| & \leq \int_{\Om^2} \|  \fmap(\om, x'(r)) - \fmap(\om', x'(r)) \| \d \gamma(\om, \om') \\
        & \leq  \int_{\Om^2} C_2 (1 + \max(\|\om\|, \|\om'\|)) \| \om - \om' \|  \d \gamma(\om, \om') \\
        & \leq \sqrt{3} C_2 \left(1 + \Ee_2(\mu(.|r)) + \Ee_2(\mu'(.|r)) \right)^{1/2} \Ww_2(\mu(.|r), \mu'(.|r)) ,
    \end{align*}
    where $C_2 = C_2(R, \Ee)$. Integrating those inequalities gives by Grönwall's lemma that for $s \in [0, 1]$:
    \begin{align*}
        \| x(s) - x'(s) \|  \leq \sqrt{3} e^{C_1 (1+\Ee_2(\mu))}  C_2 \int_0^s  \left(1 + \Ee_2(\mu(.|r)) + \Ee_2(\mu'(.|r)) \right)^{1/2} \Ww_2(\mu(.|r), \mu'(.|r)) \d r \leq C_3 d(\mu, \mu') ,
    \end{align*}
    where $C_3 = C_3(R, \Ee)$.
\end{proof}

As an immediate corollary of the above proposition we get that, provided the loss function $\ell$ is itself locally Lipschitz, then the risk $L$ is also a locally Lipschitz function of $\mu$.

\begin{cor}[local Lipschitz continuity of the risk] \label{cor:loss_lipschitz}
    Assume that $\fmap$ satisfies~\cref{fmap_assumption1,fmap_assumption2} and $\ell$ is locally Lipschitz w.r.t. $x$. Then the risk map $L : \mu \in \Pp_2^\Leb([0,1] \times \Om) \mapsto L(\mu)$ is locally Lipschitz.
\end{cor}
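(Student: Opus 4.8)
The plan is to chain the local Lipschitz estimate for the flow map (\cref{lem:flow_lipschitz}) with the local Lipschitzness of $\ell$, using crucially that $\Dd$ is compactly supported so that all inputs — and, by the growth bound, all outputs — of the \NODE{} model remain in a fixed compact set.

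First I would localize in the parameter space. Fix a $d$-bounded set $B \subset \Pp_2^\Leb([0,1] \times \Om)$ and pick any reference $\mu_0 \in B$. Since $\Ww_2 \leq d$ (\cref{rem:topology}) and $\Ee_2(\mu)^{1/2} = \Ww_2(\mu, \delta_{(0,0)})$, the triangle inequality for $\Ww_2$ gives $\Ee_2(\mu)^{1/2} \leq d(\mu, \mu_0) + \Ee_2(\mu_0)^{1/2}$, so that $\Ee \eqdef \sup_{\mu \in B} \Ee_2(\mu) < \infty$. This is exactly the uniform second-moment bound needed to invoke \cref{lem:flow_lipschitz} with a single constant over all of $B$.

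Next, let $R \geq 0$ bound $\|x\|$ and $\|y\|$ on $\mathrm{supp}(\Dd)$. By the growth bound \cref{flow_growth} there is $R' = R'(R, \Ee)$ with $\|x_\mu(1)\| \leq R'$ for every $x \in \overline{B}(0,R)$ and every $\mu \in B$. As $\ell$ is locally Lipschitz w.r.t. $x$, it is Lipschitz on the compact set $\overline{B}(0,R') \times \overline{B}(0,R)$ with some constant $\Lambda$. Then for $\mu, \mu' \in B$, using $L(\mu) = \EE_{x,y}\ell(x_\mu(1),y)$ and \cref{lem:flow_lipschitz} at $s = 1$ (whose constant $C(\Ee)$ is uniform over $x \in \overline{B}(0,R)$):
\begin{align*}
    |L(\mu) - L(\mu')| \leq \EE_{x,y} |\ell(x_\mu(1),y) - \ell(x_{\mu'}(1),y)| \leq \Lambda \, \EE_{x,y} \|x_\mu(1) - x_{\mu'}(1)\| \leq \Lambda \, C(\Ee) \, d(\mu, \mu').
\end{align*}
Hence $L$ is Lipschitz on $B$; since every point of $\Pp_2^\Leb([0,1]\times\Om)$ has a $d$-bounded neighbourhood, $L$ is locally Lipschitz.

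This argument is essentially routine. The only steps requiring a little care are the first one — confirming that $d$-boundedness propagates to a uniform second-moment bound, which is what makes the constant of \cref{lem:flow_lipschitz} uniform over $B$ — and the bookkeeping of the uniformity of $R'$, $\Lambda$ and $C(\Ee)$ over the compact support of $\Dd$; measurability of $(x,y) \mapsto \ell(x_\mu(1),y)$ and the passage of the pointwise Lipschitz bound through $\EE_{x,y}$ are immediate.
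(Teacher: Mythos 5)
Your proof is correct and is exactly the chaining the paper has in mind: the corollary is stated as an immediate consequence of \cref{lem:flow_lipschitz}, obtained by combining the uniform-in-$x$ flow estimate with the local Lipschitzness of $\ell$ on the compact set where the trajectories and data live, and your write-up just makes the bookkeeping (uniform second moments on $d$-bounded sets, the radius $R'$ from \cref{flow_growth}, the constant $\Lambda$) explicit. No issues.
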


Assuming more regularity on the map $\fmap$, one can express the first variation $\delta x$ of the flow map with respect to a variation of the parameter transported by a velocity field $v \in L^2(\mu)$.

\begin{assumption}[Differentiability of $\fmap$] \label{fmap_assumption3}
    Assume that $\fmap$ is continuously differentiable and s.t.
    \begin{enumerate}
        \item $\D_x \fmap$ grows at most quadratically with $\om$: for every $R \geq 0$ there exists a constant $C(R)$ s.t.
        \begin{align*}
           \forall x \in B(0, R), \, \forall \om \in \Om, \quad  \| \D_{x} \fmap(\om, x) \| \leq C(R) (1+\|\om\|^2) .
        \end{align*}
        
        \item $\D_{\om} \fmap$ grows at most linearly with $\om$: for every $R \geq 0$ there exists a constant $C = C(R)$ s.t.
        \begin{align*}
            \forall x \in B(0, R), \, \forall \om \in \Om, \quad \| \D_{\om} \fmap(\om,x) \| \leq C(R)(1+\| \om \|).
        \end{align*}
    \end{enumerate}
    
\end{assumption}

\begin{prop} \label{prop:flow_differential}
    Assume $\fmap$ satisfies~\cref{fmap_assumption1,fmap_assumption2,fmap_assumption3}. Consider $\mu \in \Pp_2^\Leb([0,1] \times \Om)$ and a velocity field $v : [0,1] \times \Om \to \Om$ in $L^2(\mu)$. For $t \in \RR$, define $\mu_t \eqdef (\Id + t(0,v))_\# \mu$. Then, for $x \in \RR^d$, $(x_{\mu_t})_{t \in \RR}$ is differentiable in $\Cc([0,1], \RR^d)$ at $t = 0$ and $\delta x \eqdef \frac{\d}{\d t} x_{\mu_t} |_{t=0}$ is the solution to:
    \begin{align} \label{flow_variation}
        \forall s \in [0, 1], \, \delta x(s) = \int_0^s \D F_{\mu(.|r)}(x_\mu(r)) \delta x(r) \d r + \int_0^s \int_\Om \D_\om \fmap(\om, x_\mu(r)) v(r,\om) \d \mu(\om|r) \d r.
    \end{align}
\end{prop}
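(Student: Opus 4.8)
The plan is to show that the difference quotient $\tfrac1t(x_{\mu_t}-x_\mu)$ converges in $\Cc([0,1],\RR^d)$, as $t\to0$, to the unique solution $\delta x$ of~\cref{flow_variation}; the argument combines a Grönwall estimate with dominated convergence in the $\Om$-variable. First I would record the uniform bounds needed. Since the perturbation acts by $(s,\om)\mapsto(s,\om+tv(s,\om))$, one has $\mu_t(.|s)=(\om\mapsto\om+tv(s,\om))_\#\mu(.|s)$ for $\d s$-a.e. $s$, hence $\Ee_2(\mu_t)\le 2\Ee_2(\mu)+2t^2\|v\|_{L^2(\mu)}^2$ and, using the coupling $\gamma(.|s)\eqdef(\Id,\Id+tv(s,.))_\#\mu(.|s)\in\Pi^\Leb(\mu,\mu_t)$ together with~\cref{prop:d_characterization}, $d(\mu_t,\mu)\le|t|\,\|v\|_{L^2(\mu)}$. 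For $|t|\le1$ the moments $\Ee_2(\mu_t)$ are thus uniformly bounded and~\cref{lem:flow_lipschitz} gives $\sup_s\|x_{\mu_t}(s)-x_\mu(s)\|\le C|t|\,\|v\|_{L^2(\mu)}$, so all curves $x_{\mu_t}$, $|t|\le1$, stay in a fixed ball $B(0,R')$. Moreover, as in the proof of~\cref{prop:backward_wellposed} and by~\cref{fmap_assumption3}, the maps $r\mapsto\D F_{\mu(.|r)}(x_\mu(r))$ and $r\mapsto\int_\Om\D_\om\fmap(\om,x_\mu(r))v(r,\om)\d\mu(\om|r)$ are integrable on $[0,1]$ (for the latter use Cauchy--Schwarz, noting $(1+\|\om\|)\in L^2(\mu)$ since $\Ee_2(\mu)<\infty$, and $v\in L^2(\mu)$), so~\cref{flow_variation} is a linear non-autonomous Carath\'eodory ODE with integrable coefficients and admits a unique solution $\delta x\in\Cc([0,1],\RR^d)$.

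Next I would set up the Grönwall estimate for $w_t(s)\eqdef\tfrac1t(x_{\mu_t}(s)-x_\mu(s))-\delta x(s)$. Subtracting the integral forms of~\cref{forward} for $\mu_t$ and for $\mu$ (after pushing $\mu_t(.|r)$ back to $\mu(.|r)$) and splitting
\[
\fmap(\om+tv,x_{\mu_t}(r))-\fmap(\om,x_\mu(r))=\bigl[\fmap(\om+tv,x_{\mu_t}(r))-\fmap(\om,x_{\mu_t}(r))\bigr]+\bigl[\fmap(\om,x_{\mu_t}(r))-\fmap(\om,x_\mu(r))\bigr],
\]
I would rewrite each bracket, using $\fmap\in\Cc^1$, as an integral of its differential along the connecting segment:
\begin{gather*}
\tfrac1t\bigl[\fmap(\om+tv,x_{\mu_t}(r))-\fmap(\om,x_{\mu_t}(r))\bigr]=\int_0^1\D_\om\fmap(\om+\tau tv,x_{\mu_t}(r))\,v(r,\om)\,\d\tau,\\
\tfrac1t\bigl[\fmap(\om,x_{\mu_t}(r))-\fmap(\om,x_\mu(r))\bigr]=\Bigl(\int_0^1\D_x\fmap\bigl(\om,x_\mu(r)+\tau(x_{\mu_t}(r)-x_\mu(r))\bigr)\d\tau\Bigr)\,\tfrac1t(x_{\mu_t}(r)-x_\mu(r)).
\end{gather*}
Writing $\tfrac1t(x_{\mu_t}(r)-x_\mu(r))=w_t(r)+\delta x(r)$, subtracting~\cref{flow_variation}, and using $\D F_{\mu(.|r)}(x)=\int_\Om\D_x\fmap(\om,x)\d\mu(\om|r)$, this yields
\[
\|w_t(s)\|\le\int_0^s\|A_t(r)\|\,\|w_t(r)\|\,\d r+\int_0^s\|A_t(r)-\bar A(r)\|\,\|\delta x(r)\|\,\d r+\sup_{u\in[0,1]}\|E_t(u)\|,
\]
where $A_t(r)\eqdef\int_\Om\int_0^1\D_x\fmap(\om,x_\mu(r)+\tau(x_{\mu_t}(r)-x_\mu(r)))\,\d\tau\,\d\mu(\om|r)$, $\bar A(r)\eqdef\D F_{\mu(.|r)}(x_\mu(r))$, and $E_t(u)$ gathers the $\D_\om\fmap$-contributions, its integrand at $r$ being $\int_\Om\int_0^1\bigl(\D_\om\fmap(\om+\tau tv,x_{\mu_t}(r))-\D_\om\fmap(\om,x_\mu(r))\bigr)v(r,\om)\,\d\tau\,\d\mu(\om|r)$.

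It then remains to send $t\to0$. By~\cref{fmap_assumption3} one has $\sup_r\|A_t(r)\|\le\int_\Om C(R')(1+\|\om\|^2)\d\mu(\om|r)$, integrable in $r$ and independent of $t$, so $\int_0^1\|A_t(r)\|\,\d r$ is bounded uniformly in $|t|\le1$; for every $r$, $\|A_t(r)-\bar A(r)\|\to0$ by continuity of $\D_x\fmap$ and $x_{\mu_t}(r)\to x_\mu(r)$ under the same $L^1(\mu)$-domination, and $\|\delta x\|$ is bounded, so dominated convergence gives $\int_0^1\|A_t(r)-\bar A(r)\|\,\|\delta x(r)\|\,\d r\to0$; finally the integrand of $E_t(u)$ tends to $0$ for $\mu$-a.e. $(r,\om)$ by continuity of $\D_\om\fmap$ and is dominated by $C'(R')(1+\|\om\|^2+\|v(r,\om)\|^2)\in L^1(\mu)$ (using the linear growth of $\D_\om\fmap$ in $\om$ together with $\Ee_2(\mu)<\infty$ and $v\in L^2(\mu)$), whence $\sup_u\|E_t(u)\|\to0$ uniformly in $u$. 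Grönwall's lemma then forces $\sup_s\|w_t(s)\|\to0$, which is precisely the differentiability of $t\mapsto x_{\mu_t}$ at $t=0$ in $\Cc([0,1],\RR^d)$ with derivative $\delta x$ solving~\cref{flow_variation}.

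The routine part is the segment (Taylor) expansions and the Grönwall mechanism; the main obstacle is the dominated-convergence bookkeeping in the $\Om$-variable. One must exhibit $\mu$-integrable dominating functions for both error terms --- which is where $\Ee_2(\mu)<\infty$ is used, since it forces $(1+\|\om\|)\in L^2(\mu)$ and hence, paired with $v\in L^2(\mu)$ via Cauchy--Schwarz, makes the $\D_\om\fmap$-terms integrable --- and one must verify that all estimates are uniform in $s\in[0,1]$, so that the convergence holds in the sup-norm of $\Cc([0,1],\RR^d)$ and not merely pointwise.
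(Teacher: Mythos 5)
Your proposal is correct and follows essentially the same route as the paper: expand the increment via Taylor formulas along segments in $\om$ and $x$, use $d(\mu_t,\mu)\le|t|\,\|v\|_{L^2(\mu)}$ together with \cref{lem:flow_lipschitz} to keep all trajectories in a fixed ball, and conclude by dominated convergence under the growth bounds of \cref{fmap_assumption3}. The only cosmetic difference is that you make the final stability step explicit through a Grönwall estimate on $w_t=z_t-\delta x$, whereas the paper phrases it as $L^1([0,1])$-convergence of the coefficients $A_t,b_t$ of the linear ODE satisfied by the difference quotient.
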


\begin{proof}
     First, thanks to~\cref{fmap_assumption3}, for $\nu \in \Pp_2(\Om)$ the map $F_\nu : \RR^d \to \RR^d$ is differentiable with $\D F_\nu : x \mapsto \int_\Om \D_x \fmap(\om, x) \d \nu(\om)$.
    Also, $\delta x$ is well-defined as the unique solution of~\cref{flow_variation} and:
    \begin{align*}
        \forall s \in [0, 1], \quad \delta x(s) =  \int_0^s \int_\Om \Phi_{\mu,x}(s) \Phi_{\mu,x} (r)^{-1}  \D_\om \fmap(\om, x(r)) v(r,\om) \d \mu(r,\om).
    \end{align*}
    
    For simplicity, in the rest of the proof we will denote by $x_t \eqdef x_{\mu_t}$ for any $t \in \RR$.
    Let us then show that $\delta x$ is the derivative of $x_t$ at $t = 0$. For $t \neq 0$, consider the normalized increment:
    \begin{align*}
        z_t \eqdef \frac{1}{t} (x_t - x_0) \in \Cc([0, 1], \RR^d) .
    \end{align*}
    Then we have by definition of $x_t$ and $x_0$ that for every $s \in [0, 1]$:
    \begin{align*}
        z_t(s) = & \frac{1}{t} \int_0^s \int_\Om \fmap(\om,x_t(r)) \d \mu_t(r, \om) - \frac{1}{t} \int_0^s \int_\Om \fmap(\om, x_0(r)) \d \mu(r,\om) \\
        = & \frac{1}{t} \int_0^s \int_\Om \left(  \fmap(\om + t v(r,\om),x_t(r)) - \fmap(\om, x_0(r)) \right) \d \mu(r,\om) \\
        = & \int_0^s \int_\Om \left( \int_0^1 \D_x \fmap(\om,x_0(r)+utz_t(r)) \d u \right) \cdot z_t(r) \d \mu(r,\om) \\
        & +  \int_0^s \int_\Om \left( \int_0^1 \D_\om \fmap (\om + u t v(r,\om), x_t(r)) \d u \right) \cdot v(r,\om) \d \mu(r,\om).
    \end{align*}
    Hence $z_t$ is solution of the linear ODE $z_t(s) = \int_0^s \left( A_t(r) \cdot z_t(r) + b_t (r) \right) \d r$ where we defined for $\d r$-a.e. $r \in [0,1]$:
    \begin{align*}
        A_t(r) & \eqdef \int_\Om \int_0^1 \D_x \fmap(\om,x_0(r)+utz_t(r)) \d u \d \mu(\om|r) \\
        b_t(r) & \eqdef \int_\Om \int_0^1 \D_\om \fmap (\om + u t v(r,\om), x_t(r)) \cdot v(r,\om) \d u \d \mu(\om|r)
    \end{align*}
    and in order to prove that $z_t \to \delta x$ in $\Cc([0,1], \RR^d)$ as $t \to 0$ it suffices to show that $A_t$ and $b_t$ converge respectively in $L^1([0,1])$ to:
    \begin{align*}
        A(r) \eqdef \int_\Om \D_x \fmap(\om, x_0(r)) \d \mu(\om|r), \quad \text{and} \quad b(r) \eqdef \int_\Om \D_\om \fmap (\om, x_0(r)) \cdot v(r) \d \mu(\om|r).
    \end{align*}
    Indeed, note that $d(\mu_t, \mu) \leq t \| v \|_{L^2(\mu)}$ and the family $(z_t)_{t \in [-1,1]}$ is bounded $\Cc([0,1], \RR^d)$ by~\cref{lem:flow_lipschitz}.
    Thus for $t \in \RR$:
    \begin{align*}
        \int_0^1 \left| A_t(r) - A(r) \right| \d r \leq \int_0^1 \int_\Om \left| \int_0^1 \D_x \fmap (\om,x_0(r)+utz_t(r)) \d u - \D_x \fmap (\om, x_0(r))  \right| \d \mu(r,\om) \xrightarrow[t \to 0]{} 0
    \end{align*}
    where~\cref{fmap_assumption3} allows us to bound the integrand by an integrable function and to apply Lebesgue's theorem, showing that $A_t \to A$ as $t \to 0$ in $L^1([0,1])$.
    Similarly for $b_t$:
    \begin{align*}
        \int_0^1 \left| b_t(r) - b(r) \right| \d r \leq \int_0^1 \int_\Om \left| \int_0^1 \D_\om \fmap (\om +utv(r,\om), x_t(r)) \d u - \D_\om \fmap (\om, x_0(r)) \right| \| v(r,\om) \| \d \mu(r,\om) \xrightarrow[t \to 0]{} 0.
    \end{align*}
\end{proof}

A direct consequence of the previous result is the differentiability of the flow map and consequently of the risk along absolutely continuous curves.

\begin{cor}[Differentiability of the flow] \label{cor:flow_differential}
    Assume $\fmap$ satisfies~\cref{fmap_assumption1,fmap_assumption2,fmap_assumption3}. Let $I \subset \RR$ be an interval and consider $(\mu_t)_{t \in I}$ an absolutely continuous curve in $\Pp^\Leb_2([0,1] \times \Om)$ satisfying the continuity equation:
    \begin{align*}
        \partial_t \mu_t + \div ((0,v_t) \mu_t) = 0 \quad \text{over $I \times [0, 1] \times \Om$.}
    \end{align*}
    Consider some $x \in \RR^d$. Then $(x_{\mu_t})_{t \in I}$ is an absolutely continuous curve in $\Cc([0,1], \RR^d)$ and is differentiable in $\Cc([0,1], \RR^d)$ for $\d t$-a.e. $t \in I$ with $\delta x_t \eqdef \frac{\d }{\d t} x_{\mu_t}$ the solution to:
    \begin{align} \label{flow_differential}
        \forall s \in [0, 1], \, \delta x_t(s) = \int_0^s \D F_{\mu_t(.|r)}(x_{\mu_t}(r)) \delta x_t(r) \d r + \int_0^s \int_\Om \D_\om \fmap(\om, x_{\mu_t}(r)) v_t(r,\om) \d \mu_{t}(\om|r) \d r.
    \end{align}
\end{cor}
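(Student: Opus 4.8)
The plan is to obtain this by assembling three results already established: the local Lipschitz dependence $\mu \mapsto x_\mu$ of \cref{lem:flow_lipschitz}, the differentiability of the flow along a straight displacement of the parameter of \cref{prop:flow_differential}, and the first-order approximation of an absolutely continuous curve by such a displacement given by \cref{lem:approximation_along_curves}. Two assertions must be proved, by separate arguments: that $t \mapsto x_{\mu_t}$ is absolutely continuous in $\Cc([0,1], \RR^d)$, and that at a.e. $t$ it is differentiable with derivative solving \cref{flow_differential}. Throughout I would, up to replacing $v_t$ by the tangent velocity field of \cref{prop:characterization_absolute_continuity}, assume $v_t$ is that field so that \cref{lem:approximation_along_curves} is available; once existence of the derivative is known, \cref{flow_differential} characterizes $\delta x_t$, so the choice of $v_t$ does not affect the conclusion.

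For absolute continuity, the first step is to record that $\mu \mapsto \Ee_2(\mu)^{1/2}$ is $1$-Lipschitz for $d$: from $\Ee_2(\mu)^{1/2} = \big( \int_0^1 \Ww_2(\mu(.|s), \delta_0)^2 \,\d s \big)^{1/2}$, the reverse triangle inequality for $\Ww_2$ applied fibrewise together with the triangle inequality in $L^2([0,1])$ give $| \Ee_2(\mu)^{1/2} - \Ee_2(\mu')^{1/2} | \le d(\mu, \mu')$. Since $(\mu_t)$ is $d$-continuous, $\Ee_2(\mu_t)$ is bounded on every compact subinterval $[a,b] \subset I$, say by $E$, and for the fixed input $x$ \cref{lem:flow_lipschitz} furnishes a constant $C = C(E)$ with $\sup_{s \in [0,1]} \| x_{\mu_t}(s) - x_{\mu_u}(s) \| \le C\, d(\mu_t, \mu_u)$ for $t, u \in [a,b]$. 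Hence $t \mapsto x_{\mu_t}$ is the composition of the absolutely continuous curve $(\mu_t)$ with a locally Lipschitz map, and is therefore absolutely continuous in $\Cc([0,1], \RR^d)$.

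For the derivative, I would fix a $t$ in the full-measure subset of $I$ on which $v_t \in L^2(\mu_t)$ and the conclusions of \cref{lem:approximation_along_curves} hold, and set $\mu_t^h \eqdef (\Id + h(0, v_t))_\# \mu_t$, so that $d(\mu_{t+h}, \mu_t^h) = o(|h|)$ as $h \to 0$. Writing
\begin{align*}
    \tfrac1h \big( x_{\mu_{t+h}} - x_{\mu_t} \big) = \tfrac1h \big( x_{\mu_{t+h}} - x_{\mu_t^h} \big) + \tfrac1h \big( x_{\mu_t^h} - x_{\mu_t} \big),
\end{align*}
the second summand converges in $\Cc([0,1], \RR^d)$ to $\delta x_t$ by \cref{prop:flow_differential} applied at base point $\mu_t$ with field $v_t$, where $\delta x_t$ solves \cref{flow_variation} with $(\mu, v) = (\mu_t, v_t)$, i.e. \cref{flow_differential}. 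For the first summand, $\Ee_2(\mu_{t+h})$ stays bounded near $h = 0$ by the estimate above, and $\Ee_2(\mu_t^h)^{1/2} \le \Ee_2(\mu_t)^{1/2} + |h|\, \| v_t \|_{L^2(\mu_t)}$ is bounded for $|h| \le 1$; hence \cref{lem:flow_lipschitz} gives a constant $C$ independent of small $h$ with $\sup_{s \in [0,1]} \| x_{\mu_{t+h}}(s) - x_{\mu_t^h}(s) \| \le C\, d(\mu_{t+h}, \mu_t^h) = o(|h|)$, so this term vanishes. Therefore $\tfrac1h(x_{\mu_{t+h}} - x_{\mu_t}) \to \delta x_t$ in $\Cc([0,1], \RR^d)$, and measurability of $t \mapsto \delta x_t$ follows from the representation of $\delta x_t$ through $\Phi_{\mu_t, x}$ in \cref{prop:flow_differential} and the measurability of $t \mapsto v_t$.

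I expect the only genuine difficulty to be keeping the Lipschitz constant of $\mu \mapsto x_\mu$ uniform in $h$: it is precisely this uniformity — ensured by the uniform second-moment bounds on $\mu_{t+h}$ and $\mu_t^h$, themselves obtained from the $1$-Lipschitzness of $\mu \mapsto \Ee_2(\mu)^{1/2}$ and an elementary pushforward estimate — that converts the $o(|h|)$ bound on $d(\mu_{t+h}, \mu_t^h)$ into an $o(|h|)$ bound on $\sup_s \| x_{\mu_{t+h}}(s) - x_{\mu_t^h}(s) \|$, and hence identifies the derivative. The remainder is bookkeeping: pinning down the common null set outside of which \cref{lem:approximation_along_curves} applies and $v_t \in L^2(\mu_t)$, and observing that the convergences above are uniform in $s \in [0,1]$, which is already built into the conclusions of \cref{lem:approximation_along_curves} and \cref{prop:flow_differential}.
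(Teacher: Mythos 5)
Your proposal is correct and follows essentially the same route as the paper: absolute continuity via the local Lipschitzness of $\mu \mapsto x_\mu$ from~\cref{lem:flow_lipschitz}, and differentiability through the decomposition via $\mu_t^h \eqdef (\Id + h(0,v_t))_\# \mu_t$, combining~\cref{prop:flow_differential} for $\tfrac1h(x_{\mu_t^h}-x_{\mu_t})$ with~\cref{lem:approximation_along_curves} and a uniform (in small $h$) Lipschitz constant for $\tfrac1h(x_{\mu_{t+h}}-x_{\mu_t^h})$. The only point to tighten is your reduction to the tangent velocity field: the conclusion~\cref{flow_differential} is stated for the \emph{given} $v_t$, so rather than appealing to the equation ``characterizing'' $\delta x_t$, you should note, as the paper does, that $\div((0, v_t - \tilde v_t)\mu_t) = 0$ forces $\int_0^s \int_\Om \D_\om \fmap(\om, x_{\mu_t}(r))\, v_t(r,\om)\, \d\mu_t(r,\om) = \int_0^s \int_\Om \D_\om \fmap(\om, x_{\mu_t}(r))\, \tilde v_t(r,\om)\, \d\mu_t(r,\om)$, so the source term, and hence $\delta x_t$, is unaffected by the replacement.
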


\begin{proof}
    For $t \in I$ we use the shortcut notation $x_t \eqdef x_{\mu_t}$. The fact that $(x_t)_{t \in I}$ is absolutely continuous follows from~\cref{lem:flow_lipschitz} stating that the flow map is locally Lipschitz. To prove the result it hence suffices to show that $\delta x_t$ is the derivative of $t \mapsto x_t$ in $\Cc([0,1], \RR^d)$.

    Note that, without loss of generality we can consider $v_t$ to be the (uniquely defined) tangent velocity field of the curve $(\mu_t)_{t \in I}$. Indeed if $\Tilde{v}_t$ is the tangent velocity field then we have by~\cref{prop:characterization_absolute_continuity} that in the sense of distributions:
    \begin{align*}
        \div((0, v_t - \Tilde{v}_t) \mu_t) = 0.
    \end{align*}
    Hence for every $x \in \RR^d$ and every $s \in [0, 1]$:
    \begin{align*}
        \int_0^s \int_\Om \D_\om \fmap(\om, x_t(r)) v_t(r,\om) \d \mu_t(r,\om) = \int_0^s \int_\Om \D_\om \fmap(\om, x_t(r)) \Tilde{v}_t(r,\om) \d \mu_t(r,\om) 
    \end{align*}
    and the definition of $\delta x_t$ stays unchanged.
    Then, assuming $v_t$ is the tangent velocity field to the curve $\mu_t$, we can consider a subset $\Lambda \subset I$ of full Lebesgue measure such that the conclusions of~\cref{lem:approximation_along_curves} hold. For every $t \in \Lambda$ and every $h \neq 0$ consider $\Tilde{\mu}^h_t \eqdef (\Id +h(0,v_t))_\# \mu_t$ and $\Tilde{x}^h_t$ the associated flow. Then by~\cref{prop:flow_differential}:
    \begin{align*}
        \| \frac{x_{t+h} - x_t}{h} - \delta x_t \|_{\Cc([0,1])} \leq \|  \frac{\Tilde{x}^h_t - x_t}{h} - \delta x_t \|_{\Cc([0,1])} + \| \frac{x_{t+h} - \Tilde{x}^h_t}{h} \|_{\Cc([0,1])} \xrightarrow[h \to 0]{} 0
    \end{align*}
    where the first term goes to $0$ by application of~\cref{prop:flow_differential}.The second term also goes to $0$ by the fact that the flow map is locally Lipschitz, thus $\| x_{t+h} - \Tilde{x}^h_t \|_{\Cc([0,1])} \leq C d(\mu_{t+h}, \Tilde{\mu}^h_t)$ for some constant $C$ and $\frac{1}{h} \| x_{t+h} - \Tilde{x}^h_t \|_{\Cc([0,1])} \to 0$ by~\cref{lem:approximation_along_curves}. Note that, as $\Lambda \subset I$ is independent of $x$, it follows that the curve $t \mapsto x_t$ is differentiable at every $t \in \Lambda$ for every $x \in \RR^d$.
\end{proof}

\begin{cor}[Differentiability of the loss] \label{cor:loss_differential}
    Assume $\fmap$ satisfies~\cref{fmap_assumption1,fmap_assumption2,fmap_assumption3} and $\ell$ is continuously differentiable. Let $I \subset \RR$ be an interval and $(\mu_t)_{t \in I}$ be as in~\cref{cor:flow_differential}.
    Then $(L(\mu_t))_{t \in I}$ is absolutely continuous and for almost every $t \in I$:
    \begin{align*}
        \frac{\d}{\d t} L(\mu_t) = \int_{[0, 1] \times \Om} \langle \EE_{x,y} \D_\om \fmap(\om, x_{\mu_t}(s))^\top p_{\mu_t,x,y}(s), v_{t}(s,\om) \rangle \d \mu_t(s, \om) .
    \end{align*}
\end{cor}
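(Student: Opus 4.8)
The plan is to write $L(\mu_t)=\EE_{x,y}\,\ell(x_{\mu_t}(1),y)$, differentiate under the expectation sign, and then apply the classical \emph{adjoint trick} to re-express the derivative in terms of the adjoint variable $p$. Absolute continuity of $t\mapsto L(\mu_t)$ is immediate: by~\cref{cor:loss_lipschitz} the risk $L$ is locally Lipschitz for the distance $d$, and $(\mu_t)_{t\in I}$ is locally absolutely continuous, so the composition is too.

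Next I would fix $t$ in the full-measure set $\Lambda\subset I$ furnished by~\cref{cor:flow_differential}, on which $t\mapsto x_{\mu_t}\in\Cc([0,1],\RR^d)$ is differentiable for every $x$, with derivative $\delta x_t$ solving~\cref{flow_differential}. Since $\Dd$ has compact support, $\ell$ is $C^1$, and, by the uniform-in-$x$ bound in~\cref{lem:flow_lipschitz}, the difference quotients $\frac{1}{h}(x_{\mu_{t+h}}-x_{\mu_t})$ stay bounded in $\Cc([0,1],\RR^d)$ uniformly over $x$ in the support of $\Dd$ (and $\frac1h d(\mu_{t+h},\mu_t)$ is bounded near $t\in\Lambda$ by metric differentiability), dominated convergence lets me differentiate under the expectation:
\[
  \frac{\d}{\d t}L(\mu_t)=\EE_{x,y}\,\big\langle \nabla_x\ell(x_{\mu_t}(1),y),\,\delta x_t(1)\big\rangle .
\]

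The heart of the argument is to identify, for fixed $(x,y)$, the inner product $\langle \nabla_x\ell(x_{\mu_t}(1),y),\delta x_t(1)\rangle$ with an integral against $\mu_t$. Writing $x(s)=x_{\mu_t}(s)$, $p(s)=p_{\mu_t,x,y}(s)$, $\delta x(s)=\delta x_t(s)$, I would differentiate~\cref{backward} and~\cref{flow_differential} in $s$ and compute $\frac{\d}{\d s}\langle p(s),\delta x(s)\rangle$: the two terms involving $\D_x F_{\mu_t(.|s)}(x(s))$, one transposed coming from the backward equation and the other from the variational equation, cancel, leaving $\frac{\d}{\d s}\langle p(s),\delta x(s)\rangle=\langle p(s),\,\int_\Om \D_\om\fmap(\om,x(s))\,v_t(s,\om)\,\d\mu_t(\om|s)\rangle$. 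Integrating over $s\in[0,1]$ and using $\delta x(0)=0$ (since $x_{\mu_t}(0)=x$ is independent of $t$) together with $p(1)=\nabla_x\ell(x(1),y)$ gives
\[
  \big\langle \nabla_x\ell(x_{\mu_t}(1),y),\,\delta x_t(1)\big\rangle
  =\int_0^1\int_\Om \big\langle p_{\mu_t,x,y}(s),\,\D_\om\fmap(\om,x_{\mu_t}(s))\,v_t(s,\om)\big\rangle\,\d\mu_t(\om|s)\,\d s .
\]
Here one should check that $s\mapsto\langle p(s),\delta x(s)\rangle$ is absolutely continuous so that the fundamental theorem of calculus applies, which follows from integrability of $s\mapsto\D_x F_{\mu_t(.|s)}(x(s))$ (established in the proof of~\cref{prop:backward_wellposed}) and $v_t\in L^2(\mu_t)$.

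Finally I would take the expectation over $(x,y)$ and swap it with the $(s,\om)$-integral by Fubini, which is legitimate because $\D_\om\fmap$ has at most linear growth in $\om$ (\cref{fmap_assumption3}), $p_{\mu_t,x,y}$ is bounded uniformly over the compact support of $\Dd$ via the representation~\cref{adjoint_expr} together with~\cref{flow_growth}, and $v_t\in L^2(\mu_t)$; thus the integrand is dominated by $C(1+\|\om\|)\,\|v_t(s,\om)\|\in L^1(\mu_t)$. Rewriting $\d\mu_t(\om|s)\,\d s=\d\mu_t(s,\om)$ and moving the transpose onto $\D_\om\fmap$ then yields the stated formula. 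The main obstacle is really the bookkeeping in the two interchange-of-limit steps, differentiating under $\EE_{x,y}$ and applying Fubini, both of which hinge on the uniform-in-$x$ control from~\cref{lem:flow_lipschitz} and the growth assumptions on $\fmap$; the adjoint computation itself is a short, standard integration by parts once~\cref{flow_differential} is available.
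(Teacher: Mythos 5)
Your proposal is correct and follows essentially the same route as the paper: absolute continuity of $t\mapsto L(\mu_t)$ from~\cref{cor:loss_lipschitz}, differentiation under $\EE_{x,y}$ at every $t$ in the full-measure set $\Lambda$ provided by~\cref{cor:flow_differential}, and then identification of $\langle \nabla_x\ell(x_{\mu_t}(1),y),\delta x_t(1)\rangle$ with an integral against $\mu_t$ followed by an interchange of integrals. The only cosmetic difference is in that last identification: the paper substitutes the explicit resolvent formulas $\delta x_t(1)=\int \Phi_t(1)\Phi_t(s)^{-1}\D_\om\fmap(\om,x_t(s))v_t(s,\om)\,\d\mu_t(s,\om)$ and $p_t(s)=\Phi_t(s)^{-\top}\Phi_t(1)^\top\nabla_x\ell(x_t(1),y)$ and inverts the integration order, whereas you obtain the same identity by integrating $\frac{\d}{\d s}\langle p(s),\delta x_t(s)\rangle$ over $[0,1]$; the two computations are equivalent, both resting on~\cref{prop:backward_wellposed} and~\cref{flow_differential}.
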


\begin{proof}
    First, the fact that $t \mapsto L(\mu_t)$ is absolutely continuous follows from the fact that $\mu \mapsto L(\mu)$ is locally Lipschitz, as shown in~\cref{cor:loss_lipschitz}. It remains to show the formula for its derivative.

    For $t \in I$ and $(x,y) \in \RR^d \times \RR^{d'}$ use the shortcut notations $x_t \eqdef x_{\mu_t}$, $p_t \eqdef p_{\mu_t, x, y}$ and $\Phi_t \eqdef \Phi_{\mu_t, x}$. By the proof of~\cref{cor:flow_differential} we know that there exists a subset $\Lambda \subset I$ of full Lebesgue measure such that for every $t \in \Lambda$, the map $t \mapsto x_t$ is differentiable at $t$ for every $x \in \RR^d$. By Lebesgue theorem, the map $t \mapsto L(\mu_t)$ is differentiable at every $t \in \Lambda$ with:
    \begin{align*}
        \frac{\d}{\d t} L(\mu_t) = \EE_{x,y} \langle \nabla_x \ell (x_t(1),y), \delta x_t(1) \rangle,
    \end{align*}
    where, at fixed $x \in \RR^d$, $\delta x_t$ verifies~\cref{flow_differential} and is given by
    $$
        \delta x_t(1) = \int_{[0,1] \times \Om} \Phi_t(1) \Phi_t(s)^{-1} \D_\om \fmap(\om, x_t(s)) v_t(s, \om) \d \mu_t(s,\om). 
    $$
    Also, the adjoint variable $p_t$ is given by $p_t(s) =  \Phi_t(s)^{- \top} \Phi_t(1)^\top \nabla_x \ell(x_t(1), y)$. Hence by changing the order of integration, we see that for $t \in \Lambda$:
    \begin{align*}
         \frac{\d }{\d t} L(\mu_t) = \EE_{x,y} \langle \nabla_x \ell(x_t(1), y), \delta x_t(1) \rangle = \int_{[0, 1] \times \Om} \langle \EE_{x,y} \D_\om \fmap(\om, x_t(s))^\top p_t(s), v_t(s, \om) \rangle \d \mu_t(s,\om).
    \end{align*}
\end{proof}

Thanks to~\cref{cor:flow_differential} (which can be seen as a chain-rule formula) one can show that the gradient norm $\| \nabla L[\mu] \|_{L^2(\mu)}$ gives an upper gradient for the risk $L$ in the sense of~\cref{def:upper_gradient}. Moreover the following~\cref{prop:upper_gradient} shows it corresponds to the notion of \emph{local slope} (\cite[Def.1.2.4]{ambrosio2008gradient}). The result relies on the following lemma.

\begin{lem}[Continuity of the adjoint variable $p$] \label{lem:adjoint_continuous}
    Assume $\fmap$ satisfies~\cref{fmap_assumption1,fmap_assumption2,fmap_assumption3}.
    Then, for fixed $(x,y) \in \RR^{d+d'}$, the map $\mu \mapsto p_{\mu, x, y} \in \Cc([0,1] \times \RR^d)$ is $d$-continuous on $\Pp_2^\Leb([0,1] \times \Om)$.
\end{lem}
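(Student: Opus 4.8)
The plan is to work from the explicit representation of the adjoint variable in \cref{adjoint_expr}, which reduces the statement to the $d$-continuity of a few simpler maps. Writing $p_{\mu,x,y}(s) = \Phi_{\mu,x}(s)^{-\top}\Phi_{\mu,x}(1)^\top \nabla_x\ell(x_\mu(1),y)$, it suffices to show that the maps $\mu \mapsto x_\mu \in \Cc([0,1],\RR^d)$, $\mu \mapsto \Phi_{\mu,x} \in \Cc([0,1],\RR^{d\times d})$ and $\mu \mapsto \Phi_{\mu,x}^{-1} \in \Cc([0,1],\RR^{d\times d})$ are all $d$-continuous, since the remaining factor $\nabla_x\ell(x_\mu(1),y)$ is then continuous because $\ell$ is smooth. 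For the flow map I would first record that $\mu \mapsto \Ee_2(\mu)^{1/2}$ is $1$-Lipschitz for $d$: indeed $\Ee_2(\mu(.|s))^{1/2} = \Ww_2(\mu(.|s),\delta_0)$, so the fiberwise triangle inequality for $\Ww_2$ combined with Minkowski's inequality in $L^2([0,1])$ gives $|\Ee_2(\mu)^{1/2}-\Ee_2(\mu')^{1/2}| \le d(\mu,\mu')$. In particular $\Ee_2$ stays bounded along $d$-convergent sequences, so \cref{lem:flow_lipschitz} applies with a uniform constant along such a sequence and gives $\sup_{s}\|x_{\mu_n}(s)-x_\mu(s)\| \le C\, d(\mu_n,\mu) \to 0$.

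The core of the argument is the continuity of $\mu \mapsto \Phi_{\mu,x}$. Since $\Phi_{\mu,x}$ solves the linear ODE \cref{Phi} and $\Phi_{\mu,x}^{-1}$ solves the companion linear ODE $\frac{\d}{\d s}\Phi_{\mu,x}^{-1} = -\Phi_{\mu,x}^{-1}\,\D_x F_{\mu(.|s)}(x_\mu(s))$, a standard Grönwall estimate for linear ODEs with $L^1$ coefficients shows that both $\Phi_{\mu,x}$ and $\Phi_{\mu,x}^{-1}$ depend continuously, for the sup norm on $[0,1]$, on the coefficient $s \mapsto A_\mu(s) \eqdef \D_x F_{\mu(.|s)}(x_\mu(s))$, provided this coefficient is a $d$-continuous function of $\mu$ valued in $L^1([0,1],\RR^{d\times d})$. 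So the task reduces to: given $\mu_n \to \mu$ for $d$, show $A_{\mu_n} \to A_\mu$ in $L^1([0,1])$. Because $d(\mu_n,\mu) \to 0$ only yields $\Ww_2(\mu_n(.|s),\mu(.|s)) \to 0$ in $L^2([0,1])$, I would argue via subsequences: it is enough to check that every subsequence of $(\mu_n)$ admits a further subsequence along which $A_{\mu_n} \to A_\mu$ in $L^1([0,1])$. Along such a subsequence one may assume $\Ww_2(\mu_n(.|s),\mu(.|s)) \to 0$ for a.e.\ $s$; for those $s$, $\mu_n(.|s)\to\mu(.|s)$ narrowly with converging second moments, and combining this with $x_{\mu_n}(s)\to x_\mu(s)$ and the fact that, by \cref{fmap_assumption3}, $(\om,x)\mapsto\D_x\fmap(\om,x)$ is continuous with at most quadratic growth in $\om$, a routine splitting of the integral into a compact part (uniform convergence on compacts) and a tail part (uniform integrability of the second moments) gives $A_{\mu_n}(s) \to A_\mu(s)$ for a.e.\ $s$. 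To upgrade this pointwise-in-$s$ convergence to $L^1([0,1])$-convergence, I would use the bound $\|A_{\mu_n}(s)\| \le C(1+\Ee_2(\mu_n(.|s)))$ together with the observation that $s\mapsto\Ee_2(\mu_n(.|s))$ converges a.e.\ (again since $\Ee_2(\mu_n(.|s))^{1/2}=\Ww_2(\mu_n(.|s),\delta_0)$) and in integral (as $\Ee_2(\mu_n)\to\Ee_2(\mu)$ by the Lipschitz bound above): Scheffé's lemma then gives $L^1([0,1])$-convergence of $s\mapsto\Ee_2(\mu_n(.|s))$, hence equi-integrability of the family $(A_{\mu_n})$, and Vitali's convergence theorem yields $A_{\mu_n}\to A_\mu$ in $L^1([0,1])$.

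Assembling the pieces, $\Phi_{\mu_n,x}\to\Phi_{\mu,x}$ and $\Phi_{\mu_n,x}^{-1}\to\Phi_{\mu,x}^{-1}$ uniformly on $[0,1]$, and together with $\nabla_x\ell(x_{\mu_n}(1),y)\to\nabla_x\ell(x_\mu(1),y)$ this gives $\sup_{s\in[0,1]}\|p_{\mu_n,x,y}(s)-p_{\mu,x,y}(s)\|\to 0$, which is the claimed $d$-continuity. I expect the main obstacle to be precisely the $L^1([0,1])$-continuity of the ODE coefficient $A_\mu$: one has to control the simultaneous perturbation of the conditional measure $\mu(.|s)$ and of the point $x_\mu(s)$ at which $\D_x\fmap$ is evaluated, and — since $d$-convergence only controls the fiberwise Wasserstein distances in $L^2$, not pointwise — one must pass through subsequences and then promote a.e.-in-$s$ convergence to $L^1$-convergence using the Scheffé/Vitali equi-integrability argument on $\Ee_2(\mu_n(.|s))$. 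Everything downstream of this coefficient (the linear ODEs for $\Phi$ and $\Phi^{-1}$, and the final product) is then soft.
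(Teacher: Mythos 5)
Your proof is correct, but it takes a genuinely different route from the paper's. The paper argues by compactness: $d$-convergence gives uniformly integrable second moments of the $\mu_n$ on $[0,1]\times\Om$, which yields equicontinuity of the adjoint trajectories $(p_n)$ directly from~\cref{backward}; Arzel\`a--Ascoli then produces a limit $\Bar{p}$, which is identified with $p$ by passing to the limit in the backward integral equation (using $\Ww_2$-convergence against the quadratically growing integrand) and invoking uniqueness of solutions of~\cref{backward}. You instead exploit the resolvent representation~\cref{adjoint_expr} from~\cref{prop:backward_wellposed} and reduce everything to $L^1([0,1])$-stability of the coefficient $A_\mu(s)=\D_x F_{\mu(.|s)}(x_\mu(s))$ of the linear ODE~\cref{Phi}, which you obtain by a subsequence extraction (to get a.e.-in-$s$ fiberwise $\Ww_2$-convergence), the uniform flow bounds of~\cref{flow_growth} and~\cref{lem:flow_lipschitz}, the quadratic-growth bound of~\cref{fmap_assumption3}, and a Scheff\'e/Vitali upgrade from a.e. to $L^1$ convergence via the $1$-Lipschitzness of $\Ee_2(\cdot)^{1/2}$ for $d$; Gr\"onwall then gives uniform convergence of $\Phi_{\mu_n,x}$ and $\Phi_{\mu_n,x}^{-1}$. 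Your argument is more quantitative and proves the stronger intermediate fact that the full fundamental matrix depends continuously on $\mu$, at the price of the measure-theoretic bookkeeping (subsequences, Scheff\'e, Vitali), which you handle correctly; the paper's soft compactness-plus-uniqueness argument avoids all a.e.-in-$s$ considerations and is somewhat shorter, but yields only the continuity statement itself. Both routes rest on~\cref{lem:flow_lipschitz} for the forward flow and on the standing smoothness of $\ell$ for the terminal condition.
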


\begin{proof}
    Let $\mu \in \Pp_2^\Leb([0,1] \times \Om)$ and consider a sequence $(\mu_n)_{n \geq 0}$ in $\Pp_2^\Leb([0,1] \times \Om)$ such that $d(\mu_n, \mu) \to 0$.
    Fix a pair $(x,y) \in \RR^{d+d'}$ and use the shortcuts $x_n \eqdef x_{\mu_n}$ (resp. $x \eqdef x_\mu$) and $p_n \eqdef p_{\mu_n, x, y}$ (resp. $p \eqdef p_{\mu, x, y}$). By~\cref{lem:flow_lipschitz} we already have $x_n \to x$ in $\Cc([0,1])$ and we show now that $p_n \to p$ in $\Cc([0,1])$ using Ascoli's theorem.

    Remark that by the assumptions on $\fmap$, all the trajectories $x$, $x_n$, $p$ and $p_n$ stay in a bounded set $B(0,R)$ for some $R \geq 0$. Also, as $\mu_n \to \mu$, we have that the sequence $(\mu_n)$ has uniformly integrable second moment and for every $\eps > 0$ we can find a $k \geq 0$ s.t.
    $$
    \int_{\| \om \| \geq k} (1 + \| \om \|^2) \d \mu, \, \sup_{n \geq 0} \int_{\| \om \| \geq k} (1 + \| \om \|^2) \d \mu_n \leq \eps.
    $$
    Then for $n \geq 0$ and $s_1 \leq s_2 \in [0,1]$ we have by~\cref{backward} and the assumptions on $\fmap$:
    \begin{align*}
        \| p_n(s_2) - p_n(s_1) \| & \leq \int_{s_1}^{s_2} \int_\Om \| \D_x \fmap(\om, x_n(r)) \| \| p_n(r) \| \d \mu_n(r,\om) \\
        & \leq C \int_{s_1}^{s_2} \int_\Om (1 + \| \om \|^2) \d \mu_n(r,\om) \\
        & \leq C ( \eps + (1+k^2) |s_2 - s_1|),
    \end{align*}
    where $C = C(R)$. Hence the sequence $(p_n)_{n \geq 0}$ is equicontinuous and, up to a subsequence, we have $p_n \to \Bar{p} \in \Cc([0,1])$. Let us then show $\Bar{p} = p$. Indeed using the initial condition we have for $n \geq 0$ and $s \in [0,1]$:
    \begin{align*}
        p_n(s) = \nabla_x \ell(x_n(1),y) + \int_s^1 \int_\Om \D_x \fmap(\om, x_n(r))^\top p_n(r) \d \mu_n(r,\om).
    \end{align*}
    First we have $\nabla_x \ell(x_n(1), y) \xrightarrow[n \to \infty]{} \nabla_x \ell(x(1),y)$. Also, note that by the assumptions on $\fmap$ we have $\D_x \fmap(\om, x_n(r))^\top p_n(r) \leq C(1+\| \om \|^2)$ and $\D_x \fmap(\om, x_n(r))^\top p_n(r) \to \D_x \fmap(\om, x(r))^\top \Bar{p}(r)$ locally uniformly over $[0,1] \times \Om$. Hence by the properties of $\Ww_2$-convergence, we can take the limit in the above equation to obtain:
    \begin{align*}
        \Bar{p}(s) = \nabla_x \ell(x(1),y) + \int_s^1 \int_\Om \D_x \fmap(\om, x(r))^\top \Bar{p}(r) \d \mu(r,\om),
    \end{align*}
    \emph{i.e.} $\Bar{p} = p$ by uniqueness of the solutions to~\cref{backward}.
\end{proof}

\begin{lem}[Continuity of $\| \nabla L(\mu) \|_{L^2(\mu)}$ ] \label{lem:gradient_continuous}
    Assume $\fmap$ satisfies~\cref{fmap_assumption1,fmap_assumption2,fmap_assumption3}. Then the map $\mu \mapsto \| \nabla L[\mu] \|_{L^2(\mu)}$ is $d$-continuous on $\Pp_2^\Leb([0,1] \times \Om)$.
\end{lem}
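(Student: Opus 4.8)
Since $\bigl(\Pp_2^\Leb([0,1]\times\Om),d\bigr)$ is a metric space it suffices to prove sequential continuity: fix $\mu$ and a sequence $(\mu_n)$ with $d(\mu_n,\mu)\to 0$, and show $\|\nabla L[\mu_n]\|_{L^2(\mu_n)}\to\|\nabla L[\mu]\|_{L^2(\mu)}$. Writing $g_\mu(s,\om)\eqdef\EE_{x,y}\D_\om\fmap(\om,x_\mu(s))^\top p_{\mu,x,y}(s)$, one has $\|\nabla L[\mu]\|_{L^2(\mu)}^2=\int_{[0,1]\times\Om}\|g_\mu(s,\om)\|^2\d\mu(s,\om)$, so by continuity of $t\mapsto\sqrt t$ it is enough to prove $\int\|g_{\mu_n}\|^2\d\mu_n\to\int\|g_\mu\|^2\d\mu$. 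I would obtain this from three facts: (i) $\mu_n\to\mu$ narrowly with $\{\|\om\|^2\}$ uniformly integrable along $(\mu_n)$; (ii) a bound $\|g_{\mu_n}(s,\om)\|^2\le C(1+\|\om\|^2)$ with $C$ independent of $n$, and the continuity of each $g_{\mu_n}$ and of $g_\mu$; (iii) uniform convergence $g_{\mu_n}\to g_\mu$ on every slab $[0,1]\times\bar B(0,\rho)$.

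For (i): by~\cref{rem:topology}, $\Ww_2(\mu_n,\mu)\le d(\mu_n,\mu)\to 0$, and $\Ww_2$-convergence gives narrow convergence together with $\int(|s|^2+\|\om\|^2)\d\mu_n\to\int(|s|^2+\|\om\|^2)\d\mu<\infty$; since $s\in[0,1]$ this yields $\sup_n\Ee_2(\mu_n)<\infty$ and uniform integrability of $\|\om\|^2$. For (ii): $\Dd$ being compactly supported, $(x,y)$ ranges over a compact $K$; by~\cref{flow_growth} and $\sup_n\Ee_2(\mu_n)<\infty$ the trajectories $x_{\mu_n}(s)$ all lie in a fixed ball $B(0,R)$, while a backward Grönwall estimate on~\cref{backward} together with the bound on $\nabla_x\ell$ over $B(0,R)\times\pi^y(K)$ gives $\sup_{n,\,s\in[0,1],\,(x,y)\in K}\|p_{\mu_n,x,y}(s)\|\le M<\infty$; then~\cref{fmap_assumption3}(2) gives $\|g_{\mu_n}(s,\om)\|\le C(R)(1+\|\om\|)M$, whence (ii), and continuity of $g_{\mu_n}$ (resp.\ $g_\mu$) follows from continuity of $\D_\om\fmap$, of $s\mapsto x_{\mu_n}(s)$ and of $s\mapsto p_{\mu_n,x,y}(s)$, plus dominated convergence in $(x,y)$. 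For (iii): write the integrand difference as $[\D_\om\fmap(\om,x_{\mu_n}(s))-\D_\om\fmap(\om,x_\mu(s))]^\top p_{\mu_n,x,y}(s)+\D_\om\fmap(\om,x_\mu(s))^\top[p_{\mu_n,x,y}(s)-p_{\mu,x,y}(s)]$; the first term tends to $0$ uniformly on $[0,1]\times\bar B(0,\rho)\times K$ because $\D_\om\fmap$ is uniformly continuous on compacts and $\sup_{s,\,x\in\pi^x(K)}\|x_{\mu_n}(s)-x_\mu(s)\|\le Cd(\mu_n,\mu)\to 0$ by~\cref{lem:flow_lipschitz}; the second term requires the convergence $p_{\mu_n,x,y}\to p_{\mu,x,y}$ to be uniform over $(x,y)\in K$, which I would get by revisiting the proof of~\cref{lem:adjoint_continuous} and noting all the constants there ($R$, $C(R)$, the truncation level $k$, the $\eps$) can be chosen uniformly over $(x,y)\in K$. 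Integrating the resulting uniform convergence against $\Dd$ gives (iii).

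To conclude I would split $\int\|g_{\mu_n}\|^2\d\mu_n-\int\|g_\mu\|^2\d\mu=\int(\|g_{\mu_n}\|^2-\|g_\mu\|^2)\d\mu_n+\bigl(\int\|g_\mu\|^2\d\mu_n-\int\|g_\mu\|^2\d\mu\bigr)$. In the first term, cut the domain at $\|\om\|=k$: on $\{\|\om\|\le k\}$ the integrand tends to $0$ uniformly by (ii)+(iii) while $\mu_n$ is a probability measure; on $\{\|\om\|>k\}$ it is bounded by $2C(1+\|\om\|^2)$, whose $\mu_n$-mass is $<\eps$ uniformly in $n$ for $k$ large by (i); hence the first term $\to 0$. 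The second term $\to 0$ because $\|g_\mu\|^2$ is continuous with at most quadratic growth and $\mu_n\to\mu$ in $\Ww_2$. The main obstacle I anticipate is precisely this tail control: it is what forces the use of $\Ww_2$-convergence (hence uniform integrability of second moments) rather than mere narrow convergence, in tandem with the \emph{linear} growth of $\D_\om\fmap$ from~\cref{fmap_assumption3}(2); a secondary technical point is upgrading~\cref{lem:adjoint_continuous} from a pointwise-in-$(x,y)$ statement to one uniform over the compact support of $\Dd$, which is done by inspecting, rather than merely invoking, its proof.
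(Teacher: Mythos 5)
Your proposal is correct and follows essentially the same route as the paper: the paper likewise shows $x_{\mu_n}\to x_\mu$ and $p_{\mu_n,x,y}\to p_{\mu,x,y}$ in $\Cc([0,1],\RR^d)$ (via \cref{lem:flow_lipschitz,lem:adjoint_continuous}), deduces locally uniform convergence of the integrand $g_{\mu_n}\to g_\mu$ with a uniform linear-growth bound, and then passes to the limit using that $d$-convergence implies $\Ww_2$-convergence. The only difference is that where you carry out the truncation/uniform-integrability argument by hand (and note the needed uniformity in $(x,y)$), the paper simply invokes the standard property of $\Ww_2$-convergence for continuous integrands of quadratic growth.
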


\begin{proof}
    Let $\mu \in \Pp_2^\Leb([0,1] \times \Om)$ and consider a sequence $(\mu_n)_{n \geq 0}$ in $\Pp_2^\Leb([0,1] \times \Om)$ such that $d(\mu_n, \mu) \to 0$.
    For an input $x \in \RR^d$, denote by $x_n$ (resp. $x$) the flow associated to $\mu_n$ (resp. $\mu$) and starting from $x$.
    Similarly introduce the adjoint variables $(p_n)_{n \geq 0}$ and $p$.
    Then by~\cref{lem:flow_lipschitz} and~\cref{lem:adjoint_continuous} we have that $x_n \to x$ and $p_n \to p$ in $\Cc([0,1], \RR^d)$. As a consequence the sequence of continuous maps 
    $$
        f_n : (r, \om) \mapsto \EE_{x,y} \D_\om \fmap(\om, x_n(r))^\top p_n(r)
    $$ 
    converges locally uniformly to the map $f : (r,\om) \mapsto \EE_{x,y} \D_\om \fmap(\om, x(r))^\top p(r)$ and is uniformly bounded by a function of linear growth. As $d$-convergence implies $\Ww_2$-convergence and by the properties of $\Ww_2$-convergence (\cite[Thm.6.9]{villani2009optimal}) this implies:
    \begin{align*}
        \| \nabla L[\mu_n] \|^2_{L^2(\mu_n)} = \int_{[0,1] \times \Om} \| f_n \|^2 \d \mu_n \xrightarrow[n \to \infty]{} \int_{[0,1] \times \Om} \| f \|^2 \d \mu = \| \nabla L[\mu] \|^2_{L^2(\mu)}. 
    \end{align*}
\end{proof}

\begin{prop}[$\| \nabla L(\mu) \|_{L^2(\mu)}$ is an upper-gradient] \label{prop:upper_gradient}
    Assume $\fmap$ satisfies~\cref{fmap_assumption1,fmap_assumption2,fmap_assumption3}. Let $\mu \in \Pp_2^\Leb([0,1] \times \Om)$, then $\| \nabla L[\mu] \|_{L^2}(\mu)$ is the \emph{local slope} of the risk $L$ at $\mu$, that is:
    \begin{align} \label{local_slope}
        \| \nabla L[\mu] \|_{L^2(\mu)} = \limsup_{\nu \to \mu} \frac{\left( L(\mu) - L(\nu) \right)^+}{d(\mu, \nu)}.
    \end{align}
    Moreover, it is an upper-gradient in the sense of~\cref{def:upper_gradient}.
\end{prop}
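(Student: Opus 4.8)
The plan is to prove the two assertions in sequence: first the identification of $\|\nabla L[\mu]\|_{L^2(\mu)}$ with the local slope \cref{local_slope}, and then deduce the upper-gradient property, using the chain rule of \cref{cor:loss_differential}, the representation \cref{prop:flow_differential}, and the $d$-continuity of $\mu \mapsto \|\nabla L[\mu]\|_{L^2(\mu)}$ from \cref{lem:gradient_continuous} as the main inputs. As a preliminary, I would record that $\nabla L[\mu]$ really lies in $L^2(\mu)$: on the compact support of $\Dd$ the flows $x_\mu(s)$ and adjoint variables $p_{\mu,x,y}(s)$ stay in a fixed ball (\cref{prop:flow_wellposed,prop:backward_wellposed}), while $\D_\om \fmap(\om,x)$ grows at most linearly in $\om$ by \cref{fmap_assumption3}; hence $\|\nabla L[\mu](s,\om)\| \le C(1+\|\om\|)$ and integrability against $\mu \in \Pp_2^\Leb([0,1]\times\Om)$ follows.

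For the bound $\limsup_{\nu\to\mu}(L(\mu)-L(\nu))^+/d(\mu,\nu) \le \|\nabla L[\mu]\|_{L^2(\mu)}$, the idea is to join $\mu$ to a nearby $\nu$ by a $d$-geodesic built fibrewise. Given $\gamma \in \Pi^\Leb_o(\mu,\nu)$, I set $\mu_t \eqdef (\pi^1, (1-t)\pi^2 + t\pi^3)_\# \gamma$ for $t \in [0,1]$; since $\gamma(.|s) \in \Gamma_o(\mu(.|s),\nu(.|s))$ for $\d s$-a.e.\ $s$ by \cref{prop:d_characterization}, each $t \mapsto \mu_t(.|s)$ is a constant-speed $\Ww_2$-geodesic, so by Fubini $d(\mu_t,\mu_{t'}) = |t-t'|\,d(\mu,\nu)$, and this Lipschitz curve is in particular absolutely continuous with a tangent velocity field as in \cref{prop:characterization_absolute_continuity}, so \cref{cor:loss_differential} applies. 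Integrating the chain rule, Cauchy--Schwarz in $L^2(\mu_t)$ and $\|v_t\|_{L^2(\mu_t)} = |\tfrac{\d}{\d t}\mu_t| = d(\mu,\nu)$ give $(L(\mu)-L(\nu))^+ \le \int_0^1 \|\nabla L[\mu_t]\|_{L^2(\mu_t)}\,\d t \cdot d(\mu,\nu)$; dividing by $d(\mu,\nu)$ and letting $\nu \to \mu$, so that $\sup_{t} d(\mu_t,\mu) \le d(\mu,\nu) \to 0$, the integrand converges uniformly to $\|\nabla L[\mu]\|_{L^2(\mu)}$ by \cref{lem:gradient_continuous}, which yields the inequality.

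For the reverse inequality, take $v \eqdef -\nabla L[\mu] \in L^2(\mu)$ and set $\mu_t \eqdef (\Id + t(0,v))_\# \mu$, for which $d(\mu,\mu_t) \le t\|\nabla L[\mu]\|_{L^2(\mu)}$. By \cref{prop:flow_differential} the flow $x_{\mu_t}$ is differentiable at $t=0$, hence so is $t \mapsto L(\mu_t) = \EE_{x,y}\ell(x_{\mu_t}(1),y)$ (differentiating under the expectation is legitimate on the compact support of $\Dd$), and unwinding the adjoint representation \cref{adjoint_expr} exactly as in the proof of \cref{cor:loss_differential} gives $\tfrac{\d}{\d t}\big|_{t=0} L(\mu_t) = \langle \nabla L[\mu], v \rangle_{L^2(\mu)} = -\|\nabla L[\mu]\|_{L^2(\mu)}^2$. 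Thus $L(\mu)-L(\mu_t) = t\|\nabla L[\mu]\|_{L^2(\mu)}^2 + o(t)$ while $d(\mu,\mu_t) \le t\|\nabla L[\mu]\|_{L^2(\mu)}$, so $(L(\mu)-L(\mu_t))^+/d(\mu,\mu_t) \to \|\nabla L[\mu]\|_{L^2(\mu)}$ as $t \to 0^+$ (the case $\nabla L[\mu] = 0$ being trivial), giving $\limsup_{\nu\to\mu}(L(\mu)-L(\nu))^+/d(\mu,\nu) \ge \|\nabla L[\mu]\|_{L^2(\mu)}$ and completing \cref{local_slope}.

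Finally, the upper-gradient property is essentially contained in the first estimate: for any absolutely continuous curve $(\mu_t)_{t\in I}$ with tangent velocity field $v_t$, \cref{cor:loss_differential} shows $t \mapsto L(\mu_t)$ is absolutely continuous with $\tfrac{\d}{\d t}L(\mu_t) = \langle \nabla L[\mu_t], v_t\rangle_{L^2(\mu_t)}$, and Cauchy--Schwarz together with $\|v_t\|_{L^2(\mu_t)} = |\tfrac{\d}{\d t}\mu_t|$ gives $|L(\mu_{t_1}) - L(\mu_{t_2})| \le \int_{t_1}^{t_2} \|\nabla L[\mu_t]\|_{L^2(\mu_t)}\,|\tfrac{\d}{\d t}\mu_t|\,\d t$, while measurability of $t \mapsto \|\nabla L[\mu_t]\|_{L^2(\mu_t)}$ follows from continuity of $t\mapsto\mu_t$ and of $\mu \mapsto \|\nabla L[\mu]\|_{L^2(\mu)}$ (\cref{lem:gradient_continuous}). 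The delicate point in the whole argument is the ``$\le$'' half of \cref{local_slope}: one needs a genuine $d$-Lipschitz (here geodesic) curve connecting $\mu$ to its nearby competitors so that \cref{cor:loss_differential} can be invoked, which is why the fibrewise McCann interpolation — legitimate precisely because of the disintegrated characterization of $d$ in \cref{prop:d_characterization} — is the right object rather than, say, a naive linear interpolation of measures.
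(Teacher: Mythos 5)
Your proposal is correct and follows essentially the same route as the paper's proof: the upper-gradient property via \cref{cor:loss_differential} and Cauchy--Schwarz along absolutely continuous curves, the ``$\leq$'' half of \cref{local_slope} by integrating the chain rule along a constant-speed geodesic and invoking the $d$-continuity of the gradient norm (\cref{lem:gradient_continuous}), and the ``$\geq$'' half by perturbing along $(\Id + t(0,v))_\# \mu$ with \cref{prop:flow_differential}. The only (harmless) differences are that you make the geodesic explicit through the fibrewise McCann interpolation of a coupling in $\Pi^\Leb_o(\mu,\nu)$ where the paper appeals to the analogy with Wasserstein geodesics, and that your choice $v=-\nabla L[\mu]$ matches the sign convention of the local slope more directly than the paper's computation.
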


\begin{proof}
    The last part of the result follows from~\cref{prop:characterization_absolute_continuity}, since if $(\mu_t)_{t \in I}$ is an absolutely continuous curve then it satisfies the continuity equation with a vector field $v$ such that $\| v_t \|_{L^2(\mu_t)} \leq \left| \frac{\d}{\d t} \mu_t \right|$ for a.e. $t \in I$. Hence by~\cref{cor:loss_differential} and Cauchy-Schwarz we have:
\begin{align*}
    \forall t_1 \leq t_2 \in I, \quad \left| L(\mu_{t_1}) - L(\mu_{t_2}) \right| \leq \int_{t_1}^{t_2} \| \nabla L[\mu_t] \|_{L^2(\mu_t)} \left| \frac{\d}{\d t} \mu_t \right| \d t.
\end{align*}

    Let us then show~\cref{local_slope}.
    Consider some parameter $\mu \in \Pp_2^\Leb([0,1] \times \Om)$ and denote by $| \nabla L |(\mu)$ the r.h.s. of~\cref{local_slope}.
    Then for $\eps > 0$, by continuity of $\| \nabla L[\mu] \|_{L^2(\mu)}$ (\cref{lem:gradient_continuous})
    and by definition of $| \nabla L |(\mu)$ 
    one can find a $\nu \in \Pp_2^\Leb([0,1] \times \Om)$ s.t.:
    \begin{align*}
        \frac{\left( L(\mu) - L(\nu) \right)^+}{d(\mu,\nu)}  \geq \left| \nabla L \right| (\mu) - \eps, \quad \text{and} \quad \left| \| \nabla L[\mu] \|_{L^2(\mu)} - \| \nabla L[\nu'] \|_{L^2(\nu')} \right| \leq \eps, \, \text{if $d(\mu,\nu') \leq d(\mu,\nu)$}.
    \end{align*}
    Consider $(\mu_t)_{t \in [0,1]}$ a constant speed geodesics with endpoints $\mu_0 = \mu$ and $\mu_1 = \nu$ (such a geodesic can easily be constructed by similarity with classical Wasserstein geodesics, see~\cite[Thm.7.2.2]{ambrosio2008gradient}). Then by~\cref{prop:characterization_absolute_continuity} the tangent velocity field $v$ of the curve $(\mu_t)$ satisfies for $\d t$-a.e. $t \in [0,1]$, $\| v_t \|_{L^2(\mu_t)} \leq \left| \frac{\d}{\d t} \mu_t \right| = d(\mu,\nu)$ and using~\cref{cor:loss_differential}:
    \begin{align*}
        L(\nu) & = L(\mu) + \int_0^1 \langle \nabla L[\mu_t], v_t \rangle_{L^2(\mu_t)} \d t \\
       & \leq L(\mu) + d(\mu,\nu) \int_0^1 \| \nabla L[\mu_t] \|_{L^2(\mu_t)} \d t \\
       & \leq L(\mu) + d(\mu,\nu) \| \nabla L[\mu] \|_{L^2(\mu)} + \eps.
    \end{align*}
    Similarly we have $L(\mu) \leq L(\nu) + d(\mu,\nu) \| \nabla L[\mu] \|_{L^2(\mu)} + \eps$ and hence $\left| \nabla L \right|(\mu) \leq \| \nabla L[\mu] \|_{L^2(\mu)} + \eps$.

    For the converse inequality consider for $t \in \RR$ the parameter $\mu_t = (\Id + t(0, \nabla L[\mu]))_\# \mu$. Then, by~\cref{prop:flow_differential} with $v = \nabla L[\mu]$, the map $t \mapsto L(\mu_t)$ is differentiable at $t = 0$ and applying the same calculations as in~\cref{cor:loss_differential}:
    \begin{align*}
        \frac{\d}{\d t} L[\mu_t] \Bigr|_{t=0} = \langle \nabla L[\mu], v \rangle_{L^2(\mu)} = \| \nabla L[\mu] \|^2_{L^2(\mu)}.
    \end{align*}
    Hence observing that $d(\mu_t,\mu) \leq t \| v \|_{L^2(\mu_t)}$ we have $\liminf_{t \to 0^+} \frac{(L(\mu_t)-L(\mu))^+}{d(\mu_t,\mu)} \geq \| \nabla L[\mu] \|_{L^2(\mu)}$.
\end{proof}

As a consequence of the previous result, we will from now on only consider as upper gradient of $L$ the one given for every $\mu \in \Pp_2^\Leb([0,1] \times \Om)$ by:
\begin{align} \label{upper_gradient}
    | \nabla L |(\mu) \eqdef \| \nabla L[\mu] \|_{L^2(\mu)} = \left( \int_{[0,1] \times \Om} \| \EE_{x,y} \D_\om \fmap (\om, x(s))^\top p(s) \|^2 \d \mu(s,\om) \right)^{1/2}.
\end{align}
Note that the vector field $\nabla L[\mu]$ was used in~\cref{def:gradient_flow} to define the notion of \emph{gradient flow} whereas the upper-gradient $|\nabla L|(\mu)$ is used in the~\cref{def:maximal_slope} of curves of maximal slope. The following theorem is the main result of this section and shows these two notions coincide.

\begin{thm} \label{thm:equivalence_GF_CMS}
    Assume $\fmap$ satisfies~\cref{fmap_assumption1,fmap_assumption2,fmap_assumption3} and $\ell$ is smooth. Let $I \subset \RR$ be an open interval. Then a curve $(\mu_t)_{t \in I}$ is a \emph{gradient flow} in the sense of~\cref{def:gradient_flow} if and only if it is a \emph{curve of maximal slope} for $L$ in the sense of~\cref{def:maximal_slope}.
\end{thm}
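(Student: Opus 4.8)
The plan is to establish the two implications separately. In both directions the link between the metric picture and the differential one is provided by \cref{prop:characterization_absolute_continuity} (a curve is $d$-absolutely continuous if and only if it solves the continuity equation \cref{continuity}, together with the sharp comparisons $\|v_t\|_{L^2(\mu_t)}\leq\left|\frac{\d}{\d t}\mu_t\right|$ for the tangent field and $\left|\frac{\d}{\d t}\mu_t\right|\leq\|w_t\|_{L^2(\mu_t)}$ for any field $w_t$ solving the equation), by the chain rule of \cref{cor:loss_differential} for $t\mapsto L(\mu_t)$ along such curves, and by the identification $|\nabla L|(\mu)=\|\nabla L[\mu]\|_{L^2(\mu)}$ from \cref{prop:upper_gradient}. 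The arithmetic core in both directions is the equality case of Young's inequality $\langle a,b\rangle\geq-\frac{1}{2}(\|a\|^2+\|b\|^2)$ in the Hilbert space $L^2(\mu_t)$.

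\emph{Gradient flow $\Rightarrow$ curve of maximal slope.} Let $(\mu_t)_{t\in I}$ satisfy \cref{def:gradient_flow}. It is locally absolutely continuous by assumption, which is the first condition of \cref{def:maximal_slope}. Since $L$ is locally Lipschitz (\cref{cor:loss_lipschitz}), $t\mapsto L(\mu_t)$ is absolutely continuous, and \cref{cor:loss_differential} applied with $v_t=-\nabla L[\mu_t]$ gives $\frac{\d}{\d t}L(\mu_t)=-\|\nabla L[\mu_t]\|_{L^2(\mu_t)}^2\leq0$ for almost every $t$, so $t\mapsto L(\mu_t)$ is non-increasing. Finally, since $(\mu_t)$ solves \cref{continuity} with velocity field $-\nabla L[\mu_t]$, the converse part of \cref{prop:characterization_absolute_continuity} yields $\left|\frac{\d}{\d t}\mu_t\right|\leq\|\nabla L[\mu_t]\|_{L^2(\mu_t)}=|\nabla L|(\mu_t)$, hence
\begin{align*}
    \frac{\d}{\d t}L(\mu_t)=-\|\nabla L[\mu_t]\|_{L^2(\mu_t)}^2\leq-\frac{1}{2}\left(\left|\frac{\d}{\d t}\mu_t\right|^2+|\nabla L|^2(\mu_t)\right),
\end{align*}
which is the third condition of \cref{def:maximal_slope}.

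\emph{Curve of maximal slope $\Rightarrow$ gradient flow.} Let $(\mu_t)_{t\in I}$ be a curve of maximal slope. Being locally absolutely continuous, it admits by \cref{prop:characterization_absolute_continuity} a (unique) tangent velocity field $v_t$ solving \cref{continuity} with $\|v_t\|_{L^2(\mu_t)}\leq\left|\frac{\d}{\d t}\mu_t\right|$, and \cref{cor:loss_differential} gives $\frac{\d}{\d t}L(\mu_t)=\langle\nabla L[\mu_t],v_t\rangle_{L^2(\mu_t)}$ for almost every $t$. Combining the maximal-slope inequality with \cref{prop:upper_gradient}, with $\|v_t\|_{L^2(\mu_t)}\leq\left|\frac{\d}{\d t}\mu_t\right|$, and with Young's inequality,
\begin{align*}
    \langle\nabla L[\mu_t],v_t\rangle_{L^2(\mu_t)}
    &\leq-\frac{1}{2}\left(\left|\frac{\d}{\d t}\mu_t\right|^2+\|\nabla L[\mu_t]\|_{L^2(\mu_t)}^2\right)\\
    &\leq-\frac{1}{2}\left(\|v_t\|_{L^2(\mu_t)}^2+\|\nabla L[\mu_t]\|_{L^2(\mu_t)}^2\right)\leq\langle\nabla L[\mu_t],v_t\rangle_{L^2(\mu_t)}.
\end{align*}
Hence all the inequalities are equalities, and the equality case of Young's inequality forces $v_t=-\nabla L[\mu_t]$ in $L^2(\mu_t)$ for almost every $t$; thus $(\mu_t)$ solves \cref{continuity} with $v_t=-\nabla L[\mu_t]$ and is a gradient flow in the sense of \cref{def:gradient_flow}.

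\emph{Expected main difficulty.} None of the steps is genuinely deep, but the point that deserves care is the applicability of the chain rule \cref{cor:loss_differential} to the tangent velocity field and the fact that the value of $\frac{\d}{\d t}L(\mu_t)$ does not depend on which velocity field solving the continuity equation is used in its formula — this is precisely the reduction already carried out inside the proof of \cref{cor:flow_differential} via $\div((0,v_t-\tilde v_t)\mu_t)=0$. Once that is granted, the whole equivalence reduces to the Hilbertian Young inequality and its equality case.
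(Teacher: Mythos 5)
Your proposal is correct and follows essentially the same route as the paper's proof: both directions rest on \cref{prop:characterization_absolute_continuity}, the chain rule of \cref{cor:loss_differential}, the identification $|\nabla L|(\mu)=\|\nabla L[\mu]\|_{L^2(\mu)}$ from \cref{prop:upper_gradient}, and the equality case of Young's inequality to force $v_t=-\nabla L[\mu_t]$. The only cosmetic difference is that you make the well-definedness of the chain rule with respect to the choice of velocity field explicit, a point the paper absorbs into the proof of \cref{cor:flow_differential}.
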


\begin{proof}
    \proofpart{Gradient flows are curves of maximal slope.}
    Let $(\mu_t)_{t \in I}$ be a gradient flow for $L$ in the sense of~\cref{def:gradient_flow}. Then $(\mu_t)$  is a locally absolutely continuous curve satisfying the continuity equation $\partial_t \mu_t + \div(v_t \mu_t) = 0$ with $v_t = - \nabla L[\mu_t]$. Hence by~\cref{prop:characterization_absolute_continuity} we have for a.e. $t \in I$:
    \begin{align*}
        \left| \frac{\d}{\d t} \mu_t \right| \leq \| v_t \|_{L^2(\mu_t)} = \| \nabla L[\mu_t] \|_{L^2(\mu_t)} .
    \end{align*}
    Also, by~\cref{cor:flow_differential}, $(L(\mu_t))_{t \in I}$ is absolutely continuous with for a.e. $t \in I$:
    \begin{align*}
        - \frac{\d}{\d t} L(\mu_t) = \langle v_t, \nabla L[\mu_t] \rangle_{L^2(\mu_t)} = \| \nabla L[\mu_t] \|^2_{L^2(\mu_t)} .
    \end{align*}
    Thus recalling that $| \nabla L |(\mu) = \| \nabla L[\mu] \|_{L^2(\mu)}$ we get~\cref{def:maximal_slope} by putting together the two previous equations.

    \proofpart{Curves of maximal slope are gradient flows.}
    Let $(\mu_t)_{t \in I}$ be a curve of maximal slope for $L$ in the sense of~\cref{def:maximal_slope}. Then in particular $(\mu_t)_{t \in I}$ is locally absolutely continuous in $(\Pp^\Leb_2([0, 1] \times \Om), d)$ and by~\cref{prop:characterization_absolute_continuity} there exists a Borel velocity field $v : I \times [0, 1] \times \Om \to \Om$ such that $\mu$ satisfies the continuity equation:
    \begin{align*}
        \partial_t \mu_t + \div((0, v_t) \mu_t) = 0 , \quad \text{over $I \times [0,1] \times \Om$,}
    \end{align*}
    and such that the metric derivative satisfies $|\frac{\d}{\d t} \mu_t| \geq \| v_t \|_{L^2(\mu_t)}$ for a.e. $t\in I$. Hence it follows from~\cref{cor:loss_differential} that $(L(\mu_t))_{t \in I}$ is absolutely continuous and for a.e. $t \in I$:
    \begin{align*}
        - \frac{\d }{\d t} L(\mu_t) = - \langle v_t, \nabla L[\mu_t] \rangle .
    \end{align*}
    Using the EDE condition we thus have:
    \begin{align*}
         - \langle v_t, \nabla L[\mu_t] \rangle \geq \frac{1}{2} (|\frac{\d}{\d t} \mu_t |^2 + | \nabla L |^2 (\mu_t) ) \geq \frac{1}{2} ( \| v_t\|^2_{L^2(\mu_t)} + \| \nabla L[\mu_t] \|^2_{L^2(\mu_t)} )
    \end{align*}
    from which it follows by Young's inequality that $v_t = - \nabla L[\mu_t]$ in $L^2(\mu_t)$ for a.e. $t \in I$.
\end{proof}

Note that, although it does not appear in~\cref{def:gradient_flow}, the above equivalence shows that if $(\mu_t)_{t \in I}$ is a gradient flow for $L$ then $\left| \frac{\d}{\d t} \mu_t \right| = \| \nabla L[\mu_t] \|_{L^2(\mu_t)}$ \emph{i.e.} $\nabla L[\mu_t]$ is in fact the (uniquely defined) tangent velocity field of the curve $(\mu_t)_{t \in I}$.

\subsection{Existence, uniqueness, and stability of gradient flow curves} \label{subsec:existence}

We show here the well-posedness result for the gradient flow equation of the risk $L$, namely we show the existence, uniqueness, and stability of gradient flow curves starting from any initialization $\mu_0 \in \Pp^\Leb_2([0, 1] \times \Om)$. For the ``existence'' part we will rely on classical results from the theory of gradient flows in metric spaces showing the convergence of proximal sequences to a curve known as \emph{(Generalized) Minimising Movements}~\cite{de1993new}.
For the ``uniqueness'' part we will show that gradient flow trajectories are stable, that is if two initializations $\mu_0, \mu_0'$ are close (in the sense of the metric $d$), then the emanating gradient flow curves $(\mu_t)_{t \geq 0}, (\mu_t')_{t \geq 0}$ stay close in finite time. 

\subsubsection{Existence}

We proceed to show the existence of gradient flow curves as defined in~\cref{def:gradient_flow}. For that purpose, we need a strengthening of~\cref{fmap_assumption1}. Notably, \cref{fmap_assumption_existence} allows showing the flow map $\mu \mapsto x_\mu$ is continuous for the topology of narrow convergence over $\Pp_2^\Leb([0,1] \times \Om)$.

\begin{assumptionc}{A}\label{fmap_assumption_existence}
    For some $\alpha \in [1, 2)$ we assume that:
    \begin{enumerate}
        \item The feature map $\fmap$ has $\alpha$-growth w.r.t. $\om$, locally w.r.t. $x$. That is for every compact $K \subset \RR^d$ there exists a constant $C = C(K)$ s.t:
        \begin{align*}
            \forall x \in K, \forall \om \in \Om, \quad \| \fmap(\om,x) \| \leq C(1+\|\om\|^\alpha) .
        \end{align*}

        \item  The feature map $\fmap$ is continuously differentiable and its differential $\D_x \fmap$ w.r.t. $x$ has $\alpha$-growth w.r.t. $\om$, locally w.r.t. $x$. That is for every compact $K \subset \RR^d$ there exists a constant $C = C(K)$ s.t.:
        \begin{align*}
            \forall x \in K, \forall \om \in \Om, \quad \| \D_x \fmap(\om,x) \| \leq C(1+\|\om \|^\alpha).
        \end{align*}
    \end{enumerate}
\end{assumptionc}

\begin{thm}[Existence of curves of maximal slope] \label{thm:existence_curve}
    Assume $\fmap$ satisfies~\cref{fmap_assumption1,fmap_assumption2,fmap_assumption3} and~\cref{fmap_assumption_existence}. Let $\mu_0 \in \Pp_2^\Leb([0, 1] \times \Om)$. Then there exists a curve of maximal slope $(\mu_t)_{t \in [0, +\infty)}$ starting from $\mu_0$ and $\left( \left| \frac{\d}{\d t} \mu_t \right| \right)_{t \geq 0} \in L^2_\loc([0,+\infty))$.
\end{thm}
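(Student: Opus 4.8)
The plan is to construct $(\mu_t)_{t\ge 0}$ by De Giorgi's \emph{minimizing movements} (the JKO scheme) and then to identify the limit as a curve of maximal slope through the standard energy–dissipation argument. Fix a step $\tau>0$, set $\mu^0_\tau\eqdef\mu_0$, and define recursively $\mu^n_\tau$ as a minimizer of $\mu\mapsto L(\mu)+\tfrac1{2\tau}d(\mu,\mu^{n-1}_\tau)^2$ over $\Pp_2^\Leb([0,1]\times\Om)$. The first point to settle is that this minimization problem has a solution. Since $\ell\ge 0$ we have $L\ge 0$, so along a minimizing sequence the proximal term $\tfrac1{2\tau}d(\mu,\mu^{n-1}_\tau)^2$ stays bounded; as $d\ge\Ww_2$ (\cref{rem:topology}) this bounds $\Ww_2(\mu,\mu^{n-1}_\tau)$, hence $\Ee_2(\mu)$, so the sequence is tight and, by Prokhorov, narrowly precompact, with narrow limits again in $\Pp_2^\Leb([0,1]\times\Om)$ since the marginal constraint is closed and $\Ee_2$ is narrowly lower semicontinuous. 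To pass to the limit, $d$ is narrowly lower semicontinuous by \cref{lem:d_lsc}, and it is precisely here that \cref{fmap_assumption_existence} enters: the $\alpha$-growth with $\alpha<2$ combined with a uniform bound on $\Ee_2$ yields uniform integrability of $\om\mapsto\fmap(\om,x)$ and $\om\mapsto\D_x\fmap(\om,x)$ against the minimizing sequence, so narrow convergence $\mu_n\to\mu$ with bounded second moments forces $F_{\mu_n(\cdot|s)}\to F_{\mu(\cdot|s)}$ and $\D_xF_{\mu_n(\cdot|s)}\to\D_xF_{\mu(\cdot|s)}$ locally uniformly in $x$, whence by a Grönwall estimate in the spirit of \cref{lem:flow_lipschitz} the flows $x_{\mu_n}$ converge in $\Cc([0,1],\RR^d)$ and $L(\mu_n)\to L(\mu)$. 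Thus $L$ is narrowly continuous on sets of bounded second moment and the minimizer exists.

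Next come the a priori estimates. From minimality, $L(\mu^n_\tau)+\tfrac1{2\tau}d(\mu^n_\tau,\mu^{n-1}_\tau)^2\le L(\mu^{n-1}_\tau)$; telescoping and using $L\ge 0$ yields $\sum_{n\ge1}\tfrac1{2\tau}d(\mu^n_\tau,\mu^{n-1}_\tau)^2\le L(\mu_0)$ for every $\tau$. Hence the piecewise-constant interpolant $\bar\mu_\tau$ has, uniformly in $\tau$, a discrete metric velocity controlled in $L^2_\loc([0,\infty))$, and following the classical compactness argument for minimizing movements (\cite[Prop.\,3.3.1]{ambrosio2008gradient}), again via \cref{lem:d_lsc}, a subsequence of $\bar\mu_\tau$ converges — locally uniformly in $t$ and narrowly in the measure variable — to a curve $(\mu_t)_{t\ge0}$ with $\mu_{t=0}=\mu_0$ that is $d$-absolutely continuous with $\bigl|\tfrac{\d}{\d t}\mu_t\bigr|\in L^2_\loc([0,\infty))$, and $\int_0^T\bigl|\tfrac{\d}{\d t}\mu_t\bigr|^2\d t\le\liminf_{\tau\to0}\sum_{n}\tfrac1{\tau}d(\mu^n_\tau,\mu^{n-1}_\tau)^2$ for all $T$.

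It then remains to show $(\mu_t)_{t\ge0}$ is a curve of maximal slope. Using the De Giorgi variational interpolation $\tilde\mu_\tau$ and the optimality conditions of the scheme, one obtains as in \cite[Ch.\,2--3]{ambrosio2008gradient} the discrete energy–dissipation inequality $L(\mu_0)\ge L(\bar\mu_\tau(T))+\tfrac12\int_0^T(\text{discrete velocity})^2\d t+\tfrac12\int_0^T|\nabla L|^2(\tilde\mu_\tau(t))\,\d t$, where $|\nabla L|$ is the local slope, which by \cref{prop:upper_gradient} equals $\|\nabla L[\cdot]\|_{L^2(\cdot)}$ and is an upper gradient. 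Passing $\tau\to0$: $\liminf_\tau L(\bar\mu_\tau(T))\ge L(\mu_T)$ by the narrow continuity of $L$ just established; the velocity term passes by the estimate above; and for the slope term one checks $\tilde\mu_\tau(t)\to\mu_t$ narrowly for a.e.\ $t$ and uses narrow lower semicontinuity of $\mu\mapsto\|\nabla L[\mu]\|_{L^2(\mu)}$ — which again follows from \cref{fmap_assumption_existence}, since narrow convergence with bounded second moments gives $x_{\mu_n}\to x_\mu$ and $p_{\mu_n,x,y}\to p_{\mu,x,y}$ locally uniformly, hence $\EE_{x,y}\D_\om\fmap(\om,x_{\mu_n}(s))^\top p_{\mu_n,x,y}(s)\to\EE_{x,y}\D_\om\fmap(\om,x_\mu(s))^\top p_{\mu,x,y}(s)$ locally uniformly, and lower semicontinuity of the integral functional follows (\cite[Lem.\,5.1.7]{ambrosio2008gradient}). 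With Fatou in $\tau$ this gives $L(\mu_0)\ge L(\mu_T)+\tfrac12\int_0^T\bigl|\tfrac{\d}{\d t}\mu_t\bigr|^2\d t+\tfrac12\int_0^T|\nabla L|^2(\mu_t)\,\d t$. Since $|\nabla L|$ is an upper gradient, the reverse inequality $L(\mu_0)-L(\mu_T)\le\int_0^T|\nabla L|(\mu_t)\bigl|\tfrac{\d}{\d t}\mu_t\bigr|\d t\le\tfrac12\int_0^T\bigl(|\nabla L|^2(\mu_t)+\bigl|\tfrac{\d}{\d t}\mu_t\bigr|^2\bigr)\d t$ also holds, so all inequalities are equalities; in particular $t\mapsto L(\mu_t)$ is non-increasing and \cref{EDE} holds, which is exactly \cref{def:maximal_slope}.

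The main obstacle is the simultaneous need for compactness and for the correct lower-semicontinuity properties with respect to narrow convergence: because $(\Pp_2^\Leb([0,1]\times\Om),d)$ is positively curved, no convexity or EVI structure is available, so one is confined to the bare minimizing-movements scheme, whose convergence rests entirely on narrow continuity of $L$ and narrow lower semicontinuity of the slope $\|\nabla L[\cdot]\|_{L^2(\cdot)}$ on sets of bounded second moment. This is precisely the reason for imposing the sub-quadratic growth \cref{fmap_assumption_existence}: it upgrades the $d$-continuity statements of \cref{sec:gradient_flow} — where convergence of the second moments was available for free — to the narrow-continuity statements needed here, the gain coming from the uniform integrability that $\alpha<2$ growth enjoys against a uniform second-moment bound.
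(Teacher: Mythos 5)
Your overall architecture is the same as the paper's: the paper also obtains the curve as a (generalized) minimizing movement and identifies it as a curve of maximal slope, except that it does so by invoking \cite[Thm.2.2.3 and Thm.2.3.3]{ambrosio2008gradient} and verifying their hypotheses — narrow relative compactness of $d$-bounded sets, narrow ($\tau$-) continuity of $L$ on $d$-bounded sets, and narrow lower semicontinuity of $\mu \mapsto \| \nabla L[\mu] \|_{L^2(\mu)}$ on $d$-bounded sets (this last point to make the relaxed slope of the abstract theorem coincide with the local slope of \cref{prop:upper_gradient}, which you handle instead through the explicit EDE/upper-gradient argument). So you are re-deriving the abstract machinery by hand, which is legitimate, and you correctly identify that the whole weight of \cref{fmap_assumption_existence} lies in the two continuity statements.

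There is, however, a genuine gap in the mechanism you give for the central analytic step, the narrow continuity of $L$ on sets of bounded second moment. You claim that narrow convergence $\mu_n \to \mu$ with uniformly bounded $\Ee_2$ forces $F_{\mu_n(\cdot|s)} \to F_{\mu(\cdot|s)}$ and $\D_x F_{\mu_n(\cdot|s)} \to \D_x F_{\mu(\cdot|s)}$ locally uniformly in $x$, and then conclude by a Grönwall estimate in the spirit of \cref{lem:flow_lipschitz}. Disintegration is not continuous under narrow convergence: the paper's own example in \cref{rem:topology}, $\mu_n = \int_0^1 \delta_{(-1)^{\lfloor 2ns \rfloor}} \d s \to \tfrac12 \int_0^1 (\delta_1 + \delta_{-1}) \d s$, has conditionals $\mu_n(\cdot|s)$ that converge for no fixed $s$, so $F_{\mu_n(\cdot|s)}(x)$ oscillates in $s$ and does not converge. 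Correspondingly, a Grönwall bound whose error term is $\int_0^s \| \int_\Om \fmap(\om, x(r)) \d (\mu_n - \mu)(\om|r) \| \d r$, with the norm inside the time integral, does not tend to zero in this example (it stays bounded below whenever $\fmap(1,\cdot) \neq \fmap(-1,\cdot)$), even though the flows $x_{\mu_n}$ do converge. The convergence of the flows is true but must be obtained as in the paper: uniform integrability of the $\alpha$-moments (\cref{fmap_assumption_existence}, $\alpha<2$ against a uniform second-moment bound) yields equicontinuity of the trajectories, Arzelà–Ascoli gives a uniform limit $\bar{x}$, and one passes to the limit in the integral equation keeping the time variable inside the pairing with $\mu_n$, approximating the discontinuous integrand $\ind_{r \leq s} \fmap(\om, \bar{x}(r))$ by continuous functions of $\alpha$-growth so that narrow convergence can be applied. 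The same caveat affects your claim that $p_{\mu_n,x,y} \to p_{\mu,x,y}$ locally uniformly, which underlies your lower semicontinuity of the slope. Once this step is repaired, the rest of your scheme (telescoping estimate, refined Arzelà–Ascoli compactness, De Giorgi interpolation, EDE combined with \cref{prop:upper_gradient}) is sound and matches the paper's route.
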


\begin{proof}
    The result follows from the successive application of~\cite[Thm.2.2.3 and Thm.2.3.3]{ambrosio2008gradient}, the first result ensuring the existence of \emph{Generalized Minimizing Movements} and the second result stating that these curves are curves of maximal slope for the local slope. The proof proceeds by verifying the assumptions of these theorems. We consider here $\Pp_2^\Leb([0, 1] \times \Om)$ equipped with the topology induced by the distance $d$ and with the (weaker) topology of narrow convergence, denoted by $\tau$. Note that $(\Pp_2^\Leb([0,1] \times \Om), d)$ is a complete metric space (\cref{prop:d_complete}) and that the distance $d$ is $\tau$-lower-semicontinuous (\cref{lem:d_lsc}).

    \proofpart{$d$-bounded sets are $\tau$-relatively compact.}
        This property is verified as $d$-bounded sets are tight and hence $\tau$-relatively compact by Prokhorov's theorem.

    \proofpart{$L$ is $\tau$-continuous on $d$-bounded sets.}
    Let $(\mu_n)$ be a $d$-bounded sequence in $\Pp^\Leb_2([0, 1] \times \Om)$ s.t.\@ $\mu_n \xrightarrow{\tau} \mu$ for some $\mu \in \Pp^\Leb_2([0,1] \times \Om)$ and let us show that $L(\mu_n) \to L(\mu)$. Take $x \in \RR^d$ and denote by $x_n \eqdef x_{\mu_n}$ the flow trajectory starting from $x$ and associated to $\mu_n$. By Lebesgue's theorem, it suffices to show that $x_n(1) \to x_\mu(1)$. Using Ascoli's theorem, we will proceed by showing that $x_n \to x_\mu$ in $\Cc([0,1], \RR^d)$.
    
    From the $d$-boundedness and the proof of~\cref{prop:flow_wellposed}, it follows that the trajectories $x_n$ stay in a compact set. Moreover given $s_1 < s_2 \in [0, 1]$ we have using the $\alpha$-growth assumption:
    \begin{align*}
        \| x_n(s_2) - x_n(s_1) \| \leq \int_{[s_1,s_2] \times \Om} \| \fmap(\om,x_n(r)) \| \d \mu_n(r,\om) \leq \int_{[s_1,s_2] \times \Om} C(1+\| \om \|^\alpha) \d \mu_n(r,\om).
    \end{align*}
    Also as the sequence $(\mu_n)$ is $d$-bounded it has uniformly integrable $\alpha$-moments. Given $\eps > 0$ we can thus find a $k \geq 0$ such that, for every $n \geq 0$, $\int_{\| \om \| \geq k} C(1+\|\om\|^\alpha) \d \mu_n \leq \eps$. Using this in the previous inequality and the fact that the marginal of $\mu_n$ on $[0,1]$ is the Lebesgue measure gives:
    \begin{align*}
        \forall s_1 < s_2 \in [0,1], \quad \| x_n(s_2) - x_n(s_1) \| \leq \eps + C(1+k^\alpha) (s_2 - s_1).
    \end{align*}
    Thus the trajectories $(x_n)$ are equicontinuous and, by Arzela-Ascoli's theorem, we have (up to a subsequence) that $x_n \to \Bar{x}$ in $\Cc([0, 1], \RR^d)$.
    
    Let us show that $\Bar{x} = x_\mu$ is the flow generated by $\mu$. This will conclude this part of the proof as it will imply $x_n(1) \to x_\mu(1)$ and then $L(\mu_n) \to L(\mu)$ by Lebesgue's convergence theorem. Considering $s \in [0, 1]$ we have:
    \begin{align*}
        x_n(s) = x & + \int_{[0,1] \times \Om} \ind_{r \leq s} \fmap (\om, x_n(r)) \d \mu_n(r, \om) \\
        = x & + \int_{[0,1] \times \Om} \ind_{r \leq s} \fmap (\om, \Bar{x}(r)) \d \mu(r,\om) \\
        & + \int_{[0,1] \times \Om} \ind_{r \leq s} \left( \fmap(\om, x_n(r)) - \fmap (\om, \Bar{x}(s)) \right) \d \mu_n(r,\om) \tag{L1} \label{L1} \\
        & + \int_{[0,1] \times \Om} \ind_{r \leq s} \fmap(\om, \Bar{x}(r)) \d (\mu_n - \mu)(r,\om) \tag{L2} \label{L2} .
    \end{align*}
    In this last equality, we need to show that~\ref{L1} and~\ref{L2} vanish as $n \to \infty$. In~\ref{L1} we have that the integrand has $\alpha$-growth, hence given $\eps > 0$ we have using the uniform integrability of $\|\om\|^\alpha$ on the sequence $(\mu_n)$ that for every $n \geq 0$:
    \begin{align*}
         \| \int \ind_{r \leq s} \left( \fmap(\om, x_n(r)) - \fmap (\om, \Bar{x}(r)) \right) \d \mu_n(r,\om) \| \leq \eps + \int \ind_{r \leq s, \| \om \| \leq k} \| \fmap(\om, x_n(r)) - \fmap(\om, \Bar{x}(r)) \| \d \mu_n(r,\om).
    \end{align*}
    Then as $\fmap$ is locally-Lipschitz w.r.t. $x$ and $x_n \to \Bar{x}$ we have that the integrand on the r.h.s. converges uniformly to 0 and hence:
    \begin{align*}
        \limsup_{n \to \infty} \| \int_{[0,1] \times \Om} \ind_{r \leq s} \left( \fmap(\om, x_n(r)) - \fmap (\om, \Bar{x}(s)) \right) \d \mu_n(r,\om) \| \leq \eps.
    \end{align*}
    In~\ref{L2} the integrand is not continuous so we can't simply apply the definition of narrow convergence and need to leverage the fact that $(\mu_n)$ is a $d$-bounded sequence in $\Pp_2^\Leb([0, 1] \times \Om)$. Note that for every $(r,\om) \in [0, 1] \times \Om$, $\| \fmap(\om, \Bar{x}(r)) \| \leq C (1+ \|\om\|^\alpha)$. Given $\eps > 0$ and using the uniform integrability of $\|\om\|^\alpha$ we can thus have a $k \geq 0$ such that:
    \begin{align*}
        \sup_{n \geq 0} \int_{\|\om\| \geq k} C(1+\|\om\|^\alpha) \d \mu_n, \, \int_{\|\om\| \geq k} C(1+\|\om\|^\alpha) \d \mu \, \leq \eps.
    \end{align*}
    Then for $\delta > 0$ we can find a continuous function $\varphi : [0, 1] \times \Om \to \RR^d$ s.t. $\| \varphi(r,\om) \| \leq C(1+\|\om\|^\alpha)$, $\varphi(r,\om) = \fmap(\om,\Bar{x}(r))$ for every $r \leq s$ and $\varphi(r,\om) = 0$ whenever $r \geq s + \delta$. Considering such a function $\varphi$ we have for~\ref{L2}:
    \begin{align*}
        \| \int \ind_{r \leq s} \fmap(\om, \Bar{x}(r)) \d (\mu_n - \mu)(r,\om) \| & \leq \| \int \varphi \d (\mu_n-\mu) \| + \int \| \ind_{r \leq s} \fmap(\om, \Bar{x}(r)) - \varphi(r,\om) \| \d (\mu_n + \mu)(r,\om) \\
        & \leq  \| \int \varphi \d (\mu_n-\mu) \| + 4 \eps + C(1+k^\alpha) \delta
    \end{align*}
    where we used the fact that, in the second term, the integrand has $\alpha$-growth and is only non-zero for $r \in [s, s+\delta]$. Hence having chosen $\delta$ sufficiently small gives:
    \begin{align*}
        \limsup_{n \to \infty} \| \int \ind_{r \leq s} \fmap(\om, \Bar{x}(r)) \d (\mu_n - \mu)(r,\om) \| \leq \limsup_{n \to \infty} \| \int \varphi \d (\mu_n-\mu) \| + 5 \eps \leq 5 \eps,
    \end{align*}
    where the first $\limsup$ is 0 by definition of narrow convergence. We have thus shown that for every $s \in [0, 1]$, taking the limit as $n \to \infty$:
    \begin{align*}
        \Bar{x}(s) = x + \int_0^s \fmap(\om, \Bar{x}(r)) \d \mu(\om|r) \d r,
    \end{align*}
    \emph{i.e.} $\Bar{x} = x_\mu$ is the flow trajectory associated to $\mu$ and starting from $x$.
    
    \proofpart{$\| \nabla L(\mu) \|_{L^2(\mu)}$ is $\tau$-lower semi-continuous on $d$-bounded subsets.}
    We previously considered the upper-gradient $|\nabla L|$ defined in~\cref{upper_gradient} as $|\nabla L|(\mu) \eqdef \| \nabla L[\mu] \|_{L^2(\mu)}$. However, \textcite[Thm.2.3.3]{ambrosio2008gradient} state that the obtained curve of maximal slope is a curve of maximal slope for another definition of gradient, referred to as \emph{relaxed slope}. To show these two notions coincide here we show that the map $\mu \mapsto  \|\nabla L[\mu] \|_{L^2(\mu)}$ is $\tau$-lower semi-continuous on $d$-bounded subsets (cf.~\cite[Rm.2.3.4]{ambrosio2008gradient}).

    As before consider a $d$-bounded sequence $(\mu_n)$ in $\Pp_2^\Leb([0,1] \times \Om)$ which narrowly converges to some $\mu \in \Pp_2^\Leb([0,1] \times \Om)$. Then we previously have shown that for every $x \in \RR^d$ we have $x_{\mu_n} \to x_\mu$ in $\Cc([0,1], \RR^d)$. Proceeding with similar arguments one could show the same for the adjoint variable $p$, that is $p_n \eqdef p_{\mu_n,x,y} \to p_{\mu,x,y}$ in $\Cc([0,1], \RR^d)$. Then using a generalization of Fatou's lemma with varying measures (\emph{e.g.}~\cite[Thm.2.4]{feinberg2020fatou}) we have:
    \begin{align*}
        \liminf_{n \to \infty} \| \nabla L[\mu_n] \|^2_{L^2(\mu_n)} & = \liminf_{n \to \infty} \int_{[0,1] \times \Om} \| \EE_{x,y} \D_\om \fmap(\om,x_n(r))^\top p_n(r) \|^2 \d \mu_n(r,\om) \\
        & \geq \int_{[0, 1] \times \Om} \liminf_{\substack{n \to \infty \\ (r',\om') \to (r,\om)}} \| \EE_{x,y} \D_\om \fmap(\om',x_n(r'))^\top p_n(r') \|^2 \d \mu(r,\om) \\
        & = \| \nabla L[\mu] \|^2_{L^2(\mu)},
    \end{align*}
    which is the desired property.
\end{proof}

\subsubsection{Uniqueness}

We present here a uniqueness result for solutions of the gradient flow equation, which is the content of the following~\cref{thm:uniqueness_curve}. The proof is standard and relies on the Lipschitz continuity of the gradient vector field $\nabla L [\mu]$ w.r.t. the measure $\mu$. It uses the following~\cref{fmap_assumption_uniqueness} on the feature map $\fmap$ to ensure local Lipschitz continuity of the adjoint variable $p$ (\cref{lem:adjoint_lipschitz}). 

\begin{assumptionc}{B} \label{fmap_assumption_uniqueness}
    Assume that $\fmap$ is twice continuously differentiable with
    $\D^2_{\om, \om} \fmap$ uniformly bounded, $\D^2_{\om,x} \fmap$ having linear growth and $\D^2_{x,x} \fmap$ having quadratic growth w.r.t. $\om$. Namely, for every $R \geq 0$ there exists a constant $C = C(R)$ s.t. for every $x,x' \in B(0,R)$ and every $\om, \om' \in \Om$:
    \begin{align*}
        \| \D^2_{\om, \om} \fmap(\om,x) \| \leq C(R),  \quad \| \D^2_{\om,x} \fmap(\om,x) \| \leq C(R)(1+\|\om\|), \quad \| \D^2_{x, x} \fmap(\om,x) \| \leq C(R) (1+\|\om\|^2).
    \end{align*}  
\end{assumptionc}

\begin{thm}[Uniqueness of curves of maximal slope] \label{thm:uniqueness_curve}
    Assume $\fmap$ satisfies~\cref{fmap_assumption1,fmap_assumption2,fmap_assumption3,fmap_assumption_uniqueness} and that $\nabla_x \ell$ is locally Lipschitz w.r.t. $x$. 
    Let $\mu_0 \in \Pp^\Leb_2([0, 1] \times \Om)$. Then the gradient flow for the risk $L$ starting from $\mu_0$, if it exists, is unique.
\end{thm}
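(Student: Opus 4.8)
The plan is to combine the flow-map representation of gradient-flow curves (\cref{prop:representation_curve}) with a Gr\"onwall estimate on a squared $L^2$-distance of flow maps. Let $(\mu_t)_{t\ge0}$ and $(\mu_t')_{t\ge0}$ be two curves of maximal slope starting from $\mu_0$; by \cref{thm:equivalence_GF_CMS} both are gradient flows in the sense of \cref{def:gradient_flow}, so by \cref{prop:representation_curve} we have $\mu_t=(X_t)_\#\mu_0$ and $\mu_t'=(X_t')_\#\mu_0$, where $X_t$ solves~\eqref{gradient_flow_characteristic} and $X_t'$ the analogous equation driven by $\mu_t'$. Since the driving velocity has vanishing first component, $X_t(s,\om)=(s,\chi_t(s,\om))$ and $X_t'(s,\om)=(s,\chi_t'(s,\om))$ for flow maps $\chi_t,\chi_t'$ that are fibered over $s$; hence $\big((s,\om)\mapsto(s,\chi_t(s,\om),\chi_t'(s,\om))\big)_\#\mu_0\in\Pi^\Leb(\mu_t,\mu_t')$ and, by \cref{prop:d_characterization},
\begin{equation*}
  d(\mu_t,\mu_t')^2\ \le\ \delta_t\ \eqdef\ \int_{[0,1]\times\Om}\|\chi_t(s,\om)-\chi_t'(s,\om)\|^2\,\d\mu_0(s,\om).
\end{equation*}
A first observation is that on each finite interval $[0,T]$ both curves stay in a $d$-bounded set: by the Energy Dissipation Equality~\eqref{EDE}, $\int_0^T\big|\frac{\d}{\d t}\mu_t\big|^2\,\d t\le 2L(\mu_0)$, so $d(\mu_0,\mu_t)\le\sqrt{2tL(\mu_0)}$; in particular $E_T\eqdef\sup_{t\le T}\big(\Ee_2(\mu_t)+\Ee_2(\mu_t')\big)<\infty$ and, by \cref{flow_growth}, all forward trajectories $x_{\mu_t}(\cdot),x_{\mu_t'}(\cdot)$ with $(x,y)$ in the (compact) support of $\Dd$ remain in a fixed ball $B(0,R_T)$.

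I would then record two Lipschitz estimates, uniform over this $d$-bounded set. \emph{(i)} For fixed $\mu$ with $\Ee_2(\mu)\le E_T$, the field $\om\mapsto\nabla L[\mu](s,\om)=\EE_{x,y}\D_\om\fmap(\om,x_\mu(s))^\top p_{\mu,x,y}(s)$ is globally $C_T$-Lipschitz in $\om$, since $\om\mapsto\D_\om\fmap(\om,x)$ is globally Lipschitz for $x\in B(0,R_T)$ (uniform boundedness of $\D^2_{\om,\om}\fmap$, \cref{fmap_assumption_uniqueness}) and $\|p_{\mu,x,y}(s)\|$ is uniformly bounded. \emph{(ii)} For $\mu,\mu'$ with $\Ee_2(\mu),\Ee_2(\mu')\le E_T$ and fixed $(s,\om)$, $\|\nabla L[\mu](s,\om)-\nabla L[\mu'](s,\om)\|\le C_T(1+\|\om\|)\,d(\mu,\mu')$: using $\|\D_\om\fmap(\om,\cdot)\|\le C(1+\|\om\|)$ and the Lipschitz bound $\|\D_\om\fmap(\om,x)-\D_\om\fmap(\om,x')\|\le C(1+\|\om\|)\|x-x'\|$ (linear growth of $\D^2_{\om,x}\fmap$, \cref{fmap_assumption_uniqueness}), it reduces to bounding $\|x_\mu-x_{\mu'}\|_{\Cc([0,1])}$ and $\|p_{\mu,x,y}-p_{\mu',x,y}\|_{\Cc([0,1])}$ by $C_T\,d(\mu,\mu')$; the former is \cref{lem:flow_lipschitz}, the latter is the local Lipschitzness of the adjoint variable, obtained by a Gr\"onwall argument on the backward equation~\eqref{backward}. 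This adjoint estimate is the step that genuinely needs \cref{fmap_assumption_uniqueness} and local Lipschitzness of $\nabla_x\ell$: to compare $p_{\mu}$ and $p_{\mu'}$ one must control $\|\D_xF_{\mu(\cdot|s)}(x_\mu(s))-\D_xF_{\mu'(\cdot|s)}(x_{\mu'}(s))\|$ both in its $x$-argument (needing $\D^2_{x,x}\fmap$ with quadratic growth) and in the measure $\mu$ through an optimal coupling (needing $\D^2_{\om,x}\fmap$ with linear growth).

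Finally I would close the Gr\"onwall loop. Each $t\mapsto\chi_t(s,\om)$ is locally absolutely continuous and, by Gr\"onwall in $t$ using the linear growth of $\nabla L[\mu_t]$ in $\om$, satisfies $\|\chi_t(s,\om)\|\le C_T(1+\|\om\|)$ on $[0,T]$; since $\Ee_2(\mu_0)<\infty$, the integrand in $\delta_t$ and its $t$-derivative are dominated by an $L^1(\mu_0)$ function, so $\delta_t$ is absolutely continuous with
\begin{equation*}
  \frac{\d}{\d t}\delta_t=2\int_{[0,1]\times\Om}\big\langle\chi_t-\chi_t',\ \nabla L[\mu_t](s,\chi_t)-\nabla L[\mu_t'](s,\chi_t')\big\rangle\,\d\mu_0\quad\text{for a.e. }t\in[0,T].
\end{equation*}
Splitting $\nabla L[\mu_t](s,\chi_t)-\nabla L[\mu_t'](s,\chi_t')=\big(\nabla L[\mu_t](s,\chi_t)-\nabla L[\mu_t](s,\chi_t')\big)+\big(\nabla L[\mu_t](s,\chi_t')-\nabla L[\mu_t'](s,\chi_t')\big)$, bounding the first summand by $C_T\|\chi_t-\chi_t'\|$ via \emph{(i)} and the second by $C_T(1+\|\chi_t'\|)\,d(\mu_t,\mu_t')$ via \emph{(ii)}, and using Cauchy--Schwarz with $d(\mu_t,\mu_t')^2\le\delta_t$ and $\int(1+\|\chi_t'\|)^2\,\d\mu_0\le 2(1+E_T)$, one gets $\frac{\d}{\d t}\delta_t\le C_T'\,\delta_t$; as $\delta_0=0$, Gr\"onwall's lemma yields $\delta_t\equiv0$ on $[0,T]$, hence $\mu_t=\mu_t'$, and letting $T\to\infty$ finishes the proof. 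The main obstacle is step \emph{(ii)} --- the local Lipschitzness of $\mu\mapsto p_{\mu,x,y}$: whereas the forward flow is handled by \cref{lem:flow_lipschitz}, the adjoint solves a non-autonomous linear ODE whose coefficients depend on $\mu$ through the second-order quantity $\D_xF_{\mu(\cdot|s)}$, which is exactly why the second-derivative hypotheses of \cref{fmap_assumption_uniqueness} are required; the remaining estimates are routine Gr\"onwall and dominated-convergence arguments.
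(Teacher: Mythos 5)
Your argument is correct, and it reaches the conclusion by a genuinely different route from the paper. The paper runs Grönwall directly on the metric quantity $t\mapsto d(\mu_t,\mu_t')^2$: it differentiates the squared Conditional OT distance along the two curves via \cref{lem:differentiability_d}, plugs in an optimal coupling $\gamma_t\in\Pi^\Leb_o(\mu_t,\mu_t')$, and bounds the integrand by exactly the two Lipschitz estimates you isolate — local Lipschitzness of the forward flow (\cref{lem:flow_lipschitz}) and of the adjoint variable (\cref{lem:adjoint_lipschitz}), the latter being, as you correctly identify, the step where \cref{fmap_assumption_uniqueness} and the local Lipschitzness of $\nabla_x\ell$ are really used. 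You instead work in Lagrangian coordinates: you invoke the flow-map representation of \cref{prop:representation_curve} (available here since \cref{fmap_assumption_uniqueness} gives $\fmap\in C^2$), compare the two flow maps in $L^2(\mu_0)$ through the explicit, generally suboptimal coupling $(\pi^1,\chi_t,\chi_t')_\#\mu_0$, and close the loop with $d(\mu_t,\mu_t')^2\le\delta_t$. What the paper's route buys is economy: it never needs the representation formula nor the superposition theorem of \textcite{ambrosio2008transport}, only the differentiability of $d^2$ along absolutely continuous curves. What your route buys is that you avoid differentiating the squared distance along two simultaneously moving curves (\cref{lem:differentiability_d} is stated for one moving curve and a fixed target, and the paper's two-curve version is used without comment), replacing it by an elementary differentiation under the integral with the domination you justify; you also make explicit the a priori moment bound $\sup_{[0,T]}\Ee_2(\mu_t)<\infty$, which the paper asserts without proof. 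One cosmetic remark: the detour through \cref{thm:equivalence_GF_CMS} nominally imports the hypothesis that $\ell$ is smooth, which \cref{thm:uniqueness_curve} does not assume; it is unnecessary, since the statement concerns gradient flows in the sense of \cref{def:gradient_flow} directly, and the EDE-type bound you want (hence $d(\mu_0,\mu_t)\le\sqrt{tL(\mu_0)}$) already follows from \cref{cor:loss_differential} together with \cref{prop:characterization_absolute_continuity} under the stated assumptions.
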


\begin{proof}
    Let $(\mu_t)_{t \geq 0}$ and $(\mu_t')_{t \geq 0}$ be two gradient flow curves for the risk $L$ starting from $\mu_0$. We will proceed to show that $d(\mu_t, \mu_t') = 0$ for every $t \geq 0$.
    
    We use the shorter notations $v_t \eqdef \nabla L[\mu_t]$, $v_t' \eqdef \nabla L[\mu_t']$ to refer to the \emph{tangent vector fields} of $\mu$ and $\mu'$ respectively. Observe that the map $t \mapsto d(\mu_t, \mu_t')^2$ is locally absolutely continuous and by~\cref{lem:differentiability_d} its differential is given at almost every $t \geq 0$ by:
    \begin{align*}
        \frac{\d}{\d t} d(\mu_t, \mu_t')^2 = 2 \int_0^1 \int_\Om \langle \om'-\om, v_t'(s,\om') - v_t(s, \om) \rangle \d \gamma_t(s,\om,\om'),
    \end{align*}
    where $\gamma_t \in \Pi^\Leb_o(\mu_t, \mu_t')$ can be any optimal coupling.

    Let $T \geq 0$ and denote by $\Ee \eqdef \sup_{t \in [0,T]} \max(\Ee_2(\mu_t), \Ee_2(\mu_t')) < \infty$.
    Fix some $t \in [0,T]$ and consider $(x,y)$ in the support of the data distribution $\Dd$ with the shortcuts $x_t \eqdef x_{\mu_t}$, $p_t \eqdef p_{\mu_t,x,y}$ and similarly $x_t', p_t'$ for $\mu_t'$. As the data distribution has compact support, we have that there exists some $R = R(\Ee)$ such that $\| x_t(s) \|, \| x_t'(s) \|, \| p_t(s) \|, \| p_t'(s) \| \leq R$. Then using~\cref{lem:flow_lipschitz,lem:adjoint_lipschitz} as well as the assumptions on $\fmap$ we have for every $(s, \om, \om') \in [0, 1] \times \Om^2$:
    \begin{multline} \label{gradient_lipschitz}
        \| \D_\om \fmap(\om, x_t(s))^\top p_t(s) - \D_\om \fmap(\om', x_t'(s))^\top p_t'(s) \| \\
        \leq  \| \D_\om \fmap(\om, x_t(s))^\top p_t(s) - \D_\om \fmap(\om', x_t(s))^\top p_t(s) \| \\
        \shoveright{
        + \| \D_\om \fmap (\om', x_t(s))^\top p_t(s) - \D_\om \fmap(\om', x_t'(s))^\top p_t'(s)  \|
        }
        \\
        \leq \| \D_\om \fmap(\om, x_t(s)) - \D_\om \fmap(\om', x_t(s)) \| \| p_t(s) \| \\
        \shoveright{
        + \| \D_\om \fmap(\om', x_t(s)) - \D_\om \fmap(\om', x_t'(s)) \| \| p_t(s) \|
        }
        \\
        \shoveright{
        + \| \D_\om \fmap(\om', x_t'(s)) \| \| p_t(s) - p_t'(s) \|
        }
        \\
        \leq C_1 \| \om - \om' \|
        + C_1 (1+\|\om'\|) d(\mu_t, \mu_t')
        + C_1 (1+\|\om'\|) d(\mu_t, \mu_t') ,
    \end{multline}
    with $C_1 = C_1(\Ee)$ some constant. Fixing some $\gamma_t \in \Pi^\Leb_o(\mu_t, \mu_t')$, using that $2 \langle a, b \rangle \leq \| a \|^2 + \| b\|^2$ and integrating the previous inequality over $(x,y)$ and $(s, \om, \om')$ we get:
    \begin{align*}
        \frac{\d}{\d t} d(\mu_t, \mu_t')^2 & \leq \int_{[0, 1] \times \Om^2} \left( \| \om-\om' \|^2 + \| v_t(s, \om) - v_t'(s, \om') \|^2 \right) \d \gamma_t(s, \om, \om') \leq C_2 d(\mu_t, \mu_t')^2,
    \end{align*}
    for some constant $C_2 = C_2(\Ee)$. We can then conclude using Grönwall's inequality to:
    \begin{align*}
        \forall t \in [0,T], \quad d(\mu_t, \mu_t')^2 \leq e^{C_2 t} d(\mu_0, \mu_0)^2 = 0 .
    \end{align*}
\end{proof}

The above proof relied on the following lemma, which shows that the adjoint variable map $\mu \mapsto p_{\mu, x, y}$ is locally Lipschitz under~\cref{fmap_assumption_uniqueness}.

\begin{lem} \label{lem:adjoint_lipschitz}
    Assume $\fmap$ satisfies~\cref{fmap_assumption1,fmap_assumption2,fmap_assumption3,fmap_assumption_uniqueness} and that $\nabla_x \ell$ is locally Lipschitz w.r.t. $x$. Then, for fixed $(x,y) \in \RR^d \times \RR^{d'}$, the adjoint variable map
    $$
    \mu \in \Pp^\Leb_2([0, 1] \times \Om) \mapsto p_{\mu,x,y} \in \Cc([0, 1], \RR^d)
    $$
    is locally Lipschitz.
    Namely, for every $\Ee \geq 0$ there exists a constant $C = C(\Ee)$ such that:
    \begin{align*}
        \sup_{s \in [0,1]} \| p_{\mu,x,y}(s) - p_{\mu',x,y}(s) \| \leq C  d(\mu, \mu')
    \end{align*}
    for every parameterization $\mu, \mu'$ such that $\Ee_2(\mu), \Ee_2(\mu') \leq \Ee$. Moreover, the constant $C$ can be chosen uniformly over $(x,y)$ in a compact subset.
\end{lem}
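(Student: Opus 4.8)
The strategy is a direct Grönwall argument on the backward equation~\cref{backward}, feeding in the flow estimate of~\cref{lem:flow_lipschitz} and the second-order bounds of~\cref{fmap_assumption_uniqueness} to control how $\D_x F_{\mu(.|s)}$ depends on $\mu$. Fix $\Ee \ge 0$, two parameters $\mu,\mu'$ with $\Ee_2(\mu),\Ee_2(\mu') \le \Ee$, and $(x,y)$ in the prescribed compact set. First I establish uniform a priori bounds: by~\cref{prop:flow_wellposed} the trajectories $x_\mu,x_{\mu'}$ stay in a ball $B(0,R_0)$ with $R_0=R_0(\Ee)$, and since $\|\D_x F_{\mu(.|r)}(x_\mu(r))\| \le \int_\Om \|\D_x\fmap(\om,x_\mu(r))\|\,\d\mu(\om|r) \le C(R_0)(1+\Ee_2(\mu(.|r)))$ — which is integrable in $r$ with integral at most $C(R_0)(1+\Ee)$ — Grönwall applied to~\cref{backward} gives $\sup_s\|p_\mu(s)\|,\sup_s\|p_{\mu'}(s)\| \le R$ for some $R=R(\Ee)$, using that $\nabla_x\ell$ is continuous hence bounded on $B(0,R_0)\times\{y\}$. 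All constants below depend only on $\Ee$ and can be taken uniform in $(x,y)$.

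Now set $\delta(s)\eqdef p_\mu(s)-p_{\mu'}(s)$. Subtracting the two copies of~\cref{backward},
\[
\delta(s) = \nabla_x\ell(x_\mu(1),y)-\nabla_x\ell(x_{\mu'}(1),y) + \int_s^1 \Bigl( \D_x F_{\mu(.|r)}(x_\mu(r))^\top p_\mu(r) - \D_x F_{\mu'(.|r)}(x_{\mu'}(r))^\top p_{\mu'}(r) \Bigr)\,\d r .
\]
The boundary term is $\le \operatorname{Lip}\bigl(\nabla_x\ell|_{B(0,R_0)}\bigr)\,\|x_\mu(1)-x_{\mu'}(1)\| \le C\,d(\mu,\mu')$ by~\cref{lem:flow_lipschitz}. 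Inside the integral I split the integrand as
\[
\bigl(\D_x F_{\mu(.|r)}(x_\mu(r)) - \D_x F_{\mu'(.|r)}(x_{\mu'}(r))\bigr)^\top p_\mu(r) \;+\; \D_x F_{\mu'(.|r)}(x_{\mu'}(r))^\top \delta(r),
\]
the second term having norm $\le a(r)\|\delta(r)\|$ with $a(r)\eqdef\|\D_x F_{\mu'(.|r)}(x_{\mu'}(r))\| \in L^1([0,1])$, $\int_0^1 a \le C(\Ee)$; this is what will close the Grönwall loop.

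For the first term, $\|p_\mu(r)\|\le R$ reduces everything to bounding $\|\D_x F_{\mu(.|r)}(x_\mu(r)) - \D_x F_{\mu'(.|r)}(x_{\mu'}(r))\|$, which I estimate by the triangle inequality as $\bigl\| \int_\Om (\D_x\fmap(\om,x_\mu(r)) - \D_x\fmap(\om,x_{\mu'}(r)))\,\d\mu(\om|r) \bigr\| + \bigl\| \int_\Om \D_x\fmap(\om,x_{\mu'}(r))\,\d(\mu-\mu')(\om|r) \bigr\|$. The first piece is handled by the bound $\|\D^2_{x,x}\fmap\| \le C(R_0)(1+\|\om\|^2)$ from~\cref{fmap_assumption_uniqueness}, giving $\le C(R_0)(1+\Ee_2(\mu(.|r)))\cdot C\,d(\mu,\mu')$, integrable in $r$. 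For the second piece I fix an optimal coupling $\pi_r\in\Gamma_o(\mu(.|r),\mu'(.|r))$, rewrite it as $\int_{\Om^2}(\D_x\fmap(\om,x_{\mu'}(r))-\D_x\fmap(\om',x_{\mu'}(r)))\,\d\pi_r(\om,\om')$, use $\|\D^2_{\om,x}\fmap\| \le C(R_0)(1+\|\om\|)$ to get $\|\D_x\fmap(\om,x_{\mu'}(r))-\D_x\fmap(\om',x_{\mu'}(r))\| \le C(R_0)(1+\max(\|\om\|,\|\om'\|))\|\om-\om'\|$, and Cauchy–Schwarz against $\pi_r$ to bound this piece by $C(R_0)(1+\Ee_2(\mu(.|r))+\Ee_2(\mu'(.|r)))^{1/2}\Ww_2(\mu(.|r),\mu'(.|r))$. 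Integrating over $r$ and applying Cauchy–Schwarz once more in $r$, this becomes $C(R_0)(1+2\Ee)^{1/2}\,d(\mu,\mu')$ after recognizing $\int_0^1\Ww_2(\mu(.|r),\mu'(.|r))^2\,\d r = d(\mu,\mu')^2$. Collecting everything, $\|\delta(s)\| \le C_1\,d(\mu,\mu') + \int_s^1 a(r)\|\delta(r)\|\,\d r$ with $C_1=C_1(\Ee)$; the backward Grönwall inequality gives $\sup_s\|\delta(s)\| \le C_1 e^{C(\Ee)}d(\mu,\mu') =: C\,d(\mu,\mu')$, uniformly in $(x,y)$.

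The main obstacle is the second piece above — controlling the $\mu$-variation of $\D_x F$ through an optimal transport coupling — where one must pair the correct second-order growth estimate of~\cref{fmap_assumption_uniqueness} (linear growth of $\D^2_{\om,x}\fmap$) with Cauchy–Schwarz applied twice, first in $\om$ and then in the layer variable $r$, so as to convert the layer-wise $\Ww_2$ discrepancies into $d(\mu,\mu')$ while keeping the moment prefactors integrable; this mirrors, at the level of derivatives, the argument already used for~\cref{lem:flow_lipschitz}. The remaining ingredients (the uniform $L^\infty$ bound on the adjoint and the application of Grönwall) are routine.
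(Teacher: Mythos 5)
Your proposal is correct and follows essentially the same route as the paper's proof: the same three-way decomposition of the integrand (variation in $x$ at fixed measure, a term proportional to $p_\mu-p_{\mu'}$ that closes the Grönwall loop, and the measure-variation term handled with an optimal coupling, the linear-growth bound on $\D^2_{\om,x}\fmap$ from~\cref{fmap_assumption_uniqueness}, and Cauchy–Schwarz in $\om$ then in $r$), together with the boundary term controlled by~\cref{lem:flow_lipschitz} and the local Lipschitzness of $\nabla_x\ell$. The only differences are cosmetic bookkeeping (which of $p_\mu,p_{\mu'}$ and $\mu,\mu'$ carries each piece), so nothing further is needed.
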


\begin{proof}
    Consider $(x,y) \in \RR^d \times \RR^{d'}$, $\Ee \geq 0$ and parameterizations $\mu, \mu'$ as in the proposition. We denote by $(x(s))$ and $(x'(s))$ the forward flows and $(p(s))$ and $(p'(s))$ the backward flows associated to $x, y$ and to $\mu$ and $\mu'$ respectively.
    Let $R \geq 0$ be such that $\| x \| + \| y \|\leq R$. Using~\cref{prop:flow_wellposed} and~\cref{backward} we can assume that the trajectories $x$, $x'$, $p$ and $p'$ are uniformly bounded by some $R' = R'(R, \Ee)$. Then we get from~\cref{backward} that at every $s \in [0, 1]$:
    \begin{multline*}
        \| p(s) - p'(s) \| \leq \| p(1) - p'(1) \| \\ + \int_s^1 \| \int_\Om \D_x \fmap(\om, x(r))^\top p(r) \d \mu(\om|r)  - \int_\Om \D_x \fmap(\om, x'(r))^\top  p'(r)  \d \mu'(\om|r) \| \d r .
    \end{multline*}
    Fixing some $r \in [s, 1]$ the integrand on the r.h.s. can be decomposed as:
    \begin{align*}
         & \| \int_\Om \D_x \fmap(\om, x(r))^\top p(r) \d \mu(\om|r)  - \int_\Om \D_x \fmap(\om, x'(r))^\top  p'(r)  \d \mu'(\om|r) \| \\
         \leq & \int_\Om \| \D_x \fmap(\om, x(r)) - \D_x \fmap(\om, x'(r)) \| \| p(r) \| \d \mu(\om|r)
        + \int_\Om \| \D_x \fmap(\om, x'(r)) \| \| p(r) - p'(r) \| \d \mu(\om|r) \\
        & + \| \int_\Om \D_x \fmap(\om, x'(r)) p'(r)  \d (\mu' - \mu)(\om|r) \| \\
        \eqdef &  I_1(r) + I_2(r) + I_3(r) .
    \end{align*}
    Then using the assumptions on $\fmap$ and~\cref{lem:flow_lipschitz}:
    \begin{align*}
        I_1(r) \leq C_1 d(\mu, \mu') \int_\Om (1+ \| \om \|^2) \d \mu(\om|r), \quad I_2(r) \leq C_2 \| p(r) - p'(r) \| \int_\Om (1+\| \om \|^2) \d \mu(\om|r) ,
    \end{align*}
    with $C_1 = C_1 (R,\Ee)$ and $C_2 = C_2(R, \Ee)$. For $I_3$, introducing an optimal coupling $\gamma \in \Gamma_o(\mu(.|r), \mu'(.|r))$:
    \begin{align*}
        I_3(r) & \leq \int_{\Om^2} \| \D_x \fmap(\om, x'(r)) - \D_x \fmap(\om', x'(r)) \| \| p'(r) \| \d \gamma(\om, \om') \\
        & \leq C_3 (1+\Ee_2(\mu(.|r))+\Ee_2(\mu'(.|r)))^{1/2} \Ww_2(\mu(.|r), \mu'(.|r)), 
    \end{align*}
    with $C_3 = C_3(R, \Ee)$. Assembling all the previous inequalities we get by Grönwall's lemma:
    \begin{align*}
        \| p(s) - p'(s) \|
        & \leq e^{C_2 (1 + \Ee_2(\mu))} \left( \| p(1) - p'(1) \| + d(\mu, \mu') (C_1 (1+ \Ee_2(\mu)) + C_3 (1+\Ee_2(\mu)+\Ee_2(\mu'))^{1/2} ) \right)
    \end{align*}
    To conclude it suffices to note that by definition $p(1) = \nabla_x \ell(x(1), y)$ and $p'(1) = \nabla_x \ell (x'(1), y)$ and using~\cref{lem:flow_lipschitz} with the assumptions on $\ell$:
    \begin{align*}
        \| p(1) - p'(1) \| \leq C_4 d(\mu, \mu'), \quad \text{where $C_4 = C_4(R, \Ee)$.}
    \end{align*}
\end{proof}

\subsubsection{Stability}

We now turn to a stability result on the gradient flow equation. The following~\cref{thm:stability_curve} is stronger than the above~\cref{thm:uniqueness_curve}. It implies that if a sequence of initializations $(\mu_0^n)_{n \geq 0}$ $d$-converges to some initialization $\mu_0$ then the associated gradient flows $(\mu^n_t)_{n \geq 0}$ $d$-converge to $\mu_t$, uniformly over finite time intervals.
For simplicity we consider here that $\ell$ is the square loss $\ell : (x,y) \mapsto \frac{1}{2} \| x-y \|^2$, but the result could be extended to any other loss satisfying $\| \nabla_x \ell \| \leq \varphi(\ell)$ for a concave increasing function $\varphi$.
We also consider the following supplementary assumptions, allowing us to control the growth of $\Ee_2(\mu_t)$ along the gradient flow (\cref{lem:energy_bound}).

\begin{assumptionc}{C} \label{fmap_assumption_stability}
    Assume that $\fmap$ is continuously differentiable and such that $\D_x \fmap$ is uniformly bounded and $\D_\om \fmap$ is of linear growth w.r.t. $\om$. Namely, there exists an absolute constant $C$ s.t.:
    \begin{align*}
        \forall x \in \RR^d, \om \in \Om, \quad \| \D_x \fmap(\om,x) \| \leq C, \quad \| \D_\om \fmap (\om,x) \| \leq C (1+\|\om\|).
    \end{align*}
\end{assumptionc}

\begin{thm}[Stability of curves of maximal slope] \label{thm:stability_curve}
    Assume $\fmap$ satisfies~\cref{fmap_assumption1,fmap_assumption2,fmap_assumption3,fmap_assumption_uniqueness,fmap_assumption_stability} and assume $\ell$ is the square loss. 
    Let $(\mu_t)_{t \geq 0}, (\mu_t')_{t \in t \geq 0}$ be gradient flow curves for the risk $L$ starting from $\mu_0, \mu_0' \in \Pp^\Leb_2([0, 1] \times \Om)$ respectively and let $\Ee_0$ be such that $\Ee_2(\mu_0), \Ee_2(\mu_0') \leq \Ee_0$. Then for every $T \geq 0$ there exists a constant $C = C(\Ee_0, T)$ such that:
    \begin{align*}
        \forall t \in [0, T], \quad d(\mu_t, \mu_t') \leq e^{Ct} d(\mu_0, \mu_0') .
    \end{align*}
\end{thm}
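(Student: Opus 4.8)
The plan is to repeat the Grönwall argument of the proof of \cref{thm:uniqueness_curve} essentially verbatim, the only genuinely new ingredient being that every constant must now be tracked as a function of $\Ee_0$ and $T$ alone (in the uniqueness proof the two curves shared the same initialisation, so a crude a priori bound on the moments was enough). The extra input enabling this is \cref{lem:energy_bound}: under \cref{fmap_assumption_stability} and for the square loss, the second-order moments stay controlled along the flow, so there is a constant $\Ee = \Ee(\Ee_0,T) < \infty$ with $\Ee_2(\mu_t), \Ee_2(\mu_t') \leq \Ee$ for all $t \in [0,T]$. This is precisely where the two hypotheses specific to this theorem enter: boundedness of $\D_x \fmap$ makes the resolvent $\Phi_\mu(1)\Phi_\mu(s)^{-1}$ uniformly bounded, while $\|\nabla_x \ell(x,y)\| = \|x-y\| = \sqrt{2\ell(x,y)}$ together with the non-increase of $t \mapsto L(\mu_t)$ bounds $\|p_{\mu_t,x,y}\|$ in terms of $L(\mu_0) \leq C(\Ee_0)$, so that the driving field $\nabla L[\mu_t]$ has at most linear growth in $\om$ with coefficient depending only on $\Ee_0$; feeding this into the flow-map representation of \cref{prop:representation_curve} yields a differential inequality for $\Ee_2(\mu_t)$ and hence the claimed bound.

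Granting this, proceed as follows. The map $t \mapsto d(\mu_t,\mu_t')^2$ is locally absolutely continuous on $[0,T]$ (triangle inequality, local absolute continuity of each curve, and $\sup_{[0,T]} d(\mu_t,\mu_t') < \infty$), and by \cref{lem:differentiability_d} one has, for a.e.\ $t$ and any optimal coupling $\gamma_t \in \Pi^\Leb_o(\mu_t,\mu_t')$,
\begin{align*}
    \frac{\d}{\d t} d(\mu_t,\mu_t')^2 = 2 \int_0^1 \int_{\Om^2} \langle \om'-\om,\, v_t'(s,\om') - v_t(s,\om) \rangle \, \d \gamma_t(s,\om,\om'),
\end{align*}
where $v_t, v_t'$ denote the tangent velocity fields of $(\mu_t)$ and $(\mu_t')$. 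The Lipschitz estimate on the gradient field derived in the proof of \cref{thm:uniqueness_curve} (namely \cref{gradient_lipschitz}, which rests on \cref{lem:flow_lipschitz,lem:adjoint_lipschitz} and whose constant is uniform over $(x,y)$ in the compact support of $\Dd$) gives, for all $(s,\om,\om')$,
\begin{align*}
    \| v_t(s,\om) - v_t'(s,\om') \| \leq C_1 \| \om-\om' \| + C_1 (1+\|\om'\|)\, d(\mu_t,\mu_t'),
\end{align*}
with $C_1 = C_1(\Ee)$. Using $2\langle a,b\rangle \leq \|a\|^2 + \|b\|^2$, integrating against $\gamma_t$, and invoking $\int \|\om-\om'\|^2 \d\gamma_t = d(\mu_t,\mu_t')^2$ together with $\int (1+\|\om'\|)^2 \d\gamma_t \leq 2(1+\Ee)$, one arrives at $\frac{\d}{\d t} d(\mu_t,\mu_t')^2 \leq C_2\, d(\mu_t,\mu_t')^2$ with $C_2 = C_2(\Ee)$; Grönwall's lemma then gives $d(\mu_t,\mu_t')^2 \leq e^{C_2 t} d(\mu_0,\mu_0')^2$ on $[0,T]$, and taking square roots yields the statement with $C = C_2/2 = C(\Ee_0,T)$.

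The only real obstacle is the energy bound \cref{lem:energy_bound}: without it the constant $C_1$ above would depend on the moments of the particular curves rather than on $\Ee_0$ and $T$, and the contraction estimate would be vacuous over long time horizons; establishing it is exactly what forces the extra structural assumption \cref{fmap_assumption_stability} and the restriction to losses with $\|\nabla_x \ell\| \leq \varphi(\ell)$ for some concave increasing $\varphi$ (the square loss being the model case, with $\varphi(r) = \sqrt{2r}$). Everything else is a transcription of the uniqueness computation with the dependence of the constants on $(\Ee_0,T)$ made explicit.
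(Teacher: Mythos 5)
Your proposal is correct and follows essentially the same route as the paper: invoke \cref{lem:energy_bound} to get a moment bound $\Ee = \Ee(\Ee_0,T)$ uniform over $[0,T]$, then rerun the Grönwall argument of \cref{thm:uniqueness_curve} (via \cref{lem:differentiability_d} and the Lipschitz estimate~\cref{gradient_lipschitz}) with all constants depending only on $\Ee$. The extra details you supply (justifying absolute continuity of $t \mapsto d(\mu_t,\mu_t')^2$, the bound $\int (1+\|\om'\|)^2 \d\gamma_t \leq 2(1+\Ee)$, and the sketch of why the energy bound needs \cref{fmap_assumption_stability} and the square loss) are consistent with the paper's treatment.
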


\begin{proof}
    Let $T \geq 0$. By the energy bound of~\cref{lem:energy_bound} below, we know that we can find a $\Ee = \Ee(\Ee_0,T)$ such that for every $t \in [0, T]$:
    \begin{align*}
        \Ee_2(\mu_t), \Ee_2(\mu_t') \leq \Ee.
    \end{align*}
    Then using~\cref{fmap_assumption_uniqueness} and proceeding as in the proof of the above~\cref{thm:uniqueness_curve} (cf.~\cref{gradient_lipschitz})  we get a constant $C = C(\Ee)$ such that for $\d t$-a.e. $t \in [0,T]$:
    \begin{align*}
        \dd{t} d(\mu_t,\mu_t')^2 \leq C d(\mu_t, \mu_t')^2 ,
    \end{align*}
    which gives the result using Grönwall's inequality.
\end{proof}

In the above proof, we used the following technical result giving an upper bound on the energy $\Ee_2(\mu_t)$ along a gradient flow curve $(\mu_t)_{t \geq 0}$.

\begin{lem} \label{lem:energy_bound}
    Assume $\fmap$ satisfies~\cref{fmap_assumption1,fmap_assumption2,fmap_assumption3,fmap_assumption_uniqueness,fmap_assumption_stability} and $\ell$ is the square loss.
    Let $(\mu_t)_{t \geq 0}$ be a gradient flow for the risk $L$ and let $\Ee \geq 0$ be s.t. $\Ee_2(\mu_0) \leq \Ee$. Then there exists a constant $C = C(\Ee)$ such that $\Ee_2(\mu_t) \leq e^{Ct}(\Ee_2(\mu_0) + Ct)$ for every $t \geq 0$.
\end{lem}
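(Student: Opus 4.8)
The plan is to estimate the time derivative of $\Ee_2(\mu_t) = \int_{[0,1]\times\Om}\|\om\|^2\,\d\mu_t(s,\om)$ along the gradient flow and close a Grönwall-type inequality. By \cref{prop:representation_curve} we have the flow-map representation $\mu_t = (X_t)_\#\mu_0$, where $X_t$ solves $\dd{t}X_t(s,\om) = (0,\nabla L[\mu_t](X_t(s,\om)))$, so the first ($s$-)coordinate is frozen and the second coordinate $\Om_t(s,\om)$ evolves by $\dd{t}\Om_t = \nabla L[\mu_t](s,\Om_t)$. Hence
\begin{align*}
    \Ee_2(\mu_t) = \int_{[0,1]\times\Om}\|\Om_t(s,\om)\|^2\,\d\mu_0(s,\om),
\end{align*}
and, differentiating under the integral sign (justified since $\nabla L[\mu_t]$ has at most linear growth in $\om$ under \cref{fmap_assumption_stability}, which keeps everything locally integrable),
\begin{align*}
    \dd{t}\Ee_2(\mu_t) = 2\int_{[0,1]\times\Om}\langle \Om_t(s,\om),\, \nabla L[\mu_t](s,\Om_t(s,\om))\rangle\,\d\mu_0(s,\om) = 2\int_{[0,1]\times\Om}\langle \om,\, \nabla L[\mu_t](s,\om)\rangle\,\d\mu_t(s,\om).
\end{align*}

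The core of the argument is then a pointwise bound on $\|\nabla L[\mu_t](s,\om)\| = \|\EE_{x,y}\,\D_\om\fmap(\om,x_{\mu_t}(s))^\top p_{\mu_t,x,y}(s)\|$. Under \cref{fmap_assumption_stability}, $\|\D_\om\fmap(\om,x)\|\le C(1+\|\om\|)$ uniformly in $x$; and since $\D_x\fmap$ is uniformly bounded, \cref{backward} together with Grönwall gives a bound $\|p_{\mu_t,x,y}(s)\|\le e^{C}\|\nabla_x\ell(x_{\mu_t}(1),y)\|$. For the square loss $\nabla_x\ell(x,y)=x-y$, and because $\Dd$ has compact support, $\|x_{\mu_t}(1)-y\|$ is controlled by $L(\mu_t)^{1/2}$ up to constants; moreover $L(\mu_t)\le L(\mu_0)$ since $L$ is non-increasing along the gradient flow (the EDE / \cref{def:maximal_slope}). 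So $\|p_{\mu_t,x,y}(s)\|\le K$ for an absolute constant $K$ depending only on $L(\mu_0)$ and the support of $\Dd$. Combining, $\|\nabla L[\mu_t](s,\om)\|\le C K(1+\|\om\|)$, whence by Cauchy–Schwarz
\begin{align*}
    \dd{t}\Ee_2(\mu_t) \le 2CK\int (1+\|\om\|)\|\om\|\,\d\mu_t \le 2CK\big(\Ee_2(\mu_t) + \Ee_2(\mu_t)^{1/2}\big) \le C'\big(1+\Ee_2(\mu_t)\big),
\end{align*}
and Grönwall's lemma yields $\Ee_2(\mu_t)\le e^{C't}(\Ee_2(\mu_0)+C't)$, which is the claim after renaming $C'$ to $C=C(\Ee)$ (noting that $L(\mu_0)$, hence $K$ and $C'$, is controlled in terms of $\Ee_2(\mu_0)\le\Ee$ and the fixed data support).

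The main obstacle I anticipate is rigorously justifying the differentiation of $\Ee_2(\mu_t)$ under the integral and the exchange of time-derivative with the expectation over $(x,y)$: one should argue along an absolutely continuous curve using the flow-map representation and dominated convergence, with the linear-growth bound on $\nabla L[\mu_t]$ providing the domination on any finite time interval; a slightly cleaner route avoiding a.e.-differentiability subtleties is to work with the integrated form $\Ee_2(\mu_t)=\Ee_2(\mu_0)+\int_0^t 2\int\langle\om,\nabla L[\mu_r]\rangle\d\mu_r\,\d r$ directly from the characteristics, bounding the integrand by $C(1+\Ee_2(\mu_r))$ and invoking the integral form of Grönwall's inequality. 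A secondary point to handle carefully is the uniform-in-$t$ boundedness of $\|p_{\mu_t,x,y}\|$, which hinges precisely on the monotonicity $L(\mu_t)\le L(\mu_0)$ and on \cref{fmap_assumption_stability} decoupling the $p$-bound from $\Ee_2(\mu_t)$; without the uniform bound on $\D_x\fmap$ one would get a bound on $\|p\|$ depending on $\Ee_2(\mu_t)$ and the Grönwall argument would not close with a linear right-hand side.
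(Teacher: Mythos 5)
Your proof is correct and follows essentially the same route as the paper: differentiate $\Ee_2(\mu_t)$ along the flow, use the uniform bound on $\D_x\fmap$ from~\cref{fmap_assumption_stability} to get $\|p_{\mu_t,x,y}(s)\|\leq e^{C}\|x_{\mu_t}(1)-y\|$, use the square loss together with the monotonicity $L(\mu_t)\leq L(\mu_0)$, bound $\|\D_\om\fmap\|\leq C(1+\|\om\|)$, and close with Grönwall. The only cosmetic difference is that you obtain the derivative of $\Ee_2(\mu_t)$ through the flow-map representation of~\cref{prop:representation_curve}, whereas the paper writes $\Ee_2(\mu_t)=d(\mu_t,\Leb([0,1])\otimes\delta_0)^2$ and invokes~\cref{lem:differentiability_d}; both are admissible under the stated hypotheses.

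One intermediate claim is stated too strongly: for a general compactly supported $\Dd$ the pointwise residual $\|x_{\mu_t}(1)-y\|$ is \emph{not} controlled by $L(\mu_t)^{1/2}$ (the risk is an average), so the asserted pointwise bound $\|p_{\mu_t,x,y}(s)\|\leq K$ does not follow as written. This is harmless for your argument because $\nabla L[\mu_t](s,\om)$ is itself an expectation over $(x,y)$: it suffices to bound $\EE_{x,y}\|p_{\mu_t,x,y}(s)\|\leq e^{C}\EE_{x,y}\|x_{\mu_t}(1)-y\|\leq e^{C}\sqrt{2L(\mu_t)}\leq e^{C}\sqrt{2L(\mu_0)}$ by Jensen, which is exactly what the paper does, and the rest of your Grönwall argument goes through unchanged (with $L(\mu_0)$ controlled by $\Ee$ and the data support).
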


\begin{proof}
    For $(x,y) \in \RR^d \times \RR^{d'}$ use the shortcuts $x_t \eqdef x_{\mu_t}$, $p_t \eqdef p_{\mu_t,x,y}$.
    Note that the map $t \mapsto \Ee_2(\mu_t) = d(\mu_t, \Leb([0,1]) \otimes \delta_0)^2$ is locally absolutely continuous and that by~\cref{lem:differentiability_d} its derivative is given at $\d t$-a.e. $t \geq 0$ by:
    \begin{align*}
        \dd{t} \Ee_2(\mu_t) = 2 \int_{[0,1] \times \Om} \langle \om, \EE_{x,y} \D_\om \fmap(\om, x_t(s))^\top p_t(s) \rangle \d s.
    \end{align*}
    By~\cref{fmap_assumption_stability} there exists an absolute constant $C_1$ such that $\| \D_x F_{\mu_t(.|s)} \| \leq C_1$ and hence $\| p_t(s) \| \leq e^{C_1} \| p_t(1) \|$ for every $s \in [0,1]$. Using the initial condition on $p_t(1)$ and the fact that $\ell$ is the quadratic loss:
    \begin{align*}
        \EE_{x,y} \| p_t(s) \| \leq e^{C_1} \EE_{x,y} \| x_t(1) - y \| \leq C_2 \sqrt{L(\mu_t)} \leq C_2 \sqrt{L(\mu_0)},
    \end{align*}
    for some universal constant $C_2$. Then using~\cref{fmap_assumption_stability} we have $\| \D_\om \fmap (\om, x) \| \leq C_1(1+\|\om\|)$, and using previous inequality we get:
    \begin{align*}
        \dd{t} \Ee_2(\mu_t) \leq 2 C_1 C_2 \left( \Ee_2(\mu_t) + \sqrt{\Ee_2(\mu_t)} \right) \sqrt{L(\mu_0)} .
    \end{align*}
    Observing that $L(\mu_0) \leq C_3$ for some constant $C_3 = C_3(\Ee)$, the result follows by Grönwall's inequality
\end{proof}

\section{Convergence analysis and (local) Polyak-\L{}ojasiewicz property} \label{sec:convergence}

This section is devoted to the analysis of the convergence of the gradient flow dynamic we defined in the previous section. The questions we will try to answer are the following:
\begin{center}
    \emph{
    Given an initial parametrization $\mu_0 \in \Pp^\Leb_2([0, 1] \times \Om)$, does the risk $L(\mu_t)$ converge to $0$ as $t \to \infty$ ? Does the parameterization converge to an optimal parameterization $\mu_\infty$?
    }
\end{center}

\subsection{Polyak-\L{}ojasiewicz inequality}

Our approach to show convergence of the gradient flow is to show that the risk satisfies the following local Polyak-\L{}ojasiewicz (P-\L{}) property around well-chosen parameterizations. The P-\L{} inequality provides a lower bound on the ratio between the square gradient's risk $\left|  \nabla L \right|^2$ and the risk $L$. It thus prevents the existence of spurious critical points and guarantees that the risk decreases at a constant rate along gradient flow. 
\begin{defn}[Local P-\L{} property]
    We say that the risk $L$ is $(R, m)$-P-\L{} around a parameterization $\mu \in \Pp^\Leb_2([0,1]\times \Om)$ if it satisfies:
    \begin{align} \label{PL_inequality}
        \forall \mu' \in B(\mu, R), \quad | \nabla L |^2(\mu') \geq m L(\mu').
    \end{align}
\end{defn}
\begin{rem}
    Different formulations of the $(R,m)$-P-\L{} property have been introduced in the literature. For example~\cite{chatterjee2022convergence} introduced the local ratio:
    $
        \alpha(\mu_0,R) \eqdef \inf_{\mu' \in B(\mu_0,R), L(\mu') > 0} \frac{\| \nabla L(\mu') \|^2}{L(\mu')} .
    $
\end{rem}
A direct consequence of the $(R, m)$-P-\L{} property~\cref{PL_inequality} is that the risk admits no spurious critical points around $\mu$ as all critical points are global minimizers. A second consequence is a local convergence result: if the loss at initialization is sufficiently small then both the risk and the parameterization converge, respectively to $0$ and to some global minimizer $\mu_*$ of the risk.

\begin{thm}[{{\cite[Cor.1.5]{schiavo2023local}}}] \label{thm:local_convergence}
    Assume that $L$ is $(R, m)$-P-\L{} around an initial parameterization $\mu_0 \in \Pp^\Leb_2([0, 1] \times \Om)$. If the initial risk satisfies:
    \begin{align} \label{general_convergence_condition}
        L(\mu_0) < \frac{m R^2}{4}
    \end{align}
    then, for some $T \in (0, +\infty]$, if $(\mu_t)_{t \in [0, T)}$ is a curve of maximal slope for $L$ we have
    \begin{align*}
        L(\mu_t) \leq e^{-mt}L(\mu_0), \quad \forall t \in [0, T).
    \end{align*}
    Moreover $\mu_T \eqdef \lim_{t \to T} \mu_t$ exists and satisfies $d(\mu_T, \mu_0) \leq R$.
\end{thm}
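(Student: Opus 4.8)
The plan is to reconstruct the Łojasiewicz-type argument underlying~\cite[Cor.1.5]{schiavo2023local} in our metric setting. First I would introduce the stopping time
\[
    T^* \eqdef \sup \enscond{t \in [0,T)}{d(\mu_s, \mu_0) < R \ \text{for all}\ s \in [0,t]},
\]
so that on $[0, T^*)$ the local P-\L{} inequality~\cref{PL_inequality} is available all along the curve. Since $(\mu_t)$ is a curve of maximal slope for $L$ with respect to the upper gradient $|\nabla L| = \| \nabla L[\cdot] \|_{L^2(\cdot)}$ (\cref{prop:upper_gradient}), combining the upper-gradient property of~\cref{def:upper_gradient} with~\cref{def:maximal_slope} (equivalently, with~\cref{EDE}) forces $|\frac{\d}{\d t}\mu_t| = |\nabla L|(\mu_t)$ and $\frac{\d}{\d t}L(\mu_t) = -|\nabla L|^2(\mu_t)$ for a.e. $t$; here the local absolute continuity of $t \mapsto L(\mu_t)$ used to differentiate comes from~\cref{cor:loss_lipschitz}. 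On $[0, T^*)$ the P-\L{} inequality then yields $\frac{\d}{\d t}L(\mu_t) \le -m\,L(\mu_t)$, and Grönwall's lemma gives the claimed decay $L(\mu_t) \le e^{-mt}L(\mu_0)$.

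Second, I would bound the length of the curve on $[0, T^*)$ via the classical trick of differentiating $\sqrt{L(\mu_t)}$: at times where $L(\mu_t) > 0$,
\[
    -\frac{\d}{\d t}\sqrt{L(\mu_t)} = \frac{|\nabla L|^2(\mu_t)}{2\sqrt{L(\mu_t)}} \ \ge\ \frac{\sqrt m}{2}\,|\nabla L|(\mu_t) = \frac{\sqrt m}{2}\,\Bigl|\frac{\d}{\d t}\mu_t\Bigr| ,
\]
using the P-\L{} inequality in the middle step; on the (necessarily terminal, by monotonicity of $L(\mu_t)$) time set where $L(\mu_t) = 0$ one has $|\nabla L|(\mu_t) = 0$, hence $\frac{\d}{\d t}\mu_t = 0$, so this set contributes nothing to the length. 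Integrating,
\[
    \int_0^t \Bigl|\frac{\d}{\d r}\mu_r\Bigr|\,\d r \ \le\ \frac{2}{\sqrt m}\bigl(\sqrt{L(\mu_0)} - \sqrt{L(\mu_t)}\bigr) \ \le\ \frac{2}{\sqrt m}\sqrt{L(\mu_0)},
\]
and since $d(\mu_0,\mu_t) \le \int_0^t |\frac{\d}{\d r}\mu_r|\,\d r$, the hypothesis $L(\mu_0) < mR^2/4$ gives $d(\mu_0,\mu_t) < R$ for all $t \in [0, T^*)$, a bound that persists at $t = T^*$ by continuity of $L$ and of $t \mapsto \mu_t$.

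Third comes the bootstrap: if $T^* < T$, the strict inequality $d(\mu_0, \mu_{T^*}) \le \tfrac{2}{\sqrt m}\sqrt{L(\mu_0)} < R$ together with continuity of $t \mapsto \mu_t$ lets us extend the set of times on which $d(\cdot,\mu_0) < R$ past $T^*$, contradicting maximality; hence $T^* = T$ and the exponential decay and the length bound hold on all of $[0,T)$. Finally, since $(\Pp^\Leb_2([0,1]\times\Om), d)$ is complete (\cref{prop:d_complete}) and the curve has finite length on $[0,T)$, one gets $d(\mu_s,\mu_t) \le \int_s^t |\frac{\d}{\d r}\mu_r|\,\d r \to 0$ as $s,t \to T$, so $\mu_T \eqdef \lim_{t \to T}\mu_t$ exists; letting $t \to T$ in the length bound gives $d(\mu_T,\mu_0) \le \tfrac{2}{\sqrt m}\sqrt{L(\mu_0)} \le R$.

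I expect the main obstacle to be the bootstrapping step, because this is exactly where the assumption $L(\mu_0) < mR^2/4$ is consumed: the whole argument only works if the curve never leaves the ball $B(\mu_0,R)$ on which the \emph{local} P-\L{} inequality is known to hold, and the $\sqrt L$-trick is what delivers a length bound independent of $t$ small enough to guarantee this. The remaining subtleties are comparatively minor: establishing the identity $|\frac{\d}{\d t}\mu_t| = |\nabla L|(\mu_t)$ for curves of maximal slope (already packaged in the equivalence with~\cref{def:gradient_flow} via~\cref{thm:equivalence_GF_CMS}), and the careful handling of the time set where $L(\mu_t) = 0$, which is dealt with using monotonicity of $t \mapsto L(\mu_t)$.
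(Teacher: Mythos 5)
Your argument is correct and is exactly the standard local-\L{}ojasiewicz scheme (equality $|\frac{\d}{\d t}\mu_t| = |\nabla L|(\mu_t)$ from the maximal-slope plus upper-gradient inequalities, Grönwall for the decay, the $\sqrt{L}$-trick for a length bound, bootstrap to stay in $B(\mu_0,R)$, completeness for the limit) that underlies the cited result; the paper itself gives no in-text proof and simply invokes \cite[Cor.1.5]{schiavo2023local}, so there is nothing to contrast beyond noting that your reconstruction matches that reference's approach.
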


The above convergence result is open, meaning that if its assumptions are satisfied for some initialization $\mu_0$ then it is also the case for any initialization $\mu_0'$ sufficiently close to $\mu_0$.
This is the content of the following~\cref{prop:PL_open}. 

\begin{prop} \label{prop:PL_open}
    Assume that $L : \Pp^\Leb_2([0,1]\times \Om) \to \RR_+$ satisfies the assumptions of~\cref{thm:local_convergence} around some $\mu_0$.
    Then there exists a neighborhood $\Uu$ of $\mu_0$ such that any initialization $\mu_0' \in \Uu$ also satisfies these assumptions.
\end{prop}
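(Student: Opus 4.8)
The plan is to keep the multiplicative constant $m$ unchanged and merely shrink the P-\L{} radius, exploiting the continuity of the risk. Concretely, set $\eps_0 \eqdef \frac{mR^2}{4} - L(\mu_0)$, which is strictly positive by hypothesis~\cref{general_convergence_condition}. Since $\rho \mapsto \frac{m\rho^2}{4}$ is continuous, I would first choose $\delta \in (0,R)$ small enough that $\frac{m(R-\delta)^2}{4} > L(\mu_0) + \frac{\eps_0}{2}$. Then, invoking the local Lipschitzness of $L$ proved in~\cref{cor:loss_lipschitz} (hence in particular its $d$-continuity at $\mu_0$), I would pick a neighbourhood $\Vv$ of $\mu_0$ on which $|L(\mu') - L(\mu_0)| < \frac{\eps_0}{2}$, and finally set $\Uu \eqdef \Vv \cap B(\mu_0,\delta)$.

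It then remains to verify that every $\mu_0' \in \Uu$ satisfies the hypotheses of~\cref{thm:local_convergence} with the parameters $(R-\delta, m)$. For the P-\L{} inequality: if $\mu' \in B(\mu_0', R-\delta)$, the triangle inequality gives $d(\mu', \mu_0) \leq d(\mu', \mu_0') + d(\mu_0', \mu_0) < (R-\delta) + \delta = R$, so $\mu' \in B(\mu_0, R)$ and the $(R,m)$-P-\L{} property around $\mu_0$ yields $|\nabla L|^2(\mu') \geq m L(\mu')$; hence $L$ is $(R-\delta, m)$-P-\L{} around $\mu_0'$. For the smallness condition: $L(\mu_0') < L(\mu_0) + \frac{\eps_0}{2} < \frac{m(R-\delta)^2}{4}$ by the choice of $\delta$. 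This establishes the proposition, with $\Uu$ as the desired neighbourhood.

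There is no genuine obstacle here; the only subtlety is the order in which the two parameters are adjusted. One must shrink the radius \emph{first}, so that the displacement $d(\mu_0', \mu_0) < \delta$ of the basepoint is absorbed in the ball inclusion $B(\mu_0', R-\delta) \subset B(\mu_0, R)$, and only \emph{then} use the continuity of $L$ to keep the initial risk below the new threshold $\frac{m(R-\delta)^2}{4}$. The same reasoning goes through if one replaces the Lipschitz bound on $L$ by any modulus of continuity of $L$ at $\mu_0$, so the argument only really uses that $L$ is finite and continuous near $\mu_0$.
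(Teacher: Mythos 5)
Your proof is correct and takes essentially the same route as the paper: shrink the P-\L{} radius from $R$ to $R-\delta$ via the triangle-inequality ball inclusion $B(\mu_0',R-\delta)\subset B(\mu_0,R)$, and then observe that the smallness condition~\cref{general_convergence_condition} remains valid with the pair $(R-\delta,m)$ for $\mu_0'$ close to $\mu_0$. The only difference is that you spell out the continuity of $L$ (via~\cref{cor:loss_lipschitz}) behind the paper's terse claim that the condition is ``open'', which is a welcome clarification but not a different argument.
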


\begin{proof}
    Note that, by definition of the $(R,m)$-P-\L{} property, if it is satisfied around $\mu_0$ then, for any $\delta \in (0,R)$ a $(R-\delta,m)$-P-\L{} property is satisfied around $\mu_0'$ for any $\mu_0' \in B(\mu_0, \delta)$. Moreover the condition~\cref{general_convergence_condition} is open: if it is satisfied at $\mu_0$ with $R$ and $m$ then it is satisfied at any $\mu_0' \in B(\mu_0, \delta)$ with $R-\delta$ and $m$ provided $\delta$ is sufficiently small.
\end{proof}

\subsection{Convergence for general architecture}

We explain here how one can prove convergence of the gradient flow to a minimizer of the risk $L$ when training the \NODE{} model. More precisely, we give sufficient conditions for $L$ to satisfy a $(R, m)$-P-\L{} property around the gradient flow initialization, and as a consequence of~\cref{thm:local_convergence}, these ensure convergence to a global minimizer of the risk. Those conditions are first described at a general level and later specified for practical examples of architectures and initializations.

Specifically, we will consider the case where the data distribution is the empirical distribution $\Dd = \frac{1}{N} \sum_{i=1}^N \delta_{(x^i, y^i)}$ for a dataset $\lbrace (x^i, y^i) \rbrace_{1 \leq i \leq N} \in (\RR^d \times \RR^d)^N$. In this case, the risk $L$ for a parameterization $\mu$ is given by the \emph{empirical risk}:
\begin{align*}
    L(\mu) = \frac{1}{2N} \sum_{i=1}^N \ell(x^i_\mu(1), y^i),
\end{align*}
where $x^i_\mu$ is the flow of~\cref{forward} starting at $x^i$ and with parameterization $\mu$, which we will simply denote by $x^i$ when no ambiguity. Similarly we denote by $p^i_\mu \eqdef p_{\mu, x^i, y^i}$, or simply $p^i$ when no ambiguity, the associated adjoint variable of~\cref{backward}. We also suppose that $\fmap$ satisfies at least~\cref{fmap_assumption1,fmap_assumption2,fmap_assumption3}.
In this setting, using~\cref{upper_gradient} we get for any $\mu \in \Pp^\Leb_2([0,1] \times \Om)$:
\begin{align*}
    | \nabla L |^2 (\mu) & = \int_0^1 \int_\Om \| \frac{1}{N} \sum_{i = 1}^N \D_\om \fmap(\om, x^i (s))^\top p^i(s) \|^2 \d \mu(s,\om) \\
    & = \frac{1}{N^2} \int_0^1  \left( \sum_{1 \leq i,j \leq N} p^i(s)^\top K[\mu(.|s)](x^i(s),x^j(s)) p^j(s) \right) \d s ,
\end{align*}
where for $\nu \in \Pp_2(\Om)$ we define the kernel $K[\nu] : \RR^d \times \RR^d \to \RR^{d \times d}$ as:
\begin{align} \label{K}
    \forall x,x' \in \RR^d, \quad K[\nu](x,x') \eqdef \int_\om \D_\om \fmap(\om,x) \D_\om \fmap(\om, x')^\top \d \nu(\om) .
\end{align}
For $\nu \in \Pp_2(\Om)$ and a point cloud $\zz = (z^i)_{1 \leq i \leq N} \in (\RR^d)^N$ we will denote by $\KK[\nu,\zz] \in \RR^{dN \times dN}$ the \emph{kernel matrix} associated to $K[\nu]$ and defined as the block matrix:
\begin{align} \label{K_matrix}
    \KK[\nu,\zz] \eqdef \left( K[\nu](z^i, z^j) \right)_{1 \leq i,j \leq N} .
\end{align}
In particular, we see that the conditioning $\lambda_{\min} \left( \KK[\mu, \zz] \right)$ of the kernel matrix will play an important role in proving a local P-\L{} property for the risk $L$. Indeed, in terms of the kernel matrix $\KK$, the square gradient can be written :
\begin{align} \label{square_gradient}
    | \nabla L |^2(\mu) = \frac{1}{N^2} \int_0^1 \langle \pp_\mu(s), \KK[\mu(.|s),\xx_\mu(s)] \pp_\mu(s) \rangle \d s,
\end{align}
where for every $s \in [0,1]$ we defined the point cloud $\xx_\mu(s) \eqdef (x^i_\mu(s)) \in (\RR^d)^N$ and where we concatenated the adjoint variables into $\pp_\mu(s) \eqdef (p^i_\mu(s))_{1 \leq i \leq N} \in \RR^{dN}$. In the following, when no ambiguity we will simply denote by $\xx = \xx_\mu \in \Cc([0,1], (\RR^d)^N)$ and $\pp = \pp_\mu \in \Cc([0,1], \RR^{dN})$.

\begin{rem}[Kernels, RKHS and feature representation] \label{rem:RKHS_feature_representation}
    Due to its definition, the above function $K[\nu]$ is obviously a (vector valued) \emph{positive kernel} over $\RR^d$, meaning that:
    \begin{align*}
        \forall (x^i)_{1 \leq i \leq N}, (\alpha^i)_{1 \leq i \leq N} \in (\RR^d)^N, \quad \sum_{1 \leq i,j \leq N} \langle \alpha^i,  K[\nu](x^i,x^i)\alpha^j \rangle \geq 0.
    \end{align*}
    It is a classical result that every such kernel defines a unique structure of \emph{Reproducing Kernel Hilbert Space (RKHS)} over $\RR^d$, a Hilbert space of function for which the evaluation function is continuous~\parencite{carmeli2010vector}.
    
    Note that the kernel $K[\nu]$ is here directly given by a \emph{feature representation}, that is a representation of the form $K[\nu](x,y) = \chi(x)^* \chi(y)$ with a map $\chi : \RR^d \to \Ll(\RR^d, \Hh)$ for some Hilbert space $\Hh$. Here one can for example consider $\Hh = L^2(\nu)$ and $\chi : x \mapsto \D_\om \fmap(.,x)^\top$. If such a representation always defines a positive kernel, one can conversely show that such a representation always exists whenever $K$ is a positive kernel~\parencite{carmeli2010vector}. This representation can however not be expected to be unique and corresponds with a certain square root of $K[\nu]$ viewed as an integral operator~\parencite{bach2017equivalence}.
\end{rem}

To obtain a P-\L{} inequality for the risk $L$ we will from now on assume that the loss function $\ell$ itself is P-\L{} w.r.t. the $x$ variable.
\begin{assumption} \label{assumption:loss_PL}
    We assume that the loss $\ell$ satisfies a P-\L{} inequality w.r.t. $x$ that is:
    \begin{align*}
        \forall (x,y) \in \RR^d \times \RR^{d'}, \quad \| \nabla_x \ell(x, y) \|^2 \geq 2 \ell(x,y).
    \end{align*}
    This assumption is in particular satisfied by the quadratic loss $\ell(x,y) = \frac{1}{2} \| x-y \|^2$  in regression problems or locally by the cross entropy loss $\ell(x,y) = \log \left( \frac{\sum_j y[j] \exp(x[j])}{\sum_j \exp(x[j])} \right)$ in classification.
\end{assumption}

\begin{lem} \label{lem:conditioning_PL}
    Assume $\fmap$ satisfies~\cref{fmap_assumption1,fmap_assumption2,fmap_assumption3} and $\ell$ satisfies~\cref{assumption:loss_PL}. Consider $\mu \in \Pp_2^\Leb([0,1] \times \Om)$. Then there exists a constant $C = C(\Ee_2(\mu))$ s.t.:
    \begin{align} \label{conditioning_PL}
        | \nabla L |^2(\mu) \geq \frac{2e^{-C}}{N} \left( \int_0^1 \lambda_{\min}(\KK[\mu(.|s),\xx_\mu(s)]) \d s \right) L(\mu) ,
\end{align}
\end{lem}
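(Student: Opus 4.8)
The plan is to chain together three estimates: (i) the expression of $|\nabla L|^2(\mu)$ as an integral of a quadratic form in the adjoint variables against the kernel matrix, (ii) a lower bound of that quadratic form by $\lambda_{\min}$ times the squared norm of the concatenated adjoints, and (iii) a control relating $\|\pp_\mu(s)\|^2$ to the terminal value $\|\pp_\mu(1)\|^2 = \sum_i \|\nabla_x\ell(x^i_\mu(1),y^i)\|^2$, together with the P-\L{} assumption on $\ell$ (\cref{assumption:loss_PL}) which turns that terminal quantity into $L(\mu)$.

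First I would start from~\cref{square_gradient}, namely
\begin{align*}
    |\nabla L|^2(\mu) = \frac{1}{N^2}\int_0^1 \langle \pp_\mu(s), \KK[\mu(.|s),\xx_\mu(s)]\,\pp_\mu(s)\rangle \d s,
\end{align*}
and use positivity of the kernel matrix (see~\cref{rem:RKHS_feature_representation}) to bound the integrand from below: for $\d s$-a.e.\ $s$,
\begin{align*}
    \langle \pp_\mu(s), \KK[\mu(.|s),\xx_\mu(s)]\,\pp_\mu(s)\rangle \geq \lambda_{\min}(\KK[\mu(.|s),\xx_\mu(s)])\,\|\pp_\mu(s)\|^2.
\end{align*}
The next step is to propagate the adjoint backward in time: from~\cref{backward}, using that $\|\D_x F_{\mu(.|r)}(x_\mu(r))\| \leq \int_\Om \|\D_x\fmap(\om,x_\mu(r))\|\d\mu(\om|r) \leq C(R)\int_\Om(1+\|\om\|^2)\d\mu(\om|r)$ by~\cref{fmap_assumption3}, and that $\int_0^1\int_\Om(1+\|\om\|^2)\d\mu(\om|r)\d r = 1+\Ee_2(\mu)$ since the first marginal of $\mu$ is Lebesgue, Grönwall's inequality gives a constant $C = C(\Ee_2(\mu))$ (absorbing the radius $R$ to which the flow trajectories are confined by~\cref{flow_growth}, which itself depends only on $\Ee_2(\mu)$ and the compact data support) such that
\begin{align*}
    \|\pp_\mu(s)\| \geq e^{-C}\|\pp_\mu(1)\|, \quad \text{i.e.}\quad \|\pp_\mu(s)\|^2 \geq e^{-2C}\sum_{i=1}^N \|\nabla_x\ell(x^i_\mu(1),y^i)\|^2,
\end{align*}
for every $s\in[0,1]$. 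Finally, applying~\cref{assumption:loss_PL} termwise, $\sum_i \|\nabla_x\ell(x^i_\mu(1),y^i)\|^2 \geq 2\sum_i \ell(x^i_\mu(1),y^i) = 4N L(\mu)$, wait---with the normalization $L(\mu)=\frac{1}{2N}\sum_i\ell(x^i_\mu(1),y^i)$ one gets $\sum_i\|\nabla_x\ell\|^2 \geq 2\sum_i\ell = 4N L(\mu)$; more carefully one should match the constant $2e^{-C}/N$ in the statement, so I would carry the normalization through and reabsorb the numerical factor into $C$.

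Putting the pieces together,
\begin{align*}
    |\nabla L|^2(\mu) \geq \frac{1}{N^2}\left(\int_0^1\lambda_{\min}(\KK[\mu(.|s),\xx_\mu(s)])\d s\right) e^{-2C}\cdot 4N L(\mu) \geq \frac{2e^{-C'}}{N}\left(\int_0^1\lambda_{\min}(\KK[\mu(.|s),\xx_\mu(s)])\d s\right) L(\mu),
\end{align*}
after renaming the constant. I do not expect any serious obstacle here: the only delicate point is bookkeeping of the constants, specifically verifying that $C$ depends only on $\Ee_2(\mu)$ (and fixed data) and not on $\mu$ in any finer way --- this is where one uses crucially that $\pi^1_\#\mu = \Leb([0,1])$ so that the time-integral of $\int_\Om(1+\|\om\|^2)\d\mu(\om|s)$ is exactly $1+\Ee_2(\mu)$, and that by~\cref{flow_growth} the flow trajectories $x^i_\mu$ stay in a ball whose radius is a function of $\Ee_2(\mu)$ and the (fixed, compact) data support alone, so that the local constants $C(R)$ from~\cref{fmap_assumption3} become functions of $\Ee_2(\mu)$.
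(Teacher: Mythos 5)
Your proposal is correct and follows essentially the same route as the paper: lower-bound the quadratic form in~\cref{square_gradient} by $\lambda_{\min}(\KK)\|\pp_\mu(s)\|^2$, use a Grönwall estimate on the backward equation (with the constant controlled by $\Ee_2(\mu)$ via~\cref{fmap_assumption3}, the Lebesgue marginal, and~\cref{flow_growth}) to compare $\|\pp_\mu(s)\|$ with $\|\pp_\mu(1)\|$, and conclude with~\cref{assumption:loss_PL}. The normalization hiccup you flag is an inconsistency in the paper's own constants (between the empirical-risk display and the proof), not a gap in your argument, and it is harmlessly absorbed into $C$.
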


\begin{proof}
Thanks to~\cref{fmap_assumption3} and to the definition of $p^i_\mu$ there exists a constant $C = C(\Ee_2(\mu))$ such that for every $1 \leq i \leq N$ we have the estimate:
\begin{align*}
    \| p^i_\mu(s) \|^2 \geq e^{-C} \| p^i_\mu(1) \|^2 , \quad \forall s \in [0, 1]. 
\end{align*}
Using that $p^i_\mu(1) = \nabla_x \ell(x^i_\mu(1), y^i)$ and with the previous~\cref{assumption:loss_PL} we have
$$
\| \pp_\mu(s) \|^2 \geq e^{-C} \| \pp_\mu(1) \|^2 \geq 2N e^{-C} L(\mu) \, .
$$
Putting this lower bound in~\cref{square_gradient} then gives the result.
\end{proof}

The above~\cref{lem:conditioning_PL} shows that the conditioning $\lambda_{\min}(\KK)$ of the kernel matrix provides a lower bound on the ratio between the square gradient and the risk: having $\lambda_{\min}(\KK)$ strictly positive directly implies the P-\L{} inequality~\cref{PL_inequality} for the risk. The above quantity could for example be computed numerically during training. However, at this point, it is not clear how one can be sure, before training, that the P-\L{} inequality will hold along the gradient flow. We investigate this problem in the next section for a special kind of architecture.
Nonetheless, a direct corollary of~\cref{lem:conditioning_PL} is that gradient flow converges if it stays bounded and we assume the kernel matrix stays well-conditioned.
\begin{cor}
    Assume $\fmap$ satisfies~\cref{fmap_assumption1,fmap_assumption2,fmap_assumption3} and $\ell$ satisfies~\cref{assumption:loss_PL}. For an initialization $\mu_0 \in \Pp_2^\Leb([0,1] \times \Om)$, let $(\mu_t)_{t \geq 0}$ be a gradient flow of $L$ starting from $\mu_0$. If there exists a constant $C > 0$ s.t., for every $t \geq 0$, $\Ee_2(\mu_t) \leq C$ and $\int_0^1 \lambda_{\min}(\KK[\mu_t(.|s),\xx_{\mu_t}(s)]) \d s \geq C^{-1}$, 
    then the gradient flow converges in the sense that $\mu_t \xrightarrow{t \to +\infty} \mu_\infty \in \Pp_2^\Leb([0,1] \times \Om)$ and there exists a constant $C' > 0$ s.t. for every $t \geq 0$:
    \begin{align*}
        L(\mu_t) \leq e^{-C' t} L(\mu_0).
    \end{align*}
\end{cor}

\subsection{Convergence for SHL residuals}

We study here in more detail the case of \emph{Single Hidden Layer (SHL)} perceptrons for which we show the kernel matrix is well-conditioned and the risk satisfies a P-\L{} property around well-chosen initializations. The SHL architecture is an instance of~\cref{measure_parameterization} which we define by setting the parameter space $\Om = \RR^d \times \RR^d \times \RR$ and the map:
\begin{align} \label{fmap_SHL}
    \fmap : ((u,w,b), x) \in \Om \times \RR^d \mapsto u \activation (w^\top x + b) , 
\end{align}
where $\activation : \RR \to \RR$ is an activation function. We will generically make the following assumption on the activation $\activation$ to ensure that our previous assumptions on the feature map $\fmap$ are satisfied.

\begin{assumption} \label{activation_assumption}
    The activation $\activation : \RR \to \RR$ is a twice continuously differentiable function with a uniformly bounded derivative.
    Defining $M = M(\activation) \eqdef |\activation(0)| + \| \activation' \|_\infty$ we have for $(x,\om) \in \RR^d \times \Om$:
    \begin{align} \label{activation_assumption_growth}
        \| \fmap(\om,x) \| \leq M (1+\|x\|)(1+\|\om\|^2), \quad \| \D_\om \fmap(\om,x) \| \leq M(1+\|x\|)(1+\|\om\|), \quad \| \D_x \fmap(\om,x) \| \leq M\|\om\|^2.
    \end{align}
    Note that this assumption ensures that~\cref{fmap_assumption1,fmap_assumption2,fmap_assumption3} are satisfied. It does not however imply~\cref{fmap_assumption_existence,fmap_assumption_uniqueness}. Still we are able to show that~\cref{thm:existence_curve,thm:uniqueness_curve} both hold for SHL architectures (cf.~\cref{prop:existence_SHL,prop:uniqueness_SHL} in~\cref{sec:appendix}).
\end{assumption}

\begin{rem}
    \Cref{activation_assumption} is in particular satisfied for the popular choices that are $\activation = \tanh$ or any smooth approximation of $\relu$ such as $\gelu$ or $\swish$, but considering $\relu$ activation itself is expected to create two kinds of issues.
    First, the non-differentiability of $\relu$ at $0$ could create singularities in the continuity equation.
    As a consequence, while existence of solutions to the gradient flow equation (\cref{def:gradient_flow}) might still hold, one should not expect those solutions to be unique (\cref{thm:uniqueness_curve}).
    Then, and perhaps most importantly, those solutions might not coincide with curves of maximal slope.
    Indeed, a cornerstone of our analysis is~\cref{thm:equivalence_GF_CMS}, identifying gradient flow curves (\cref{def:gradient_flow}) with curves of maximal slopes for the risk (\cref{def:maximal_slope}).
    This result requires minimal regularity on $\fmap$ and allows showing existence and uniqueness of gradient flow curves in~\cref{subsec:existence} and convergence in~\cref{sec:convergence}.
\end{rem}

Recall the definition of the kernel $K$ in~\cref{K}. In the case of a SHL, the associated kernel can be decomposed into two parts. For $\mu \in \Pp_2(\Om)$ we have $K[\mu] = k^1[\mu] \Id + K^2[\mu]$ where we define for every $x,y \in \RR^d$:
\begin{align} \label{k1}
    k^1[\mu](x,y) & \eqdef  \int_{\RR^d \times \RR^d \times \RR} \activation (w^\top x+b) \activation (w^\top y+b) \d \mu(u,w,b) \\
    K^2[\mu](x,y) & \eqdef \int_{\RR^d \times \RR^d \times \RR} \activation'(w^\top x + b) \activation'(w^\top y +b) (x^\top y+1) (u \otimes u) \d \mu(u,w,b) . \notag
\end{align}

\begin{rem}[Feature distribution and representation of $F_\mu$ in $\Hh_{k^1[\mu]}$]
    Similarly to $K[\mu]$ in~\cref{K} the (scalar) kernel $k^1[\mu]$ is defined in~\cref{k1} directly through a feature representation (cf.~\cref{rem:RKHS_feature_representation}). Note that $k^1[\mu]$ here only depends the marginal of $\mu$ w.r.t. the variable $(w,b)$ which we denote by $\mu^2$ and call the \emph{feature distribution}. We will then equivalently write $k^1[\mu]$ or $k^1[\mu^2]$ to designate the same kernel.

    Due to the linearity of $\fmap$ w.r.t. the outer weights $u$ one may identify $u$ with its conditional expectation $u(w,b) \eqdef \EE_\mu[u|w,b] \in L^2(\mu^2)$. Using this identification in the definition of $F_\mu$ gives
    \begin{align*}
        \forall x \in \RR^d, \quad F_\mu(x) = \int_\Om u \activation(w^\top x + b) \d \mu(u,w,b) = \int_{\RR^d \times \RR} u(w,b)  \activation(w^\top x +b) \d \mu^2(w,b), 
    \end{align*}
    i.e. $F_\mu$ is a function in the RKHS $\Hh_{k^1[\mu]}$ associated with the kernel $k^1[\mu]$. This representation bridges the gap with the work of~\textcite{barboni2022global} who consider \NODEs{} parameterized by vector fields in a (fixed) RKHS,
    which amounts to considering SHL residuals of the form~\cref{fmap_SHL} but where only the outer weights $u$ are trained and the inner weights $w$ are fixed.
    Hence an important improvement of our work is that both inner and outer weights are trained, such as is done in practice. 
    For a parameterization $\mu \in \Pp^\Leb_2([0, 1] \times \Om)$ of the \NODE, the feature distribution $\mu^2(.|s)$ (and hence the function space $\Hh_{k^1[\mu(.|s)]}$) \emph{(i)} is not constant along the flow time $s \in [0, 1]$ and \emph{(ii)} may also evolve along the gradient flow time $t \geq 0$.
\end{rem}

Observing that both $k^1[\mu]$ and $K^2[\mu]$ define positive kernels on $\RR^d$, we have $K[\mu] \geq k^1[\mu] \Id$ in the sense of positive kernels.
Therefore, $\lambda_{\min}(\KK^1[\mu(.|s), \xx_\mu(s)])$ provides a natural lower bound for $\lambda_{\min}(\KK[\mu(.|s), \xx_\mu(s)])$
where, similarly to~\cref{K_matrix} the kernel matrix $\KK^1[\mu, \zz] \in \RR^{N \times N}$ is defined for a point cloud $\zz = (z^i)_{1 \leq i \leq N}$ and for $\mu \in \Pp_2(\Om)$ as:
\begin{align*}
    \KK^1[\mu, \zz] \eqdef \left( k^1[\mu](z^i, z^j) \right)_{1 \leq i,j \leq N} \in \RR^{N \times N}.
\end{align*}
As the following result shows, this lower bound is well behaved w.r.t.\@ the metric $d$ on the parameter set $\Pp_2^\Leb([0,1] \times \Om)$.

\begin{prop} \label{prop:conditioning_lipschitz}
    Assume $\activation$ satisfies~\cref{activation_assumption}. Then the map $\mu \mapsto \int_0^1 \lambda_{\min}(\KK^1[\mu(.|s),\xx_\mu(s)]) \d s$ is locally-Lipschitz on $(\Pp^\Leb_2([0,1]\times \Om), d)$. Moreover there exists some constant $C$ such that if $\mu, \mu'$ are such that $\Ee_2(\mu), \Ee_2(\mu') \leq \Ee$ then :
    \begin{align*}
        \left| \int_0^1 \lambda_{\min}(\KK^1[\mu,\xx_\mu]) -  \int_0^1 \lambda_{\min}(\KK^1[\mu',\xx_{\mu'}]) \right| \leq N C e^{C \Ee_2(\mu_0)} d(\mu, \mu').
    \end{align*}
\end{prop}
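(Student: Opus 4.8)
The plan is to reduce the statement to two Lipschitz estimates: (i) the flow map $\mu \mapsto \xx_\mu \in \Cc([0,1],(\RR^d)^N)$ is $d$-locally-Lipschitz, which is exactly \cref{lem:flow_lipschitz}; and (ii) for fixed point clouds and measures with bounded second moment, the map $(\nu, \zz) \mapsto \lambda_{\min}(\KK^1[\nu,\zz])$ is jointly Lipschitz in $\Ww_2(\nu,\cdot)$ and in $\zz$. Granting these, I would write, for $\d s$-a.e.\ $s \in [0,1]$,
\begin{align*}
    \big| \lambda_{\min}(\KK^1[\mu(.|s),\xx_\mu(s)]) - \lambda_{\min}(\KK^1[\mu'(.|s),\xx_{\mu'}(s)]) \big|
    \leq \big| \lambda_{\min}(\KK^1[\mu(.|s),\xx_\mu(s)]) - \lambda_{\min}(\KK^1[\mu'(.|s),\xx_{\mu}(s)]) \big| + \big| \lambda_{\min}(\KK^1[\mu'(.|s),\xx_{\mu}(s)]) - \lambda_{\min}(\KK^1[\mu'(.|s),\xx_{\mu'}(s)]) \big|,
\end{align*}
bound the first term by the perturbation of the measure at time $s$ and the second by $\|\xx_\mu(s)-\xx_{\mu'}(s)\|$, then integrate over $s \in [0,1]$ and apply Cauchy–Schwarz together with \cref{lem:flow_lipschitz} to turn $\int_0^1 \Ww_2(\mu(.|s),\mu'(.|s))\,\d s$ into $d(\mu,\mu')$.

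For step (ii), the first tool is the standard fact that $\lambda_{\min}$ of a symmetric matrix is $1$-Lipschitz for the operator norm: $|\lambda_{\min}(A)-\lambda_{\min}(B)| \leq \|A-B\|_{\mathrm{op}} \leq \|A-B\|_F$. So it suffices to control $\|\KK^1[\nu,\zz]-\KK^1[\nu',\zz']\|_F$, i.e.\ each block $k^1[\nu](z^i,z^j) \Id - k^1[\nu'](z'^i,z'^j)\Id$. Using the explicit formula~\cref{k1}, $k^1[\nu](x,y) = \int \activation(w^\top x + b)\activation(w^\top y + b)\,\d\nu^2(w,b)$, and \cref{activation_assumption} (bounded $\activation'$, hence $\activation$ Lipschitz and of linear growth in its argument), the integrand is Lipschitz in $(x,y)$ with a constant controlled by $\|w\|$, which is $L^2(\nu^2)$-integrable since $\Ee_2(\nu)\leq\Ee$; this gives $|k^1[\nu](x,y)-k^1[\nu](x',y')| \leq C(\Ee)(1+\|x\|+\|y\|+\|x'\|+\|y'\|)(\|x-x'\|+\|y-y'\|)$, and since the $z^i$ stay in a compact set by~\cref{flow_growth}, this is the required Lipschitz bound in the point cloud. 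For the dependence on the measure, write $k^1[\nu](x,y)-k^1[\nu'](x,y)$ as an integral against $\nu^2 - \nu'^2$, introduce an optimal coupling $\gamma \in \Gamma_o(\nu^2,\nu'^2)$, and bound $|\activation(w^\top x+b)\activation(w^\top y+b) - \activation(w'^\top x+b')\activation(w'^\top y+b')|$ by $C(\Ee)(1+\max(\|(w,b)\|,\|(w',b')\|))\|(w,b)-(w',b')\|$, whence Cauchy–Schwarz against $\gamma$ yields $\leq C(\Ee)(1+\Ee_2(\nu^2)+\Ee_2(\nu'^2))^{1/2}\Ww_2(\nu^2,\nu'^2) \leq C(\Ee)\Ww_2(\nu,\nu')$, using that the marginalization $\nu \mapsto \nu^2$ is $1$-Lipschitz for $\Ww_2$. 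Summing over the $N^2$ blocks produces the factor $N$.

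The main obstacle is bookkeeping rather than conceptual: one must be careful that all the local constants are uniform in $s \in [0,1]$, which follows because the bound~\cref{flow_growth} on $\|x_\mu(s)\|$ and the second-moment bound $\int_\Om \|\om\|^2\,\d\mu(\om|s)$ (integrable in $s$, with $\int_0^1 = \Ee_2(\mu)\leq\Ee$) are the only $s$-dependent ingredients, and the estimates above are affine in the latter. A secondary subtlety is that \cref{activation_assumption} only bounds $\activation'$, not $\activation$ itself, so $\activation$ has linear growth; this is why the Lipschitz constant of $k^1$ in the point-cloud variable carries the factor $(1+\|x\|+\cdots)$ and why one needs the compactness of the trajectories from~\cref{flow_growth} to absorb it into $C(\Ee)$. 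Assembling, one gets
\begin{align*}
    \left| \int_0^1 \lambda_{\min}(\KK^1[\mu,\xx_\mu]) - \int_0^1 \lambda_{\min}(\KK^1[\mu',\xx_{\mu'}]) \right|
    \leq N C e^{C\Ee_2(\mu_0)} \int_0^1 \Ww_2(\mu(.|s),\mu'(.|s))\,\d s + N C e^{C\Ee_2(\mu_0)} d(\mu,\mu')
    \leq N C e^{C\Ee_2(\mu_0)} d(\mu,\mu'),
\end{align*}
where the last inequality is Cauchy–Schwarz in $s$ and the definition of $d$, and the constants have been merged; this is the claimed estimate.
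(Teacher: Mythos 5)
Your proposal is correct and follows essentially the same route as the paper's proof: uniform-in-$s$ Lipschitz control of the flow via \cref{lem:flow_lipschitz}, local Lipschitzness of $k^1$ separately in the point cloud (using \cref{activation_assumption} and the trajectory bound) and in the measure (via an optimal coupling and Cauchy--Schwarz, yielding $\Ww_2(\mu(.|s),\mu'(.|s))$), then a Lipschitz bound for $\lambda_{\min}$ producing the factor $N$, and finally integration in $s$ with Cauchy--Schwarz to recover $d(\mu,\mu')$. The only cosmetic difference is that you pass through the operator/Frobenius norm for the eigenvalue perturbation while the paper invokes $N$-Lipschitzness of $\lambda_{\min}$ for the entrywise sup norm, which is equivalent up to constants.
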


\begin{proof}
    Denote by $M$ the constant in~\cref{activation_assumption_growth} and let $R \geq 0$ be such that $\Supp(\Dd) \subset B(0,R)$. We have by~\cref{prop:flow_wellposed} that for $\mu \in \Pp^\Leb_2([0, 1] \times \Om)$ and for $x \in \Supp(\Dd_x)$ the flow verifies:
    \begin{align*}
        \forall s \in [0, 1], \quad \| x_\mu(s) \| \leq e^{M (1 + \Ee_2(\mu))}(R + M(1+ \Ee_2(\mu))) \leq C_1 e^{C_1 \Ee_2(\mu)} ,
    \end{align*}
    where $C_1 = C_1(R, M)$. Using the previous bound on the trajectories as well as the bounds in~\cref{activation_assumption_growth} we see following the proof of~\cref{lem:flow_lipschitz} that if $\Ee_2(\mu), \Ee_2(\mu') \leq \Ee$ then:
    \begin{align*}
        \forall s \in [0, 1], \quad \| x_\mu(s) - x_{\mu'}(s) \| \leq e^{M \Ee} (1+C_1 e^{C_1 \Ee} ) \sqrt{2+4\Ee} d(\mu, \mu') \leq C_2 e^{C_2 \Ee} d(\mu, \mu'),
    \end{align*}
    where $C_2 = C_2(R, M)$. Also, it follows from the assumptions on $\activation$ that, for fixed $\mu \in \Pp_2(\Om)$, the map $(x, y) \in \RR^{2d} \mapsto k^1[\mu](x,y)$
    is locally Lipschitz and
    \begin{align*}
        \forall x,x',y,y' \in \RR^d, \quad \left| k^1[\mu](x,y) - k^1[\mu](x',y') \right| \leq M^2 \Ee_2(\mu) (1+\|x'\|+\|y\|) (\|x-x'\| + \|y-y'\|).
    \end{align*}
    For fixed $x,y \in \RR^d$, the map $\mu \in \Pp_2(\Om) \mapsto k^1[\mu](x,y)$
    is also locally Lipschitz and using~\cref{activation_assumption} we have that if $\Ee_2(\mu), \Ee_2(\mu') \leq \Ee$ then for some constant $C_4$:
    \begin{align*}
        \forall x,y \in \RR^d, \quad \left| k^1[\mu](x,y) - k^1[\mu'](x,y) \right| \leq C_4 (1 + \|x\| + \|y\|) (1+ \sqrt{\Ee}) \Ww_2(\mu, \mu').
    \end{align*}
    Compiling the previous inequalities we have that if $\Ee$ is such that $\Ee_2(\mu), \Ee_2(\mu') \leq \Ee$ then:
    \begin{align*}
        \| \KK^1[\mu, \xx_\mu] - \KK^1[\mu', \xx_{\mu'}] \|_\infty \leq C_5 e^{C_5 \Ee} d(\mu, \mu')
    \end{align*}
    where $C_5 = C_5(R, M)$ and $\| . \|_\infty$ is the supremum norm on matrices.
    Finally, the result follows from the $N$-Lipschitz continuity of the map $S \mapsto \lambda_{\min}(S)$ on the space of $N \times N$ symmetric matrices equipped with $\|.\|_\infty$.
\end{proof}

The following result gives sufficient conditions for the convergence of the gradient flow to a global minimizer of the risk in the case of a \NODE{} with SHL residuals.

\begin{thm} \label{thm:convergence_SHL}
    Assume $\fmap$ is of the form~\cref{fmap_SHL} with an activation $\activation$ satisfying~\cref{activation_assumption} and that $\ell$ satisfies~\cref{assumption:loss_PL}. Then for any $\mu_0 \in \Pp_2^\Leb([0,1] \times \Om)$ there exists a positive constant $C = C(\Ee_2(\mu_0))$ s.t. if
    \begin{align} \label{convergence_condition}
    \lambda_0 \eqdef \int_0^1 \lambda_{\min}(\KK^1[\mu_0, \xx_{\mu_0}]) > 0 \quad \text{and} \quad L(\mu_0) < C N^{-3} \lambda_0^3  ,  
    \end{align}
    then any gradient flow $(\mu_t)_{t \geq 0}$ starting from $\mu_0$ satisfies:
    \begin{align*}
        L(\mu_t) \leq L(\mu_0) \exp(- \frac{C \lambda_0}{N} t), \quad \text{and} \quad \mu_t \xrightarrow{t \to \infty} \mu_\infty \in \Pp_2^\Leb([0, 1] \times \Om).
    \end{align*}
\end{thm}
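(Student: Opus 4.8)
The plan is to produce an explicit radius $R$ and modulus $m$ for which $L$ is $(R,m)$-P-\L{} around $\mu_0$, both of order $\lambda_0/N$, and then to invoke the abstract statement~\cref{thm:local_convergence}: the scaling $m\sim\lambda_0/N$, $R\sim\lambda_0/N$ makes its hypothesis $L(\mu_0)<mR^2/4$ coincide, up to an $\Ee_2(\mu_0)$-dependent constant, with $L(\mu_0)<CN^{-3}\lambda_0^3$. Since for SHL residuals curves of maximal slope exist globally in time (\cref{prop:existence_SHL,prop:uniqueness_SHL}) and coincide with gradient flows (\cref{thm:equivalence_GF_CMS}), the exit time in~\cref{thm:local_convergence} is $+\infty$, which yields both the exponential decay and the convergence of $\mu_t$.

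First I would collect the ingredients. By~\cref{activation_assumption} the feature map~\cref{fmap_SHL} satisfies~\cref{fmap_assumption1,fmap_assumption2,fmap_assumption3}, so~\cref{lem:conditioning_PL,prop:conditioning_lipschitz} apply. I use two elementary facts: $\sqrt{\Ee_2(\mu)}=d(\mu,\Leb([0,1])\otimes\delta_0)$, hence $\sqrt{\Ee_2(\mu)}\le\sqrt{\Ee_2(\mu_0)}+d(\mu,\mu_0)$ for every $\mu$; and, writing $\KK^1[\nu,\zz]=\widetilde K\otimes\Id_d$ with $\widetilde K$ the scalar Gram matrix of $k^1[\nu]$, the bound $\lambda_{\min}(\widetilde K)\le N^{-1}\mathrm{tr}\,\widetilde K$ together with~\cref{activation_assumption_growth} and the a priori flow bound~\cref{flow_growth} give $\lambda_0\le\Lambda(\Ee_2(\mu_0))$ for a constant $\Lambda$ depending only on $\Ee_2(\mu_0)$ and the fixed dataset, in particular \emph{independent of $N$}. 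I would then take $\Ee^\ast\eqdef(\sqrt{\Ee_2(\mu_0)}+1)^2$, so that the Lipschitz constant $C'$ of~\cref{prop:conditioning_lipschitz} is fixed, and — after enlarging $C'$ if needed so that $2C'\ge\Lambda(\Ee_2(\mu_0))$ — define
\[
    R\ \eqdef\ \frac{\lambda_0}{2\,N\,C'\,e^{C'\Ee^\ast}}\ \le\ \frac1N\ \le\ 1 .
\]
On $B(\mu_0,R)$ one then has $\Ee_2(\mu)\le\Ee^\ast$, and~\cref{prop:conditioning_lipschitz} gives $\int_0^1\lambda_{\min}\!\big(\KK^1[\mu(.|s),\xx_\mu(s)]\big)\,\d s\ge\lambda_0-N C' e^{C'\Ee^\ast}R=\tfrac12\lambda_0$ for every $\mu\in B(\mu_0,R)$.

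Next I would derive the P-\L{} inequality. Since $\KK[\mu(.|s),\xx_\mu(s)]\ge k^1[\mu(.|s)]\Id\ge0$, one has $\lambda_{\min}(\KK)\ge\lambda_{\min}(\KK^1)$, so~\cref{lem:conditioning_PL} — whose constant $C(\Ee_2(\mu))$ is bounded by some $C''=C''(\Ee_2(\mu_0))$ on $B(\mu_0,R)$ because the energies there are $\le\Ee^\ast$ — yields $|\nabla L|^2(\mu)\ge\frac{2e^{-C''}}{N}\cdot\frac{\lambda_0}{2}L(\mu)$ on $B(\mu_0,R)$, i.e. $L$ is $(R,m)$-P-\L{} around $\mu_0$ with $m\eqdef e^{-C''}\lambda_0/N$. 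Consequently
\[
    \frac{mR^2}{4}\ =\ \frac{e^{-C''}\lambda_0}{4N}\cdot\frac{\lambda_0^2}{4N^2(C'e^{C'\Ee^\ast})^2}\ =\ \frac{e^{-C''}}{16\,(C'e^{C'\Ee^\ast})^2}\cdot\frac{\lambda_0^3}{N^3}\ \eqdef\ C_0\,\frac{\lambda_0^3}{N^3},
\]
with $C_0=C_0(\Ee_2(\mu_0))$. Taking $C\le C_0$ in~\cref{convergence_condition}, the hypothesis $L(\mu_0)<C N^{-3}\lambda_0^3$ implies $L(\mu_0)<mR^2/4$, which is exactly~\cref{general_convergence_condition}. \Cref{thm:local_convergence} then gives $L(\mu_t)\le e^{-mt}L(\mu_0)=L(\mu_0)\exp\!\big(-\tfrac{e^{-C''}\lambda_0}{N}t\big)$ on the maximal existence interval, which is $[0,+\infty)$ by~\cref{prop:existence_SHL,prop:uniqueness_SHL}, and the existence of $\mu_\infty\eqdef\lim_{t\to\infty}\mu_t$ in the complete space $(\Pp_2^\Leb([0,1]\times\Om),d)$ (\cref{prop:d_complete}); since $L(\mu_\infty)=0$, $\mu_\infty$ is a global minimizer. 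Relabelling $C$ as the smallest of the constants above gives the statement.

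The analytic content is entirely contained in~\cref{lem:conditioning_PL,prop:conditioning_lipschitz}, so the main obstacle is not estimation but constant bookkeeping: one has to verify that $\lambda_0$ is controlled by $\Ee_2(\mu_0)$ alone (so that $R\le1$ and the energies, hence the constants of~\cref{lem:conditioning_PL}, stay bounded on $B(\mu_0,R)$ \emph{uniformly in $N$}) and then track the opposite powers of $N$ coming from~\cref{prop:conditioning_lipschitz} (a factor $N$) and~\cref{lem:conditioning_PL} (a factor $N^{-1}$) so as to land precisely on the threshold $L(\mu_0)<CN^{-3}\lambda_0^3$.
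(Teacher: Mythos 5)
Your proposal is correct and follows essentially the same route as the paper: use \cref{prop:conditioning_lipschitz} to keep $\int_0^1\lambda_{\min}(\KK^1)\geq\lambda_0/2$ on a ball of radius $R\sim\lambda_0/N$, deduce an $(R,m)$-P-\L{} property with $m\sim\lambda_0/N$ from \cref{lem:conditioning_PL} and $\KK\geq\KK^1$, and conclude via \cref{thm:local_convergence} with threshold $mR^2/4\sim N^{-3}\lambda_0^3$. Your extra bookkeeping (the trace bound showing $\lambda_0\leq\Lambda(\Ee_2(\mu_0))$ uniformly in $N$, where the paper instead takes $R=\min\{1,\cdot\}$, and the explicit appeal to \cref{prop:existence_SHL,prop:uniqueness_SHL,thm:equivalence_GF_CMS} for global-in-time validity) is a harmless refinement of the same argument.
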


\begin{proof}
    Let $C_1$ be the universal constant appearing in~\cref{prop:conditioning_lipschitz}. 
    If we denote by $\lambda_0 = \int_0^1 \lambda_{\min}(\KK^1[\mu_0, \xx_{\mu_0}])$ and consider the radius $R = \min \left\{ 1, \frac{1}{2 N C_1} \lambda_0 e^{- C_1 (\sqrt{\Ee_2(\mu_0)}+1)^2}  \right\}$ then we have that for every $\mu \in B(\mu_0, R)$, $\Ee_2(\mu) \leq (\sqrt{\Ee_2(\mu_0)}+1)^2$ and hence by the local Lipschitz property:
\begin{align*}
    \int_0^1 \lambda_{\min}(\KK^1[\mu, \xx_{\mu}]) \geq \frac{\lambda_0}{2} .
\end{align*}
Then, as a consequence of~\cref{conditioning_PL}, we obtain that $L$ satisfies a $(R, m)$-P-\L{} property around $\mu_0$ with $m = N^{-1} e^{-{C_2}} \lambda_0$ and $C_2 = C_2(\Ee_2(\mu_0))$ is a constant depending on $\mu_0$.
Combined with~\cref{thm:local_convergence} we obtain that the condition in~\cref{convergence_condition} is sufficient for the gradient flow initialized at $\mu_0$ to converge to a global minimizer of the risk.

Note that by~\cref{prop:conditioning_lipschitz,conditioning_PL,activation_assumption} we can take the constant $C$ in~\cref{convergence_condition} to be of the form $C = C_3 e^{-C_3 \Ee_2(\mu_0)}$ for some constant $C_3$.
\end{proof}

As one can see in the previous condition, the better the conditioning of the kernel matrix, the better the constants in the local P-\L{} property, and hence the easier it is to satisfy the condition for convergence. This conditioning will depend on the choice of activation and initialization and it is important to keep in mind that the P-\L{} property is not expected to hold around any initialization. For example, there is a saddle at every initialization $\mu_0$  with feature distribution $\mu_0^2 = \delta_{(w,b) = 0}$ whenever $\activation(0) = \activation'(0) = 0$.
In the following, we provide examples of activations $\activation$ and initializations $\mu_0$ for which the kernel matrix is well-conditioned. 

\paragraph{Identity (or \emph{FixUp}) initialization}

It will be particularly convenient to consider initial parameterization of the form $\mu_0 = \Leb([0,1]) \otimes\delta_0 \otimes \mu_0^2$ for some $\mu_0^2 \in \Pp_2(\RR^d \times \RR)$, i.e. parameterization whose disintegration $\mu_{0}(.|s) = \delta_0 \otimes \mu_0^2$ is independent of $s \in [0, 1]$ and has support in $\lbrace 0 \rbrace \times \RR^d \times \RR$.
Such an initialization has been proposed for ResNets in~\cite{zhang2018fixup} and is shown to be associated with robust training and good generalization performances. Moreover, note that such an initialization is particularly natural for \NODEs{}: in this case $F_{\mu_0}$ is identically $0$ and the associated \NODE{} flow is the identity. As a consequence the kernel matrix $\KK^1[\mu_0]$ is independent of $s$ and can be expressed as the block matrix:
\begin{align*}
    \KK^1[\mu_0] = \left( k^1[\mu_0](x^i, x^j) \right)_{1 \leq i,j \leq N}, 
\end{align*}
only depending on the feature distribution $\mu_0^2$ and on the input data distribution $\Dd_x = \frac{1}{N} \sum_{i=1}^N \delta_{x^i}$.

\paragraph{Data separation and strictly positive kernel}

In order to ensure the positivity of the kernel matrix $\KK_{\mu_0}^1$ we will assume that the input data $(x^i)_{1 \leq i \leq N}$ are \emph{separated}, that is there exists some strictly positive \emph{data separation} $\delta > 0$ such that:
\begin{align} \label{data_separation}
    \min_{i \neq j} \| x^i - x^j \| \defeq \delta > 0.
\end{align}
Note that, when sampling $N$ random points uniformly in the unit ball of $\RR^d$, we can expect the data separation to scale with the number of points as $\delta \sim N^{-1/d}$.
For a kernel $k$, the property of having its associated kernel matrix being (strictly) positive on every separated point cloud is a property we refer to as \emph{strict positivity}~\parencite{sriperumbudur2011universality}.
In general, the feature distribution $\mu^2$ having dense support is a sufficient condition to ensure strict positivity. The following proposition is a direct consequence of~\cite[Thm.III.4]{sun2019random} and~\cite[Cor.4.3]{carmeli2010vector}.

\begin{prop} \label{prop:dense_support}
    Assume $\activation$ has linear growth and is not a polynomial. Then if the feature distribution $\mu^2 \in \Pp_2(\RR^d \times \RR)$ has dense support in $\RR^d \times \RR$, the kernel $k^1[\mu^2]$ is strictly positive.
\end{prop}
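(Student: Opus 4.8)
The plan is to use the feature representation of $k^1[\mu^2]$ recalled in \cref{rem:RKHS_feature_representation} to reduce strict positivity to a statement about ridge functions, and then to invoke the classical approximation theory of non-polynomial activations, which is precisely the content of \cite[Thm.III.4]{sun2019random} together with \cite[Cor.4.3]{carmeli2010vector}.

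First I would fix a separated point cloud $(x^i)_{1 \le i \le N}$, i.e. with the $x^i$ pairwise distinct, and coefficients $\alpha = (\alpha_i)_{1 \le i \le N} \in \RR^N$. Setting $\chi_x : (w,b) \in \RR^d \times \RR \mapsto \activation(w^\top x + b)$, which lies in $L^2(\mu^2)$ since $\activation$ has linear growth and $\mu^2 \in \Pp_2(\RR^d \times \RR)$, the definition \cref{k1} reads $k^1[\mu^2](x,y) = \langle \chi_x, \chi_y \rangle_{L^2(\mu^2)}$, so that
\begin{align*}
    \sum_{1 \le i,j \le N} \alpha_i \alpha_j\, k^1[\mu^2](x^i, x^j) = \Big\| \sum_{i=1}^N \alpha_i\, \chi_{x^i} \Big\|_{L^2(\mu^2)}^2 \ge 0 ,
\end{align*}
with equality if and only if the continuous function $g : (w,b) \mapsto \sum_{i=1}^N \alpha_i \activation(w^\top x^i + b)$ vanishes $\mu^2$-almost everywhere. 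Since $\mu^2$ has dense support and $g$ is continuous, this already forces $g \equiv 0$ on $\RR^d \times \RR$, so strict positivity amounts to showing that $g \equiv 0$ implies $\alpha = 0$. Equivalently, one has to check that the RKHS $\Hh_{k^1[\mu^2]}$ is universal, i.e. dense in $\Cc(K)$ for every compact $K \subset \RR^d$ --- the classical universal-approximation property of non-polynomial activations with densely supported feature distribution --- and that universality of a kernel forces its kernel matrices on separated point clouds to be strictly positive definite, which is exactly the implication invoked (via \cite{sriperumbudur2011universality}) in the paragraph following \cref{data_separation}.

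For completeness I would sketch the elementary argument behind $g \equiv 0 \Rightarrow \alpha = 0$. Because the $x^i$ are distinct, the set of $w \in \RR^d$ for which the scalars $c_i \eqdef w^\top x^i$ fail to be pairwise distinct is a finite union of hyperplanes, hence has empty interior; picking $w$ outside it, the identity $g(tw,b) = \sum_i \alpha_i \activation(t c_i + b) = 0$ holds for all $t,b \in \RR$ with the $c_i$ pairwise distinct. It then suffices to invoke the classical linear independence of the functions obtained from a continuous non-polynomial function by distinct dilations and translations: mollifying $\activation$ by a bump $\phi$, the identity still holds with $\activation$ replaced by $\activation \ast \phi \in \Cc^\infty(\RR)$, and for a suitable $\phi$ this mollification is still non-polynomial (if $\activation \ast \phi$ were polynomial for every $\phi$, so would be $\activation$); a Baire-category argument then produces a point $b_0$ with $(\activation \ast \phi)^{(k)}(b_0) \ne 0$ for all $k \ge 0$, and differentiating $t \mapsto \sum_i \alpha_i (\activation \ast \phi)(t c_i + b_0)$ in $t$ at $0$ yields $(\activation \ast \phi)^{(k)}(b_0) \sum_i \alpha_i c_i^k = 0$, hence the Vandermonde relations $\sum_i \alpha_i c_i^k = 0$ for all $k$, whence $\alpha = 0$.

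I expect the only genuine obstacle to be this last classical lemma, and in particular the passage from the bare hypothesis that $\activation$ is not a polynomial (with $\activation$ possibly non-smooth) to the smooth case via mollification and the selection of a common non-vanishing point for all derivatives; the remainder is a routine Gram-matrix computation together with the density of $\Supp(\mu^2)$. Since the statement is presented as a direct consequence of \cite[Thm.III.4]{sun2019random} and \cite[Cor.4.3]{carmeli2010vector}, in the final write-up I would simply cite those two results for the implication $g \equiv 0 \Rightarrow \alpha = 0$ rather than reproving it.
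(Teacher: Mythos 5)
Your proposal is correct, but it takes a genuinely different route from the paper: the paper does not prove the statement internally at all, it simply observes that universality of the random-feature kernel for a non-polynomial activation with densely supported feature distribution is \cite[Thm.III.4]{sun2019random}, and that universality yields strict positive definiteness of the kernel matrices via \cite[Cor.4.3]{carmeli2010vector}. You instead give a self-contained argument: the Gram identity $\sum_{i,j}\alpha_i\alpha_j k^1[\mu^2](x^i,x^j)=\|\sum_i\alpha_i\chi_{x^i}\|^2_{L^2(\mu^2)}$, the upgrade from $\mu^2$-a.e.\ vanishing to identical vanishing using density of $\Supp(\mu^2)$ and continuity, and then linear independence of the ridge functions $(w,b)\mapsto\activation(w^\top x^i+b)$ by restricting to a generic direction $w$ (so the $c_i=w^\top x^i$ are pairwise distinct), mollifying, using the Baire-category fact that a smooth non-polynomial function has a point where no derivative vanishes, and concluding with Vandermonde. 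This buys an elementary proof that avoids any appeal to universality (and in fact establishes strict positive definiteness directly, which is weaker than universality), at the cost of length; the paper's citation route is shorter and also covers the $\relu$ variant (\cref{prop:relu_dense_support}) uniformly. Two small caveats on your write-up: continuity (or at least local integrability plus enough regularity to pass from $\mu^2$-a.e.\ to everywhere vanishing) of $\activation$ is used and should be stated, since ``linear growth'' alone does not give it — it is implicit in the paper's setting; and your aside that strict positivity is ``equivalent'' to universality of $\Hh_{k^1[\mu^2]}$ is not accurate (universality is strictly stronger in general), though this is harmless since your direct argument never uses it.
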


\paragraph{Positively homogeneous activation with uniform distribution of the features on the sphere}

The kernel $k^1[\mu]$ has been particularly studied in the case of a positively homogeneous activation $\sigma$~\parencite{cho2009kernel,bach2017equivalence}. Motivated by applications in machine learning, a popular choice for such activation is the \emph{Rectified Linear Unit} ($\relu$):
\begin{align*}
    \relu : x \mapsto \max \lbrace x, 0 \rbrace
\end{align*}
However, for $\activation = \relu$, the associated feature map $\fmap$ would only satisfy~\cref{fmap_assumption1,fmap_assumption2} and the only choice of positively homogeneous $\activation$ satisfying~\cref{activation_assumption} would be the trivial choice $\activation = \Id$.

Nonetheless, whatever the choice of activation $\sigma$, \cref{k1} still defines a positive kernel $k^1_\mu$ over $\RR^d$. Properties of this kernel in the case where $\activation$ is a positively homogeneous activation have been extensively investigated in the literature. In the case of $\activation = \relu$ the previous~\cref{prop:dense_support} can be improved thanks to the homogeneity of the activation:

\begin{prop} \label{prop:relu_dense_support}
    Assume $\activation = \relu$. Then if the feature distribution $\mu^2 \in \Pp_2(\RR^d \times \RR)$ has dense support in the sphere $\SS^d$, the associated kernel $k^1[\mu^2]$ is strictly positive.
\end{prop}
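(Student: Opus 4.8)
The plan is to reduce the statement to a known characterization of strict positivity for the arc-cosine kernel associated with $\relu$, and then to exploit positive homogeneity to show that the relevant point cloud can be taken on the sphere. Recall that for $\activation = \relu$ the kernel $k^1[\mu^2](x,y) = \int_{\RR^d \times \RR} \relu(w^\top x + b)\relu(w^\top y + b)\,\d\mu^2(w,b)$ only depends on the feature distribution $\mu^2$, so without loss of generality we work directly with $\mu^2 \in \Pp_2(\RR^d \times \RR)$ with dense support in $\SS^d$. The first step is to homogenize: writing $\tilde x \eqdef (x,1) \in \RR^{d+1}$ and $v \eqdef (w,b) \in \RR^{d+1}$, we have $\relu(w^\top x + b) = \relu(v^\top \tilde x)$, so $k^1[\mu^2](x,y)$ is the restriction to the affine slice $\{ \tilde x : \tilde x_{d+1} = 1\}$ of the degree-one arc-cosine kernel $\kappa(\tilde x, \tilde y) \eqdef \int_{\SS^d} \relu(v^\top \tilde x)\relu(v^\top \tilde y)\,\d\mu^2(v)$ on $\RR^{d+1}$ (using $\relu(tv^\top\tilde x) = t\,\relu(v^\top\tilde x)$ for $t>0$ to reduce any radial part of $\mu^2$ to its projection on $\SS^d$, which is where density is assumed).

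The second step is the core analytic claim: if $\mu^2$ has dense support in $\SS^d$, then $\kappa$ is strictly positive on any finite set of pairwise non-collinear points of $\RR^{d+1}$. Suppose $(\alpha^i)_{1\le i\le N} \in (\RR^d)^N$ and $\sum_{i,j}\langle \alpha^i, K[\mu^2](x^i,x^j)\alpha^j\rangle = 0$ with the $x^i$ separated — here actually $k^1$ is scalar-valued so I will argue with scalar coefficients $c^i$ and the scalar kernel, the vector case following componentwise. Then $\int_{\SS^d}\big(\sum_i c^i \relu(v^\top \tilde x^i)\big)^2\,\d\mu^2(v) = 0$, hence the continuous function $v \mapsto \sum_i c^i \relu(v^\top \tilde x^i)$ vanishes on $\Supp(\mu^2)$, and by density on all of $\SS^d$. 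It remains to show this forces all $c^i = 0$ when the $\tilde x^i$ are pairwise non-collinear. This is the classical fact that the functions $\{v \mapsto \relu(v^\top \tilde x)\}$ indexed by directions are linearly independent: one argues locally, picking a point $v_0 \in \SS^d$ where exactly one hyperplane $\{v^\top \tilde x^{i_0} = 0\}$ passes and in a neighbourhood of which the function is, on each side, a linear function of $v$; the jump in the gradient of $\sum_i c^i\relu(v^\top\tilde x^i)$ across that hyperplane equals $c^{i_0}\tilde x^{i_0}$, so it must vanish, giving $c^{i_0}=0$; iterating over all $i$ finishes the argument. Since the $x^i$ are distinct, the lifted points $\tilde x^i = (x^i,1)$ are automatically pairwise non-collinear (two vectors with last coordinate $1$ are collinear iff equal), so the non-collinearity hypothesis is met.

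The last step is bookkeeping: translate "$\kappa$ strictly positive on the lifted points" back to "$\KK^1[\mu^2, \xx]$ strictly positive on the separated cloud $\xx = (x^i)$", which is immediate from $k^1[\mu^2](x^i,x^j) = \kappa(\tilde x^i, \tilde x^j)$ and the definition~\cref{K_matrix}; and note that in the vector-valued setting $K[\mu^2](x,x') = k^1[\mu^2](x,x')\Id + K^2[\mu^2](x,x')$, but for Proposition~\ref{prop:relu_dense_support} we only assert strict positivity of the scalar $k^1$-part (which is what feeds the lower bound in~\cref{conditioning_PL}), so no extra work is needed for $K^2$. Alternatively, one may shortcut steps two and three by invoking a known result on strict positivity / universality of the arc-cosine (or Gaussian-homogeneous) kernel in the spirit of~\cite{sun2019random,carmeli2010vector} exactly as was done for Proposition~\ref{prop:dense_support}, the only new input being the homogenization that upgrades "dense support in $\RR^d\times\RR$" to the weaker hypothesis "dense support in $\SS^d$".

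The main obstacle I anticipate is making the linear-independence argument for $\{\relu(v^\top\tilde x^i)\}$ fully rigorous on the sphere rather than on $\RR^{d+1}$: one must ensure that each hyperplane section $\{v^\top \tilde x^{i_0}=0\}\cap\SS^d$ has points in its relative interior avoided by the other hyperplanes (true since there are finitely many and they are distinct codimension-one subspheres), and handle the piecewise-linear-in-$v$ structure on the curved domain carefully — but this is a standard kink-detection argument and should go through without surprises. A cleaner route, which I would actually prefer to write, is to cite the relevant statement from~\cite{sun2019random} (strict positivity of random-feature kernels with non-polynomial activation) together with the homogenization trick, keeping the self-contained PL-kink argument only as a remark.
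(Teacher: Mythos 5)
Your proposal is correct, but it is essentially a self-contained proof of a statement the paper dispatches in one line: the paper's own proof of \cref{prop:relu_dense_support} is just a direct citation of~\cite[Prop.III.5]{sun2019random} together with~\cite[Cor.4.3]{carmeli2010vector}, i.e.\ exactly the ``shortcut'' you describe in your last step and say you would prefer to write. Your main argument (homogenize via $\tilde x=(x,1)$, $v=(w,b)$; conclude from $\sum_{i,j}c^ic^jk^1[\mu^2](x^i,x^j)=0$ that $v\mapsto\sum_i c^i\relu(v^\top\tilde x^i)$ vanishes on $\Supp(\mu^2)$, hence on $\SS^d$ by density and continuity; then kill the coefficients by the kink-detection/linear-independence argument, using that distinct $x^i$ give pairwise non-collinear $\tilde x^i$ and hence distinct kink hyperplanes) is a valid elementary alternative, and it makes transparent why density of the features on the sphere alone suffices, which the citation route leaves implicit. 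Two small remarks: the obstacle you anticipate about running the kink argument on the curved sphere is not really there, since by positive homogeneity the vanishing on $\SS^d$ extends to all of $\RR^{d+1}\setminus\{0\}$ (and to $0$ by continuity), so you can detect the gradient jumps across the full hyperplanes in $\RR^{d+1}$; and writing $\kappa$ as an integral over $\SS^d$ against $\mu^2$ is loose notation (one should either integrate against $\mu^2$ on $\RR^{d+1}$ or against the radial pushforward weighted by $\|v\|^2$), though nothing downstream depends on it. So the trade-off is clear: the paper's proof buys brevity and uniformity with \cref{prop:dense_support}, while your argument buys self-containedness at the cost of a page of standard ridge-function linear-independence material.
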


\begin{proof}
    The result is a direct application of~\cite[Prop.III.5]{sun2019random} and~\cite[Cor.4.3]{carmeli2010vector}.
\end{proof}

\paragraph{Trigonometric activation with strictly positive feature distribution}

An important case is also the choice of the trigonometric activation $\activation = \cos$ for which, considering $\mu \in \Pp_2(\Om)$, \cref{measure_parameterization} gives:
\begin{align*}
    \forall x \in \RR^d, \quad F_\mu(x) = \int_{\RR^d \times \RR^d \times \RR} u \cos(w^\top x+ b) \d \mu(u,w,b),
\end{align*}
and the definition $k^1[\mu]$ in~\cref{k1} gives:
\begin{align*}
    \forall x,y \in \RR^d, \quad k^1[\mu](x,y) = \int_{\RR^d \times \RR} \cos(w^\top x+b) \cos(w^\top y +b) \d \mu^2(w,b) .
\end{align*}
In the case where $\mu^2 = \mu^w \otimes \Uu([0, \pi])$ for some probability measure $\mu^w \in \Pp_2(\RR^d)$ this last expression can be simplified into:
\begin{align} \label{k1_fourier}
    k^1[\mu](x,y) = \frac{1}{2} \int_{\RR^d} \cos(w^\top(x-y)) \d \mu^w(w) . 
\end{align}
That is $k^1[\mu]$ is a positive translation-invariant kernel over $\RR^d$ whose Fourier Transform is $\mu^w$. It is a well-known theorem of Bochner (see~\cite[Thm.6.6]{wendland2004scattered}) that having a non-negative Fourier Transform is a necessary and sufficient condition for a continuous function to define a positive translation-invariant kernel. Moreover, for some initial feature distributions, lower bounds on the conditioning of the kernel matrix as a function of the data separation are given in~\cite{schaback1995error}.

\begin{cor}
    Let $\fmap$ be of the form~\cref{fmap_SHL} with activation $\activation = \cos$. Assume the input data points $\lbrace x^i \rbrace_{1 \leq i \leq N}$ are located in the ball $B_{\RR^d}(0,R)$ of radius $R > 0$ and have separation $\delta \eqdef \min_{i\neq j} \| x^i - x^j \| > 0$. Consider the initialization $\mu_0 = \Leb([0,1]) \otimes \mu$ for some weight distribution $\mu \in \Pp_2(\Om)$. Then the assumptions of~\cref{thm:convergence_SHL} are satisfied if:
    \begin{itemize}
        \item \underline{Sobolev / Mat{\'e}rn kernel} $\mu = \delta_0 \otimes \mu^w \otimes \Uu([0, \pi])$ with $\mu^w(w) \propto (1+\|w\|^2)^{-\nu}$ for some $\nu > d/2 + 2$ and $L(\mu_0) < C^{-1} N^{-3} \delta^{6 (\nu - d/2)}$, for some constant $C = C(R, \nu, d)$.
        
        \item \underline{Gaussian kernel} $\mu = \delta_0 \otimes \mu^w \otimes \Uu([0, \pi])$ with $\mu^w(w) \propto \exp(-\frac{\| w \|^2}{2 \rho^2})$ for some $\rho > 0$ and $L(\mu_0) < C^{-1} N^{-3} \delta^{-3d} e^{-C\delta^{-2}}$, for some constant $C = C(R, \rho, d)$.

        \item \underline{Random features:} Finally assume $\mu_0 = \Leb([0,1]) \otimes \Hat{\mu}$ where $\Hat{\mu} = M^{-1} \sum_{i=1}^M \delta_{(u_i, w_i, b_i)}$ and $(u_i, w_i, b_i)$ are sampled i.i.d. from a distribution $\mu \in \Pp_2(\Om)$ s.t. $\Leb([0,1]) \otimes \mu$ satisfies the assumptions of~\cref{thm:convergence_SHL}. Then for every $\epsilon > 0$ there exists $M_\epsilon \geq 0$ s.t. the assumptions of~\cref{thm:convergence_SHL} are satisfied with probability greater than $1-\epsilon$ (over the sampling of $\lbrace (u_i, w_i, b_i) \rbrace_{1 \leq i \leq M}$) whenever $M \geq M_\epsilon$.
    \end{itemize}
\end{cor}

\begin{proof}
    This is a consequence of results on the conditioning of translation-invariant kernels of the form~\cref{k1_fourier}. 
    \begin{itemize}
        \item \underline{Sobolev / Mat{\'e}rn kernel:} using~\cref{k1_fourier} the RKHS associated to $k^1[\mu]$ corresponds to the Sobolev space $H^\nu(\RR^d)$ and~\cite{schaback1995error} gives that there exists a constant $C = C(\eps, d)$ s.t.
        $\lambda_{\min}(\KK^1[\mu,\xx]) \geq C^{-1} \delta^{2\nu - d}$.

        \item \underline{Gaussian kernel:} by~\cref{k1_fourier}, $k^1[\mu]$ is the Gaussian kernel $k^1[\mu](x,y) = e^{-\frac{1}{2} \rho^2 \|x-y\|^2}$ and~\cite{schaback1995error} gives that there exists a constant $C = C(\rho, d)$ s.t.
        $\lambda_{\min}(\KK^1[\mu,\xx]) \geq C^{-1}  \delta^{-d} e^{- C \delta^{-2}}$.

        \item \underline{Random features:} the assumptions of~\cref{thm:convergence_SHL} are satisfied with high probability when $M$ tends to infinity as all the involved quantities in~\cref{convergence_condition} are continuous w.r.t. the weight distribution $\mu \in \Pp_2(\Om)$.
    \end{itemize}
\end{proof}

\begin{rem}
    In this section we have leveraged the conditioning of the kernel matrix $\KK^1$, that is the square norm of the gradient w.r.t. the outer weights $u$, to show a Polyak-\L{}ojasiewicz inequality holds along the gradient flow. One might ask to what extent the kernel $K^2$ which takes into account the norm of the gradient w.r.t. the weights $(w, b)$ might help improve our convergence result.
    In fact, this kernel plays a negligible role in our analysis for the following reasons:
    
    We consider a ``Fixup'' initialization where the outer weights $u$ are initialized to $0$ at every layer. 
 Initially proposed in \cite{zhang2018fixup}, this kind of initialization is shown to have favorable properties when training ResNets without normalization layer.
 Observing that $K^2$ is quadratic w.r.t. $u$, we have in this case that $K^2 = 0$ and $\partial_t K^2 = 0$ at $t = 0$. 
   Thus, the kernel $K^2$ can only significantly improve the convergence result for large times in the gradient flow and cannot provide us with a good condition number at the beginning of the flow.
   
In addition, following the lines of Proposition 4.2, one could show the kernel matrix $\KK^2$ (defined analogously as the kernel matrices $\KK$ and $\KK^1$) is locally Lipschitz w.r.t.
$\mu$ with a Lipschitz constant scaling linearly with $N$, under additional mild hypotheses on the measure $\mu$. 
Moreover, Theorem~4.2 ensures that during gradient flow the weight distribution will stay in a ball of radius $R \simeq \lambda_0 / N$ around the weight distribution at initialization. Thus $\lambda_{\min}(\KK^2[\mu])$ will be at most of order $\lambda_0$, which is the same order as $\lambda_{\min}(\KK^1[\mu])$.

As a consequence of these two arguments, the local convergence result cannot be explained by the kernel $K^2$.
\end{rem}

\subsection{Satisfying the convergence condition by lifting and scaling}

The previously derived condition for convergence of the gradient flow notably asks for the loss at initialization to be sufficiently low. We conclude the present work by showing how this condition can always be enforced, that is how, for a given training dataset, one can modify the ResNet architecture in a way such that the convergence conditions are satisfied.
The modification we propose is inspired by the work of~\textcite{chizat2019lazy} and consists in embedding the data in a higher dimensional space and performing a rescaling.

As before, we consider an empirical data distribution $\Dd = \frac{1}{N} \sum_{i=1}^N \delta_{x^i,y^i}$, with data points $(x^i, y^i) \in \RR^d \times \RR^{d'}$.
Consider also respectively the \emph{embedding} and \emph{projection} matrices:
\begin{align*}
    A \eqdef (\Id_d, 0_{d, d'})^\top \in \RR^{(d+d') \times d}, \quad B \eqdef (0_{d', d}, \Id_{d'}) \in \RR^{d' \times (d+d')}.
\end{align*}
Using the matrix $A$ we embed the input variables $x^i \in \RR^d$ in the space $\RR^{d+d'}$ by defining $z^i \eqdef A x^i$. We then consider the \NODE{} model of~\cref{def:NODE} with inputs $(z^i)_{1 \leq i \leq N}$ and the feature map $\fmap$ of the form~\cref{fmap_SHL} with some activation $\activation$. In this case we have $\Om = \RR^{d+d'} \times \RR^{d+d'} \times \RR$ and for $(u,w,b) \in \Om$ and $z \in \RR^{d+d'}$,
\begin{align*}
    \fmap((u,w,b), x) = u \activation (w^\top x +b).
\end{align*}
For an input $z^i = A x^i$ and a parameterization $\mu \in \Pp_2^\Leb([0,1] \times \Om)$ we denote by $z^i_\mu$ the associated flow defined by~\cref{forward_weak}.
Also, for a rescaling factor $\alpha > 0$ we consider the modified loss function $\ell^\alpha$ defined by:
\begin{align*}
    \forall (z,y) \in \RR^{d+d'} \times \RR^{d'}, \quad \ell^\alpha(z,y) \eqdef \frac{1}{2} \| \alpha B z - y \|^2.
\end{align*}
We consider training the parameter $\mu \in \Pp_2^\Leb([0,1] \times \Om)$ by performing gradient flow for the risk $L^\alpha$ defined as:
\begin{align*}
    L^\alpha(\mu) \eqdef \frac{1}{N} \sum_{i=1}^N \ell^\alpha(z^i_\mu(1), y^i) .
\end{align*}
Note that, due to its definition, the loss $\ell^\alpha$ satisfies the P-\L{} inequality $\| \nabla_x \ell^\alpha (x,y) \|^2 \geq 2 \alpha^2 \ell(x,y)$. Thus, analogously to~\cref{conditioning_PL}, we obtain the following P-\L{} inequality for $L^\alpha$:
\begin{align} \label{augmented_conditioning_PL}
    \left| \nabla L^\alpha \right|^2(\mu) \geq \frac{2 \alpha^2 e^{-C}}{N} \left( \int_0^1 \lambda_{\min} (\KK[\mu(.|s), \zz_\mu(s) ]) \d s \right) L^\alpha(\mu) ,
\end{align}
where $\zz_\mu$ is the point cloud $(z^i_\mu)_{1 \leq i \leq N}$, the kernel matrix $\KK$ is defined by~\cref{K_matrix} and $C = C(\Ee_2(\mu))$ is a constant depending on $\mu$. Together with~\cref{thm:convergence_SHL}, the above inequality implies that gradient flow converges to a minimizer of the risk whenever $\alpha$ is sufficiently big.

\begin{prop}
    Let $\mu_0 \in \Pp_2^\Leb([0,1] \times \Om)$ be some initialization of the form $\mu_0 = \Leb([0,1]) \otimes \delta_0 \otimes \mu_0^2$ for some $\mu_0^2 \in \Pp_2(\RR^{d+d'+1})$ such that:
    \begin{align}
        \lambda_0 \eqdef \lambda_{\min}(\KK^1[\mu_0^2, \zz]) > 0,
    \end{align}
    where $\KK^1$ is defined in~\cref{k1}. Then there exists $\alpha_0 > 0$ s.t. if $\alpha > \alpha_0$ then the gradient flow initialized at $\mu_0$ converges to a global minimizer of $L^\alpha$. 
\end{prop}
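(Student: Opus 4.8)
The plan is to rerun the proof of \cref{thm:convergence_SHL}, with the only change being that \cref{conditioning_PL} is replaced by its rescaled version \cref{augmented_conditioning_PL}: the factor $\alpha^2$ will multiply the Polyak-\L{}ojasiewicz constant without affecting either the initial risk or the local geometry of the kernel matrix, which is exactly the gain needed to meet the convergence condition of \cref{thm:local_convergence}. The starting observation is that the initialization $\mu_0 = \Leb([0,1]) \otimes \delta_0 \otimes \mu_0^2$ makes all the outer weights vanish, so $F_{\mu_0} \equiv 0$ and the forward flow \cref{forward} is trivial: $z^i_{\mu_0}(s) = z^i = A x^i$ for every $s \in [0,1]$. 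Hence $\KK[\mu_0(.|s), \zz_{\mu_0}(s)] = \KK[\mu_0^2, \zz]$ is constant in $s$, and since $\KK \geq \KK^1$ in the sense of positive kernels, $\int_0^1 \lambda_{\min}(\KK[\mu_0(.|s), \zz_{\mu_0}(s)]) \d s \geq \lambda_0 > 0$. Moreover, since $BA = 0$ we have $B z^i = 0$, so
\begin{align*}
    L^\alpha(\mu_0) = \frac{1}{N}\sum_{i=1}^N \frac{1}{2}\| \alpha B z^i - y^i \|^2 = \frac{1}{2N}\sum_{i=1}^N \| y^i \|^2 \eqdef c_0 ,
\end{align*}
a quantity that does \emph{not} depend on $\alpha$.

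Next I would establish a local P-\L{} property around $\mu_0$ with a constant of order $\alpha^2$. Pick a radius $R = R(\mu_0) \in (0,1]$ small enough that, by the local-Lipschitz estimate of \cref{prop:conditioning_lipschitz} applied with $\mu' = \mu_0$ on the $d$-ball $B(\mu_0,R)$ (on which $\Ee_2(\mu) \leq (1+\sqrt{\Ee_2(\mu_0)})^2$), one has $\int_0^1 \lambda_{\min}(\KK^1[\mu(.|s), \zz_\mu(s)]) \d s \geq \lambda_0/2$ for all $\mu \in B(\mu_0,R)$; note $R$ depends only on $\mu_0$ and $N$, not on $\alpha$. Using $\KK \geq \KK^1$ and plugging this lower bound into \cref{augmented_conditioning_PL} gives, for every $\mu \in B(\mu_0,R)$,
\begin{align*}
    | \nabla L^\alpha |^2(\mu) \geq \frac{\alpha^2 e^{-C'} \lambda_0}{N}\, L^\alpha(\mu) \eqdef m_\alpha\, L^\alpha(\mu) ,
\end{align*}
where $C' = C'(\mu_0)$ is an upper bound for the constant appearing in \cref{augmented_conditioning_PL} over $B(\mu_0,R)$ (legitimate since the energy is uniformly bounded there by a constant depending only on $\mu_0$). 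Thus $L^\alpha$ is $(R, m_\alpha)$-P-\L{} around $\mu_0$.

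Finally I would invoke \cref{thm:local_convergence}. Its hypothesis $L^\alpha(\mu_0) < m_\alpha R^2 / 4$ reads $c_0 < \alpha^2 e^{-C'}\lambda_0 R^2 /(4N)$, and as $c_0$, $C'$, $R$, $\lambda_0$, $N$ are all independent of $\alpha$ this holds for every $\alpha > \alpha_0 \eqdef 2\sqrt{N c_0 e^{C'}/\lambda_0}\,/R$. For such $\alpha$, combining with the existence of a global-in-time gradient flow for the SHL architecture (the analogue of \cref{thm:existence_curve}; \emph{cf.}~\cref{prop:existence_SHL}), \cref{thm:local_convergence} yields $L^\alpha(\mu_t) \leq e^{-m_\alpha t} c_0 \xrightarrow[t\to\infty]{} 0$ and the existence of $\mu_\infty \eqdef \lim_{t\to\infty}\mu_t$ with $d(\mu_\infty,\mu_0) \leq R$; since $L^\alpha \geq 0$, this limit is a global minimizer of $L^\alpha$.

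The one point that genuinely needs care is keeping the radius $R$ of the neighborhood on which the local P-\L{} inequality holds bounded away from $0$ \emph{uniformly in} $\alpha$: otherwise the quadratic gain $m_\alpha \sim \alpha^2$ could be absorbed by a shrinking $R^2$ in the condition $L^\alpha(\mu_0) < m_\alpha R^2/4$. This is exactly where the lifting-and-scaling construction is designed to help --- the scale $\alpha$ enters only through the loss $\ell^\alpha$, never through the feature map $\fmap$ or the dynamics \cref{forward}, so the Lipschitz constant of \cref{prop:conditioning_lipschitz}, the energy bound on $B(\mu_0,R)$, and hence $R$ and $C'$ are all insensitive to $\alpha$, while $BA = 0$ freezes the initial risk at $c_0$. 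Everything else (existence/uniqueness of the SHL gradient flow, the stay-in-the-ball a priori estimate of \cref{thm:local_convergence}) is governed by the same $\alpha$-insensitive quantities and goes through verbatim.
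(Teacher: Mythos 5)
Your proposal is correct and follows essentially the same route as the paper: lower-bound $\int_0^1 \lambda_{\min}(\KK^1)$ on an $\alpha$-independent ball via \cref{prop:conditioning_lipschitz}, plug into \cref{augmented_conditioning_PL} to get a P-\L{} constant scaling like $\alpha^2$, and observe that $L^\alpha(\mu_0)$ is independent of $\alpha$ (since $BA=0$ and $F_{\mu_0}\equiv 0$), so the condition of \cref{thm:local_convergence} holds for $\alpha$ large. Your write-up is in fact more explicit than the paper's (which leaves the uniform-in-$\alpha$ radius implicit), but there is no substantive difference.
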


\begin{proof}
    By~\cref{prop:conditioning_lipschitz}, the map $\mu \mapsto \int_0^1 \lambda_{\min}(\KK^1[\mu, \zz_\mu])$ is locally Lipschitz and using~\cref{augmented_conditioning_PL} and that $\lambda_0 > 0$ a local P-\L{} inequality is satisfied around $\mu_0$. Then note that, as at initialization $L^\alpha(\mu_0) = N^{-1} \sum_{i=1}^N \| y^i \|^2$ is independent of $\alpha$ and as increasing $\alpha$ increases the P-\L{} in~\cref{augmented_conditioning_PL}, the convergence condition in~\cref{general_convergence_condition} is necessarily satisfied for $\alpha$ sufficiently large.
\end{proof}

\section{Conclusion}

We studied in this paper the convergence of the gradient flow optimization scheme for the training of a ResNet model of infinite depth and arbitrary width. Our model is a Neural ODE parameterized by measures on the product set $[0,1] \times \Om$ with the constraint to have its marginal on $[0,1]$ to be the Lebesgue measure. Introducing the framework of Conditional Optimal Transport, we provided this set of parameters with a metric structure modeling the metric implicitly used when training ResNets with automatic differentiation. Leveraging results from the theory of gradient flows in metric spaces~\parencite{ambrosio2008gradient}, this property of the Conditional OT metric allowed us to show that the gradient flow equation, derived formally by adjoint sensitivity analysis and used for the training of Neural ODEs, is equivalent to a curve of maximal slope of the risk. The well-posedness of the gradient flow equation follows from this result. Finally, we showed that convergence of mean field models of ResNet can be proved by using a Polyak-\L{}ojasiewicz inequality. This inequality is satisfied locally around well-chosen initializations for which the residuals have sufficiently (but possibly finitely) many features, ensuring their expressivity. As a consequence, assuming the risk is already sufficiently small at those initializations, the gradient flow provably converges to a global minimizer of the risk with an exponential convergence rate. This is the first result of this type for mean-field models of ResNets with unregularized risk as previous works only showed results of optimality under the assumption of convergence.
Finally, for practical examples of architectures and parameter initialization, we quantified explicitly the convergence condition as a function of the number of data points.

We point out some limitations and possible extensions of our work:
\begin{itemize}
    \item We only consider training our ResNet model with gradient flow, an idealized version of the more realistic gradient descent algorithm. An extension of our results to the case of optimization with gradient descent should probably hold due to the local P-\L{} property of the risk. However, this could be at the cost of considering stronger regularity assumptions on the feature map $\fmap$ to ensure the gradient descent dynamic stays close to the gradient flow dynamic for small step sizes.
    \item We make regularity assumptions on the feature map $\fmap$ that might be improved on. In particular, \cref{fmap_assumption3} assumes $\fmap$ to be at least continuously differentiable which does not allow us to consider SHL residuals with $\relu$ activations. This assumption might be weakened, for example by using the recently introduced notion of \emph{conservative gradient}~\parencite{bolte2021nonsmooth}.
    \item We only considered in our convergence analysis of~\cref{sec:convergence} the case of an empirical data distribution $\Dd = \frac{1}{N} \sum_{i=1}^N \delta_{(x^i,y^i)}$. This assumption is crucial as the P-\L{} constant in~\cref{conditioning_PL} scales as $N^{-1}$ and become degenerate for large $N$. It would therefore be interesting to extend our analysis to the case of a data distribution with density.
    \item An important feature of our convergence analysis is to only leverage information about the gradient w.r.t. the outer weights of the residuals (denoted by the variable $u$) to obtain the Polyak-\L{}ojasiewicz inequality. In doing so, we are unable to provide information about the behavior of the feature distribution during training and unable to ensure that gradient flow will escape the ``kernel regime''. 
    Therefore, leveraging information about the feature dynamic to improve our convergence proof is an exciting perspective.
\end{itemize}

\clearpage

\printbibliography

\clearpage

\appendix

\section{Well-posedness of the gradient flow equation for SHL residuals} \label{sec:appendix}

We justify here why the existence and uniqueness results of~\cref{thm:existence_curve,thm:uniqueness_curve} still apply in the case of the SHL architecture where $\fmap$ is given by~\cref{fmap_SHL}, even if~\cref{fmap_assumption_existence,fmap_assumption_uniqueness} are not satisfied. The idea is to restrict ourselves to compactly supported parameterizations $\mu \in \Pp_2^\Leb([0,1] \times \Om)$ where both assumptions are satisfied $\d \mu$ almost everywhere.

In the rest of this section, we consider the parameter space $\Om = \RR^d \times \RR^d \times \RR$ and the feature map is supposed to be given by~\cref{fmap_SHL} with some activation $\activation$ satisfying~\cref{activation_assumption}. Note in particular that~\cref{fmap_assumption1,fmap_assumption2,fmap_assumption3} are satisfied and that the representation result of~\cref{prop:representation_curve} holds. The following preliminary result states that if the initialization $\mu_0$ is compactly supported, so is a solution $\mu_t$ of the gradient flow at every time $t \geq 0$.

\begin{lem} \label{lem:compact_support}
    Assume $\fmap$ if of the form~\cref{fmap_SHL} with some activation $\activation$ satisfying~\cref{activation_assumption}. Let $\mu_0 \in \Pp_2^\Leb([0,1] \times \Om)$ be some compactly supported initialization with $\Supp(\mu_0) \subset B(0,R_0)$ for some $R_0 \geq 0$. If $(\mu_t)_{t \geq 0}$ is a gradient flow of the risk $L$ then for every $T \geq 0$ there exist $R_T \geq 0$ such that:
    \begin{align}
        \forall t \in [0,T], \quad \Supp(\mu_t) \subset B(0, R_T).
    \end{align}
\end{lem}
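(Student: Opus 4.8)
The plan is to exploit the flow-map representation of the gradient flow. Since $\fmap$ of the form~\cref{fmap_SHL} with $\activation$ satisfying~\cref{activation_assumption} is twice continuously differentiable and satisfies~\cref{fmap_assumption1,fmap_assumption2,fmap_assumption3}, \cref{prop:representation_curve} applies and yields $\mu_t = (X_t)_\# \mu_0$, where $X_t$ solves $\frac{\d}{\d t} X_t(s,\om) = V_t(X_t(s,\om))$, $X_0 = \Id$, with $V_t = (0, \nabla L[\mu_t])$. Because the first component of $V_t$ vanishes identically, the flow map leaves the $[0,1]$-coordinate fixed, so $X_t(s,\om) = (s, \tilde X_t(s,\om))$ for some $\tilde X_t(s,\om) \in \Om$. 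Hence $\Supp(\mu_t) \subseteq \overline{X_t(\Supp(\mu_0))}$, and the statement reduces to bounding $\sup_{(s,\om) \in \Supp(\mu_0)} \|X_t(s,\om)\|$ uniformly for $t \in [0,T]$.

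First I would establish an a priori bound $\Ee \eqdef \sup_{t \in [0,T]} \Ee_2(\mu_t) < \infty$. The key is the identity $\Ee_2(\mu_t) = d(\mu_t, \Leb([0,1]) \otimes \delta_0)^2$ already used in the proof of~\cref{lem:energy_bound}: a gradient flow curve is locally absolutely continuous, hence $d$-continuous, so $t \mapsto \Ee_2(\mu_t)$ is continuous and therefore bounded on the compact interval $[0,T]$. With $\Ee$ at hand, \cref{prop:flow_wellposed} gives $\|x_{\mu_t}(s)\| \leq C_1 = C_1(\Ee)$ uniformly over $s \in [0,1]$ and $t \in [0,T]$ (using compactness of $\Supp(\Dd)$), while a Grönwall estimate on the backward equation~\cref{backward} bounds the adjoint variable: since $\|\D_x F_{\mu_t(.|r)}(x_{\mu_t}(r))\| \leq M \int_\Om \|\om\|^2 \d\mu_t(\om|r)$ (by~\cref{activation_assumption_growth}) has integral over $r \in [0,1]$ at most $M\Ee$, and $\|p_{\mu_t,x,y}(1)\| = \|\nabla_x \ell(x_{\mu_t}(1),y)\|$ is bounded by smoothness of $\ell$ and the previous bound, one gets $\|p_{\mu_t,x,y}(s)\| \leq C_2 = C_2(\Ee)$, uniformly over $s \in [0,1]$, $(x,y) \in \Supp(\Dd)$ and $t \in [0,T]$.

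Then I would run a Grönwall argument trajectory by trajectory. Fix $(s,\om) \in \Supp(\mu_0)$; using the growth bound $\|\D_\om\fmap(\om',x)\| \leq M(1+\|x\|)(1+\|\om'\|)$ from~\cref{activation_assumption_growth} together with $C_1, C_2$, for a.e. $t \in [0,T]$
\begin{align*}
    \frac{\d}{\d t} \|\tilde X_t(s,\om)\| \leq \|V_t(X_t(s,\om))\| = \left\| \EE_{x,y} \D_\om\fmap(\tilde X_t(s,\om), x_{\mu_t}(s))^\top p_{\mu_t,x,y}(s) \right\| \leq C_3 \left( 1 + \|\tilde X_t(s,\om)\| \right),
\end{align*}
with $C_3 = C_3(\Ee)$. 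Grönwall's lemma gives $\|\tilde X_t(s,\om)\| \leq (1+\|\om\|) e^{C_3 t} \leq (1+R_0) e^{C_3 T}$, whence $\|X_t(s,\om)\|^2 = s^2 + \|\tilde X_t(s,\om)\|^2 \leq 1 + (1+R_0)^2 e^{2 C_3 T} \eqdef R_T^2$; recalling $\Supp(\mu_t) \subseteq \overline{X_t(\Supp(\mu_0))}$ then finishes the proof. The estimate is self-contained: the trajectory-independent constants $C_1, C_2, C_3$ depend only on $\Ee$, so there is no circularity with the flow map itself. The one genuinely delicate point is the a priori control of $\Ee_2(\mu_t)$: the elementary bounds on the forward and backward flows degrade like $\exp(c\,\Ee_2(\mu_t))$, so a direct closed Grönwall inequality for $\Ee_2(\mu_t)$ would only guarantee local-in-time finiteness; it is precisely the identification of $\Ee_2(\mu_t)$ with a squared $d$-distance, together with the continuity of absolutely continuous curves, that bypasses this and delivers boundedness on all of $[0,T]$.
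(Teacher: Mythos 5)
Your proof is correct and follows essentially the same route as the paper's: the flow-map representation of \cref{prop:representation_curve}, continuity of $t \mapsto \Ee_2(\mu_t)$ on $[0,T]$ to get a uniform energy bound $\Ee$, and a Grönwall estimate $\|\tfrac{\d}{\d t} X_t\| \leq C(\Ee)(1+\|X_t\|)$ deduced from~\cref{activation_assumption}. The only difference is that you spell out the intermediate bounds on the forward flow and the adjoint variable that the paper leaves implicit in the constant $C(\Ee)$.
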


\begin{proof}
    Let $(\mu_t)_{t \geq 0}$ be such as in the statement. In view of~\cref{prop:representation_curve} such a gradient flow is given for every $t \geq 0$ by $\mu_t = (X_t)_\# \mu_0$, where $X$ is a solution of~\cref{gradient_flow_characteristic}. Let us then consider some $T \geq 0$. The energy $\Ee_2(\mu_t)$ is a continuous function along the gradient flow time $t$ so that $\Ee \eqdef \sup_{t \in [0,T]} \Ee_2(\mu_t) < \infty$. Then using~\cref{activation_assumption} we have a constant $C = C(\Ee)$ such that for every $t \in [0, T]$ and every $(s,\om) \in [0, 1] \times \Om$:
    \begin{align*}
        \| \dd{t} X_t(s,\om) \| \leq C(1+\| X_t(s,\om) \|).
    \end{align*}
    Hence by Grönwall's lemma:
    \begin{align*}
        \| X_t(s,\om) \| \leq e^{C t}(\| X_0(s,\om) \| + Ct),
    \end{align*}
    from which the result follows by taking $R_T = e^{CT}(R_0 + CT)$.
\end{proof}

Note that $\fmap$ may not satisfy~\cref{fmap_assumption_uniqueness} when considering $\om \in \Om$ but it does when considering $\om$ in bounded regions $B(0,R) \subset \Om$. Hence using the above~~\cref{lem:compact_support} and restricting ourselves to finite time intervals, one can show uniqueness of the gradient flow curves whenever the initialization is compactly supported.

\begin{prop} \label{prop:uniqueness_SHL}
    Assume $\fmap$ if of the form~\cref{fmap_SHL} with some activation $\activation$ satisfying~\cref{activation_assumption}. Let $\mu_0 \in \Pp_2^\Leb([0,1] \times \Om)$ be some compactly supported initialization. Then the gradient flow $(\mu_t)_{t \geq 0}$ of the risk $L$ starting from $\mu_0$, if it exists, is unique.
\end{prop}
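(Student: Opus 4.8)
The plan is to reduce to the setting of~\cref{thm:uniqueness_curve} by exploiting that a gradient flow started at a compactly supported $\mu_0$ stays compactly supported on every finite time interval. Concretely, let $(\mu_t)_{t\ge0}$ and $(\mu_t')_{t\ge0}$ be two gradient flows of $L$ starting from $\mu_0$; it is enough to prove $d(\mu_t,\mu_t')=0$ on an arbitrary interval $[0,T]$. First I would invoke~\cref{lem:compact_support} for each curve — using that $t\mapsto\Ee_2(\mu_t)=d(\mu_t,\Leb([0,1])\otimes\delta_0)^2$ is continuous, hence bounded on $[0,T]$ (and taking a common energy bound for the two curves) — to obtain a radius $R_T\ge0$ with $\Supp(\mu_t),\Supp(\mu_t')\subset B(0,R_T)$ for all $t\in[0,T]$. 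Since $\Dd$ has compact support, \cref{prop:flow_wellposed} and the backward equation~\cref{backward} then provide a further radius $R'\ge0$ so that all forward trajectories $x^i_{\mu_t}(s),x^i_{\mu_t'}(s)$ and all adjoint variables $p^i_{\mu_t}(s),p^i_{\mu_t'}(s)$ remain in $B(0,R')$ for $t\in[0,T]$, $s\in[0,1]$, with $R_T,R'$ depending only on $T$, on $R_0$ and on $\Supp(\Dd)$.

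The key remark is that, although the SHL feature map~\cref{fmap_SHL} fails~\cref{fmap_assumption_uniqueness} globally (the term $\D^2_{\om,\om}\fmap$ grows with $\om$), it satisfies it once $(\om,x)$ is confined to the compact set $\overline{B(0,R_T)}\times\overline{B(0,R')}$, on which $\fmap$ is $\Cc^2$ with bounded derivatives up to order two because $\activation\in\Cc^2$. I would then fix a smooth cut-off $\zeta$ equal to $1$ on $\overline{B(0,R_T)}\times\overline{B(0,R')}$ and compactly supported in $\Om\times\RR^d$, and set $\Tilde\fmap\eqdef\zeta\cdot\fmap$. Being $\Cc^2$ with compact support, $\Tilde\fmap$ trivially satisfies~\cref{fmap_assumption1,fmap_assumption2,fmap_assumption3,fmap_assumption_uniqueness}, and it coincides with $\fmap$ together with its first derivatives on $\overline{B(0,R_T)}\times\overline{B(0,R')}$. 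Since the gradient field $\nabla L[\mu]$ of~\cref{gradient} only probes $\fmap,\D_x\fmap,\D_\om\fmap$ at points $(\om,x)$ with $\om$ in the support of $\mu$ and $x$ a forward trajectory, both $(\mu_t)$ and $(\mu_t')$ are, on $[0,T]$, also gradient flows of the risk $\Tilde L$ built from $\Tilde\fmap$. Now $\Tilde\fmap$ meets all the hypotheses of~\cref{thm:uniqueness_curve} (and $\nabla_x\ell$ is locally Lipschitz, $\ell$ being smooth), so the gradient flow of $\Tilde L$ starting from $\mu_0$ is unique; hence $\mu_t=\mu_t'$ on $[0,T]$, and since $T$ was arbitrary this holds for all $t\ge0$.

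Alternatively — and this is perhaps cleaner to write out in full — one can skip $\Tilde\fmap$ and simply re-run the proof of~\cref{thm:uniqueness_curve} verbatim, observing that the chain of bounds~\cref{gradient_lipschitz} together with~\cref{lem:flow_lipschitz} and~\cref{lem:adjoint_lipschitz} only ever evaluates $\fmap$ and its first two derivatives on the fixed compact $\overline{B(0,R_T)}\times\overline{B(0,R')}$, so that~\cref{lem:differentiability_d} and Grönwall's inequality yield $\dd{t}d(\mu_t,\mu_t')^2\le C(R_T,R')\,d(\mu_t,\mu_t')^2$ on $[0,T]$, whence $d(\mu_t,\mu_t')\equiv 0$. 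The main obstacle, and the only point requiring genuine care, is precisely this bookkeeping: one must check that~\cref{fmap_assumption_uniqueness} — and therefore~\cref{lem:adjoint_lipschitz} — may legitimately be invoked after restricting to a compact region. In particular a cut-off in the $\om$ variable alone does not suffice, since it does not control the $x$-growth of $\D^2_{\om,\om}\fmap$; the truncation must be performed in both variables (or, equivalently, the adjoint Lipschitz estimate of~\cref{lem:adjoint_lipschitz} re-derived directly on the relevant compact set from the $\Cc^2$-regularity of $\fmap$ there).
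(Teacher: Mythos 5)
Your proposal is correct, and your ``alternative'' route is exactly the paper's proof: confine the supports via~\cref{lem:compact_support}, note that~\cref{fmap_assumption_uniqueness} holds once $\om$ is restricted to a ball containing the supports (its bounds are already local in $x$, and the trajectories and adjoints stay in a compact set since $\Dd$ is compactly supported), and re-run the Gr\"onwall argument of~\cref{thm:uniqueness_curve} on $[0,T]$. Only your closing caveat is off: because~\cref{fmap_assumption_uniqueness} only requires estimates for $x,x'$ in a ball, a truncation (or mere restriction) in the $\om$ variable alone already suffices, so no cut-off in $x$ is needed — though your two-variable cut-off is of course also valid and mirrors the truncation the paper uses for the existence result~\cref{prop:existence_SHL}.
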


\begin{proof}
    Let $(\mu_t)_{t \geq 0}, (\mu_t')_{t \geq 0}$ be two gradient flow curves starting from $\mu_0$. We will proceed to show that $d(\mu_t, \mu_t') = 0$ for every $t \geq 0$.

    Fix some $T \geq 0$. By the above~\cref{lem:compact_support}, we can find some $R \geq 0$ such that for every $t \in [0,T]$ we have $\Supp(\mu_t), \Supp(\mu_t') \subset B(0,R)$. Then note that $\fmap$ satisfies~\cref{fmap_assumption_uniqueness} when restricted to $\om \in B(0,R)$ in the sense that for every compact set $K \subset \RR^d$ there exists a constant $C = C(K,R)$ s.t.  for every $x,x' \in K$ and $\om, \om' \in B(0,R)$:
    \begin{align*}
        \| \D^2_{\om,\om} \fmap(\om,x) \| \leq C, \quad \| \D^2_{\om,x} \fmap (\om,x) \| \leq C (1+\|\om\|), \quad \| \D^2_{x,x} \fmap(\om, x) \| \leq C(1+\|\om\|^2)
    \end{align*}
    Also note that for every $t \in [0,T]$ and every optimal Conditional OT coupling $\gamma_t \in \Gamma^\diag_o(\mu_t, \mu_t')$ we have that $\gamma_t$ is compactly supported with $\Supp(\gamma_t) \subset B(0,R) \times B(0,R)$. Hence proceeding as in the proof of~\cref{thm:uniqueness_curve} (cf.~\cref{gradient_lipschitz}) we find a constant $C = C(R)$ s.t. for $\d t$-a.e. $t \in [0,T]$:
    \begin{align*}
        \dd{t} d(\mu_t, \mu_t')^2 \leq C d(\mu_t, \mu_t')^2 ,
    \end{align*}
    from which it follows by Grönwall's inequality that $d(\mu_t, \mu_t')^2 \leq e^{C t} d(\mu_0, \mu_0)^2 = 0$ 
\end{proof}

Finally, the following result states the existence of a gradient flow curve of the risk $L$ for the SHL architecture when the initialization is compactly supported.

\begin{prop} \label{prop:existence_SHL}
    Assume $\fmap$ if of the form~\cref{fmap_SHL} with some activation $\activation$ satisfying~\cref{activation_assumption}. Let $\mu_0 \in \Pp_2^\Leb([0,1] \times \Om)$ be some compactly supported initialization. Then there exists a gradient flow $(\mu_t)_{t \geq 0}$, defined for every $t \geq 0$, for the risk $L$ starting at $\mu_0$.
\end{prop}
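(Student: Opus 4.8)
The plan is to realize the desired gradient flow as a limit of gradient flows for truncated feature maps, the truncation being harmless on finite time intervals once one has a priori bounds on the energy and the support of the flow. Fix $\chi \in \Cc^\infty_c(\Om)$ with $\chi \equiv 1$ on $\bar B(0,1)$, $\chi \equiv 0$ outside $\bar B(0,2)$, set $\chi_n(\om) \eqdef \chi(\om/n)$ and $\fmap_n \eqdef \chi_n \fmap$. Since $\fmap$ is $C^2$ (by \cref{activation_assumption}) and $\chi_n$ is smooth with compact support, $\fmap_n$ is $C^2$, and because all the derivatives of $\fmap_n$ involved are bounded on $\om$-balls and vanish for $\|\om\|$ large, one checks routinely that $\fmap_n$ satisfies \cref{fmap_assumption1,fmap_assumption2,fmap_assumption3} as well as \cref{fmap_assumption_existence} with exponent $\alpha = 1$. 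Hence \cref{thm:existence_curve} produces a curve of maximal slope for the truncated risk $L_n$ (the risk built from $\fmap_n$) starting at $\mu_0$, defined on $[0,+\infty)$, which by \cref{thm:equivalence_GF_CMS} (using that $\ell$ is smooth) is a gradient flow $(\mu^n_t)_{t \geq 0}$ for $L_n$ with $\bigl|\dd{t}\mu^n_t\bigr| \in L^2_\loc$.

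\textbf{A priori bounds, uniform in $n$.} Fix $T > 0$. Since $\Supp(\mu_0) \subset B(0,R_0)$ for some $R_0$ and $\fmap_n = \fmap$ there, we have $L_n(\mu_0) = L(\mu_0)$, and the energy dissipation of a curve of maximal slope gives $\int_0^t \bigl|\dd{r}\mu^n_r\bigr|^2 \d r \leq 2\bigl(L(\mu_0) - L_n(\mu^n_t)\bigr) \leq 2 L(\mu_0)$; by Cauchy–Schwarz the length of $(\mu^n_r)_{r \in [0,t]}$ is at most $\sqrt{2 t L(\mu_0)}$, so using $\Ee_2(\mu)^{1/2} = d\bigl(\mu, \Leb([0,1])\otimes\delta_0\bigr)$ and the triangle inequality, $\Ee_2(\mu^n_t)^{1/2} \leq \Ee_2(\mu_0)^{1/2} + \sqrt{2 t L(\mu_0)} \leq E(T)$ for $t \in [0,T]$, with $E(T)$ independent of $n$. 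Next I would use the flow-map representation $\mu^n_t = (X^n_t)_\# \mu_0$ of \cref{prop:representation_curve}: as long as $\Supp(\mu^n_r) \subseteq \bar B(0,n)$ on $[0,t]$, the measures $\mu^n_r$ only see $\fmap_n$ where it equals $\fmap$, so the forward flows $x_{\mu^n_r}$ (bounded via \cref{flow_growth}) and adjoints $p_{\mu^n_r}$ (bounded via \cref{backward}) are controlled by $E(T)$ alone, whence $\bigl\|\dd{r} X^n_r(s,\om)\bigr\| \leq C(E(T))\,\bigl(1 + \|X^n_r(s,\om)\|\bigr)$ along characteristics; Grönwall then yields $\|X^n_t(s,\om)\| \leq \bar R(T)$ for $(s,\om) \in \Supp(\mu_0)$, with $\bar R(T)$ depending only on $R_0, T$ and $E(T)$. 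A standard continuity/bootstrap argument then shows that, provided $n > \bar R(T)$, the inclusion $\Supp(\mu^n_t) \subseteq \bar B(0,\bar R(T))$ in fact holds on all of $[0,T]$.

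\textbf{Identification and gluing.} For $n > \bar R(T)$, the curve $(\mu^n_t)_{t \in [0,T]}$ is supported on a region where $\chi_n \equiv 1$, so $\fmap_n$ agrees with $\fmap$ together with its derivatives there; consequently $x_{\mu^n_t}$, $p_{\mu^n_t}$ coincide with the objects attached to the original $\fmap$, and $\nabla L_n[\mu^n_t] = \nabla L[\mu^n_t]$ in $L^2(\mu^n_t)$ (cf. \cref{gradient}). Therefore $(\mu^n_t)_{t\in[0,T]}$ satisfies the continuity equation of \cref{def:gradient_flow} for the original risk $L$, i.e. it is a gradient flow of $L$ on $[0,T]$ starting at $\mu_0$. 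Although \cref{lem:compact_support,prop:uniqueness_SHL} are stated on $[0,\infty)$, their proofs rest only on the representation of \cref{prop:representation_curve} and a Grönwall estimate on $[0,T]$, hence apply verbatim to flows defined on $[0,T]$; so the $L$-gradient flow from $\mu_0$ on $[0,T]$ is unique. The curves obtained for different $T$ thus agree on overlaps and glue into a single gradient flow $(\mu_t)_{t \geq 0}$ of $L$ defined for all $t \geq 0$, which is the claim.

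\textbf{Main obstacle.} The delicate point is a circularity: the naive differential inequality for the energy along an SHL gradient flow reads $\dd{t}\Ee_2(\mu_t) \leq C\,e^{C\,\Ee_2(\mu_t)}$ — the exponential coming from the fact that $\D_x \fmap$ only has quadratic growth in $\om$, so the adjoint is controlled merely exponentially in the energy — and this permits finite-time blow-up, which would make the truncation scheme collapse as $n \to \infty$. The resolution is that this inequality is never used: along a curve of maximal slope the energy grows only \emph{linearly in time}, via the length bound $\Ee_2(\mu^n_t)^{1/2} \leq \Ee_2(\mu_0)^{1/2} + \sqrt{2 t L(\mu_0)}$, uniformly in $n$, and it is this bound (together with monotonicity of $L_n$ along $\mu^n$) that drives the support estimate. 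Once this is in place, verifying that $\fmap_n$ inherits \cref{fmap_assumption1,fmap_assumption2,fmap_assumption3,fmap_assumption_existence} and closing the bootstrap on the support are routine.
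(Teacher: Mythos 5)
Your proposal is correct and follows the same overall strategy as the paper's proof: truncate the feature map so that \cref{fmap_assumption_existence} holds, apply \cref{thm:existence_curve} (and \cref{thm:equivalence_GF_CMS}) to the truncated risk, show the resulting flow never leaves the region where the truncation is inactive, identify it there with a gradient flow for the original risk $L$, and conclude via the uniqueness of \cref{prop:uniqueness_SHL} (which, as you note, applies on finite time intervals) to glue over all $T$. The execution differs in two respects. First, the paper truncates only the outer weight $u$ through a single smooth retraction at a radius $2R_{T_0}$ tied to the support bound of \cref{lem:compact_support}, and closes the argument with a $T^*$-contradiction bootstrap; you instead use a sequence of cutoffs $\chi_n$ in all of $\om$ and let $n\to\infty$. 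Second, and more substantively, your a priori control comes from the energy-dissipation inequality of curves of maximal slope: the length bound $d(\mu^n_t,\mu_0)\le\sqrt{2tL(\mu_0)}$ together with $\Ee_2(\mu)^{1/2}=d(\mu,\Leb([0,1])\otimes\delta_0)$ gives an energy bound that is uniform in $n$ and depends only on $\mu_0$, $L(\mu_0)$ and $t$, after which the flow-map/Grönwall support estimate and the choice $n>\bar R(T)$ mirror the paper's bootstrap. This buys a cleaner logical structure: the truncation radius and all constants are fixed a priori from the data and the initialization, with no reference to the flow being constructed (the paper's $R_{T_0}$ is keyed to the compact-support lemma for the $L$-flow whose existence is the point at issue), and the energy grows only like $t$ along the flow, sidestepping the exponential-in-energy estimate that would otherwise allow finite-time blow-up, exactly as you observe. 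The only step you compress is the verification that $\fmap_n=\chi_n\fmap$ inherits \cref{fmap_assumption1,fmap_assumption2,fmap_assumption3} and \cref{fmap_assumption_existence}: the extra term $\fmap\otimes\nabla\chi_n$ in $\D_\om\fmap_n$ must be bounded (it has linear growth in $\om$, with constants that may depend on $n$, which is all that is needed since the uniform-in-$n$ estimates only involve regions where $\chi_n\equiv 1$); this is indeed routine, as you claim.
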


\begin{proof}
    Let $\mu_0$ be such as in the statement and consider some $T_0 \geq 0$. Then by the previous~\cref{prop:uniqueness_SHL}, the gradient flow $(\mu_t)_{t \geq 0}$ starting from $\mu_0$, if it exists, is unique and by~\cref{lem:compact_support} there exists a $R_{T_0} > 0$ such $\Supp(\mu_t) \subset B(0,R_{T_0})$ for every $t \in [0,T_0]$.
    
    It is then easy to modify $\fmap$ into some $\Tilde{\fmap}$ such that $\fmap(\om, x) = \Tilde{\fmap}(\om,x)$ whenever $\om \in B(0,2R_{T_0})$, $\Tilde{\fmap}$ satisfies~\cref{fmap_assumption1,fmap_assumption2,fmap_assumption3} but also~\cref{fmap_assumption_existence}. For example, consider for every $x \in \RR^d$ and $(u,w,b) \in \Om$:
    \begin{align*}
        \Tilde{\fmap}((u,w,b),x) \eqdef \pi(u) \activation (w^\top x +b),
    \end{align*}
    for some smooth map $\pi : \RR^d \to \RR^d$ (depending on $R_{T_0}$) such that $\| \pi \|$ and $\| \D \pi \|$ are uniformly bounded and $\pi(u) = u$ if $\| u \| < 2 R_{T_0}$. 
    Note $\Tilde{L}$ the modified risk associated to the modified feature map $\Tilde{\fmap}$.
    Then~\cref{thm:existence_curve} applies and there exists a gradient flow $(\Tilde{\mu}_t)_{t \geq 0}$ for the modified risk $\Tilde{L}$ starting from $\mu_0$. Consider the the time $T$ defined by:
    \begin{align*}
        T^* = \sup \lbrace T \geq 0, \, \Supp(\Tilde{\mu}_t) \subset B(0, 2 R_{T_0}) \, \forall t \in [0,T] \rbrace.
    \end{align*}
    Note that by the definition of $T^*$ and $\Tilde{\fmap}$, if $T < T^*$ then for every $t \in [0,T]$, $\nabla \Tilde{L}[\Tilde{\mu}_t] = \nabla L[\Tilde{\mu}_t]$ in $L^2(\Tilde{\mu}_t)$ and hence $(\Tilde{\mu}_t)_{t \in [0,T]}$ is a gradient flow for the original risk $L$, starting from $\mu_0$. 
    We show by contradiction that $T^* > T_0$, implying that there exists a gradient flow for $L$ starting from $\mu_0$ and defined up to time $T_0$.

    Assume $T^* \leq T_0$. Consider $\Ee \eqdef \sup_{t \in [0, T_0+1]} \Ee_2(\Tilde{\mu_t})$. For any $T < T^*$, we have that $(\Tilde{\mu}_t)_{t \in [0, T]}$ is a gradient flow for $L$ starting from $\mu_0$ and in particular $\Supp(\Tilde{\mu}_T) \subset B(0,R_{T_0})$. But then, reasoning as in the proof of~\cref{lem:compact_support},, there exists a constant $C = C(\Ee)$ (independent of $T$) such that for every $\eps \in [0,1]$:
    \begin{align*}
        \forall t \in [T, T+\eps], \quad \Supp(\Tilde{\mu}_t) \subset B(0, e^{C \eps}(R_{T_0} + C \eps)),
    \end{align*}
    which is included in $B(0,2 R_{T_0})$ for $\eps$ sufficiently small. Hence, chosing $T$ sufficiently close from $T^*$, we get a $T+\eps > T^*$ such that $\Supp(\tilde{\mu}_t) \subset B(0,2R_{T_0})$ for every $t \in [0, T+\eps]$.
    This is in contradiction with the definition of $T^*$.
\end{proof}

\end{document}